\documentclass[11pt]{article}
\usepackage[margin =1in]{geometry}
\usepackage{amssymb,amsthm}
\usepackage{amsmath}
\numberwithin{equation}{section}

\newcommand{\R}{{\bf R}}

\newcommand{\inclu}[0] {\ar@{^{(}->}}

\newcommand{\diag}{{\rm diag}}

\newcommand{\PP}{\mathbb{P}}

\newcommand{\EE}{\mathbb{E}}

\newcommand{\RR}{\mathbb{R}}

\newcommand{\st}{\mathrm{st}}

\newcommand{\cL}{\mathcal{L}}

\newcommand{\cN}{\mathcal{N}}


\newcommand{\argmin}{\operatornamewithlimits{argmin}}

\newcommand{\Diag}{{\rm Diag}}


\newcommand{\dotp}[1]{\left\langle #1\right\rangle}

\newtheorem{thm}{Theorem}[section]

\newtheorem{definition}[thm]{Definition}
\newtheorem{proposition}[thm]{Proposition}
\newtheorem{lem}[thm]{Lemma}
\newtheorem{cor}[thm]{Corollary}

\newtheorem{assumption}{Assumption}


\newtheorem{remark}{Remark}

\theoremstyle{remark}


\usepackage{mathtools}
\usepackage{pgfplots}
\usepackage{color}
\usepackage{algorithm}
\usepackage{enumitem}
\usepackage{algpseudocode}
\usepackage{caption}
\usepackage{subcaption}
\usepackage{tikz}
\usetikzlibrary{positioning,arrows.meta}

\usepackage{mathtools}
\usepackage{hyperref}
\usepackage{mathabx}
\usepackage[sort&compress]{natbib}
\pgfmathdeclarefunction{Phi}{1}{\pgfmathparse{0.5*(1 + erf(#1/sqrt(2)))}}
\pgfmathdeclarefunction{gelu}{1}{\pgfmathparse{#1 * (0.5*(1 + erf(#1/sqrt(2))))}}
\pgfmathsetmacro{\LeakP}{0.10} 
\pgfmathsetmacro{\LeakQ}{1.00} 


\usepackage{xcolor} 

\newcommand{\softmax}{\mathrm{softmax}}

\newcommand{\od}{{\rm offDiag}}

\newcommand{\Cov}{\mathrm{Cov}}

\newcommand{\had}{\circ}

\newcommand{\er}{\mathrm{er}}

\newcommand{\nr}{\mathrm{nr}}
\newcommand{\RMSNorm}{\mathrm{RMSNorm}}


\DeclareMathOperator{\op}{op}
\DeclareMathOperator{\tr}{tr}

\usepackage{etoolbox}    

\newcounter{steptag}

\begin{document}

\title{When do spectral gradient updates help in deep learning?}
\author{Damek Davis\thanks{Department of Statistics and Data Science, The Wharton School, University of Pennsylvania;
\texttt{www.damekdavis.com}. Research of Davis supported by NSF DMS award 2047637.}\and Dmitriy Drusvyatskiy\thanks{Halıcıoğlu Data Science Institute (HDSI),
University of California San Diego, La Jolla, CA, 
\texttt{sites.google.com/view/dmitriy-drusvyatskiy}.  Research of
Drusvyatskiy was supported by NSF DMS-2306322, NSF DMS-2023166, and AFOSR FA9550-24-1-0092 awards.}}	
\date{}

\maketitle

\begin{abstract}
Spectral gradient methods, such as the recently popularized Muon optimizer, are a promising alternative to standard Euclidean gradient descent for training deep neural networks and transformers, but it is still unclear in which regimes they are expected to perform better. We propose a simple layerwise condition that predicts when a spectral update yields a larger decrease in the loss than a Euclidean gradient step. This condition compares, for each parameter block, the squared nuclear-to-Frobenius ratio of the gradient to the stable rank of the incoming activations. To understand when this condition may be satisfied, we first prove that post-activation matrices have low stable rank at Gaussian initialization in random feature regression, feedforward networks, and transformer blocks. In spiked random feature models we then show that, after a short burn-in, the Euclidean gradient's nuclear-to-Frobenius ratio grows with the data dimension while the stable rank of the activations remains bounded, so the predicted advantage of spectral updates scales with dimension. We validate these predictions in synthetic regression experiments and in NanoGPT-scale language model training, where we find that intermediate activations have low-stable-rank throughout training and the corresponding gradients maintain large nuclear-to-Frobenius ratios. Together, these results identify conditions for spectral gradient methods, such as Muon, to be effective in training deep networks and transformers.
\end{abstract}

\newcommand\numeq[1]{\stackrel{\scriptscriptstyle(\mkern-1.5mu#1\mkern-1.5mu)}{\leq}}

\section{Introduction}
Modern deep neural networks are almost always trained with first-order methods, most prominently stochastic gradient descent (SGD) and its adaptive variants such as \texttt{Adam}~\cite{kingma2015adam}, \texttt{RMSprop}~\cite{tieleman2012rmsprop}, and \texttt{AdaGrad}~\cite{duchi2011adagrad}. These algorithms can be viewed as Euclidean gradient descent equipped with inexpensive preconditioners, typically diagonal rescalings or structured approximations such as Kronecker-factored schemes (\texttt{K-FAC}~\cite{martens2015kfac}, \texttt{Shampoo}~\cite{gupta2018shampoo}). Despite differences in their per-iteration cost, they share a common design principle: one follows the Euclidean gradient that is adjusted by a preconditioner derived from past gradients or curvature information. A complementary line of work takes a different viewpoint and modifies the \emph{geometry} of the update itself by operating on the spectrum of layerwise gradient matrices. The spectral gradient descent method (\texttt{SpecGD}~\cite{carlson2015preconditioned}) replaces the raw gradient with its polar factor, thus moving in a direction with the same singular vectors but unit singular values and step length given by the nuclear norm. The recently proposed optimizer \texttt{MUON}~\cite{jordan2024muon} implements a momentum-based variant of this spectral update, and has been observed to match or surpass variants of \texttt{Adam}~\cite{kingma2015adam} in large-language-model pretraining. Motivated by these empirical findings, we ask the following question.
\begin{quote}
\begin{center}
{In what regimes should one expect spectral updates to outperform standard Euclidean gradient methods for training deep neural networks and transformers?}
\end{center}
\end{quote}

The explanation we propose is based on a simple but prevalent structural property of the post-activation matrices. We show experimentally and theoretically that the \emph{post-activation matrices in intermediate layers of deep neural networks tend to have stable rank bounded by a small numerical constant that is independent of the ambient dimension.} This low stable rank reflects a strong degeneracy of the learned representations and, in particular, implies an ill-conditioned optimization landscape when viewed through the Euclidean lens. Spectral updates, such as \texttt{SpecGD}, which align with the singular vector directions and depend only on spectral and nuclear norms, are naturally adapted to this regime. We make this precise in a sequence of toy and semi-realistic settings, where the advantage of spectral over Euclidean updates can be quantified directly in terms of the stable rank of post-activation matrices. Our goal is not to analyze $\mathtt{MUON}$ in all of its practical details, but rather to understand why its underlying spectral update rule is well suited to the low-stable-rank structure that arises when training modern neural networks.

At a technical level, our results are organized around a one-step comparison of Euclidean and spectral updates acting on a single matrix block. Let $W$ be any matrix parameter that multiplies an incoming activation matrix $A$ (for example, a layer weight acting on the post-activations from the previous layer), and let $G = \nabla_W \mathcal{L}(W)$ denote the corresponding block of the gradient. 
 A simple argument (carried out in Section~\ref{sec:rand_regress} and extended in Section~\ref{sec:multiple}) shows that for a variety of models (random features, MLPs, and transformers) the guaranteed decrease in the loss after one Euclidean gradient step and one spectral step can be bounded as
\[
\Delta_{\mathrm{GD}} \;\asymp\; \frac{\|G\|_F^2}{\|A\|_{\op}^2}
\qquad\text{and}\qquad
\Delta_{\mathrm{Spec}} \;\asymp\; \frac{\|G\|_*^2}{\|A\|_{F}^2}.
\]
Therefore, spectral descent is favored whenever
\begin{equation}
\frac{\|G\|_*^2}{\|G\|_F^2} \;\ge\;  \frac{\|A\|_{F}^2}{\|A\|_{\op}^2}.
\label{eq:intro-key-condition}
\end{equation}
The right-hand side is precisely the stable rank
$\mathrm{st}(A):= \|A\|_{F}^2/\|A\|_{\op}^2$ of the incoming features, while the left-hand side
measures how spread out the singular values of the gradient are; we will refer to this ratio as
the \emph{nuclear rank} of $G$ and denote it by
\[
\nr(G) \;:=\; \frac{\|G\|_*^2}{\|G\|_F^2}.
\]
In terms of these quantities, the ratio of the one-step descent guarantees is
$
\frac{\Delta_{\mathrm{Spec}}}{\Delta_{\mathrm{GD}}}
\;\asymp\;
\frac{\nr(G)}{\mathrm{st}(A)}.
$
In particular, when $\mathrm{st}(A)$ is $O(1)$ and $\nr(G)$ grows with the
dimension—as we prove in random-feature models after a short-burn-in and observe empirically in
realistic networks—the predicted speedup of spectral over Euclidean updates is itself
dimension-dependent (and in our examples linear in $d$), matching the large early-time advantage
we see for spectral methods in practice.

In a multilayer network, the parameters naturally decompose into matrix blocks $W_\ell$ with
gradients $G_\ell$ that interact with post-activations $A_{\ell-1}$. The same one-step comparison
then yields a \emph{layerwise} condition
\begin{equation}\label{eqn:layer_cond}
\nr(G_\ell) \;\geq\; \mathrm{st}(A_{\ell-1}),
\end{equation}
which identifies when a blockwise spectral update on $W_\ell$ is favored over a
Euclidean one. For standard MLPs, $A_{\ell-1}$ is simply the usual post-activation of the previous
layer, so the inequality compares a gradient-based quantity $\nr(G_\ell)$ to the stable rank of the
propagated data seen by that layer. In Section~\ref{sec:multiple} we show that the same blockwise
condition continues to hold for a broad class of layered architectures, including transformers. In
that setting, the relevant $A_{\ell-1}$ is the normalized or MLP activation feeding each projection
(for example the RMS-normalized hidden states entering $W_Q,W_K,W_V$, or the MLP post-activations
entering $W_2$). The upshot is a simple rule of thumb: spectral updates are most advantageous on
those blocks whose incoming features have low-stable-rank and whose gradients have large nuclear
rank~$\nr(G_\ell)$.\footnote{Similar gradient nuclear ranks also appear implicitly or
explicitly in convergence analyses of spectral optimizers such as \texttt{MUON} and
Scion~\cite{shen2025convergence,pethick2025scion}; our focus here is to relate this ratio to the
\emph{stable rank of the propagated data} and to show that the resulting inequality is naturally
activated in realistic networks.}

Beyond this one-step picture, we also show that the nuclear-rank advantage of spectral updates is not a purely transient phenomenon. In spiked random-feature regression models, the gradient nuclear rank becomes large after a short burn-in period and then remains high along a macroscopic window of gradient-descent iterations. More precisely, in both realizable and teacher–student variants we prove that after a short $O(\log d)$ burn-in period there is a window of $\Theta(d)$ iterations on which $\nr(\nabla\mathcal{L}(W_t))=\Omega(d)$ and, for any fixed $\varepsilon>0$, a longer window of $\Theta(d\log d)$ iterations on which $\nr(\nabla\mathcal{L}(W_t))\ge d^{1-\varepsilon}$ (Theorem~\ref{thm:multimodel1} and Corollary~\ref{cor:main_multi_spike_hard}). In the same models, Euclidean gradient descent needs $\Theta(d\log(d/\delta))$ steps to reach relative error $\delta\in(0,1)$, so for any fixed target accuracy this $\Theta(d\log d)$ window represents a constant fraction  of the training time. Thus, if we were to restart a spectral method such as $\mathtt{SpecGD}$ or $\mathtt{MUON}$ from any point in this window, its one-step improvement over Euclidean gradient descent would itself be dimension-dependent.

The rest of this introductory section describes our results in more detail. The reader may find code to reproduce our experiments at \href{https://github.com/damek/specgd/}{https://github.com/damek/specgd/}.

\subsection{Post-activation matrices have low-stable rank.}\label{subsec:post_activ}

The layerwise criterion~\eqref{eqn:layer_cond} compares the gradient nuclear rank
$\nr(G_\ell)$ to the stable rank of the incoming activations.  To understand when this
criterion is activated in realistic architectures, we need structural control on the matrices
that propagate through the network.  Our main conclusion is that, at Gaussian initialization,
the matrices feeding the internal (two-dimensional) weight blocks of standard MLPs and
decoder-only transformers typically have stable rank bounded by a small numerical constant,
independent of width and (for transformers) sequence length.  The argument has the following two components:
\begin{enumerate}
\item[(i)] two base cases explaining how low-stable-rank representations arise in the first place, namely through (a) mean-induced spikes
and (b) token-indicator/embedding structure;  
\item[(ii)] a propagation calculus showing how the stable rank evolves as data propagates through standard network blocks.
\end{enumerate}

\paragraph{(1a) Mean-spike inducing activations.}
Call a pointwise activation $\sigma$ \emph{mean-spike} \emph{inducing} (MSI) if, for a  Gaussian input $\gamma\sim\cN(0,1)$, the mean is nonzero and the second moment is finite:
\[
m_1:=\EE[\sigma(\gamma)]\neq 0
\qquad\textrm{and}\qquad
m_2:=\EE[\sigma(\gamma)^2]<\infty.
\]
This property is generic for non-centered activations used in practice (e.g.\ ReLU).
Our first main theorem (Theorem~\ref{thm:gaussian_weights_low_sr}) shows that, for any fixed
input matrix $X$ and Gaussian weights $W$, the post-activation matrix
$A=\sigma(WX)$ has stable rank controlled by the one-dimensional moment ratio
$m_2/m_1^2$ up to a slack $(1\pm\varepsilon)$. For the concrete example of ReLU
this ratio is $\pi$.

Why does a nonzero mean force a spectral spike? To see this, let $a_1^\top,\dots,a_k^\top\in\R^n$ denote the rows of the post-activation matrix $A\in\R^{k\times n}$. It is straightforward to see that the row Gram matrix trivially satisfies
$$\frac{1}{d}A^\top A\succeq \bar a\,\bar a^\top,$$
where $\bar a:=\frac{1}{d}\sum_{i=1}^k a_i$ is the row mean.
Thus, we obtain the estimate
\[
\st(A)=\frac{\|A\|_F^2}{\|A\|_{\op}^2}
\le \frac{(1/d)\|A\|_F^2}{\|\bar a\|_2^2}.
\]
We later show that the numerator and denominator 
separately concentrate, so we may bound the ratio by the population counterpart. Writing $x_t$ for the $t$'th column of $X$ and $w\sim\cN(0,I)$, the
population counterparts are
\[
\EE\Big[\frac{1}{k}\|A\|_F^2\Big]=\sum_{t=1}^n \EE\big[\sigma(\langle w,x_t\rangle)^2\big]
\qquad\textrm{and}\qquad
\|\EE \bar a \|_2^2=\sum_{t=1}^n \EE\big[\sigma(\langle w,x_t\rangle)\big]^2.
\]
For $p$-homogeneous $\sigma$, we have $\EE[\sigma(\langle w,x_t\rangle)]=m_1\|x_t\|_2^p$ and
$\EE[\sigma(\langle w,x_t\rangle)^2]=m_2\|x_t\|_2^{2p}$, so the ratio above simplifies to
$m_2/m_1^2$. A slightly more involved arguments yield a similar conclusion for all the common non-homogeneous and non-centered functions used in practice (Theorem~\ref{thm:gaussian_weights_low_sr}).
Iterating this one-layer statement yields an $O(1)$ stable-rank bound for
\emph{every} hidden-layer post-activation matrix in an MLP with an MSI activation function at Gaussian initialization
(Corollary~\ref{cor:mlp-mean-spike}).

\paragraph{(1b) Token-indicator matrices as a pervasive low-stable-rank input.}
For transformers, low stable rank already appears at the input.  Let $H\in\R^{V\times n}$ be
the token-indicator matrix for a length-$n$ token sequence: the $t$'th column is the one-hot
vector for token $i_t$.  Writing $p_{\max}$ for the empirical frequency of the most common
token in the sequence, a direct calculation gives
\[
\st(H)=\frac{1}{p_{\max}},
\]
so the stable rank of the token-indicator matrix is the inverse frequency of the most common
token.  In natural language corpora, token frequencies are heavy-tailed (Zipf-type behavior),
so $p_{\max}$ is not extremely small and $\st(H)$ is modest.  Section~\ref{sec:embedding} shows
that a Gaussian embedding layer maps this indicator structure to an embedding matrix whose
stable rank is controlled (up to constants) by the same $p_{\max}$, and combining this base
case with the propagation rules in part~(2) below yields width- and sequence-length-independent
stable-rank bounds for the RMS-normalized hidden states and MLP activations throughout a randomly
initialized transformer (see Corollary~\ref{cor:deep-transformer-stablerank-linear}).  As an illustration, Figure~\ref{fig:embedding} tracks, in a NanoGPT training run, the stable rank of the token-indicator matrix and of the embedding activations throughout training.

\paragraph{(2) Stable-rank propagation through network blocks.}
The base cases (1a) and (1b) above explain why low-stable-rank structure can arise either from the activation
nonlinearity (mean spikes) or from the input representation (token indicators/embeddings).
To pass from these base cases to full architectures---especially transformers, where each block
contains normalization, attention mixing, residual updates, and (possibly gated) MLPs---we develop a
``calculus'' showing that stable rank remains small as the data is propagated through  standard
building blocks.  These results are proved in Section~\ref{sec:propagate}; Table~\ref{tab:sr-propagation}
informally summarizes the form of the bounds that we obtain.  In particular, at Gaussian initialization these rules show that the
stable rank of the representations entering each internal weight block is controlled by the stable
rank of the upstream representation, rather than inflating with width or sequence length.

\begin{table}[t]
\centering
\renewcommand{\arraystretch}{1.3}
\begin{tabular}{|c|c|c|}
\hline
{\bf Block} & {\bf Stable rank} & {\bf Assumptions} \\
\hline\hline
Linear map
& $\st(WX)\lesssim \st(X)$
& --- \\
Pointwise nonlinearity
& $\st(\sigma(WX))\lesssim \st(X)$
& $|\EE\sigma'(g)|>0$ \\
Residual connection
& $\st(X+WH)\lesssim \min\{\st(X),\st(H)\}$
& $\|X\|_F\asymp \|H\|_F$ \\
RMSNorm
& $\st\!\left(\begin{bmatrix} \tfrac{x_1}{\|x_1\|_2}& \cdots & \tfrac{x_n}{\|x_n\|_2}\end{bmatrix}\right)\lesssim \st(X)$
& nearly const.\ column norms \\
Gating
& $\st(\sigma(VZ)\odot WX)\lesssim \st(X)$
& nearly const.\ column norms,\ $|\EE\sigma(g)|>0$ \\
\hline
\end{tabular}
\caption{Stable-rank propagation rules for common network operations at Gaussian initialization; precise statements and constants appear in Section~\ref{sec:propagate}.
}
\label{tab:sr-propagation}
\end{table}

\paragraph{Empirical results.}
Although our sharpest guarantees apply at initialization, the same low-stable-rank structure
persists empirically well into training.  In a synthetic sparse-regression experiment (training a three layer neural network on the target $f(x)=x_1x_2x_3$),
Figure~\ref{fig:synthetic_stable_rank_sparse} shows that hidden-layer post-activations have
stable rank far below the ambient dimension throughout training.  In a NanoGPT-scale language
model run, Figure~\ref{fig:mlpnano} shows that MLP post-activations across all blocks maintain
low stable rank, and the embedding/token-indicator behavior is illustrated in
Figure~\ref{fig:embedding}.  RMS-normalized hidden states feeding attention and MLP projections
exhibit the same behavior; see Figure~\ref{fig:rmsactivations}.

Taken together, these results indicate that, already at initialization, the activation matrices
$A$ entering most internal weight blocks are typically low-stable-rank.  We now examine how this
degeneracy interacts with Euclidean and spectral updates in a simple, yet representative,
random-feature regression model.

\begin{figure}[h!]
\centering
\includegraphics[width=\textwidth]{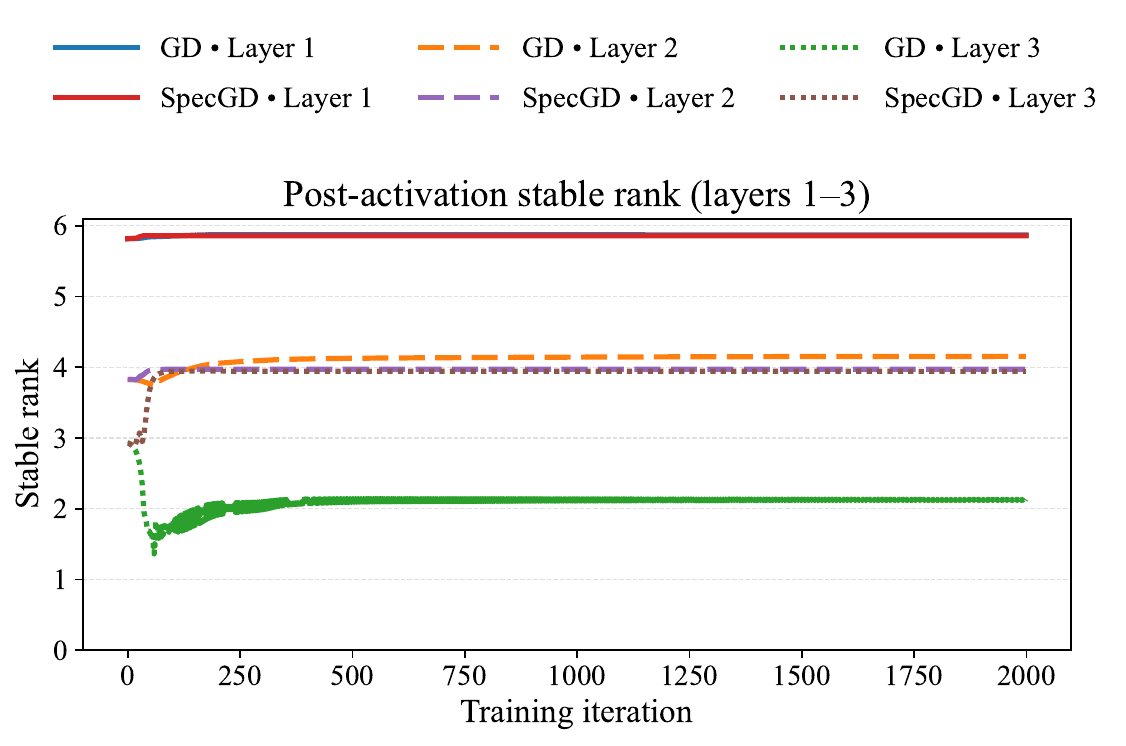}
 \caption{We generate $n= 512$ random gaussian vectors of dimension $d=128$ and use target $y$ which is the product of the first 3 coordinates. We then train a 4 layer feedforward neural network, with activation function $\sigma(t) = \max\{0, t\}^2$, mapping from input space $\RR^d$ to $\RR$ as follows: $\RR^d \rightarrow \RR^{4d}  \rightarrow \RR^{4d} \rightarrow \RR^{4d} \rightarrow \RR$. We start at the standard pytorch random initialization and run full batch algorithms: Gradient Descent and a Spectral descent method (only on layers 2 and 3, while layer 1 and 4 simply take Euclidean gradient steps; see Section~\ref{sec:multiple} for a justification of this choice) and observe the stable ranks of the activation matrices. We note that the maximum possible stable rank of such matrices is $256.$}
\label{fig:synthetic_stable_rank_sparse}
\end{figure}

\begin{figure}[h!]
\centering
\includegraphics[width=\textwidth]{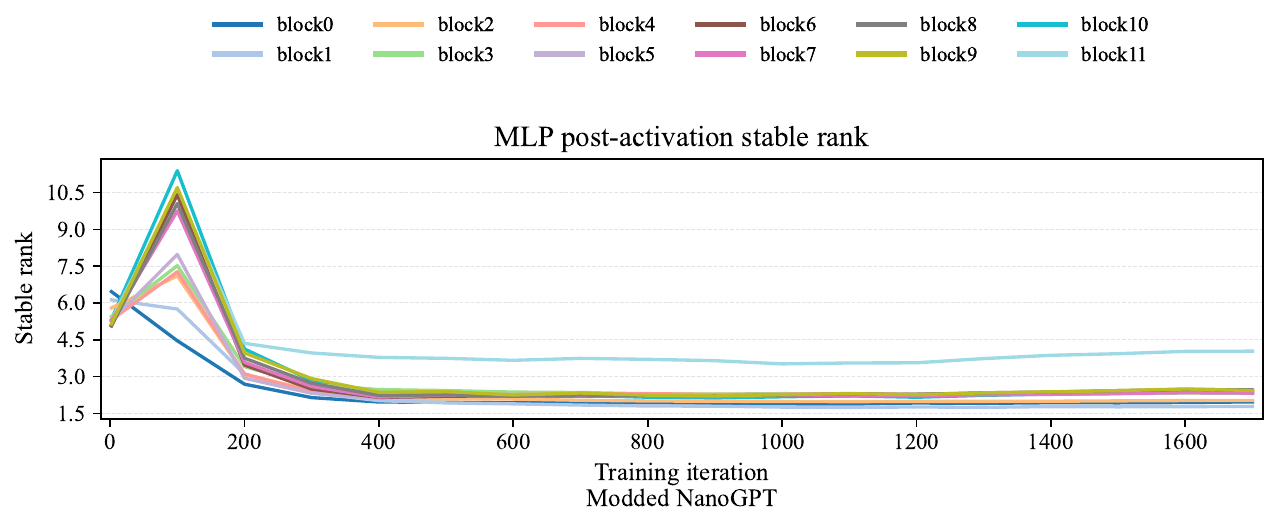}
\caption{Stable rank of MLP post activations while running a particular snapshot (specifically~\cite{modded_nanogpt_july18}) of the modded-NanoGPT repo~\cite{moddednanogpt2025}. We also include the July 18th snapshot of the architecture at the time of training in Section~\ref{figure:nanogpt}. We note that the maximal rank of any activation matrix in this plot is $3072$. Thus, the the stable rank of the post-activations are far below their maximal value.}
\label{fig:mlpnano}
\end{figure}

\subsection{Impact of post-activation degeneracy on training algorithms}\label{sec:rand_regress}
Low-stable-rank post-activations have a pronounced impact on the behavior of first-order training algorithms. We first make this precise in the random-feature regression model:
\begin{equation}\label{eqn:rfr_intro}
\min_{W\in \mathbb{R}^{m\times k}}~ \mathcal{L}(W):=\tfrac{1}{2n}\|WA-Y\|_F^2.
\end{equation}
We will think about $W$ as the $\ell$'th layer weight matrix to be trained and $A$ is the post-activation matrix of the propogated data from the previous layers. The random feature model is a standard and analytically tractable simplification of neural networks; see for example \cite{rahimi2007random,rahimi2008weighted,cho2009kernel,daniely2016toward,rudi2017generalization}.

Now being a quadratic, the function $\mathcal{L}$ may be rewritten as a Taylor-expansion:
\begin{equation}\label{eqn:expansion}
\mathcal{L}(W+U) = \mathcal{L}(W)+\langle \nabla \mathcal{L}(W),U\rangle+\tfrac{1}{2n}\|UA   \|_F^2.
\end{equation}
The usual starting point for first-order algorithms is based on the majorization principle: one upper-bounds the pure quadratic term $\|UA\|_F^2$ in \eqref{eqn:expansion} by a function that decouples $U$ from the data $A$. The update direction of the algorithm is then obtained by minimizing this simple upper model over $U$. Euclidean gradient descent ($\mathtt{GD}$) and spectral gradient descent ($\mathtt{SpecGD}$) proposed in \cite{carlson2015preconditioned} utilize the following two approximations, respectively:\footnote{Note that both of these inequalities are tight for some matrix $U$.}
\begin{equation}\label{eq:two-bounds}
\tfrac{1}{n}\|UA\|_F^2 \leq \underbrace{\tfrac{1}{n}\|A\|_{\rm op}^2\,}_{=:L_{F}}\cdot \|U\|_F^2
\qquad \textrm{and}\qquad
\tfrac{1}{n}\|UA\|_F^2 \leq \underbrace{\tfrac{1}{n}\|A\|_{F}^2}_{=:L_{\rm op}}\,\cdot \|U\|_{\rm op}^2.
\end{equation}
Thus $\mathtt{GD}$ and $\mathtt{SpecGD}$ impose the (Euclidean) Frobenius norm and the (non-Euclidean) operator norm on the domain, respectively.
Combining \eqref{eqn:expansion} with each of the inequalities \eqref{eq:two-bounds} and minimizing in $U$ yields the explicit updates for $\mathtt{GD}$ and $\mathtt{SpecGD}$, respectively:
\[
W_{\rm GD}:=W-\tfrac{1}{L_F}\nabla \mathcal{L}(W)\qquad\textrm{and}\qquad W_{\rm SD}=W-\tfrac{\|\nabla \mathcal{L}(W)\|_*}{L_{\rm op}}{\rm polar}(\nabla \mathcal{L}(W)).
\]
Here, the polar of the matrix is the product of the left and right orthogonal factors appearing in its singular value decomposition. $\mathtt{MUON}$ is a variant of $\mathtt{SpecGD}$, where the polar is approximated by the Newton-Schulz algorithm, and one further incorporates momentum.
Now the guaranteed function decrease achieved by $\mathtt{GD}$ and $\mathtt{SpecGD}$ is fully determined by the dual norm of the gradient:
\begin{align*} 
\mathcal{L}(W)-\mathcal{L}(W_{\rm GD})\geq \frac{1}{2L_F}\|\nabla\mathcal{L}(W)\|^2_F \qquad\text{and} \qquad \mathcal{L}(W)-\mathcal{L}(W_{\rm SD})\geq \frac{1}{2L_{\rm op}}\|\nabla\mathcal{L}(W)\|^2_{*}.
\end{align*}
Comparing the two bounds, we see that $\mathtt{SpecGD}$ promises more descent than $\mathtt{GD}$ whenever
\begin{equation}
\nr(\nabla\mathcal{L}(W)) \geq {\rm st}(A), 
\label{eq:sd-vs-gd-condition}
\end{equation}
When the stable rank of the post-activation ${\rm st}(A)$ is constant (independent of dimension), one expects that \eqref{eq:sd-vs-gd-condition} will hold, since the ratio on the left-side can scale with the dimension.

As a numerical illustration (Figure~\ref{fig:random_features}), we compare the performance of $\mathtt{GD}$ and $\mathtt{SpecGD}$ on random feature regression corresponding to training second layer weights, namely $A_{1}=\sigma(W_1 X)$, where $W_1\in \mathbb{R}^{100\times 50}$ and $X\in \mathbb{R}^{100\times 50}$ have iid standard Gaussian entries and $\sigma(t)=\max\{0,t\}$ is the ReLU activation function applied coordinatewise. It is clear from the figure that $(i)$ equation~\eqref{eq:sd-vs-gd-condition} holds along both $\mathtt{GD}$ and $\mathtt{SpecGD}$ trajectories and $(ii)$ $\mathtt{SpecGD}$ significantly outperforms $\mathtt{GD}$ as expected.

\begin{figure}[h]
\centering
\includegraphics[width=\textwidth]{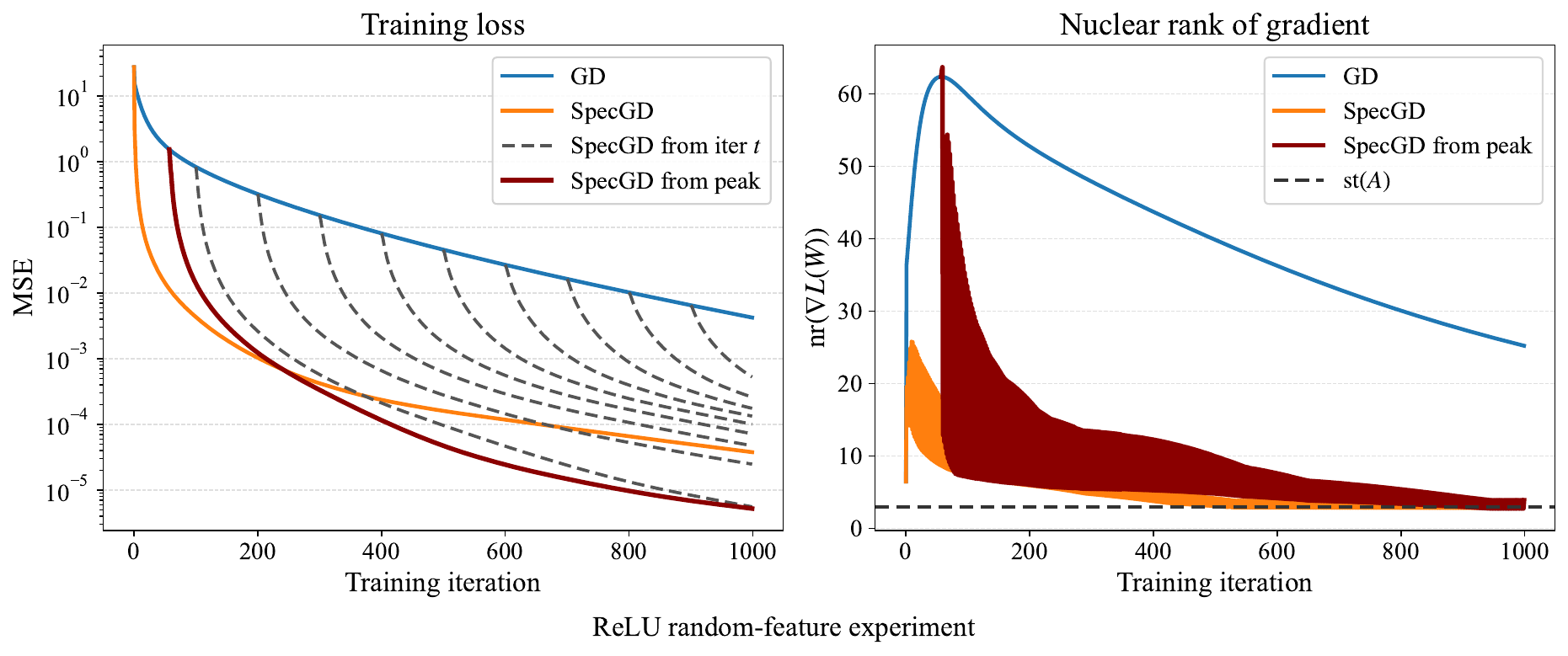}
\caption{Comparison of gradient descent ($\mathtt{GD}$) and spectral gradient descent $(\mathtt{SpecGD})$ on the random feature model $\min_{W}\mathcal{L}(W)=\tfrac{1}{2n}\|WA-Y\|_F^2$ with $W\in \mathbb{R}^{100\times 100}$. The ground truth matrix $W_{\sharp}\in \mathbb{R}^{100\times 100}$ is drawn with iid standard Gaussian entries. The data matrix is generated by $A=\sigma(W_1 X)$, where $W_1\in \mathbb{R}^{100\times 50}$ and $X\in \mathbb{R}^{100\times 400}$ have iid standard Gaussian entries and $\sigma(t)=\max\{0,t\}$ is the ReLU activation function applied coordinatewise. The target matrix $Y=W_{\sharp}A$ is generated from a ground truth matrix $W_{\sharp}\in \mathbb{R}^{100\times 100}$ that has iid standard Gaussian entries. Both methods are initialized at the all-zeros matrix.
Left: supoptimality gap in function value along the $\mathtt{GD}$ (solid blue) and $\mathtt{SpecGD}$ (solid gold) iterations. The dashed curves plot the suboptimality gap if we were to initialize $\mathtt{SpecGD}$ at the current $\mathtt{GD}$ step, plotted every $100$ iterations. The superior performance of $\mathtt{SpecGD}$ is clear from the figure. Right: nuclear rank $\nr(\nabla \mathcal{L}(W))$ of the gradient along the $\mathtt{GD}$ and  $\mathtt{SpecGD}$ iterations initialized at the all-zeros matrix; the black dashed line signifies the level ${\rm st}(A)$, above which $\mathtt{SpecGD}$ is superior to $\mathtt{GD}$. The nuclear rank $\nr(\nabla \mathcal{L}(W))$ can be large (with \eqref{eq:sd-vs-gd-condition} holding) along the trajectories.}
\label{fig:random_features}
\end{figure}

The argument we presented above is particularly clean for least squares, but it can be generalized much further. As a first example, we stay within the random feature model and consider a smooth loss $f$ on $WA$ for which the gradient $\nabla f$ is $L$-Lipschitz. Note that we can think of $W$ as  a fixed layer within a network, keeping the rest of the weights frozen, and $A$ would be the previous layers post-activations. Now, define $\cL(W) := f(WA)$ and note that
\begin{align}\label{eq:general_L}
\cL(W + U) \leq \cL(W) + \dotp{ \nabla L(W), U} + \frac{L}{2}\|UA\|_{F}^2.
\end{align}
Then, as before, we may bound $\|UA\|_{F}^2$ by $\|A\|_{\op}^2\|U\|_{F}^2$ or $\|A\|_F^2\|U\|_{\op}^2$, and minimize the two constructed bounds in the variable $U$. This gives us $\mathtt{GD}$ and $\mathtt{SpecGD}$, respectively:
\[
W_{\rm GD}:=W-\tfrac{1}{L_F\cdot L}\nabla \mathcal{L}(W)\qquad\textrm{and}\qquad W_{\rm SD}=W-\tfrac{\|\nabla \mathcal{L}(W)\|_*}{L_{\rm op}\cdot L}{\rm polar}(\nabla \mathcal{L}(W)),
\]
Plugging in the update in~\eqref{eq:general_L}, we find
\begin{align*} 
\mathcal{L}(W)-\mathcal{L}(W_{\rm GD})\geq \frac{1}{2L_F\cdot L}\|\nabla\mathcal{L}(W)\|^2_F \qquad\text{and} \qquad \mathcal{L}(W)-\mathcal{L}(W_{\rm SD})\geq \frac{1}{2L_{\rm op}\cdot L}\|\nabla\mathcal{L}(W)\|^2_{*}.
\end{align*}
Thus,  descent promised by $\mathtt{SpecGD}$ is again greater than that promised by $\mathtt{GD}$ whenever~\eqref{eq:sd-vs-gd-condition} holds.

Although the random feature model is idealized, the derived conclusions appear to hold more generally. For example, in Figure~\ref{fig:synthetic_run_grad_vs_training_loss_sparse} we build on the sparse regression setting of  Figure~\ref{fig:synthetic_stable_rank_sparse}, plotting the performance of $\mathtt{GD}$ and $\mathtt{SpecGD}$, along with the the nuclear rank $\nr(\nabla \mathcal{L}(W))$.  $\mathtt{SpecGD}$ exhibits significantly better performance than $\mathtt{GD}$, especially early on in training when the nuclear ranks of the gradients are large. Further, building on Figure~\ref{fig:mlpnano}, we plot in Figure~\ref{fig:mlpgrad}  the ratio $\nr(\nabla \mathcal{L}(W))$ for the MLP layers when training the NanoGPT model. We observe that this ratio can indeed be  large along the trajectory, thereby suggesting a significant advantage of $\mathtt{MUON}$/$\mathtt{SpecGD}$ over $\mathtt{GD}$.

Implementing spectral updates such as \texttt{SpecGD}/\texttt{Muon} in large distributed runs is nontrivial because the relevant gradient (or momentum) matrices are typically sharded across devices; in Appendix~\ref{app:distributed-specgd} we analyze a shardwise alternative that applies the polar-factor map independently to each local shard (incurring no incremental communication) and derive the corresponding analogue of the nuclear-rank versus stable-rank condition.

\begin{figure}[h]
\centering
\begin{subfigure}[t]{0.48\textwidth}
  \centering
\includegraphics[width=\linewidth]{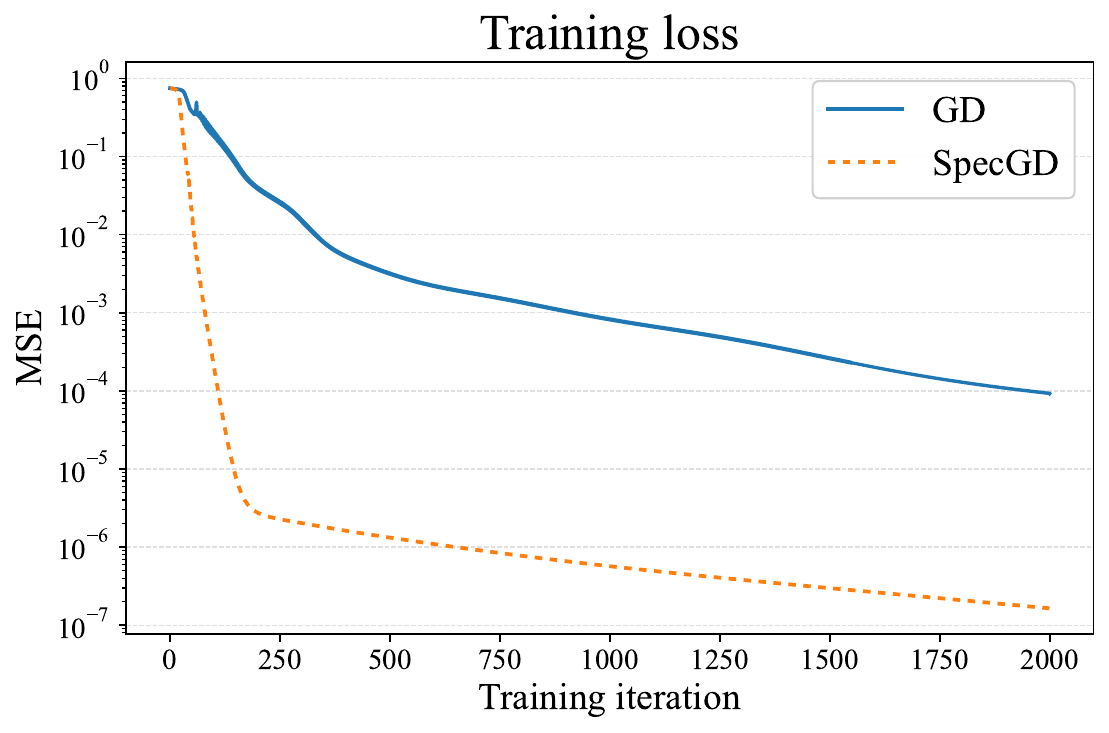}
  \caption{Layer 1}
\end{subfigure}\hfill
\begin{subfigure}[t]{0.48\textwidth}
  \centering
  \includegraphics[width=\linewidth]{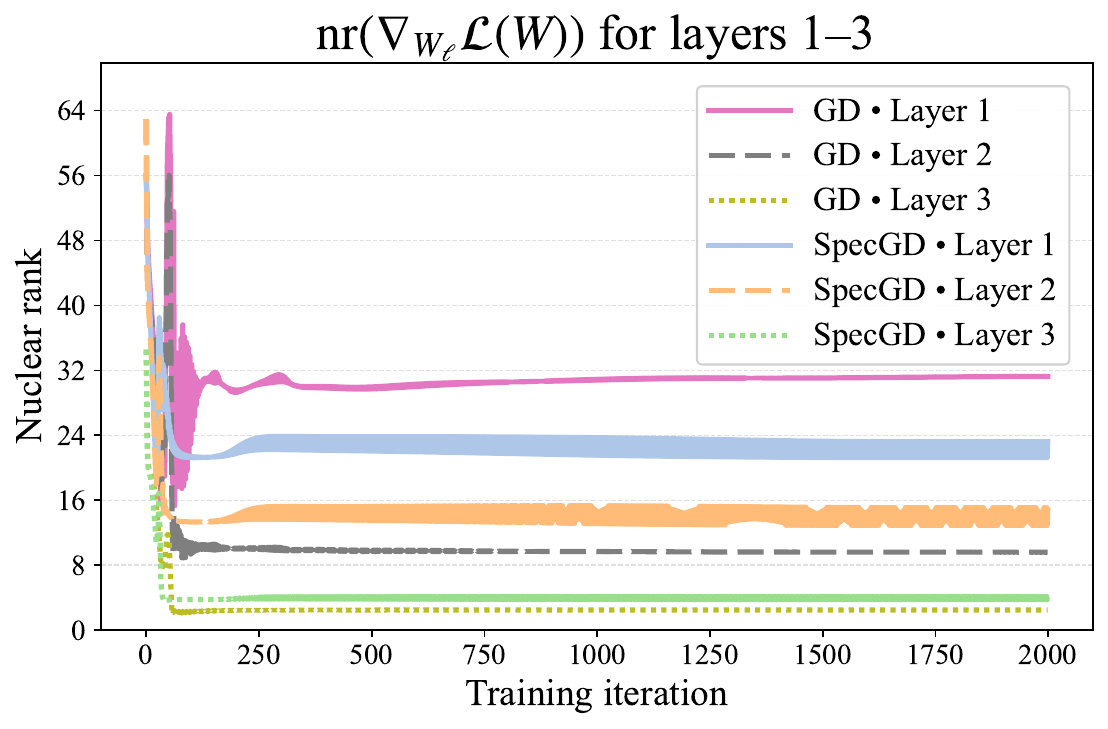}
  \caption{Layer 2}
\end{subfigure}
\caption{The training loss and gradient nuclear rank associated to the sparse regression problem in Figure~\ref{fig:synthetic_stable_rank_sparse}. We see that the training loss for $\mathtt{SpecGD}$ decreases significantly faster than for $\mathtt{GD}$ during the initial phase of training, when the nuclear rank of the gradient is large. The maximum possible nuclear rank at layer 1 is 128, while for layers 2 and 3, it is 256.}
\label{fig:synthetic_run_grad_vs_training_loss_sparse}
\end{figure}

\begin{figure}[h]
\centering
\includegraphics[width=\linewidth]{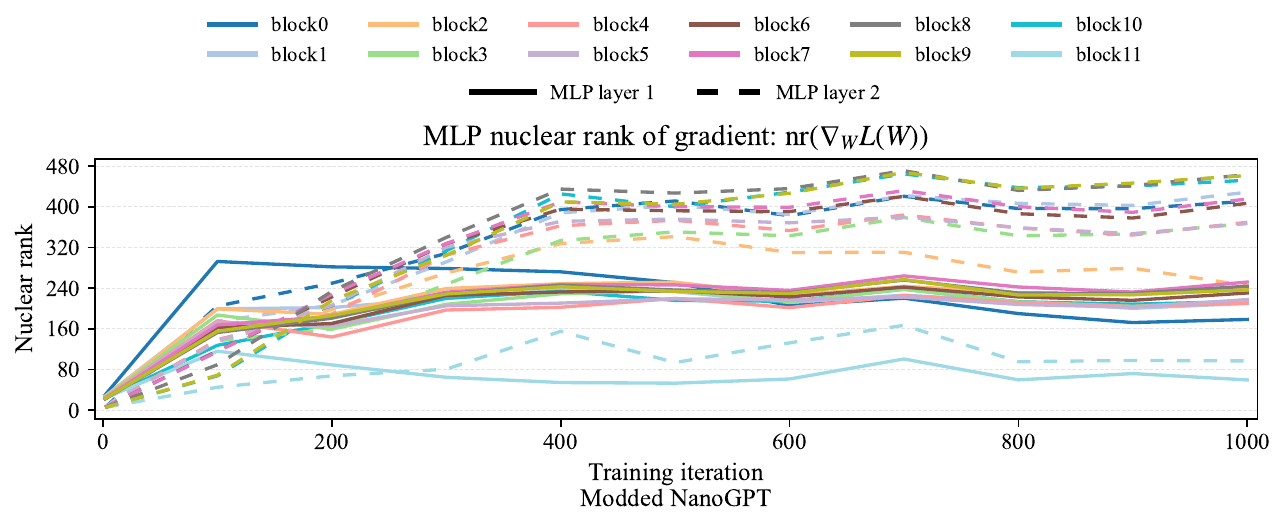}
%

\caption{Modded NanoGPT MLP gradient nuclear ranks.}
\label{fig:mlpgrad}
\end{figure}

\subsubsection{Multiple layers and transformer blocks}\label{sec:multiple}

The discussion so far has treated a single matrix block $W$ acting on a post-activation matrix $A$,
and has shown that whenever the incoming features have low stable rank and the gradient has large
nuclear rank, a spectral update on $W$ enjoys a strictly better one-step guarantee than
its Euclidean counterpart. In practice, however, we train \emph{all} layers of a deep architecture
simultaneously, and the parameters interact through a highly coupled nonlinearity. To understand
when spectral methods are still favored in this setting, we abstract the architecture into a
general layered model and study the full Hessian of the composite loss.

In Section~\ref{sec:layered-descent} we formalize this viewpoint. We fix preactivation matrices
$X_\ell(W)=W_\ell A_{\ell-1}(W)$ at each layer and let the activations $A_\ell(W)$ be obtained
from $(X_1(W),\dots,X_\ell(W))$ through a twice differentiable map $\Phi_\ell$. The outer loss
$f$ is defined on the collection of preactivations $(X_1,\dots,X_L)$, and the training objective is
$\mathcal L(W):=f(X_1(W),\dots,X_L(W))$. In this setting we show that the Hessian of $\mathcal L$
is controlled by a feature-based term plus a smaller parameter-dependent term:
\[
\big\langle \Delta W,\nabla^2\mathcal L(W)\,\Delta W\big\rangle
\;\le\; C_F(W)\sum_{\ell=1}^L \|\Delta W_\ell A_{\ell-1}(W)\|_F^2
\;+\; C_{\op}(W)\sum_{\ell=1}^L \|\Delta W_\ell\|_{\op}^2,
\]
for all perturbations $\Delta W=(\Delta W_1,\dots,\Delta W_L)$ and some finite constants
$C_F(W)$ and $C_{\op}(W)$ depending smoothly on the current iterate. For the standard $1/n$-averaged
losses we consider (squared loss, cross-entropy), $C_F(W)$ scales like $O(1/n)$, so the feature-based
curvature constants inherit the usual $1/n$ dependence on the batch size. Both $C_F(W)$ and $C_{\op}(W)$
are otherwise dimension-free in the sense that they depend only on the network architecture, depth, and current
weight operator norms, and not on width or sequence length.\footnote{For simplicity, we state our Hessian bounds with a single feature term
$\sum_{\ell} \|\Delta W_\ell A_{\ell-1}\|_F^2$ and a single parameter term
$\sum_{\ell} \|\Delta W_\ell\|_{\op}^2$. All of our arguments, most importantly the layerwise
criterion comparing $\nr(G_\ell)$ to $\st(A_{\ell-1})$, continue to hold if one
instead uses blockwise curvature weights $C_{\ell,F}(W)$ in front of
$\|\Delta W_\ell A_{\ell-1}\|_F^2$ and $C_{\ell,\op}(W)$ in front of $\|\Delta W_\ell\|_{\op}^2$.
This refinement only rescales the natural step sizes $L_\ell^{\mathrm F},L_\ell^{\mathrm op}$ for each
block and does not affect the nuclear-versus-stable-rank comparison.}

Applying Taylor's theorem with remainder, we may therefore write,
\begin{equation}\label{eq:intro-layered-taylor}
\mathcal L(W+\Delta W)
= \mathcal L(W)
+ \sum_{\ell=1}^L \big\langle G_\ell(W),\Delta W_\ell\big\rangle
+ \tfrac12\big\langle \Delta W,\nabla^2\mathcal L(W)\,\Delta W\big\rangle
+ R(\Delta W),
\end{equation}
where $G_\ell(W):=\nabla_{W_\ell}\mathcal L(W)$ and $R(\Delta W) = o(\|\Delta W\|_F)$ collects higher order
terms. Combining the Hessian bound with the basic norm inequalities
\[
\|\Delta W_\ell A_{\ell-1}\|_F
\le \|A_{\ell-1}\|_{\op}\,\|\Delta W_\ell\|_F,
\qquad
\|\Delta W_\ell A_{\ell-1}\|_F
\le \|A_{\ell-1}\|_F\,\|\Delta W_\ell\|_{\op},
\]
we obtain feature-based blockwise curvature coefficients
\[
L_\ell^{\mathrm F}(W) := C_F(W)\,\|A_{\ell-1}(W)\|_{\op}^2,
\qquad
L_\ell^{\mathrm op}(W) := C_F(W)\,\|A_{\ell-1}(W)\|_F^2.
\]
Their ratio is exactly the stable rank of the incoming features:
\[
\frac{L_\ell^{\mathrm op}(W)}{L_\ell^{\mathrm F}(W)}
= \frac{\|A_{\ell-1}(W)\|_F^2}{\|A_{\ell-1}(W)\|_{\op}^2}
= \mathrm{st}\!\big(A_{\ell-1}(W)\big).
\]

These feature-based constants suggest natural blockwise step sizes. In the
layered analysis of Section~\ref{sec:layered-descent} (see in particular
Section~\ref{subsec:descent-comparison}), we proceed as follows. For each block
$\ell$ we fix two descent directions:
\[
  \text{GD direction: } -G_\ell(W),
  \qquad
  \text{spectral direction: } -\mathrm{polar}\big(G_\ell(W)\big),
\]
and then choose scalar step sizes along these directions that minimize the
quadratic upper bound in the Taylor model~\eqref{eq:intro-layered-taylor}
under the mixed Hessian bound. This produces two full updates:
a Euclidean update $W^{\mathrm{GD}}$ and a mixed spectral update
$W^{\mathrm{Spec}}$ which applies spectral steps on blocks in
$\mathcal S \subseteq [L]$ and Euclidean steps on the remaining blocks.

Section~\ref{subsec:descent-comparison} shows that the spectral update has at
least as much predicted decrease as the Euclidean update yet again when
the nuclear-rank versus
stable-rank inequality holds:
\begin{equation}\label{eq:intro-key-condition}
  \nr(G_\ell(W))
  \;\ge\;
  \st\!\big(A_{\ell-1}(W)\big).
\end{equation}
Thus, already at the level of the layered quadratic model, we recover the same
criterion as in the random-feature setting: spectral updates are favored on
those blocks whose incoming activations $A_{\ell-1}(W)$ have low stable rank
while their gradients $G_\ell(W)$ have high nuclear rank.

For fully connected networks this picture is particularly transparent. The matrices
$A_{\ell-1}$ are precisely the layerwise post-activations introduced in
\eqref{eqn:post_activ}. Section~\ref{sec:spikes_and_embeddings} shows that with when the activation are mean spike indusing, the stable rank $\mathrm{st}(A_{\ell-1})$ is
bounded by a numerical constant for all internal layers at initialization, while the input $A_0=X$ need not have
any such structure. In this regime, the constants $L_\ell^{\mathrm op}$ and
$L_\ell^{\mathrm F}$ differ only by a fixed factor for $2\le \ell\le L-1$, but may be very
different at the input and output layers. Our implementation of $\mathtt{SpecGD}$  therefore uses the spectral step with step size
$\|G_\ell\|_*/L_\ell^{\mathrm op}$ on all internal weight matrices, and keeps the first and last
layers on standard Euclidean steps with step size $1/L_\ell^{\mathrm F}$. In other words, we
apply spectral updates exactly where the incoming post-activations are empirically low-stable-rank,
and retain Euclidean updates where the features (raw inputs or task-specific heads) are not
guaranteed to be degenerate.

Transformers require a slightly more detailed bookkeeping, because the matrices which appear
multiplying a given weight are composites of several operations: normalization, attention,
residual connections and MLPs. Nevertheless, the layered formalism and the seminorm
$\sum_\ell\|\Delta W_\ell A_{\ell-1}\|_F^2$ apply unchanged once we view each transformer block
as built from linear maps $W_\ell$ composed with smooth activation maps~$\Phi_\ell$. For concreteness, we write formulas for a single decoder block acting on a sequence representation
$X\in\RR^{d_{\mathrm{model}}\times T}$ (hidden dimension $d_{\mathrm{model}}$, sequence length
$T$), and we suppress masking, multiple-heads, and positional encodings (e.g.\ RoPE) from the notation, though we include them in Section~\ref{sec:transformers}.
The single-head attention sublayer can be written as
\begin{align*}
A^{\mathrm{rms}} &= \mathrm{RMSNorm}(X),\\
Q &= W_Q A^{\mathrm{rms}},\qquad
K = W_K A^{\mathrm{rms}},\qquad
V = W_V A^{\mathrm{rms}},\\
P &= \mathrm{softmax}\big(d_{\mathrm{model}}^{-1/2} K^\top Q\big),\\
H &= V P,\\
X^{\mathrm{att}} &= X + W_O H,
\end{align*}
where $W_Q,W_K,W_V,W_O$ are the attention weight matrices and $d_{\mathrm{embed}}$ is the embedding dimension. The
MLP sublayer then takes the form
\begin{align*}
A^{\mathrm{rms}}_{\mathrm{mlp}} &= \mathrm{RMSNorm}(X^{\mathrm{att}}),\\
B &= \sigma(W_1 A^{\mathrm{rms}}_{\mathrm{mlp}}),\\
X^+ &= X^{\mathrm{att}} + W_2 B,
\end{align*}
where $\sigma$ is a pointwise nonlinearity (e.g.\ GELU) and $W_1,W_2$ are the MLP weights. At the
 end of the network, a language-modeling head applies an RMS normalization followed by a
linear map
\[
A^{\mathrm{rms}}_{\mathrm{final}} = \mathrm{RMSNorm}(X^+),\qquad
Z = W_{\mathrm{lm}} A^{\mathrm{rms}}_{\mathrm{final}},
\]
producing logits $Z$ over the vocabulary.

In this block-level description, each trainable matrix appears exactly in the affine form
$X_\ell = W_\ell A_{\ell-1}$ required by our layered model. The corresponding incoming features
$A_{\ell-1}$ are:
\[
A_{\ell-1} \in
\bigl\{
A^{\mathrm{rms}},\,
A^{\mathrm{rms}}_{\mathrm{mlp}},\,
B,\,
H,\,
A^{\mathrm{rms}}_{\mathrm{final}}
\bigr\},
\]
depending on whether $W_\ell$ is one of $W_Q,W_K,W_V,W_O,W_1,W_2,W_{\mathrm{lm}}$. 
For simplicity we have written $\mathrm{RMSNorm}$ as parameter–free, but in practice each normalization can carry a learned weight vector $\gamma\in\R^{d_{\mathrm{model}}}$ acting elementwise as
\(
x \mapsto \gamma \had \frac{x}{\|x\|_{\mathrm{rms}}}.
\)
In our notation these weights can be treated as additional diagonal blocks
$W_{\mathrm{rms}}:=\Diag(\gamma)$ acting on the parameter–free RMS–normalized activations $\widetilde A^{\mathrm{rms}}$, so that $A^{\mathrm{rms}}=W_{\mathrm{rms}}\widetilde A^{\mathrm{rms}}$ and the affine form $X_\ell = W_\ell A_{\ell-1}$ continues to hold. The corresponding updates are slightly different. Indeed, on diagonal blocks spectral updates reduce to a sign–type update in the gain. We discuss this in more detail in Appendix~\ref{sec:rms-diagonal}. Note that the modded-NanoGPT repo~\cite{modded_nanogpt_july18} does not train the RMSNorm parameter, and instead uses fixed weight $\gamma = \mathbf{1}$.

Our
measurements in Figure~\ref{fig:mlpnano} (MLP post-activations $B$) and in Figure~\ref{fig:rmsactivations} (RMS activations $A^{\mathrm{rms}},\,
A^{\mathrm{rms}}_{\mathrm{mlp}},\,
A^{\mathrm{rms}}_{\mathrm{final}}$) show that the RMS-normalized activations
$A^{\mathrm{rms}},A^{\mathrm{rms}}_{\mathrm{mlp}},A^{\mathrm{rms}}_{\mathrm{final}}$ and the
MLP post-activations $B$ all have low stable rank throughout training. For the attention output
features $H$ we may further write
\[
H = V P = W_V A^{\mathrm{rms}} P,
\]
so that for perturbations of the output projection $W_O$ we have
\[
\Delta W_O H = \Delta W_O W_V A^{\mathrm{rms}} P.
\]
Here the RMS-normalized activations $A^{\mathrm{rms}}$ sit \emph{between} two linear operators,
but submultiplicativity still isolates their contribution:
\[
\|\Delta W_O H\|_F
= \|\Delta W_O W_V A^{\mathrm{rms}} P\|_F
\;\le\; \|\Delta W_O\|_{\op}\,\| W_V\|_{\op}\,\|A^{\mathrm{rms}}\|_F\,\|P\|_{\op}.
\]
More generally, each seminorm term $\|\Delta W_\ell A_{\ell-1}\|_F$ associated with a
transformer block can be written as a product of the form
\[
\Delta W_\ell A_{\ell-1}
= \Delta W_{\ell} M_{\ell,\mathrm{left}}\;\widetilde A_{\ell-1}\;M_{\ell,\mathrm{right}},
\]
where $\widetilde A_{\ell-1}$ is one of the low-stable-rank matrices
$A^{\mathrm{rms}},A^{\mathrm{rms}}_{\mathrm{mlp}},B,A^{\mathrm{rms}}_{\mathrm{final}}$, and
$M_{\ell,\mathrm{left}},M_{\ell,\mathrm{right}}$ are fixed linear maps (products of attention
projections, residual additions, or the attention kernel $P$). Using
\[
\|\Delta W_\ell A_{\ell-1}\|_F
\;=\; \|\Delta W_\ell\; M_{\ell,\mathrm{left}}\;\widetilde A_{\ell-1}\;M_{\ell,\mathrm{right}}\|_F
\;\le\; \|\Delta W_\ell\|_{\op}\|M_{\ell,\mathrm{left}}\|_{\op}\,\|M_{\ell,\mathrm{right}}\|_{\op}\,
\|\widetilde A_{\ell-1}\|_F,
\]
we see that the curvature constants relevant for block~$\ell$ scale with
$\|\widetilde A_{\ell-1}\|_F^2$, and the ratio $L_\ell^{\mathrm op}/L_\ell^{\mathrm F}$ is
again governed by the stable rank of these RMS and MLP activations.

To make the connection between gradient nuclear ranks and the upstream feature
matrices completely explicit, it is useful to summarize, for each parameter block, which
activation matrix enters the condition~\eqref{eq:intro-key-condition}. Table~\ref{tab:grad-activation}
records the correspondence we will use in our experiments.

\begin{table}[h]
\centering
\begin{tabular}{ll}
\hline
Parameters $W_\ell$ & Activation in \eqref{eq:intro-key-condition} (for $W_\ell$) \\
\hline
$W_Q$           & $A^{\mathrm{rms}}$ \\
$W_K$           & $A^{\mathrm{rms}}$ \\
$W_V$           & $A^{\mathrm{rms}}$ \\
$W_O$           & $A^{\mathrm{rms}}$ (via $H = W_V A^{\mathrm{rms}} P$) \\
$W_1$           & $A^{\mathrm{rms}}_{\mathrm{mlp}}$ \\
$W_2$           & $B$ \\
$W_{\mathrm{lm}}$ & $A^{\mathrm{rms}}_{\mathrm{final}}$ \\
Embedding       & $X$ (moderate stable-rank guarantee) \\
\hline
\end{tabular}
\caption{For each parameter block $W_\ell$, the activation matrix whose stable rank appears on the
right-hand side of~\eqref{eq:intro-key-condition}. The corresponding left-hand side is always the
gradient nuclear rank $\nr(G_\ell)$ for that same block.}
\label{tab:grad-activation}
\end{table}

Figures~\ref{fig:mlpgrad}, \ref{fig:rmsactivations}, \ref{fig:gradientembeddingslmhead}, 
\ref{fig:gradientattentionqk}, and 
\ref{fig:gradientattentionvo} illustrate how these quantities behave in a NanoGPT run. The plots
in Figure~\ref{fig:mlpgrad} show that the gradient nuclear ranks for the MLP weights remain
consistently large along training, while Figures~\ref{fig:mlpnano} and~\ref{fig:rmsactivations}
demonstrate that both the MLP post-activations $B$ and the RMS-normalized activations
$A^{\mathrm{rms}},A^{\mathrm{rms}}_{\mathrm{mlp}},A^{\mathrm{rms}}_{\mathrm{final}}$ have
stable rank bounded by a small numerical constant. Thus, for these blocks, the inequality
\eqref{eq:intro-key-condition} is strongly activated, and the simple GD-based model predicts a clear advantage
for spectral-style updates. This is consistent with the convergence bounds presented in~\cite{pethick2025scion}, where improvements over Euclidean methods are also driven by
large nuclear-to-Frobenius ratios. 

\begin{figure}[h]
\centering
\includegraphics[width=\linewidth]{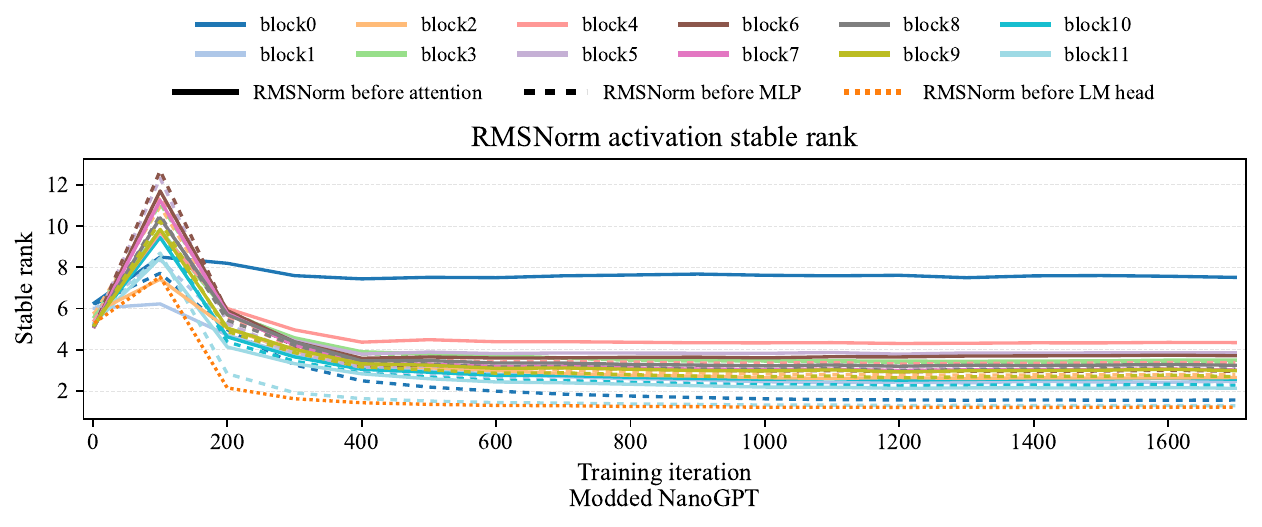}
\caption{Stable rank of RMS-normalized activations $A^{\mathrm{rms}},A^{\mathrm{rms}}_{\mathrm{mlp}},
A^{\mathrm{rms}}_{\mathrm{final}}$.}
\label{fig:rmsactivations}
\end{figure}

\begin{figure}[h]
\centering
\includegraphics[width=\linewidth]{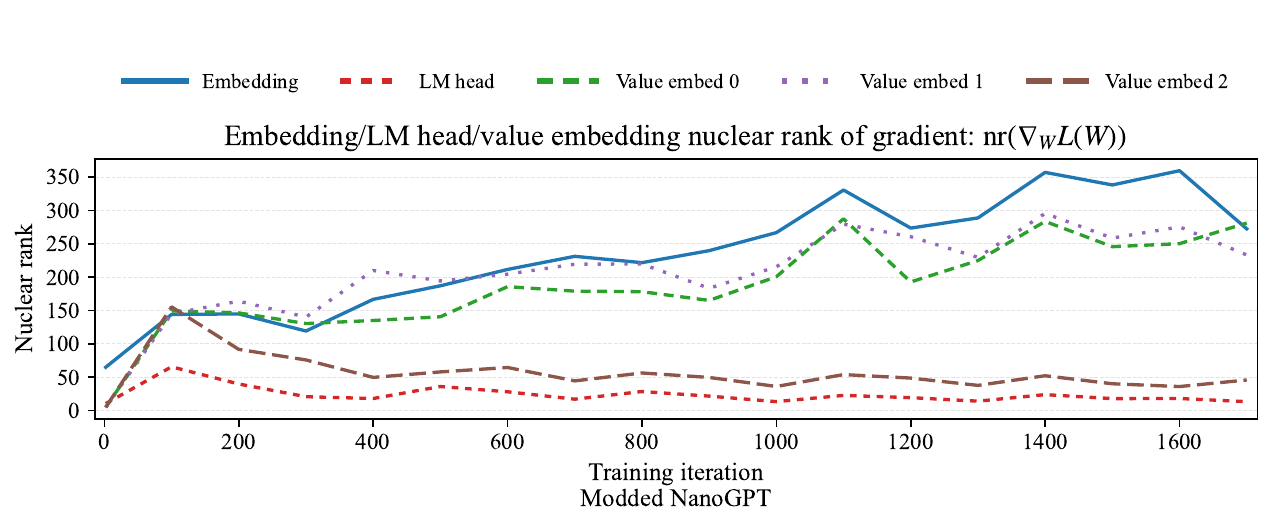}
\caption{Gradient nuclear ranks for the token embedding and language-model head
parameters in the same NanoGPT run. The mean ratio for the LM head over the last $1000$ steps
is approximately $42.68$.}
\label{fig:gradientembeddingslmhead}
\end{figure}

\begin{figure}[h]
\centering
\includegraphics[width=\linewidth]{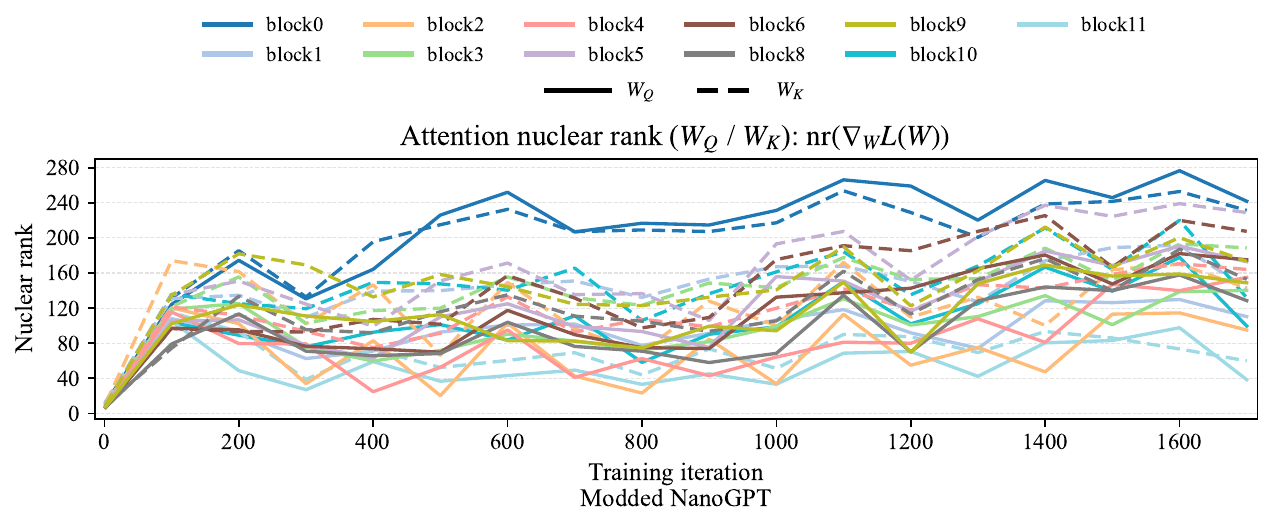}
\caption{Gradient nuclear ranks for the attention parameters
$W_Q,W_K$ in the same NanoGPT run.}
\label{fig:gradientattentionqk}
\end{figure}

\begin{figure}[h]
\centering
\includegraphics[width=\linewidth]{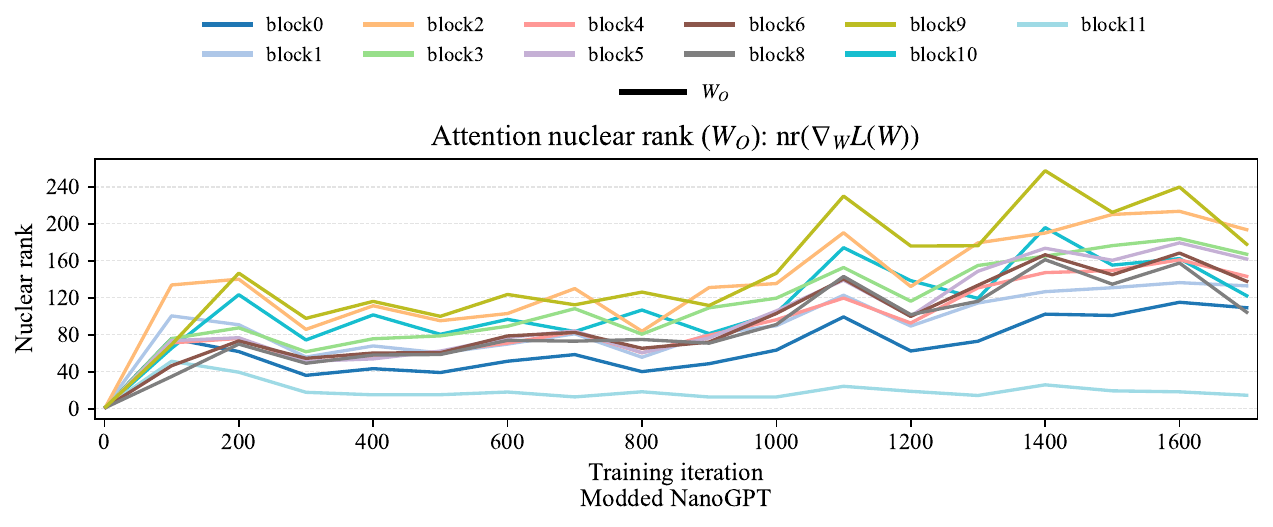}
\includegraphics[width=\linewidth]{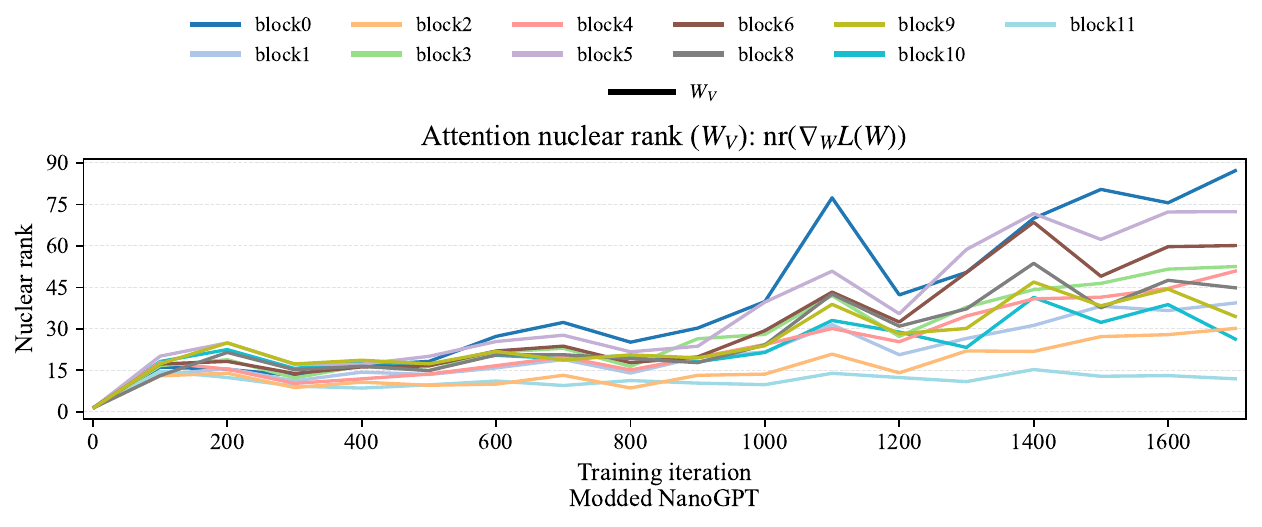}
\caption{Gradient nuclear ranks for the attention parameters
$W_O,W_V$ in the same NanoGPT run.}
\label{fig:gradientattentionvo}
\end{figure}

Figures~\ref{fig:gradientattentionqk} and \ref{fig:gradientattentionvo} shows a similar behavior for the attention weights:
the gradient nuclear ranks for $W_Q,W_K,W_O$ are typically large and $W_V$ is more modestly sized, while their
reference activations in Table~\ref{tab:grad-activation} are the low-stable-rank RMS features
of Figure~\ref{fig:rmsactivations}. In contrast, Figure~\ref{fig:gradientembeddingslmhead}
reveals a different picture for the input embedding weights and the language-model head. The embedding
matrix multiplies a raw token-indicator matrix whose stable rank hovers around $26$ (see section~\ref{sec:embedding} for more on this), which is modestly
low relative to the ambient dimension but noticeably larger than the stable ranks of the internal
post-activations; the nuclear rank of the embedding gradient grows from roughly $100$ to $400$
over the course of training (Figure~\ref{fig:embedding}). Thus the ratio
$\nr(G_{\mathrm{emb}})/\mathrm{st}(X)$ is only moderately larger than one and much smaller than
the corresponding ratios for the hidden MLP and attention blocks, so our condition
\eqref{eq:intro-key-condition} predicts a milder advantage for spectral updates on the
embedding matrix. This matches our empirical experience with the NanoGPT: applying $\mathtt{MUON}$-style spectral updates to
the input embedding in the Modded-NanoGPT run changes training and validation curves only slightly (not shown).
From the viewpoint of~\cite{pethick2025scion}, a more natural geometry for the
embedding block is the $\ell_1\!\to\!\ell_2$ operator norm, since the input tokens are one-hot
vectors of $\ell_1$-norm one; in that setting the relevant Lipschitz constants are controlled by
the maximum column norm of the embedding matrix rather than its spectral norm, and the
stable-rank heuristic is replaced by an $\ell_1$-based argument. We do not pursue this alternative
geometry here. 

\begin{figure}[h]
\centering
\includegraphics[width=\linewidth]{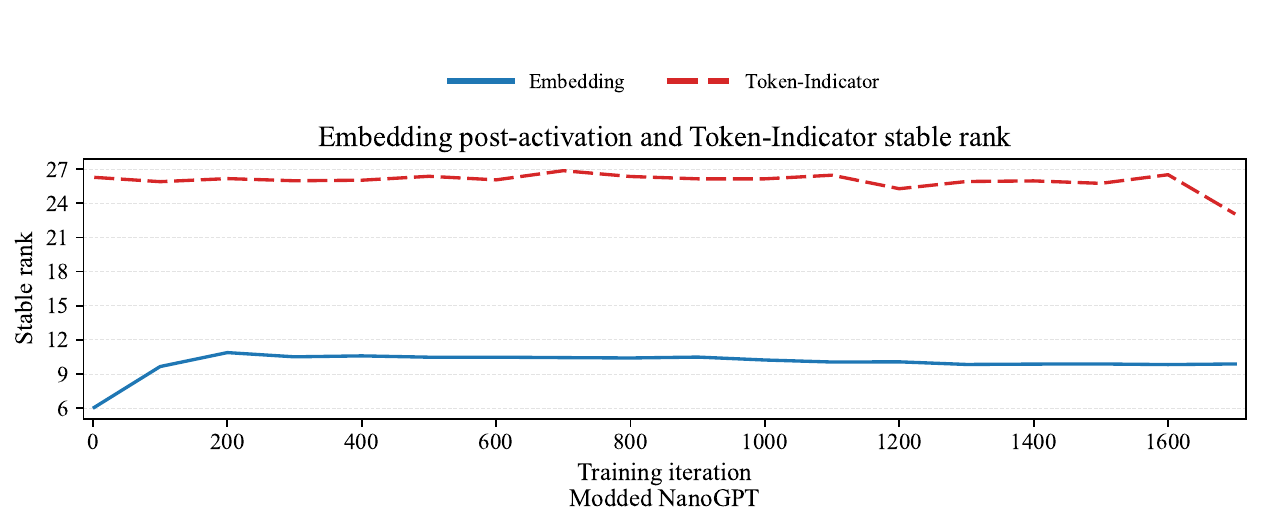}
\caption{Embeddings matrix post-activation and Token-Indicator stable rank throughout training. See Section~\ref{sec:embedding} for more on these matrices.}
\label{fig:embedding}
\end{figure}

To sum up, all matrix-valued weights
interior to the the transformer blocks---the attention projections and MLP matrices---see incoming features
that are effectively low-stable-rank in the sense relevant for the Hessian bound, and their
gradient nuclear ranks are typically large. For these blocks the inequality
\eqref{eq:intro-key-condition} is strongly activated, and spectral updates have a clear structural
advantage over Euclidean ones. The input embedding weight matrix, by contrast, multiplies raw
token indicators and lies only marginally in this regime, while the language-model head sits in an
intermediate regime where the same inequality holds but with a much smaller margin. The resulting
picture is consistent with current practice: the modded-NanoGPT repository applies spectral
updates (implemented via $\mathtt{MUON}$) to all $2$D weight blocks inside the transformer, while
keeping both the token embedding and the language-model head on \texttt{AdamW}-style updates. Moreover,
Section~\ref{sec:transformers} shows that,
at initialization, the transformer activations that feed these blocks have stable rank bounded by
numerical constants independent of width and sequence length, uniformly for the MLP post-activations,
and growing at most quadratically with the layer index for the remaining RMS-normalized and hidden
states. In view of the momentum-based implementation of $\mathtt{MUON}$ and the different
effective geometry of \texttt{AdamW}, our comparison should be read as a geometric heuristic
instead of a precise hyperparameter recommendation: the nuclear-rank versus stable-rank condition
explains why spectral geometry is well matched to the internal layers of MLPs
and transformers, even though our one-step comparison is carried out for Euclidean gradient
descent instead of \texttt{AdamW}.

\subsection{When do we expect the nuclear rank of the gradient to be large?}
So far, we have primarily focused on understanding when the stable rank of the post-activation matrices is small. On the other hand, in order for the spectral updates to be superior to Euclidean gradient updates according to the rule \eqref{eq:intro-key-condition}, the nuclear rank of the gradient should be large. Naively, one may think that the low stable rank of the post-activation matrix may inadvertently translate into a low nuclear rank of the gradient, thereby nullifying our thesis. On the contrary, in Section~\ref{sec:atinitialization} we show that in the representative model of random feature regression \eqref{eqn:rfr_intro} both in the realizable and student-teacher set-ups, the nuclear rank of the gradient is expected to be large through training. More precisely, consider running the Euclidean gradient method on the problem \eqref{eqn:rfr_intro} initialized with the all-zero matrix $W_0$ and with stepsize $\eta>0$. Let $W_t$ be the resulting weight matrices generated across the iterations. We show that if the activation function $\sigma$ is mean-spike inducing (or more generally if $AA^{\top}$ follows a spiked model), the following are true.

\begin{itemize}
    \item When using a nearly maximal stepsize of the form $\eta=\frac{1}{c+\|A\|_{\op}^2}$ for $c>0$, the weight matrix $W_1$ generated by a single step has nuclear rank that scales with the dimension $\nr(\nabla \mathcal{L}(W_1))\gtrsim d$.
    \item When using a more realistic stepsize $\eta=\frac{1}{C\|A\|_{\op}^2}$ for $C>1$, after a short $O(\log d)$ burn-in period there is a window of $\Theta(d)$ iterations on which $\nr(\nabla\mathcal{L}(W_t))=\Omega(d)$ and, for any fixed $\varepsilon>0$, a longer window of $\Theta(d\log d)$ iterations on which $\nr(\nabla\mathcal{L}(W_t))\ge d^{1-\varepsilon}$ (Theorem~\ref{thm:multimodel1} and Corollary~\ref{cor:main_multi_spike_hard}). 
\end{itemize}
 In the same models, Euclidean gradient descent needs $\Theta(d\log(d/\delta))$ steps to reach relative error $\delta\in(0,1)$, so for any target accuracy this $\Theta(d\log d)$ window represents a constant fraction  of the training time. Thus, if we were to restart  $\mathtt{SpecGD}$ from any point in this window, its one-step improvement over Euclidean gradient descent would be dimension-dependent.
Notice that these results are consistent with the nuclear ranks plotted in our experiments on random feature regression (Figures~\ref{fig:random_features}) and the MLP layer in modded NanoGPT (Figure~\ref{fig:mlpgrad}), wherein the gradient nuclear rank starts off low but quickly increases through training.

\subsection{When are spectral updates less helpful?}

Given the discussion thus far the reader might conclude that spectral updates are always preferred to Euclidean ones.  This is not the case.  The random-feature model with {\em centered activations}, already provides a simple counterexample in which the nuclear-rank condition~\eqref{eq:sd-vs-gd-condition} fails and spectral updates are \emph{slower} than Euclidean ones.

To demonstrate, we repeat the experiment from Figure~\ref{fig:random_features}, keeping the same dimensions, data distribution, and ground-truth matrix $W_\sharp$, but replacing ReLU with a SwiGLU activation; the result is summarized in Figure~\ref{fig:swiglu_random_features}.  Concretely, in this variant the random-feature matrix is
\[
A \;=\; \mathrm{SwiGLU}(W_1 X, W_2 X)
\;:=\; \mathrm{SiLU}(W_1 X)\had (W_2 X),
\]
where $W_1,W_2\in\R^{k\times d}$ are independent Gaussian weight matrices, the Hadamard product $\had$ is taken coordinatewise, and
\[
\mathrm{SiLU}(t) \;:=\; t\cdot (1+e^{-t})^{-1}
\]
is the usual Swish/SiLU nonlinearity applied elementwise.  Thus each feature coordinate is of the form
\[
a_{ij} \;=\; \mathrm{SiLU}((W_1 X)_{ij})\,(W_2 X)_{ij},
\]
in contrast to the ReLU experiment where $A=\sigma(W_1 X)$ depends on a single weight matrix.  We will refer to such activations of the form $\phi(W_1 X)\had (W_2 X)$ as \emph{gated activations}.  In particular, in the Gaussian setup above each feature row is (exactly) centered: $\EE[a]=0$.

In Figure~\ref{fig:random_features} (ReLU features), the post-activation matrices $A$ are empirically low--stable--rank.  As explained in Section~\ref{subsec:nsr_moment_ratio}, non-centered activations such as ReLU produce a large rank-one contribution to the second moment: equivalently, $(1/k)A^\top A$ develops a spike aligned with the mean row vector $\EE a$.  This spike dominates the spectrum and keeps $\st(A)$ bounded by a numerical constant even as the width grows.  Gated activations like SwiGLU behave differently.  Here $\EE a=0$, so the mean-induced spike in $A^\top A$ is absent.  The resulting Gram matrix has a more isotropic spectrum, and the stable rank $\st(A)$ now grows proportionally with the width instead of remaining $O(1)$, while the nuclear rank $\nr(\nabla\cL(W))$ of the gradient remains modest.  In the Gaussian random-feature experiment considered here this means that $A^\top A$ has essentially no spike at initialization: its spectrum is bulk-only.  For data that already carry a low-rank spike, such as the embedding/token-indicator input of a transformer at initialization (see Section~\ref{sec:embedding}), centered activations are known to propagate that \emph{informative} spike, as in the nonlinear spiked-covariance and conjugate-kernel analyses discussed in Section~\ref{sec:related-work}~\cite{benigni2022largest,wang2024nonlinear}; what they avoid is exactly the large, uninformative mean-induced spike that drives the low--stable--rank regime in our ReLU examples.

In this experiment the inequality
\[
\nr(\nabla\cL(W)) \;\geq\; \st(A)
\]
is violated: the right-hand panel of Figure~\ref{fig:swiglu_random_features} shows the nuclear-rank curves dipping below the dashed stable-rank level.  The one-step comparison in~\eqref{eq:sd-vs-gd-condition} therefore predicts that gradient descent should outperform spectral descent, and the left-hand panel confirms this prediction: \texttt{GD} reaches a much smaller objective value in far fewer iterations than \texttt{SpecGD}.

\begin{figure}[t]
\centering
\includegraphics[width=\textwidth]{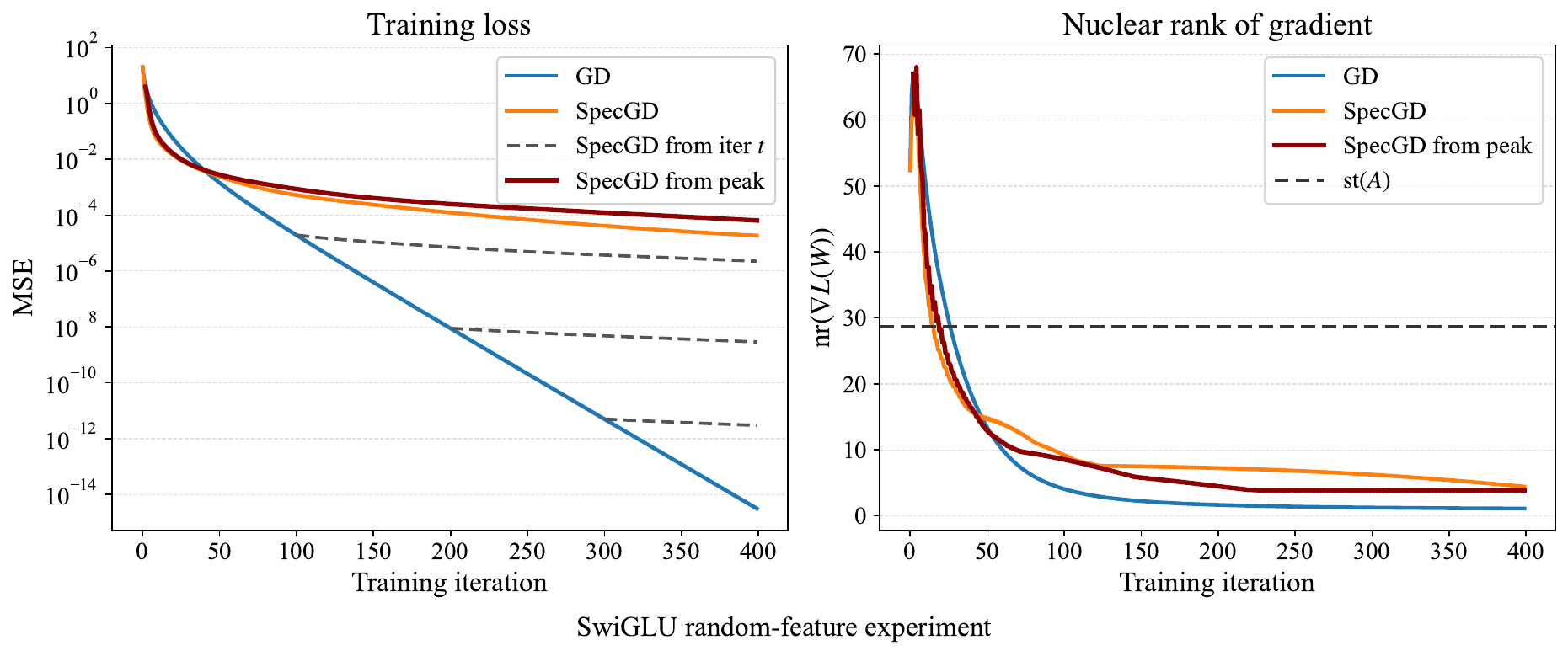}
\caption{\textbf{A cautionary tale.}  Random-feature regression with SwiGLU features
\mbox{$A=\mathrm{SwiGLU}(W_1 X,W_2 X)=\mathrm{SiLU}(W_1 X)\had (W_2 X)$},
in the same setup as Figure~\ref{fig:random_features} but with the ReLU nonlinearity replaced by SwiGLU.  \emph{Left:} objective gap $\cL(W)-\min\cL$ along gradient descent (\texttt{GD}, solid blue) and spectral gradient descent (\texttt{SpecGD}, solid orange).  Unlike the ReLU experiment in Figure~\ref{fig:random_features}, here \texttt{SpecGD} is \emph{slower} than \texttt{GD}.  \emph{Right:} nuclear rank $\nr(\nabla\cL(W))$ of the gradient (solid curves) together with the stable rank $\st(A)$ of the post-activations (black dashed line).  Because SwiGLU yields centered post-activations at Gaussian initialization, the mean-induced spike is absent and $\st(A)$ scales with width.  The inequality $\nr(\nabla\cL(W)) \gtrsim \st(A)$ is more easily violated during training, and the one-step guarantees from Section~\ref{sec:rand_regress} predict that Euclidean updates should outperform spectral ones.}
\label{fig:swiglu_random_features}
\end{figure}

In Appendix~\ref{sec:swiglu}, we also repeat the sparse regression experiment of Figure~\ref{fig:synthetic_stable_rank_sparse} with a SwiGLU activation.  There we see that \texttt{SpecGD} can still perform well at small batch sizes, where the post-activations remain moderately low--stable--rank, but that its advantage is quickly lost as the batch size (and hence the stable rank) increases.

For transformers, however, this toy counterexample does not preclude low--stable--rank internal features.  In Appendix~\ref{sec:swiglu-nanogpt}, we repeat the modded-NanoGPT diagnostics with a SwiGLU MLP, finding that the internal activations remain low--stable--rank (though typically larger than in the squared-ReLU run) and the corresponding gradient nuclear ranks remain large.  This is consistent with the results of Section~\ref{sec:transformers}: even in the nonMSI/gated regime, low stable rank is already present at the input through the token-indicator/embedding structure (Section~\ref{sec:embedding}) and is then preserved through attention and MLP blocks by the propagation bounds, yielding width- and sequence-length-independent stable-rank control at initialization.

\begin{table}[h]
\centering
\begin{tabular}{lcc}
\hline
Method & Non-gated MLP & Gated MLP \\
\hline
All Adam & 3.9242 & 3.8837 \\
Muon($W_{\mathrm{out}}$ only) & 3.7023 & 3.7843 \\
Muon(All Attn, MLP) & \textbf{3.5654} & \textbf{3.5125} \\
\hline
\end{tabular}
\caption{Validation loss at $10k$ training steps for the MLP ablations of Wang et al.~\cite{wang2025muon}.  The model is a $160$M-parameter NanoGPT-style decoder-only transformer trained on FineWeb, with columns corresponding to standard (non-gated) and gated MLP variants as in Eqns.~(3.2)–(3.3) of~\cite{wang2025muon}.  In the gated case, the MLP uses a gated activation of the form $\phi(W_{\mathrm{in}} h)\had (W_{\mathrm{gate}} h)$ feeding $W_{\mathrm{out}}$.}
\label{tab:muon_wout_gating}
\end{table}

As a point of comparison, Table~\ref{tab:muon_wout_gating} summarizes the MLP ablations reported by Wang et al.~\cite{wang2025muon}.  For the non-gated MLP, switching only $W_{\mathrm{out}}$ from Adam to Muon reduces the validation loss from $3.9242$ (All Adam) to $3.7023$, while full Muon on all attention and MLP weights reaches $3.5654$; thus about $62\%$ of the total improvement from Adam to full Muon is already captured by changing $W_{\mathrm{out}}$ alone.  In the gated MLP, the corresponding losses are $3.8837$ (All Adam), $3.7843$ (Muon on $W_{\mathrm{out}}$ only), and $3.5125$ (Muon on all attention and MLP weights), so the change on $W_{\mathrm{out}}$ accounts for only about $27\%$ of the full gain.  One plausible interpretation, in light of the SwiGLU random-feature example above, is that gated activations produce more nearly centered, higher--stable--rank post-activations feeding $W_{\mathrm{out}}$, which makes Euclidean updates on that block better conditioned and leaves less room for a spectral update to help.  A detailed verification of this mechanism in large transformers would require additional ablations and lies beyond our scope here.

\subsection{Related literature}
\label{sec:related-work}

We briefly discuss work on (i) spectral and norm-constrained optimizers such as
\texttt{SpecGD}, \texttt{Muon}, and \texttt{Scion}, and (ii) empirical and theoretical
studies of low-rank structure in gradients and activations. We focus only on results that are relevant to the structural phenomena studied in this paper.

\paragraph{Spectral and norm-constrained optimizers.}
Preconditioned Spectral Descent (PSD) and Stochastic Spectral Descent (SSD) were among the
first optimizers to use spectral geometry for training deep models. Carlson et al.\ formulate
non-Euclidean gradient descent in general Schatten norms and show that for objectives built
from log-sum-exp terms (including RBMs, feedforward networks, and CNNs), the loss admits
a tighter majorization bound in the Schatten-$\infty$ norm than in the Frobenius norm; this
motivates steepest descent in the spectral norm rather than the Euclidean norm
\cite{carlson2015psd,carlson2015ssd}. Their analysis is entirely in terms of norm-dependent
Lipschitz constants and matrix updates; they do not study nuclear ranks ratios of the
gradient, nor connect the choice of geometry to the stable rank of the propagated data.

\texttt{Muon}~\cite{jordan2024muon} applies a similar spectral update---implemented with a Newton--Schulz approximation to the polar factor---to the hidden layers of modern
architectures and has shown strong empirical performance for NanoGPT-scale models and other
large-scale training tasks. Related work by Bernstein and collaborators has advocated
norm-aware training via ``modular norms'' and ``modular duality,'' helping to popularize
spectral and operator-norm updates in large-scale settings and providing practical
Newton--Schulz-based orthogonalization routines used in \texttt{Muon} implementations
\cite{large2024modularnorm,bernstein2024oldoptimizer,bernstein2025modularduality}. A recent convergence analysis of \texttt{Muon} by
Shen et al.~\cite{shen2025convergence} is particularly close in spirit to our work. In a
general nonconvex setting, they derive iteration-complexity bounds for \texttt{Muon} and GD
under various smoothness assumptions and identify two key quantities:
a Hessian-based ratio $J_t/L_t$, and the gradient nuclear ranks
$\nr(\nabla f(W_t))$. Their central condition (Eq.~(4) in
\cite{shen2025convergence}) is
\begin{equation}
  \label{eq:shen-ratio}
  \frac{J_t}{L_t}
  \;\lesssim\;
  \frac{\|\nabla f(W_t)\|_*^2}{\|\nabla f(W_t)\|_F^2},
\end{equation}
and they show that when~\eqref{eq:shen-ratio} holds along the trajectory, \texttt{Muon} enjoys
better convergence guarantees than GD for reaching a nuclear-norm stationary point. They also
empirically track $J_t$, $L_t$, and the gradient nuclear ranks during GD and
\texttt{Muon} training of a small MLP, and observe that~\eqref{eq:shen-ratio} holds in the
early and middle phases of training, consistent with \texttt{Muon}'s speedup.

Our analysis is complementary. We derive a \emph{layerwise} inequality for \texttt{SpecGD}
versus GD that has the same gradient ratio on the right-hand side as
Shen et al.~\eqref{eq:shen-ratio}, but replaces the Hessian ratio $J_t/L_t$ by the \emph{stable
rank of the post-activation matrices}:
\[
  \mathrm{st}(A^{\ell-1})
  \;\lesssim\;
 \nr(\nabla_{W_\ell}\mathcal L(W))
\]
We do not claim that it is conceptually novel that the nuclear rank of the gradient
controls when spectral updates are better than Euclidean updates---this is already implicit in
PSD/SSD and explicit in the convergence bounds for \texttt{Muon} and related methods. Our
contribution is to show that, in the setting of random-feature regression, deep neural networks, and transformers, the \emph{left-hand side}
of this inequality is naturally small (post-activations have low stable rank), while the
\emph{right-hand side} can be large (gradient nuclear rank is dimension-dependent), both
theoretically (after a few GD steps in random-feature models) and empirically (in NanoGPT-scale
training). This provides a concrete, data-driven explanation for the regimes where spectral
updates outperform GD.

The \texttt{Scion} optimizer of Pethick et al.~\cite{pethick2025scion} builds a general
framework of stochastic conditional gradient (LMO-based) methods with norm constraints, in
which \texttt{Muon} is one special case and \texttt{Scion} corresponds to using operator norms
(e.g., spectral and $\ell_\infty$ norms) for different layers. Their convergence guarantees are
stated in terms of the dual norm of the gradient, $\|\nabla f(x)\|_*$, under $L$-smoothness in
the corresponding primal norm, and in the spectral case this dual norm is exactly the nuclear
norm. Thus their rates naturally control $\EE\|\nabla f(x)\|_*$, while classical GD rates
control $\EE\|\nabla f(x)\|_F$. To compare these guarantees, one must again relate nuclear and
Frobenius norms of the gradient, i.e., consider
$\|\nabla f(x)\|_* / \|\nabla f(x)\|_F$. Pethick et al.\ focus on the algorithmic framework,
norm choices, and hyperparameter transfer, and do not attempt to characterize when this ratio is
large in neural networks.

\paragraph{Relation to nonlinear spiked covariance and conjugate kernel work.}
Our perspective is complementary to recent analyses of nonlinear spiked covariance matrices and
conjugate kernels~\cite{benigni2022largest,wang2024nonlinear}. There, the goal
is to understand how a finite number of \emph{informative} spikes in the input data propagate
through random networks or how nonlinearity and non-Gaussianity affect the edge of the spectrum,
and for this reason the activations (and weights) are explicitly centered to suppress
large, non-informative outliers. 
Earlier work by Pennington and Worah~\cite{pennington2017nonlinear} analyzes the same
one-layer Gram matrix $M = \tfrac{1}{m}YY^\top$ with $Y=f(WX)$ in a proportional random
matrix limit and derives an explicit characterization of its limiting spectral density in
terms of two Gaussian functionals of the activation; like~\cite{benigni2022largest,wang2024nonlinear},
they work with centered nonlinearities and focus on the bulk spectrum rather than on stable
rank or mean-induced spikes.
By contrast, we work with the non-centered activations
used in practice (e.g.\ ReLU), for which even isotropic inputs generate mean-induced spikes in
the post-activation covariances at every hidden layer. These spikes need not carry task-relevant
signal, but they force the activation matrices to be intrinsically low–stable–rank, and it is
precisely this degenerate geometry that makes spectral updates more effective than Euclidean ones
in the regimes we study.

\paragraph{Low-rank structure in gradients and activations.}
There is a substantial literature documenting low-dimensional or low-rank structure in the
gradients and representations of deep networks. Gur-Ari et al.\cite{gurari2018tiny_subspace} showed that during training,
gradients quickly concentrate in a low-dimensional subspace spanned by the top Hessian
eigenvectors---a ``tiny subspace" whose dimension is often comparable to the number of
classes~\cite{gurari2018tiny_subspace}. This is a statement about \emph{gradient subspace}
dimension rather than nuclear rank, but it supports the general picture that most
of the optimization geometry is captured by a small number of directions. More recent work in
large-scale training leverages this low-rank structure explicitly: the GaLore algorithm \cite{zhao2024galore} performs gradient
low-rank projection to reduce optimizer memory while preserving pretraining performance, and
reports empirical evidence that gradients of large language models admit very low-rank
approximations; Sonthalia et al. \cite{sonthalia2025low_rank_gradients} analyze low-rank gradients in two-layer
networks in a spiked data model, proving that the gradient spectrum is sharply concentrated and
identifying parameter and data regimes where gradient rank is small.

Huh et al.~\cite{huh2021low_rank_simplicity} empirically observe that the Gram matrix of the final-layer embeddings of deep networks has low effective rank (Observation 3.1 and 3.2), both at initialization and after training, and conjecture that deeper networks are biased toward lower effective-rank embeddings. In our notation, their Gram matrices are constructed from the columns of the post-activation matrix $A_L$, and effective rank is a soft-rank functional of the same spectrum as the stable rank $\mathrm{st}(A_L)$. Our results are consistent with these observations and extend them in two directions: (i) we work with the stable rank $\mathrm{st}(A_\ell)$ of the layerwise post-activation matrices, which is a closely related notion of effective dimensionality of the embeddings, and show theoretically in several random-feature and quadratic-activation models that $\mathrm{st}(A_\ell)$ is bounded by a numerical constant independent of the layer width; and (ii) we verify empirically in realistic MLP and NanoGPT models that this low-stable-rank property holds across \emph{all} intermediate layers, not only for the final embedding, and that it persists along the optimization trajectory. While we are mainly interested here in the stable rank, our results immediately imply that the ``effective rank" in the sense of \cite{huh2021low_rank_simplicity} is of order $\log(d)$.

A complementary line of work seeks to \emph{remove} the kind of low-rank degeneracy we study by
introducing normalization layers such as batch normalization~\cite{ioffe2015batch}. For
example, Daneshmand et al.~\cite{daneshmand2020batch} show that in deep \emph{linear}
networks (i.e., without nonlinear activations), batch normalization provably prevents rank
collapse: the (soft) rank of the hidden activations remains of order $d$ at all depths, whereas
in the unnormalized case the representations become effectively rank one as depth grows. This
line of work is complementary to ours: rather than modifying the architecture to avoid
degeneracy, we take the activation matrices as they arise in standard, non-normalized networks
with non-centered nonlinearities, and analyze how their low-stable-rank, spiked covariance
structure influences optimization and favors spectral updates over Euclidean ones.

Finally, a parallel line of work examines the spectral structure of the Hessian itself and reports a
similar ``bulk + spikes" picture: Sagun et al.~\cite{sagun2016hessian} and Papyan~\cite{papyan2018fullspectrum}
find that deep networks have a large bulk of near-zero eigenvalues with a small number of large
outliers, while Ghorbani et al.~\cite{ghorbani2019hessian_spectrum} show that these outliers appear rapidly during training and that the gradient concentrates
in their eigenspaces. Our stable-rank perspective is consistent with this broader view of
degenerate curvature, but focuses on the propagated data matrices that directly enter the
layerwise Lipschitz constants relevant for spectral versus Euclidean updates.

\paragraph{Organization.}
Section~\ref{sec:spikes_and_embeddings} gives two base cases for low--stable--rank representations: mean-induced spikes for non-centered activations and low--stable--rank token indicators/embeddings in transformers.
Section~\ref{sec:propagate} develops propagation rules showing that stable rank (and the column-norm regularity needed for RMSNorm) is preserved, up to explicit factors, by the atomic operations appearing in transformer blocks, and packages these rules into attention- and MLP-sublayer bounds.
Sections~\ref{sec:deep-mlp} and~\ref{sec:transformers} iterate these one-layer inputs to control stable rank throughout deep MLPs and decoder-only transformers at Gaussian initialization, with a regime split between mean-spike-inducing and nonMSI/gated activations.
Turning to the optimization criterion, Section~\ref{sec:atinitialization} analyzes spiked random-feature regression models and shows that gradient descent quickly produces gradients with dimension-dependent nuclear rank, thereby activating the inequality~\eqref{eq:sd-vs-gd-condition} on a macroscopic window of iterates.
Finally, Section~\ref{sec:layered-descent} introduces a general layered model, proves a mixed Hessian bound, and derives a one-step descent comparison for mixed Euclidean/spectral updates that reduces blockwise gains to the same nuclear-rank versus stable-rank condition.
The appendices contain additional experiments (e.g., SwiGLU in Appendix~\ref{sec:swiglu}) and supporting concentration and linear-algebra lemmas.

\section{Low-stable-rank activations: mean spikes and token-embeddings}
\label{sec:spikes_and_embeddings}

This section identifies two sources of low-stable-rank matrices that arise naturally in the
settings of this paper, and that will serve as base cases for later arguments.
First, post-activation matrices for common non-centered activations, such as ReLU, have low-stable rank due to a large mean-induced spike. Second, for transformers, the input token-indicator matrix
and the initial token embeddings are low-stable-rank whenever a single token (e.g., ``the") makes up at least a constant fraction of the sequence.

These results complement those of the upcoming Section~\ref{sec:propagate}, which show that stable
rank propagates through common network operations with small inflation. Then in Sections~\ref{sec:deep-mlp} and~\ref{sec:transformers} we apply the results of this section and  Section~\ref{sec:propagate} to conclude that all relevant post-activation matrices
in MLPs and transformers have low stable rank.

\subsection{Concentration of the stable rank.}
A recurring object in this paper is an \emph{activation matrix with i.i.d.\ feature columns}:
\begin{equation}\label{eq:activation-matrix-iid-columns}
Z_n \;=\; [F(x_1),\dots,F(x_n)]\in\R^{d\times n},
\end{equation}
where $x_1,\dots,x_n$ are i.i.d.\ and $F:\R^p\to\R^d$ is a fixed measurable map with
$0<\EE\|F(x_1)\|_2^2<\infty$.
In the neural network setting, $F$ is the composition of linear maps and a pointwise activation, and
the canonical model is
\[
Z_n=\sigma(WX),
\qquad
X=[x_1,\dots,x_n],
\]
with $\sigma$ applied entrywise.
Depending on the model, randomness may enter through the data or through the weights:
if the columns of $X$ are random and $W$ is fixed, then $Z_n$ has i.i.d.\ columns;
if $W$ is random and $X$ is fixed, then $Z_n$ has i.i.d.\ rows.
Since $\st(Z)=\st(Z^\top)$, we may treat both situations using the same i.i.d.-column setup.

Henceforth, we write $Z_n=[z_1,\dots,z_n]\in\R^{d\times n}$ where $z_1,\dots,z_n$ are i.i.d.\ copies
of a random vector $z\in\R^d$ with $0<\EE\|z\|_2^2<\infty$.
Define the empirical and population second moment matrices, respectively:
\[
M_n \;:=\; \tfrac{1}{n}Z_nZ_n^\top
\;=\;\tfrac{1}{n}\sum_{i=1}^n z_i z_i^\top
\qquad\textrm{and}\qquad
M\;:=\;\EE[zz^\top].
\]
For a positive semidefinite matrix $B$, let ${\rm er}(B):=\tr(B)/\|B\|_{\op}$ denote its
\emph{effective rank}.
Since $\tr(M_n)=\frac{1}{n}\|Z_n\|_F^2$ and $\|M_n\|_{\op}=\frac{1}{n}\|Z_n\|_{\op}^2$, we have the identity
\begin{equation}\label{eq:sr-er-identity}
\st(Z_n)=\frac{\|Z_n\|_F^2}{\|Z_n\|_{\op}^2}=\frac{\tr(M_n)}{\|M_n\|_{\op}}={\rm er}(M_n).
\end{equation}
Thus, stable-rank control for $Z_n$ reduces to effective-rank control for $M_n$.
Since $M$ is deterministic and typically easier to analyze than $M_n$, it is natural to
first understand ${\rm er}(M)$ and then transfer to ${\rm er}(M_n)$ with reasonably high probability. Reassuringly, the next lemma records the
asymptotic consistency of the estimator ${\rm er}(M_n)$.

\begin{lem}[Consistency of the stable rank]\label{lem:MSLLN}
Let $Z_n=[z_1,\dots,z_n]$ have i.i.d.\ columns distributed as $z$ with
$0<\EE\|z\|_2^2<\infty$, and let $M=\EE[zz^\top]$. Then
\[
\st(Z_n)\xrightarrow{\ \mathrm{a.s.}\ }\ {\rm er}(M)
\qquad \textrm{as}~n\to \infty.
\]
\end{lem}
\begin{proof}
By the strong law of large numbers applied entrywise, $M_n\to M$ almost surely.
Since $\tr(M)=\EE\|z\|_2^2>0$, we have $\|M\|_{\op}>0$. On the same event,
$\|M_n\|_{\op}\to \|M\|_{\op}$ and $\tr(M_n)\to \tr(M)$, hence
$\tr(M_n)/\|M_n\|_{\op}\to \tr(M)/\|M\|_{\op}$. Using \eqref{eq:sr-er-identity} completes the proof.
\end{proof}

Next, we aim to obtain finite sample guarantees. Since we are primarily interested in upper bounding $\er(M_n)$, an upper-bound on the ratio $\er(M_n)/\er(M)$ suffices. Such a one-sided bound holds under very mild assumptions. In particular, a small-ball condition suffices to obtain a result of this form with constant probability; this is the content of the following theorem.

\begin{thm}[Constant probability from a small-ball condition]\label{thm:small_balls}
Suppose there exists a unit vector $u_{\star}\in\R^d$ and constants $p_0\in (0,1)$ and $c_0>0$
satisfying the small-ball condition
\begin{equation}\label{eqn:small_ball}
\PP\left(\langle z,u_{\star}\rangle^2\geq c_0 \|M\|_{\op}\right)\geq p_0.
\end{equation}
Then for any $c\in (0,\tfrac{1}{2})$, as long as $n\geq \frac{8}{p_0}\log\frac{1}{c}$, the upper bound
\begin{equation}\label{eqn:upper_smallballs}
\st(Z_n)={\rm er}(M_n)\leq \tfrac{2}{p_0c_0c}\cdot{\rm er}(M),
\end{equation}
holds with probability at least $1-2c$.
\end{thm}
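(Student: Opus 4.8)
The plan is to bound the ratio $\sr(M_n)=\tr(M_n)/\|M_n\|_{\op}$ by controlling its numerator and denominator on separate high-probability events and then intersecting them: the numerator is handled by a crude first-moment (Markov) argument, the denominator is where the small-ball hypothesis does its work, and the lower bound on $n$ is used only to make the denominator estimate hold with the required confidence.

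For the denominator, I would test $M_n$ against the distinguished direction $u_\star$. Since $u_\star$ is a unit vector,
\[
\|M_n\|_{\op}\;\ge\; u_\star^\top M_n u_\star\;=\;\frac1n\sum_{j=1}^n\langle z_j,u_\star\rangle^2\;\ge\;\frac{c_0\|M\|_{\op}}{n}\sum_{j=1}^n\I{\langle z_j,u_\star\rangle^2\ge c_0\|M\|_{\op}}\;=\;\frac{c_0\|M\|_{\op}}{n}\,N,
\]
where I discard the sub-threshold terms and lower-bound the remaining ones by the threshold. Now $N$ is a sum of i.i.d.\ Bernoulli indicators, each with success probability at least $p_0$ by the small-ball condition~\eqref{eqn:small_ball}, so $N$ stochastically dominates a $\mathrm{Binomial}(n,p_0)$ variable. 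A one-sided Chernoff bound then gives $\PP(N< np_0/2)\le e^{-np_0/8}$, so on the complementary event — which occurs with probability at least $1-e^{-np_0/8}$, and this is exactly the role played by the hypothesis $n\gtrsim p_0^{-1}\log(1/c)$ on $n$ — we obtain $\|M_n\|_{\op}\ge \tfrac12 c_0 p_0\|M\|_{\op}$.

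For the numerator, $\tr(M_n)=\tfrac1n\sum_j\|z_j\|^2$ has expectation $\tr(M)=\EE\|z\|^2\in(0,\infty)$, so Markov's inequality gives $\PP\bigl(\tr(M_n)>\tfrac1c\tr(M)\bigr)\le c$; this is the only place the parameter $c$ appears and it accounts for the factor $1/c$ in~\eqref{eqn:upper_smallballs}. On the intersection of the two good events — whose complement has probability at most $e^{-np_0/8}+c\le 2c$ once $n$ is a large enough multiple of $p_0^{-1}\log(1/c)$ — we conclude
\[
\sr(M_n)=\frac{\tr(M_n)}{\|M_n\|_{\op}}\;\le\;\frac{(1/c)\,\tr(M)}{\tfrac12 c_0 p_0\,\|M\|_{\op}}\;=\;\frac{2}{p_0 c_0 c}\,\er(M),
\]
which is~\eqref{eqn:upper_smallballs}. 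I expect the only genuinely delicate point to be the lower bound on $\|M_n\|_{\op}$: since the small-ball condition pins down only a constant fraction of the mass of $\langle z,u_\star\rangle^2$ and we assume nothing beyond $\EE\|z\|^2<\infty$ (in particular no higher moments that would let one concentrate the Rayleigh quotient directly), the right move is to throw away everything except the indicator part and reduce to a clean binomial lower-tail estimate; the trace bound and the union bound are then routine.
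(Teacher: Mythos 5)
Your proposal is correct and follows essentially the same route as the paper's proof: Markov's inequality on $\tr(M_n)$, the test-vector lower bound $\|M_n\|_{\op}\ge u_\star^\top M_n u_\star$ combined with a Chernoff/binomial lower-tail estimate on the number of summands exceeding the small-ball threshold, and a union bound. The only cosmetic difference is that you keep the explicit factor $\tfrac12 c_0 p_0$ from the binomial bound rather than phrasing it as ``at least $p_0 n/2$ summands exceed $c_0\|M\|_{\op}$,'' which is the same estimate.
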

\begin{proof}
Since $\EE\tr(M_n)=\tr(M)$, Markov's inequality implies
\[
\PP\!\left(\tr(M_n)>c^{-1}\tr(M)\right)\le c.
\]
Next, we estimate
\[
\|M_n\|_{\op}\ge u_\star^\top M_n u_\star = \frac{1}{n}\sum_{i=1}^n\langle z_i,u_{\star}\rangle^2.
\]
Let $I_i:=\mathbf 1_{\{\langle z_i,u_{\star}\rangle^2\ge c_0\|M\|_{\op}\}}$, so that $\EE I_i\ge p_0$ by
\eqref{eqn:small_ball}. Chernoff's inequality yields
\[
\PP\!\left(\sum_{i=1}^n I_i \le \frac{p_0 n}{2}\right)\le \exp\!\left(-\frac{p_0 n}{8}\right).
\]
On the event $\{\sum_{i=1}^n I_i \ge p_0 n/2\}$, we have
\[
\|M_n\|_{\op}
\ge \frac{1}{n}\sum_{i=1}^n\langle z_i,u_{\star}\rangle^2
\ge \frac{p_0 n}{2}\cdot \frac{c_0\|M\|_{\op}}{n}
=\frac{p_0c_0}{2}\|M\|_{\op}.
\]
Combining this lower bound on $\|M_n\|_{\op}$ with $\tr(M_n)\le c^{-1}\tr(M)$ yields
\eqref{eqn:upper_smallballs}.
Finally, if $n\ge \frac{8}{p_0}\log\frac{1}{c}$, then $\exp(-p_0 n/8)\le c$, so a union bound gives
probability at least $1-2c$.
\end{proof}

Although Theorem~\ref{thm:small_balls} holds under a very weak small-ball condition, it has a limitation. Namely, the probability of \eqref{eqn:upper_smallballs} not holding scales
the right side of \eqref{eqn:upper_smallballs} multiplicatively. The following theorem shows that this limitation can be removed under light tail assumptions.

\begin{thm}[High probability upper-bound on the stable rank]
Suppose that there for some values $\theta, \phi>0$ and $K,L>0$ the light-tail conditions hold:
$$\langle z-\EE z,u\rangle_{\psi_{\theta}}\leq K\|u\|_2\qquad\forall u\in\R^d\qquad \textrm{and}\qquad \|\|z\|_2^2-\EE\|z\|^2_2\|_{\psi_{\phi}}\leq L.$$
Then there exists a constant $c$, depending only on $\theta$ and $\phi$, such that for any $\varepsilon\in (0,1)$ the estimate
$$\er(M_n)\leq \frac{1+\varepsilon}{1-\varepsilon}\cdot \er(M),$$
holds with probability at least $$1-6\exp\left(-c\min\left\{\tfrac{n\varepsilon^2\|M\|^2_{\op}}{K^4}, \tfrac{n\varepsilon^2\|M\|_{\op}}{K^2}, \tfrac{n\varepsilon^2\tr(M)^2}{L^2}, n^{1\wedge \tfrac{\theta}{2}}(\tfrac{\varepsilon\|M\|_{\op}}{K^2})^{\theta/2}, n^{1\wedge \theta}(\tfrac{\varepsilon \sqrt{\|M\|_{\op}}}{K})^{\theta}, n^{1\wedge \phi} (\tfrac{\varepsilon\tr(M)}{L })^{\phi}\right\}\right).$$
\end{thm}
\begin{proof}
First, Theorem~\ref{lem:lower_op_norm} in the appendix directly implies $\|M_n\|_{\op}\geq (1-\varepsilon) \|M\|_{\op}$ with probability at least 
$$1-2\exp\left(-c\left(\tfrac{n\varepsilon^2\|M\|^2_{\op}}{K^4}\wedge n^{1\wedge \tfrac{\theta}{2}}\left(\tfrac{\varepsilon\|M\|_{\op}}{K^2}\right)^{\theta/2}\right)\right)-2\exp\left(- c\left(\tfrac{n\varepsilon^2\|M\|_{\op}}{K^2}\wedge n^{1\wedge \theta}\left(\tfrac{\varepsilon \sqrt{\|M\|_{\op}}}{K}\right)^{\theta}\right)\right).$$
for some numerical constant $c$ depending only on $\theta$.
Next, we upper bound the trace of $M_n$. To this end, we write 
$$\tr(M_n)-\tr(M)=\frac{1}{n}\sum_{i=1}^n (\|z_i\|_2^2-\EE \|z\|^2_2).$$
Bernstein's inequality \cite[Theorem 3.1]{kuchibhotla2022moving} then implies $\tr(M_n)\leq (1+\varepsilon)\tr(M)$ with probability 
$$1-2\exp\left(-c\left(\tfrac{n\varepsilon^2\tr(M)^2}{L^2}\wedge n^{1\wedge \phi} \left(\tfrac{\varepsilon\tr(M)}{L }\right)^{\phi}\right)\right),$$
which completes the proof.
\end{proof}

With the concentration results in mind, we can focus on estimating the effective rank of $M$ and then transfer back to $M_n$ using the results we have obtained. With this in mind, we now isolate two sources of low stable ranks in post-activation matrices: mean-induced spikes (Section~\ref{subsec:nsr_moment_ratio}) and token embedding in transformers (Section~\ref{sec:embedding}).

\subsection{Mean spikes}
\label{subsec:nsr_moment_ratio}

The quantity ${\rm er}(M)$ is small whenever $\tr(M)$ is moderate and $\|M\|_{\op}$ is large.
By linearity, we have
\begin{equation}\label{eq:trace-is-second-moment}
\tr(M)=\EE\tr(zz^\top)=\EE\|z\|_2^2.
\end{equation}
If the mean $\mu:=\EE z$ is nonzero, we can estimate
\[
M=\EE[zz^\top]=\mu\mu^\top+\Cov(z)\succeq \mu\mu^\top,
\]
and therefore $\|M\|_{\op}\ge \|\mu\|_2^2$. Consequently, we deduce
\begin{equation}\label{eq:er-leq-pop-nsr}
{\rm er}(M)
=\frac{\EE\|z\|_2^2}{\|M\|_{\op}}
\le
\frac{\EE\|z\|_2^2}{\|\EE z\|_2^2}.
\end{equation}
We call the ratio on the right the \emph{(population) noise-to-signal ratio} of $z$:
\[
\mathrm{NSR}(z):=\frac{\EE\|z\|_2^2}{\|\EE z\|_2^2}\qquad (\EE z\neq 0).
\]
Thus, whenever the squared norm of the mean is comparable to the second moment, the effective rank of $M$
is small. Now, the population NSR can be approximated by its empirical counterpart. Namely, define
\[
\hat\mu := \frac1n\sum_{i=1}^n z_i,
\qquad
\hat m_2 := \frac1n\sum_{i=1}^n \|z_i\|_2^2,
\qquad
\widehat{\mathrm{NSR}}(Z):=\frac{\hat m_2}{\|\hat\mu\|_2^2}.
\]
Then the same argument as leading to \eqref{eq:er-leq-pop-nsr} shows 
$$\st(Z_n)\leq \widehat{\mathrm{NSR}}(Z_n).$$

Reassuringly, under mild light tail assumptions, $\widehat{\mathrm{NSR}}(Z_n)$ and ${\mathrm{NSR}}(z)$ are indeed close, as the following lemma shows. We will use the usual Orlicz quasi-norm
$\|z\|_{\psi_{\theta}}$ of a random vector $z$ for $\theta>0$.
Namely, $\|z\|_{\psi_{\theta}}$ is the infimum over all values $\sigma\geq 0$ satisfying $\EE\exp(|z/\sigma|^{\theta})\leq 2$.

\begin{lem}[Consistency of the Empirical NSR]\label{lem:snr_stability}
Let $z\in\R^d$ be a random vector with mean $\mu:=\EE z\neq 0$ and finite second moment
$m_2:=\EE\|z\|_2^2$. Fix parameters $\theta,\phi>0$ and constants $K,L>0$ such that
\[
\|\langle z-\mu,u\rangle\|_{\psi_\theta}\leq K\|u\|_2
\qquad\text{for all }u\in\R^d,
\qquad\text{and}\qquad
\big\|\|z\|_2^2-\EE\|z\|_2^2\big\|_{\psi_\phi}\leq L.
\]
Let $z_1,\ldots,z_n$ be iid copies of $z$, and define the empirical mean and the second moment:
\[
\hat\mu=\frac{1}{n}\sum_{i=1}^n z_i,
\qquad`
\hat m_2=\frac{1}{n}\sum_{i=1}^n \|z_i\|_2^2.
\]
Then there exist constants $c_1=c_1(\theta)>0$ and $c_2=c_2(\phi)>0$ such that for any $0<\varepsilon_1\leq 1$ and $0<\varepsilon_2<1$, the estimate
\[
\frac{\hat m_2}{\|\hat\mu\|_2^2}
\leq
\frac{1+\varepsilon_1}{(1-\varepsilon_2)^2}\cdot
\frac{m_2}{\|\mu\|_2^2}
\]
holds with probability at least
\[
1
-
2\exp\!\left(
  -c_1\min\Big\{
     n\,\varepsilon_2^2\,\frac{\|\mu\|_2^2}{K^2},
     \ n^{1\wedge \theta}\varepsilon_2^{\theta}\,\frac{\|\mu\|_2^{\theta}}{K^{\theta}}
  \Big\}
\right)
-
2\exp\!\left(
  -c_2\min\Big\{
     n\,\varepsilon_1^2\,\frac{m_2^2}{L^2},
     \ n^{1\wedge \phi}\varepsilon_1^{\phi}\,\frac{m_2^{\phi}}{L^{\phi}}
  \Big\}
\right).
\] 
In particular, in this event the estimate holds: 
\[
\st(Z)\ \le\ \frac{1+\varepsilon_1}{(1-\varepsilon_2)^2}\,\mathrm{NSR}(z).
\]
\end{lem}

\begin{proof}
We first control the empirical mean. Let $u_\star:=\mu/\|\mu\|_2$ be the unit vector in the direction of $\mu$, and write
\[
\hat\mu-\mu
=
\frac{1}{n}\sum_{i=1}^n (z_i-\mu).
\]
Then $X_i:=\langle z_i-\mu,u_\star\rangle$ are iid mean-zero random variables with $\|X_i\|_{\psi_\theta}\le K$.
A Bernstein-type inequality \cite[Theorem 3.1]{kuchibhotla2022moving} for $\psi_\theta$ random variables yields for all $t>0$ the estimate
\[
\PP\left( \left|\frac{1}{n}\sum_{i=1}^n X_i\right|\geq t\right)
\leq
2\exp\!\left(
-c\,\min\Big\{n\,\tfrac{t^2}{K^2},\ n^{1\wedge \theta}(\tfrac{t}{K})^{\theta}\Big\},
\right)
\]
for a numerical constant $c>0$.
Setting $t=\varepsilon_2\|\mu\|_2$ and using the estimate
\[
\|\hat\mu\|_2
\ \geq\
\langle \hat\mu,u_\star\rangle
=
\|\mu\|_2 + \left\langle \hat\mu-\mu,u_\star\right\rangle,
\]
we obtain
\begin{equation}\label{eqn:denom_event}
\PP\big(\|\hat\mu\|_2\leq (1-\varepsilon_2)\|\mu\|_2\big)
\leq
2\exp\!\left(
  -c_1\min\Big\{
     n\,\varepsilon_2^2\,\tfrac{\|\mu\|_2^2}{K^2},
     \ n^{1\wedge \theta}\varepsilon_2^{\theta}\,\tfrac{\|\mu\|_2^{\theta}}{K^{\theta}}
  \Big\}
\right),
\end{equation}
for a numerical constant $c_1>0$.

We next control the empirical second moment. Define
$
Y_i:=\|z_i\|_2^2 - m_2.
$
By assumption $\|Y_i\|_{\psi_\phi}\leq L$, so $Y_i$ are iid centered $\psi_\phi$ random variables with parameter $L$.
The same Bernstein-type inequality \cite[Theorem 3.1]{kuchibhotla2022moving} yields that for all $t>0$,
\[
\PP\left(
\left|\frac{1}{n}\sum_{i=1}^n Y_i\right|\geq t
\right)
\leq
2\exp\!\left(
  -c_2\,\min\Big\{
     n\,\tfrac{t^2}{L^2},\ n^{1\wedge \phi}(\tfrac{t}{L})^{\phi}
  \Big\}
\right),
\]
for a numerical constant $c_2>0$.
Substituting $t=\varepsilon_1 m_2$ yields
\begin{equation}\label{eqn:num_event}
\PP\big(\hat m_2>(1+\varepsilon_1)m_2\big)
\leq
2\exp\!\left(
  -c_2\,\min\Big\{
     n\,\varepsilon_1^2\,\tfrac{m_2^2}{L^2},
     \ n^{1\wedge \phi}\varepsilon_1^{\phi}\,\tfrac{m_2^{\phi}}{L^{\phi}}
  \Big\}
\right).
\end{equation}
Combining \eqref{eqn:denom_event} and \eqref{eqn:num_event} and taking a union bound over the two events completes the proof.
\end{proof}

\subsubsection{Bounding the effective rank for random data: polynomial maps}
We are now ready to establish our first main result. Namely, we will bound ${\rm er}(M)$ for {\em squares of polynomial functions applied to Gaussian
inputs}. This covers, for example, a single layer $Z_n=\sigma_{\rm sq}(WX)$ with a quadratic activation $\sigma_{\rm sq}(t):=t^2$ and when
the columns of $X$ are i.i.d.\ Gaussian, and more generally, deeper layers of the form $Z_n=\sigma_{\rm sq}(W[F(x_1),\ldots, F(x_n)])$ where the coordinate functions of $F$ are polynomials. 
Our strategy is to show that the population NSR is small and then pass to the empirical counterpart using Theorem~\ref{thm:small_balls} and the Paley-Zygmund inequality.

\begin{thm}[Stable rank for polynomial-square features under Gaussian data]\label{thm:stab_rankthm3}
Let $x_1,\dots,x_n\in\R^p$ be iid Gaussian, and let $f_1,\dots,f_d:\R^p\to\R$ be polynomials of
degree at most $q\ge 1$. Define the feature map
\[
F(x):=\big(f_1(x)^2,\dots,f_d(x)^2\big)\in\R^d,
\qquad
Z_n:=[F(x_1),\dots,F(x_n)]\in\R^{d\times n}.
\]
Then for any $c\in (0,\tfrac{1}{2})$, as long as $n\ \ge\ 18\cdot 3^{4q}\,\log\!\frac{1}{c}$,
the bound
\[
\st(Z_n)\ \le\ \frac{27}{2c}\cdot 3^{6q}
\]
holds with probability at least $1-2c$.
\end{thm}
\begin{proof}
Set $z:=F(x)\in\R^d$ for a single Gaussian draw $x$, and set $M=\EE[zz^\top]$.
If $\EE\|z\|_2^2=0$ then $Z_n=0$ almost surely and the claim is trivial, so assume
$\EE\|z\|_2^2>0$.
Since at least one $f_j$ is not identically zero, we have $\EE[f_j(x)^2]>0$ for that $j$, hence
$\EE z\neq 0$ and $\mathrm{NSR}(z)$ is well-defined.

We aim to apply Theorem~\ref{thm:small_balls}. To this end, we first verify the small ball condition \eqref{eqn:small_ball}. Let $u_\star$ be a top-eigenvector of $M$ and set $g(x):=\langle F(x),u_\star\rangle$.
Then $\EE[g(x)^2]=\|M\|_{\op}$ and $g$ is a polynomial of degree at most $2q$.
Using Gaussian hypercontractivity \cite[Chapter~3.2]{ledoux2013probability}, we deduce
\[
\EE[g(x)^4]\le 3^{4q}\,(\EE[g(x)^2])^2.
\]
Therefore, the Paley--Zygmund inequality gives
\[
\PP\!\left(g(x)^2\ge \frac13\,\EE[g(x)^2]\right)
\ge \left(1-\frac13\right)^2\frac{(\EE[g(x)^2])^2}{\EE[g(x)^4]}
\ge \frac{4}{9}\cdot 3^{-4q}.
\]
Thus the small ball condition \eqref{eqn:small_ball} holds with $c_0=1/3$ and $p_0=\frac{4}{9}\,3^{-4q}$.
With these constants, Theorem~\ref{thm:small_balls} shows that in the regime
$n\ge \frac{8}{p_0}\log\frac{1}{c}=18\cdot 3^{4q}\log\frac{1}{c}$, with probability at least
$1-2c$, we have
\begin{equation}\label{eq:poly-hw-step}
\st(Z_n)\le \frac{2}{p_0c_0c}\,{\rm er}(M)=\frac{27}{2c}\cdot 3^{4q}\,{\rm er}(M).
\end{equation}

It remains to upper bound $\er(M)$. To this end,
since $\EE z\neq 0$, equation \eqref{eq:er-leq-pop-nsr} implies ${\rm er}(M)\le \mathrm{NSR}(z)$.
Writing $z=(z_1,\dots,z_d)$ with $z_j=f_j(x)^2$, we have
\[
\mathrm{NSR}(z)=\frac{\EE\|z\|_2^2}{\|\EE z\|_2^2}
=\frac{\sum_{j=1}^d \EE[z_j^2]}{\sum_{j=1}^d (\EE z_j)^2}
\le \max_{1\le j\le d}\ \frac{\EE[z_j^2]}{(\EE z_j)^2}
=
\max_{1\le j\le d}\ \frac{\EE[f_j(x)^4]}{\big(\EE[f_j(x)^2]\big)^2}.
\]
Each $f_j$ has degree at most $q$, so hypercontractivity gives
$\EE[f_j(x)^4]\le 3^{2q}\big(\EE[f_j(x)^2]\big)^2$, and hence
${\rm er}(M)\le \mathrm{NSR}(z)\le 3^{2q}$.
Substituting this into \eqref{eq:poly-hw-step} yields
\[
\st(Z_n)\le \frac{27}{2c}\cdot 3^{4q}\cdot 3^{2q}=\frac{27}{2c}\cdot 3^{6q},
\]
on the same event, completing the proof.
\end{proof}

In particular, for a depth $L$ feedforward neural network with quadratic activation functions $\sigma_{\rm sq}$ in each layer, Theorem~\ref{thm:stab_rankthm3} shows that with fixed weight matrices $W_1,\ldots,W_{L}$ and random Gaussian data $X$, the stable rank of the final post-activation matrix $A^{L}$ is bounded by $\st(A^{L})\leq \frac{27}{2c}3^{3\cdot 2^{L}}$ with probability at least $1-2c$. 

Theorem~\ref{thm:stab_rankthm3} gives a dimension-free stable-rank bound in a random-data regime,
but only with constant probability and with a dependence that is exponential in the polynomial
degree (hence doubly exponential in the depth for quadratic compositions). We now turn to the
random-weights regime, where we obtain bounds for general activations.

\subsubsection{Bounding the stable rank with random weights: polynomial-growth activations}\label{sec:bounding_effective_rank}

We now return to the canonical post-activation matrix $A=\sigma(WX)$ with random weights $W$
and fixed input $X$. In this regime the \emph{rows} of $A$ are iid; we therefore apply the
iid-column bounds in Lemma~\ref{lem:snr_stability} to $A^\top$, using the equality $\st(A)=\st(A^\top)$.
The remaining work is to express the population NSR of $\sigma(X^\top w)$ (with Gaussian $w$) in terms
of one-dimensional Gaussian moments of $\sigma$ over the range of nonzero column norms of $X$. To this end, let $\gamma\sim\mathcal N(0,1)$. For $0<a\le b<\infty$ and $q>0$, define the two quantities: 
\[
s_\sigma(a,b)
\;:=\;
\sup_{s\in[a,b]}\ \frac{\EE[\sigma(s\gamma)^2]}{\big(\EE[\sigma(s\gamma)]\big)^2} \qquad \textrm{and}\qquad \kappa_{\sigma,q}(a,b)\;:=\;\inf_{s\in[a,b]}\ \frac{|\EE[\sigma(s\gamma)]|}{s^q}.
\]
Note that these quantities simplify drastically for the important case of $q$-homogeneous activations, meaning those satisfying $\sigma(\alpha t)=\alpha^{q}\sigma(t)$ for all $t\in \R$, $\alpha\geq 0$. For such functions, both $s_\sigma(a,b)$ and $\kappa_{\sigma,q}(a,b)$ are independent of $a$ and $b$ and  reduce to $s_\sigma(a,b)=\frac{\EE\sigma(\gamma)^2}{(\EE\sigma(\gamma))^2}$ and $\kappa_{\sigma,q}(a,b)=|\EE\sigma(\gamma)|$.

The following theorem estimates the stable rank of $A=\sigma(WX)$; without loss of generality, we assume that non of the columns of $X$ are the zero vector since zero columns can simply be deleted.

\begin{thm}[Gaussian weights yield low stable rank]
\label{thm:gaussian_weights_low_sr}
Fix a matrix $X=[x_1,\dots,x_n]\in\R^{d\times n}$ with nonzero columns and let $W\in\R^{m\times d}$ have iid entries
$\mathcal N(0,\tfrac{1}{d})$. Define the post-activation matrix $A=\sigma(WX)\in\R^{m\times n}$, where
$\sigma:\R\to\R$ satisfies $\sigma(0)=0$ and the polynomial growth bound
\[
|\sigma(t)|\le L_\sigma |t|^p
\qquad\text{for all }t\in\R,
\]
for some $p\ge 1$ and $L_\sigma<\infty$. Set
\[
a:=\frac{1}{\sqrt d}\min_{1\leq t\leq n}\|x_t\|_2,
\qquad
b:=\frac{1}{\sqrt d}\max_{1\leq t\leq n}\|x_t\|_2.
\]
Fix a constant $q>0$ satisfying $\kappa_{\sigma,q}(a,b)>0$ and define the scale ratio
\[
r_{p,q}\;:=\;\frac{L_\sigma}{\kappa_{\sigma,q}(a,b)}\cdot \sup_{s\in[a,b]} s^{p-q}.
\]
Then there exists a constant $c=c(p)>0$ such that for any $0<\varepsilon_1\le 1$ and $0<\varepsilon_2<1$, with probability at least
\[
1
-2\exp\!\Big(
  -c\,\min\Big\{m\,\varepsilon_2^2\,r_{p,q}^{-2},\ m^{1\wedge \tfrac{2}{p}}\varepsilon_2^{2/p}\,r_{p,q}^{-2/p}\Big\}
\Big)
-2\exp\!\Big(
  -c\,\min\Big\{m\,\varepsilon_1^2\,r_{p,q}^{-4},\ m^{1\wedge\tfrac{1}{p}}\varepsilon_1^{1/p}\,r_{p,q}^{-2/p}\Big\}
\Big),
\]
we have
\[
\st(A)\ \le\ \frac{1+\varepsilon_1}{(1-\varepsilon_2)^2}\,s_\sigma(a,b).
\]
\end{thm}
\begin{proof}
Write the rows of $W$ as $w_1^\top,\dots,w_m^\top$. Since $\st(A)=\st(A^\top)$ and
\[
A^\top=[\sigma(X^\top w_1),\dots,\sigma(X^\top w_m)]\in\R^{n\times m},
\]
the columns of $A^\top$ are iid copies of $z:=\sigma(X^\top w)\in\R^n$ with
$w\sim\mathcal N(0,\tfrac{1}{d}\,I_d)$. 

Applying Lemma~\ref{lem:snr_stability} with sample size $m$ yields the estimate
\[
\st(A)=\st(A^\top)\le \frac{1+\varepsilon_1}{(1-\varepsilon_2)^2}\,\mathrm{NSR}(z),
\]
with probability
\begin{equation}\label{eqn:prob_success}
    1
-
2\exp\!\left(
  -c_1\min\Big\{
     m\,\varepsilon_2^2\,\frac{\|\mu\|_2^2}{K^2},
     \ m^{1\wedge\theta}\varepsilon_2^{\theta}\,\frac{\|\mu\|_2^{\theta}}{K^{\theta}}
  \Big\}
\right)
-
2\exp\!\left(
  -c_2\min\Big\{
     m\,\varepsilon_1^2\,\frac{m_2^2}{L^2},
     \ m^{1\wedge \phi}\varepsilon_1^{\phi}\,\frac{m_2^{\phi}}{L^{\phi}}
  \Big\}
\right).
\end{equation}
 
It remains now to estimate $\mathrm{NSR}(z)$, $\|\mu\|_2^2$, $K$, $\theta$, $\phi$, which we do in order. Observe $g_t:=\langle w,x_t\rangle\sim \mathcal N(0,s_t^2)$ with
$s_t=\frac{1}{\sqrt d}\|x_t\|_2\in[a,b]$. Therefore, we deduce
\[
\mathrm{NSR}(z)
=
\frac{\sum_{t} \EE[\sigma(g_t)^2]}{\sum_{t} (\EE[\sigma(g_t)])^2}
=
\frac{\sum_{t} \EE[\sigma(s_t\gamma)^2]}{\sum_{t} (\EE[\sigma(s_t\gamma)])^2}
\le \max_{t}\ \frac{\EE[\sigma(s_t\gamma)^2]}{(\EE[\sigma(s_t\gamma)])^2}
\le s_\sigma(a,b),
\]
where $\gamma$ is a standard Gaussian.

Next, noting $\kappa_{\sigma,q}(a,b)>0$ and $s_t\in[a,b]$, we have
$|\EE[\sigma(g_t)]|\ge \kappa_{\sigma,q}(a,b)\,s_t^q$ for all $t$, and therefore
\begin{equation}\label{eq:mu-lb-poly}
\|\mu\|_2^2=\sum_{t}(\EE[\sigma(g_t)])^2 \ \ge\ \kappa_{\sigma,q}(a,b)^2\sum_{t}s_t^{2q}.
\end{equation}
In order to bound the tail growth, we use only the polynomial growth bound
$|\sigma(t)|\le L_\sigma |t|^p$. 
Namely, for any $u\in\R^n$, using $|x-\EE x|\le |x|+\EE|x|$ we have
\[
|\langle z-\mu,u\rangle|
=
\Big|\sum_{t}u_t\big(\sigma(g_t)-\EE\sigma(g_t)\big)\Big|
\le
\sum_{t}|u_t|\,|\sigma(g_t)|
+
\sum_{t}|u_t|\,\EE|\sigma(g_t)|.
\]
Using $|\sigma(t)|\le L_\sigma|t|^p$ and Cauchy--Schwarz, we deduce
\[
\sum_{t}|u_t|\,|\sigma(g_t)|
\le
L_\sigma\sum_{t}|u_t|\,|g_t|^p
\le
L_\sigma\|u\|_2\Big(\sum_{t}|g_t|^{2p}\Big)^{1/2}.
\]
Moreover, since $\EE|\sigma(g_t)|\le L_\sigma\EE|g_t|^p=L_\sigma s_t^p\,\EE|\gamma|^p$, another application of Cauchy--Schwarz gives
\[
\sum_{t}|u_t|\,\EE|\sigma(g_t)|
\le
\|u\|_2\Big(\sum_{t}(\EE|\sigma(g_t)|)^2\Big)^{1/2}
\le
C_p\,L_\sigma\|u\|_2\Big(\sum_{t}s_t^{2p}\Big)^{1/2},
\]
for some constant $C_p$ that depends only on $p$.
To bound the $\psi_{2/p}$ norm of $\big(\sum_{t}|g_t|^{2p}\big)^{1/2}$, set $a_t:=|g_t|^p$ and
$X:=\big(\sum_{t}a_t^2\big)^{1/2}$. Fixing $r\ge 2$, the Minkowski's inequality in $L_{r/2}$ implies
\[
\|X\|_{L_r}^2
=
\Big\|\sum_{t}a_t^2\Big\|_{L_{r/2}}
\le
\sum_{t}\|a_t^2\|_{L_{r/2}}
=
\sum_{t}\|a_t\|_{L_r}^2,
\]
and hence
\[
\|X\|_{L_r}\le \Big(\sum_{t}\|a_t\|_{L_r}^2\Big)^{1/2}.
\]
For $1\le r\le 2$, by monotonicity of $L_r$ norms we have $\|X\|_{L_r}\le \|X\|_{L_2}$, and the same bound follows from the case $r=2$.
Since $g_t\sim\cN(0,s_t^2)$, we have $\|a_t\|_{L_r}=\||g_t|^p\|_{L_r}=\|g_t\|_{L_{pr}}^p\le C_p\,r^{p/2}\,s_t^p$, for some constant $C_p$ depending only on $p$. Therefore, for all $r\ge 1$, we obtain
\[
\|X\|_{L_r}
\le
C_p\,r^{p/2}\Big(\sum_{t}s_t^{2p}\Big)^{1/2}.
\]
By the standard equivalence between moment growth and Orlicz norms (with constants depending only on $p$), we have
\[
\|X\|_{\psi_{2/p}}
\le
C_p\Big(\sum_{t}s_t^{2p}\Big)^{1/2}.
\]
Combining the bounds above yields
\[
\|\langle z-\mu,u\rangle\|_{\psi_{2/p}}
\le
C_p\,L_\sigma\Big(\sum_{t}s_t^{2p}\Big)^{1/2}\|u\|_2.
\]
This verifies the first hypothesis of Lemma~\ref{lem:snr_stability} with $\theta=2/p$ and
$
K\ \le\ C_p\,L_\sigma\Big(\sum_{t}s_t^{2p}\Big)^{1/2}.
$

For the second hypothesis, define $Y:=\|z\|_2^2-\EE\|z\|_2^2=\sum_{t}\big(\sigma(g_t)^2-\EE\sigma(g_t)^2\big)$. For any $r\ge 1$, Minkowski's inequality gives
\[
\|Y\|_{L_r}
\le
\sum_{t}\|\sigma(g_t)^2-\EE\sigma(g_t)^2\|_{L_r}
\le
2\sum_{t}\|\sigma(g_t)^2\|_{L_r}.
\]
Using $|\sigma(t)|\le L_\sigma|t|^p$ and $g_t\sim\cN(0,s_t^2)$, we have
\[
\|\sigma(g_t)^2\|_{L_r}
\le
L_\sigma^2\,\||g_t|^{2p}\|_{L_r}
=
L_\sigma^2\,\|g_t\|_{L_{2pr}}^{2p}
\le
C_p\,L_\sigma^2\,r^{p}\,s_t^{2p}.
\]
Therefore $\|Y\|_{L_r}\le C_p\,L_\sigma^2\,r^{p}\sum_{t}s_t^{2p}$, and by the standard equivalence between moment growth and Orlicz norms (with constants depending only on $p$), we obtain
\[
\big\|\|z\|_2^2-\EE\|z\|_2^2\big\|_{\psi_{1/p}}
=
\|Y\|_{\psi_{1/p}}
\le
C_p\,L_\sigma^2\sum_{t}s_t^{2p}.
\]
This verifies the second hypothesis of Lemma~\ref{lem:snr_stability} with $\phi=1/p$ and
$L\ \le\ C_p\,L_\sigma^2\sum_{t}s_t^{2p}$.

Finally, since $s_t\in[a,b]$ for all $t$, we have $s_t^{2p}\le \big(\sup_{s\in[a,b]}s^{p-q}\big)^2 s_t^{2q}$ and hence
\[
\sum_{t}s_t^{2p}
\le
\Big(\sup_{s\in[a,b]}s^{p-q}\Big)^2\sum_{t}s_t^{2q}.
\]
Combining this with \eqref{eq:mu-lb-poly} and $m_2\ge \|\mu\|_2^2$ shows that
\[
\frac{\|\mu\|_2^2}{K^2}\ \ge\ c\,r_{p,q}^{-2},
\qquad
\frac{m_2^2}{L^2}\ \ge\ c\,r_{p,q}^{-4},
\qquad
\frac{m_2^{1/p}}{L^{1/p}}\ \ge\ c\,r_{p,q}^{-2/p},
\qquad
\frac{\|\mu\|_2^{2/p}}{K^{2/p}}\ \ge\ c\,r_{p,q}^{-2/p},
\]
for a constant $c>0$ depending only on $p$. Substituting these bounds into \eqref{eqn:prob_success} yields the stated probability bound.
\end{proof}

As a concrete illustration, Table~\ref{table:activ} instantiates Theorem~\ref{thm:gaussian_weights_low_sr} for a number of common activation functions.

\begin{table}[htbp]
\centering
\begingroup
\renewcommand{\arraystretch}{1.35}
\begin{tabular}{l c c c}
\hline
Activation $\sigma(t)$ & $(p,q)$ & $s_\sigma(a,b)$ & Upper bound on $r_{p,q}(a,b)$ \\
\hline
Absolute value $|t|$ &
$(1,1)$ &
$\displaystyle \frac{\pi}{2}$ &
$\displaystyle \sqrt{\frac{\pi}{2}}$ \\[1.0ex]
ReLU $t_+$ &
$(1,1)$ &
$\displaystyle \pi$ &
$\displaystyle \sqrt{2\pi}$ \\[1.0ex]
Leaky ReLU $\alpha\,t\,\mathbf 1_{\{t\ge0\}}+\beta\,t\,\mathbf 1_{\{t<0\}}$ ($\alpha>\beta$) &
$(1,1)$ &
$\displaystyle \frac{\pi(\alpha^2+\beta^2)}{(\alpha-\beta)^2}$ &
$\displaystyle \sqrt{2\pi}\,\frac{\max\{|\alpha|,|\beta|\}}{\alpha-\beta}$ \\[1.0ex]
Squared ReLU $(t_+)^2$ &
$(2,2)$ &
$\displaystyle 6$ &
$\displaystyle 2$ \\[1.0ex]
Quadratic $t^2$ &
$(2,2)$ &
$\displaystyle 3$ &
$\displaystyle 1$ \\[1.0ex]
GELU $t\,\Phi(t)$ &
$(1,1)$ &
$\displaystyle s_\sigma(a,b)\ \le\ 2\pi\Big(1+\frac{1}{a^2}\Big)$ &
$\displaystyle \Big(1+\frac{1}{\sqrt{2\pi e}}\Big)\,\frac{\sqrt{2\pi(1+a^2)}}{a}$ \\
\hline
\end{tabular}
\endgroup
\caption{The pairs $(p,q)$ correspond to the growth bound $|\sigma(t)|\le L_\sigma|t|^p$ and the mean lower bound $\kappa_{\sigma,q}(a,b)>0$. For homogeneous activations in the table, $s_\sigma(a,b)$ and $r_{p,q}(a,b)$ are scale-invariant. For GELU, the mean scales like $s^2$ as $s\downarrow 0$, so $s_\sigma(a,b)$ grows as $a^{-2}$ and $r_{1,1}(a,b)$ grows as $a^{-1}$.}
\label{table:activ}
\end{table}

Theorem~\ref{thm:gaussian_weights_low_sr} becomes particularly simple for $q$-homogeneous functions.

\begin{cor}[Gaussian weights yield low stable rank, the case of $q$-homogeneous functions]
\label{cor:q_homogeneous_low_sr}
Fix a matrix $X=[x_1,\dots,x_n]\in\R^{d\times n}$ with nonzero columns and let
$W\in\R^{m\times d}$ have iid entries $\mathcal N(0,\tfrac{1}{d})$.
Define the post-activation matrix $A=\sigma(WX)\in\R^{m\times n}$, where
$\sigma:\R\to\R$ satisfies $\sigma(0)=0$, is $q$-homogeneous for some $q\ge 1$, and has nonzero mean $\mu_\sigma\;:=\;\EE[\sigma(\gamma)]\neq 0$ with respect to the standard gaussian $\gamma$.
Then there exists a constant $c>0$, which depends only on $q$, $\sigma(1)$, $\sigma(-1)$, $\mu_{\sigma}$ such that 
 for any $0<\varepsilon_1\le 1$ and
$0<\varepsilon_2<1$, with probability at least
\[
1
-2\exp\!\Big(
  -c\,\min\Big\{m\,\varepsilon_2^2,m^{1\wedge \tfrac{2}{q}} \varepsilon_2^{2/q}\Big\}
\Big)
-2\exp\!\Big(
  -c\,\min\Big\{m\,\varepsilon_1^2,\ m^{1\wedge \frac{1}{q}}\varepsilon_1^{1/q}\Big\}
\Big),
\]
we have
\[
\st(A)\ \le\ \frac{1+\varepsilon_1}{(1-\varepsilon_2)^2}\,\;\frac{\EE[\sigma(\gamma)^2]}{(\EE[\sigma(\gamma)])^2}.
\]
\end{cor}

\subsection{Token indicator matrices and Gaussian embeddings}\label{sec:embedding}

We next establish a base case for transformers. We prove that the token-indicator matrix of a fixed sequence
has stable rank exactly $1/p_{\max}$, where $p_{\max}$ is the empirical frequency of the most
common token; in typical corpora this is a small number due to heavy-tailed token frequencies.
We then show that a Gaussian embedding layer maps this indicator structure to an embedding
matrix whose stable rank is controlled by the same $p_{\max}$ up to a constant factor.

Let $(i_1,\dots,i_n)$ be a sequence of tokens with $i_t\in\{1,\dots,V\}$, and
let $H\in\R^{V\times n}$ denote the matrix whose $t$-th column $h_t$ is the standard basis
vector with a single $1$ in position $i_t$ and zeros elsewhere. For each token
$j\in\{1,\dots,V\}$, define
\[
c_j := |\{t : i_t = j\}|
\qquad\text{and}\qquad
p_j := \frac{c_j}{n},
\]
and set
\[
p_{\max} := \max_{1\le j\le V} p_j.
\]
Then $H H^\top$ is the diagonal matrix with entries $c_1,\dots,c_V$ and therefore
\[
\|H\|_{\op}^2
= \lambda_{\max}(H H^\top)
= \max_{1\le j\le V} c_j
= n p_{\max}.
\]
On the other hand, $\|H\|_F^2$ simply counts the number of nonzero entries of $H$, which is
$n$, and therefore
\begin{equation}\label{eq:indicator-stable-rank}
\st(H)
= \frac{\|H\|_F^2}{\|H\|_{\op}^2}
= \frac{n}{n p_{\max}}
= \frac{1}{p_{\max}}.
\end{equation}
Thus the stable rank of the raw token-indicator matrix is exactly the inverse of the empirical
frequency of the most common token. In the NanoGPT experiment of
Figure~\ref{fig:embedding}, we saw that $\st(H) \approx 26$ throughout training, and hence $p_{\max} \approx 1/26$. {That this ratio is low is not so surprising. Indeed, this is consistent with a Zipf-type law $p_j \propto j^{-s}$ over a vocabulary of size $V = 50{,}257$, for which the observed $p_{\max}$ corresponds to an exponent $s \approx 0.86$. }

We now turn to the embedding matrix. Let $E\in\R^{d\times V}$ denote the embedding matrix,
whose columns $e_1,\dots,e_V$ are sampled independently from $\mathcal N(0,I_d)$. For the
fixed sequence of $n$ tokens $(i_1,\dots,i_n)$ we form the embedded data matrix
\[
X = [x_1,\dots,x_n]\in\R^{d\times n},
\qquad
x_t := e_{i_t}.
\]
Recall that $c_j$ and $p_j$ denote the empirical counts and frequencies of token $j$, and
$p_{\max} := \max_j p_j$. 

\begin{lem}[Token embeddings have low stable rank]
  \label{lem:embedding_rms_stablerank}
 Then there exists
 a universal constants $C>0$ such that for any $\varepsilon\in (0,1)$ we have
  \[
    \PP\!\left(
      \st(X)\ \le\ \frac{1+\varepsilon}{1-\varepsilon}\,\frac{1}{p_{\max}}
    \right)
    \ \ge\ 1-4\exp(-C\varepsilon^2 d).
  \]
\end{lem}

\begin{proof}
Define the set of distinct tokens and its cardinality, respectively: 
\[
S := \{j : c_j>0\}
\qquad\text{and}\qquad
m := |S|
\]
Note that we may write
\[
XX^\top=\sum_{j\in S} c_j\,e_j e_j^\top,
\qquad
\tr(XX^\top)=\sum_{j\in S} c_j\|e_j\|_2^2.
\]
Let $j_\star\in S$ attain the maximum count $c_{j_\star}=np_{\max}$. Since
$XX^\top\succeq c_{j_\star}\,e_{j_\star}e_{j_\star}^\top$, we have
\[
\|X\|_{\op}^2=\lambda_{\max}(XX^\top)\ge c_{j_\star}\|e_{j_\star}\|_2^2.
\]
Consequently, we have
\begin{equation}\label{eq:sr-reduce-embedding}
\st(X)
=\frac{\tr(XX^\top)}{\lambda_{\max}(XX^\top)}
\le
\frac{\sum_{j\in S} c_j\|e_j\|_2^2}{c_{j_\star}\|e_{j_\star}\|_2^2}.
\end{equation}
We control the numerator and denominator in \eqref{eq:sr-reduce-embedding} on a high-probability
event.

Beginning with the denominator in \eqref{eq:sr-reduce-embedding}, we observe that the standard $\chi^2$ lower tail bound
(e.g.\ \cite[Theorem~3.1.1]{vershynin2018high}) implies
\begin{equation}\label{eq:chi2-lb-one}
\PP\big(\|e_{j_\star}\|_2^2<(1-\varepsilon)d\big)\le 2\exp(-C\varepsilon^2 d),
\end{equation}
for a universal constant $C>0$. Moving on to the numerator in 
\eqref{eq:sr-reduce-embedding}, define 
\[
T:=\tr(XX^\top)=\sum_{j\in S} c_j\|e_j\|_2^2.
\]
Note that the triangle inequality for the sub-exponential norm implies $\big\|\|e_j\|_2^2-d\big\|_{\psi_1}\lesssim d$. Note moreover $\EE[T]=d\sum_{j\in S}c_j=dn$.
Therefore, Bernstein's inequality \cite[Corollary 2.9.2]{vershynin2018high} ensures
\begin{equation}\label{eqn:bernstein_tokens}
\PP\left(T>(1+\varepsilon)dn\right))=\PP\left(\sum_{j\in S}c_j(\|e_j\|_2^2-d)>\varepsilon dn\right)\leq  2\exp\left(-C\left( \frac{\varepsilon^2 n^2}{\sum_{j\in S}c_j^2}\wedge \frac{\varepsilon }{ p_{\rm max}}\right)\right)
\end{equation}
for some numerical constant $C$. Since $\sum_{j\in S} c_j^2\le c_{j^{\star}}\cdot\sum_{j\in S}c_j = (np_{\max})n=n^2p_{\max}$, we
have $n^2/\sum_{j\in S}c_j^2\ge 1/p_{\max}\ge 1$, and therefore
the right side of \eqref{eqn:bernstein_tokens} is bounded by $2\exp(-C\varepsilon^2 d)$. Taking a union bound over the two events and continuing with \eqref{eq:sr-reduce-embedding}, we deduce that with probability at least $1-4\exp(-C \varepsilon^2 d)$, we have $\st(X)\leq \frac{(1+\varepsilon)dn}{(1-\varepsilon)c_{j^{\star}}d}=\frac{1+\varepsilon}{1-\varepsilon}\tfrac{1}{p_{\max}}$ as claimed.
\end{proof}

Thus, under the Gaussian embedding model, the activations entering the \emph{first}
transformer block already have low stable rank whenever the empirical token
distribution is not too uniform (equivalently, whenever $p_{\max}$ is not too
small). In particular, if $p_{\max}$ is bounded below by a numerical constant,
then both $X$ and $A^{\mathrm{rms}}$ have stable rank $O(1)$ at initialization. Interestingly, even though $1/p_{\max}$ is an exact formula for $\st(H)$, we see from Figure~\ref{fig:embedding} that the bound $\st(E) \leq 1/p_{\max}$ is generally loose.

\section{Propagation of the stable rank}\label{sec:propagate}
In the previous section, we showed that a nonzero mean of the activation function (e.g.\ ReLU) typically forces the post-activation matrix to have a low stable rank. In this section, we prove propagation bounds at Gaussian initialization for the standard building blocks used in transformer blocks (pointwise nonlinearities, residual connections, gating, and RMSNorm): the stable rank of the output is controlled by the stable ranks of the matrices entering the operation, up to explicit factors. We first establish these bounds for the individual (atomic) operations, and then package them into layerwise bounds for attention and MLP sublayers.

Since the propagation results for the RMSNorm depend on per-column $\ell_2$ norms, we also record the column-norm regularity needed to apply it under iteration. As shown earlier for transformer input sequences (see Section~\ref{sec:embedding}), the embedding layer already produces a data matrix with a low stable rank. Iterating the rules proved here, we conclude in Section~\ref{sec:transformers} that the propagated data matrices inside the transformer tend to have low stable rank as well.

\subsection{Atomic operations}

We first analyze a series of atomic operations that roughly preserve low-stable rank structure. In order to streamline the reading, we provide in the following table a glossary of the main theorems in the section.

\begin{table}[h]
\centering
\begin{tabular}{|c|c|c|c|c|}
\hline
Linear activation & Nonlinear activation & Residual connection & RMSnorm & Gating \\
\hline
Theorem~\ref{lem:frob_gauss_scaling} & Theorem~\ref{thm:nonline_activation} & Theorem~\ref{lem:stablerank_X_plus_WH} & Theorem~\ref{thm:rms-stablerank} & Theorem~\ref{thm:gating} \\
\hline
\end{tabular}
\end{table}

\subsubsection{Linear layer}
We begin by analyzing the stable rank of the post-activation matrix of a linear layer. This is the content of the following theorem.


\begin{thm}[Linear activation]\label{lem:frob_gauss_scaling}
Let $W\in \R^{k\times d}$ have iid Gaussian entries $\mathcal{N}(0,\tfrac{1}{d})$ and fix a matrix $X\in \R^{d\times n}$. Then there exists a constant $c>0$ such that for any $\varepsilon\in (0,1)$ we have
\begin{align}
(1-\varepsilon)\tfrac{1}{d}\|X\|^2_F&\leq \tfrac{1}{k}\|WX\|_F^2\leq (1+\varepsilon)\tfrac{1}{d}\|X\|^2_F, \label{eqn:gaussi_preserve normf} \\
(1-\varepsilon) \tfrac{1}{d}\|X\|_{\op}^2&\leq \tfrac{1}{k}\|WX\|^2_{\op}
\label{eqn:gaussi_preserve norm}
\end{align}
with probability at least $1-4\exp(-c\varepsilon^2 k)$. Consequently, in this event the estimate holds:
$$\st(WX)\leq \frac{1+\varepsilon}{1-\varepsilon}\cdot\st(X).$$
\end{thm}
\begin{proof}
Forming a singular value decomposition of $X$, and using orthogonal invariance of Gaussian matrices we see that the singular values of $WX$ have the same distribution as those of $W\Sigma$, where $\Sigma$ is the diagonal matrix with the singular values $\{s_i\}_{i=1}^d$ of $X$ on the diagonal. Now we may write 
$$\|W\Sigma\|_F^2-\tfrac{k}{d}\|X\|_F^2=\sum_{i=1}^d s^2_i(\|w_i\|_2^2-\tfrac{k}{d}),$$
where $w_i$ are the columns of $W$. Classically, we have the sub-exponential bound $\|\|w_i\|_2^2-\tfrac{k}{d}\|_{\psi_1}\lesssim \frac{1}{d}$; see e.g. \cite[Theorem~3.1.1]{vershynin2018high}. Therefore, Bernstein's inequality \cite[Theorem 2.8.2]{vershynin2018high} yields for any $t\geq 0$ the estimate:
$$\PP\left(|\|W\Sigma\|_F^2-\tfrac{k}{d}\|X\|_F^2|\geq t\right)\leq 2\exp\left(-c\cdot\min\left(\tfrac{d^2t^2}{\sum_{i=1}^d s_i^4},\tfrac{dt}{\max_{i} s_i^2}\right)\right).$$
Setting $t=\varepsilon\tfrac{k}{d} \|X\|_F^2$ and noting that $\|s\|_2^4/\|s\|_4^4\geq \|s\|_2^2/\|s\|^2_{\infty}$ completes the proof of \eqref{eqn:gaussi_preserve normf}. 

Now clearly, we have $\|W\Sigma\|^2_{\op}\geq s_1^2 \|w_1\|^2_2$, where $s_1$ is the maximal singular value of $X$ and $w_1\sim \mathcal{N}(0,\frac{1}{d}I_{k})$ is the first column of $W$. Taking into account concentration of the norm \cite[Theorem 3.1.1]{vershynin2018high}, we deduce that with probability at least $1-\exp(-ck \varepsilon^2)$, we have $\|w_1\|^2_2\geq (1-\varepsilon){\frac{k}{d}}$, which completes the proof.
\end{proof}

\subsubsection{RMSNorm layer}

Next, we analyze the propagation of the stable rank through an RMSNorm block, which rescales all the columns in the data matrix $X\in\R^{d\times n}$ to have $\ell_2$-norm $\sqrt{d}$.

\begin{thm}[RMSNorm]
  \label{thm:rms-stablerank}
  Let $X=[x_1,\dots,x_n]\in\R^{d\times n}$ be a deterministic matrix whose
  columns satisfy 
  \[
    L \;\le\; \|x_t\|_2^2 \;\le\; U
    \qquad\text{for all }t=1,\dots,n,
  \]
  for some values $0<L\le U<\infty$. Define the matrix
  \[
    A^{\mathrm{rms}} := \mathrm{RMSNorm}(X)
    \qquad \textrm{with columns}\qquad
    a_t := \sqrt{d}\,\frac{x_t}{\|x_t\|_2}.
  \]
  Then the estimate holds:
  \[
    \st\big(A^{\mathrm{rms}}\big)
    \;\le\;
    \frac{U}{L}\,\st(X).
  \]
\end{thm}
\begin{proof}
  Let $D$ be the diagonal matrix with diagonal entries $D_{tt} := \sqrt{d}/\|x_t\|_2$, so
  that we may write $A^{\mathrm{rms}} = X D$. The bounds on column norms imply
  \[
    \sqrt{\frac{d}{U}} \;\le\; D_{tt} \;\le\; \sqrt{\frac{d}{L}}
    \qquad\text{for all }t.
  \]
  Setting $s_{\min} := \sqrt{\frac{d}{U}}$ and $s_{\max}:=\sqrt{\frac{d}{L}}$ we therefore deduce
  \[
    s_{\min}\,\|X\|_{\op}
    \;\le\; \|A^{\mathrm{rms}}\|_{\op}
    \;\le\; s_{\max}\,\|X\|_{\op}.
  \]
  On the other hand, we have
  \[
    \|A^{\mathrm{rms}}\|_F^2
    = \sum_{t=1}^n \|a_t\|_2^2
    = nd.
  \]
  Combining these facts and using the estimate $\|X\|_F^2\ge nL$ yields
  \[
    \st(A^{\mathrm{rms}})
    = \frac{\|A^{\mathrm{rms}}\|_F^2}{\|A^{\mathrm{rms}}\|_{\op}^2}
    \;\le\;
    \frac{nd}{s_{\min}^2\,\|X\|_{\op}^2}
    = \frac{U}{d}\,\frac{nd}{\|X\|_{\op}^2}
    \leq \frac{U}{L}\,\frac{\|X\|_F^2}{\|X\|_{\op}^2}
    = \frac{U}{L}\,\st(X),
  \]
  which completes the proof.
\end{proof}

\subsubsection{Pointwise nonlinear layer}

Next, we consider stable rank propogation through a nonlinear activation function $\sigma$ that is square integrable under Gaussian measure. This assumptions allows us to expand the activation function in the Hermite basis. Namely, letting $h_k\colon\R\to\R$ be the orthonormal Hermite polynomials on $\R$, we define the coefficients:
$$\hat{\sigma}_k(s):=\EE_{\gamma}[\sigma(s\gamma)h_k(\gamma)],$$ where $\gamma\sim \mathcal{N}(0,1)$ is a standard normal. Equivalently, using Stein's lemma we may interpret Hermite coefficients as average derivatives:  
$$\hat{\sigma}_k(s)=\frac{s^k}{\sqrt{k}}\cdot\EE_{\gamma} \sigma^{(k)}(s\gamma),$$
as long as $\sigma$ is $k$-times weakly differentiable and polynomially bounded. We will need the following lemma that expresses 
$k(x,x')=\EE_{g}[\sigma(\langle x,g\rangle)\sigma(\langle x',g\rangle)]$, with $g\sim \mathcal{N}(0,\tfrac{1}{d}I_d)$ in a Hermite series. The proof follows by standard techniques, but we include it for completeness.

\begin{lem}[Kernel Hermite expansion]\label{lem:hermite_expand}
Consider a function $\sigma\colon\R\to\R$ that is square integrable with respect to the Gaussian measure and define the kernel function $k(x,x')=\EE_{g}[\sigma(\langle x,g\rangle)\sigma(\langle x',g\rangle)],$ where the expectation is taken with respect to  $g\sim \mathcal{N}(0,\tfrac{1}{d}I_d)$. Then we may write 
$$k(x,x')=\sum_{k=0}^\infty \hat\sigma_k\left(\tfrac{\|x\|_2}{\sqrt{d}}\right)\hat\sigma_k\left(\tfrac{\|x'\|_2}{\sqrt{d}}\right) \left(\tfrac{\langle x,x'\rangle}{\|x\|_2\cdot \|x'\|_2}\right)^{k}.$$
\end{lem}
\begin{proof}
Let $s:=\|x\|_2/\sqrt d$ and $s':=\|x'\|_2/\sqrt d$. Define
$U:=\langle x,g\rangle/s$ and $V:=\langle x',g\rangle/s'$.
Then $(U,V)$ is jointly Gaussian with $\EE[U^2]=\EE[V^2]=1$ and correlation
$
\rho=\EE[UV]=\frac{\langle x,x'\rangle}{\|x\|_2\|x'\|_2}.
$
Hence
\[
k(x,x')=\EE\big[\sigma(\langle x,g\rangle)\sigma(\langle x',g\rangle)\big]
=\EE\big[\sigma(sU)\sigma(s'V)\big].
\]
We now expand the function $\gamma\mapsto\sigma(s\gamma)$ in a Hermite basis (see e.g.\ \cite[Prop.~11.30]{ODonnellAOBF}):
\[
\sigma(s\gamma)=\sum_{k\ge0}\hat\sigma_k(s)\,h_k(\gamma),\qquad\textrm{where}\qquad
\hat\sigma_k(s):=\EE_\gamma[\sigma(s\gamma)h_k(\gamma)].
\]
Hence, we obtain the expression
\[
k(x,x')=\sum_{k,\ell\ge0}\hat\sigma_k(s)\hat\sigma_\ell(s')\,\EE[h_k(U)h_\ell(V)].
\]
For $\rho$-correlated standard Gaussians $(U,V)$, the Hermite polynomials satisfy
$\EE[h_k(U)h_\ell(V)]=\rho^k\mathbf{1}_{k=\ell}$ (see \cite[Prop.~11.31]{ODonnellAOBF}), which completes the proof.
\end{proof}

We are now ready to establish stable rank propagation through a nonlinear activation. The only assumptions we make on $\sigma$ is that $|\sigma(t)|\leq |t|$ for all $t\in \R$ and that the first Hermite coefficient $\hat{\sigma}_1\left(\tfrac{\|x_i\|}{\sqrt{d}}\right)$ is nonzero, where $x_i$ are the columns of the data matrix. Concretely, for the ReLU activation function a quick computation shows $s^{-1}\hat \sigma_1(s)=\tfrac{1}{2}$ for any $s$ and therefore we may set $p=1/2$ in \eqref{eqn:defnp}. Similar computations for other common activation functions appear in Table~\ref{table_common_act_func}.

\begin{table}[h]
\centering
\begin{tabular}{|c|c|}
\hline
Activation $\sigma(t)$ & $s^{-1}\hat{\sigma}_1(s)$ \\ \hline
Linear $\;t$ & $1$ \\ \hline
ReLU $\;\max(0,t)$ & $\tfrac{1}{2}$ \\ \hline
Leaky ReLU $\;\max(t,\alpha t)$ & $\tfrac{1+\alpha}{2}$ \\ \hline
Exact GELU $\;t\,\Phi(t)$ & $\tfrac{1}{2}$ \\ \hline
HardTanh $\;\mathrm{clip}(t,-1,1)$ & $2\Phi(1/s)-1$ \\ \hline
$\tanh(t)$ & $\mathbb{E}[\mathrm{sech}^2(s\gamma)]$ \\ \hline
Softsign $\;t(1+|t|^{-1})$ & $\mathbb{E}\!\left[(1+|s\gamma|)^{-2}\right]$ \\ \hline
SiLU $\;t(1+e^{-t})^{-1}$ & $\tfrac12$ \\ \hline
\end{tabular}
\caption{Values of $s^{-1}\hat{\sigma}_1(s)=\mathbb{E}[\sigma'(s\gamma)]$ for common activation functions, where $\gamma\sim\mathcal N(0,1)$ and $\Phi$ denotes the standard normal CDF.\label{table_common_act_func}}
\end{table}

\begin{thm}[Nonlinear activation]\label{thm:nonline_activation}
Consider a matrix $W\in\R^{k\times d}$ with iid Gaussian entries $\mathcal{N}(0,\tfrac{1}{d})$ and let $X\in\R^{d\times n}$ be an arbitrary matrix with columns $x_i\in\R^{k}$. Let $\sigma\colon\R\to\R$ be a function satisfying $|\sigma(t)|\leq |t|$ for all $t\in \R$ and whose first Hermite coefficient satisfies
\begin{equation}\label{eqn:defnp}
p:=\min_{i=1\ldots,n}~\tfrac{d}{\|x_i\|_2^2}\cdot\hat{\sigma}^2_1\left(\tfrac{\|x_i\|_{2}}{\sqrt{d}}\right)>0.
\end{equation}
Then there exists a constant $c>0$ such that for any $\varepsilon\in (0,1)$ we have
\begin{align}
(1-\varepsilon)\frac{p}{d}\|X\|^2_F&\leq \frac{1}{k}\|\sigma(WX)\|_F^2\leq (1+\varepsilon)\frac{1}{d}\|X\|^2_F,\label{eqn:gaussi_preserve frob}\\
(1-\varepsilon) \frac{p}{d}\|X\|_{\op}^2&\leq \frac{1}{k}\|\sigma(WX)\|^2_{\op},
\label{eqn:gaussi_preserve norm2}
\end{align}
with probability at least $1-8\exp(-c\varepsilon^2p^2 k)$. Consequently, in this event the estimate holds:
$$\st(\sigma(WX))\leq \frac{1+\varepsilon}{1-\varepsilon}\cdot\frac{1}{p}\cdot\st(X).$$
\end{thm}
\begin{proof}
First, taking into account that $|\sigma(t)|\leq |t|$ for all $t$, we deduce
$$\|\sigma(WX)\|^2_F\leq \|WX\|_F^2.$$
Therefore Lemma~\ref{lem:frob_gauss_scaling} shows that with probability at least $1-2\exp(-c\varepsilon^2 k)$ we have $\frac{1}{k}\|\sigma(WX)\|^2_F\leq (1+\varepsilon)\frac{1}{d}\|X\|_F^2$, thereby establishing the right side of \eqref{eqn:gaussi_preserve frob}. 

Define now the matrix 
$$M_k:=\frac{1}{k}\sigma(WX)^{\top}\sigma(WX)=\frac{1}{k}\sum_{i=1}^k \sigma(X^{\top}w_i)\sigma(X^{\top}w_i)^{\top},$$
where $w_i^\top$ are the rows of $W$. We first analyze the expectation of $M_k$. To this end, define $z:=\sigma(X^{\top}w)$ where $w\sim \mathcal{N}(0,\tfrac{1}{d}I_d)$, and set $M:=\EE zz^{\top}=\EE[M_k]$.  Note that the $ij$ entry of $M$ is simply $\EE_w\sigma(\langle x_i,w\rangle)\sigma(\langle x_j,w\rangle)$ where $x_i$ and $x_j$ are the $i$'th and $j$'th columns of $X$, respectively. Therefore, applying Lemma~\ref{lem:hermite_expand} entrywise, we may write 
$$M=\sum_{q=0}^{\infty} \widehat{\Sigma}_q\left(\frac{1}{d}X^{\top}X\right)^{\odot q}\widehat{\Sigma}_q,$$
where $\odot q$ denotes a $q$-fold Hadamard product and $\widehat{\Sigma}_q$ is a diagonal matrix with diagonal entries $\hat{\sigma}_q(\tfrac{\|x_i\|_{2}}{\sqrt{d}})\cdot\frac{d^{q/2}}{\|x_i\|^q_2}$. Since the Hadamard product of positive semidefinite matrices is positive semidefinite (Schur product theorem \cite[Sec.~1.2]{Bhatia2007PDM}), every summand on the right side is positive. Dropping all but the term $q=1$ we obtain the lower bound in PSD order
\begin{equation}\label{eqn:lower_bound_m}
M\succeq  \frac{p}{d} X^{\top}X.
\end{equation}
Therefore we obtain the two lower bounds, $\displaystyle\|M\|_{\op}\geq \frac{p}{d}\cdot \|X\|^2_{\op}$ and $\displaystyle\tr(M)\geq \frac{p}{d}\cdot \|X\|^2_{F}$. We now argue using concentration that the operator norm of $M_k$ can not be much smaller than the operator norm of $M$. Indeed, since $\sigma$ is $1$-Lipschitz, Gaussian concentration \cite[Theorem 5.2.3]{vershynin2018high} shows that the vector $z-\EE[z]$ is sub-Gaussian with parameter $K=C\|X\|_{\op}/\sqrt{d}$ for some constant $C$.  Therefore, applying Lemma~\ref{lem:lower_op_norm} we deduce $\|M_k\|_{\op}\geq (1-\varepsilon)\|M\|_{\op}$ with probability at least
$$1-2\exp\left(-ck\left(\tfrac{\varepsilon^2d^2\|M\|^2_{\op}}{\|X\|^4_{\op}}\wedge\tfrac{d\varepsilon\|M\|_{\op}}{\|X\|^2_{\op}}\right)\right)-2\exp\left(-\tfrac{ck\varepsilon^2d\|M\|_{\op}}{\|X\|^2_{\op}}\right).$$
Using \eqref{eqn:lower_bound_m}, we deduce that this probability can be lower bounded by
$1-4\exp(-ck\varepsilon^2(p\wedge p^2))$. Thus in this event, the claimed estimate \eqref{eqn:gaussi_preserve norm2} holds.

Finally, we argue that the trace of $M_k$ can not be much smaller than the trace of $M$. To see this, write $z=\sigma(X^{\top}w)$ and $z_i=\sigma(X^{\top}w_i)$ and let $x_j$ denote the $j$'th row to $X$. Then, we may write
$\tr(M_k)-\tr(M)=\frac{1}{k}\sum_{i=1}^k\|z_i\|_2^2-\EE\|z\|_2^2.$
Since $\sigma$ is 1-Lipschitz with $\sigma(0)=0$, we have $(z_i)_j^2=\sigma(x_j^{\top}w_i)^2\leq (x_j^{\top}w_i)^2$. Therefore we deduce $$\|(z_i)_j^2\|_{\psi_1}\leq \|(x_j^{\top}w_i)^2\|_{\psi_1}=\|x_j^{\top}w_i\|^2_{\psi_2}=\frac{\|x_j\|_2^2}{d},$$
where the first equality follows from \cite[Lemma 2.8.5]{vershynin2018high} and the second equality from the fact that $x_j^{\top}w_i$ is a centered Gaussian with variance $\frac{\|x_j\|_2^2}{d}$. Using the triangle inequality we deduce that $\|z_i\|^2_2$ is sub-Gaussian with parameter $\|X\|_F^2/d$. The centering lemma \cite[Exercise 2.44]{vershynin2018high} therefore implies $\|\|z_i\|^2_2-\EE\|z_i\|^2_2\|_{\psi_1}\lesssim \|X\|_F^2/d$. Applying Bernstein's inequality \cite[Theorem 2.9.1]{vershynin2010introduction}, we therefore deduce for any $\varepsilon>0$ the estimate:
$$\PP(|\tr(M_k)-\tr(M)|\geq \varepsilon \tr(M))\leq 2\exp\left(-c\left(\frac{kd^2\varepsilon^2\tr(M)^2}{\|X\|^4_F}\wedge \frac{kd\varepsilon\tr(M)}{\|X\|_F^2}\right)\right)$$
Taking into account as we have seen that $\tr(M)\geq \frac{p}{d}\|X\|_F^2$ and $p<1$ due to the standing growth assumption on $\sigma$, the right side is bounded by  
$2\exp\left(-ck\varepsilon^2 p^2\right)$. Taking a union bound over all the encountered events completes the proof of \eqref{eqn:gaussi_preserve frob} and \eqref{eqn:gaussi_preserve norm2}. 
Within these events, we then compute
$$\st(\sigma(WX))\leq \frac{\|M_k\|_F}{\|M_k\|_{\op}}\leq \frac{(1+\varepsilon)\tfrac{1}{d}\|X\|_F^2}{(1-\varepsilon)\tfrac{p}{d}\|X\|^2_{\op}}= \frac{1+\varepsilon}{1-\varepsilon}\cdot\frac{1}{p}\cdot\st(X),$$
as we had to show.
\end{proof}

\subsubsection{Residual connections}
Next, we establish stable rank propagation through residual skip connections. 

\begin{thm}[Residual skip connection]\label{lem:stablerank_X_plus_WH}
Let $X \in \R^{d \times n}$ and $H \in \R^{k \times n}$ be fixed matrices and let $W \in \R^{d \times k}$ have
iid Gaussian entries $
\mathcal N(0,\frac{1}{k})$.
Then there exists a constant $c>0$ such that for every
$\varepsilon \in (0,1)$, the estimate
\begin{equation}\label{eq:stablerank_X_plus_WH_goal}
  \st(X + WH)
  \;\le\;
  \frac{1+\varepsilon}{1-\varepsilon}
  \cdot
  \frac{\tfrac{1}{d}\|X\|_F^2 + \frac{1}{k}\,\|H\|_F^2}
       {\max\Big\{\tfrac{1}{d}\|X\|_{\op}^2,\; \frac{1}{k}\,\|H\|_{\op}^2\Big\}}.
\end{equation}
holds with probability at least $1 - 10\exp(-c\varepsilon^2 d)$.
\end{thm}
\begin{proof}
Define the matrix
$G := WH$.
We bound the Frobenius and operator norms of $X+G$ separately, beginning with the former.  To this end, we compute
$$\|X+G\|_F^2=\|X\|_F^2+\|G\|_F^2+2\langle X,G\rangle.$$
Using Lemma~\ref{lem:frob_gauss_scaling}, we see that the estimate $\|G\|_F^2\leq (1+\tfrac{\varepsilon}{2})\frac{d}{k}\|H\|_F^2$ holds with probability at least $1-4\exp(-c\varepsilon^2d)$.
Let us bound the last term 
$$\langle X,G\rangle=\tr(WHX^{\top})=\sum_{i,j}W_{ij}(HX^{\top})_{ji}.$$
Note that the square sum of coefficients is simply $\sum_{i,j} (HX^{\top})_{ji}^2=\|HX^{\top}\|_F^2$. Therefore 
Hoeffding's inequality \cite[Theorem 2.7.3]{vershynin2018high} shows that for any $\varepsilon>0$ the estimate $|\langle X,G\rangle|\leq \varepsilon\sqrt{\frac{d}{k}} \|HX^{\top}\|_F$ holds with probability at least $1-2\exp(-cd\varepsilon^2)$. Using submultiplicativity of the Frobenius norm and Young's inequality, we further estimate 
$$\sqrt{\frac{d}{k}} \|HX^{\top}\|_F\leq \sqrt{\frac{d}{k}}  \|H\|_F\cdot\|X\|_F\leq \frac{d}{2k}\|H\|_F^2+\frac{1}{2}\|X\|_F^2.$$
Thus in this event, we have the estimate 
$$\|X+G\|_F^2\leq (1+\varepsilon)(\|X\|_F^2+\frac{d}{k}\|H\|_F^2).$$

Next, we lower bound the operator norm $\|X+G\|_{\op}$ by $\|X\|_{\op}$ and $\|H\|_{\op}$. To this end, fix a unit vector $v$. Then expanding the square we compute
\begin{equation}\label{eqn:blupty}
\|(X+G)v\|^2_2=\|Xv+Gv\|^2_2=\|Xv\|^2_2+\|g\|^2_2+2\langle Xv,g\rangle
\end{equation}
where we set $g:=Gv=WHv\sim \mathcal{N}(0,\tfrac{\|Hv\|^2_2}{k}I_{d})$. Hoeffding's inequality shows that for any $t>0$ we have
$|\langle Xv,g\rangle|\leq t\sqrt{\frac{d}{k}}\|Xv\|_{2}\|Hv\|_2$
with probability at least $1-2\exp(-cdt^2)$. 
Concentration of the norm in turn shows that for any $\varepsilon\in (0,1)$ we have $|\|g\|_2-\sqrt{\frac{d}{k}}\|Hv\|_2|\leq \varepsilon\sqrt{\frac{d}{k}}\|Hv\|_2$ with probability at least $1-2\exp(-cd\varepsilon^2)$.

{\em Case I: Lower-bounding with $\|X\|_{\op}$.} Suppose now that the unit vector $v$ satisfies $\|Xv\|_2=\|X\|_{\op}$.
Then setting $t=\varepsilon(1-\varepsilon)^2$, we deduce
\begin{align*}
\|g\|_2^2+2\langle Xv,g\rangle&\geq (1-\varepsilon)^2\frac{d}{k}\|Hv\|^2_2 - 2t\sqrt{\frac{d}{k}}\|Hv\|_{2}\|X\|_{\op}\\
&\geq (1-\varepsilon)^2\frac{d}{k}\|Hv\|^2_2-t\frac{d}{k}\|Hv\|_2^2-t\|X\|_{\op}^2\\
&\geq -t\|X\|_{\op}^2,
\end{align*}
where the second to last inequality follows from Young's inequality. Thus in this event we have 
$$\|(X+G)v\|^2_2\geq (1-\varepsilon(1-\varepsilon)^2)\|X\|_{\op}^2.$$
Clearly $(1-\varepsilon)^2$ can be treated as a constant and absorbed into $\varepsilon$.

{\em Case II: Lower-bounding with $\|H\|_{\op}$.} Suppose 
now that the unit vector $v$ satisfies $\|Hv\|_2=\|H\|_{\op}$.
Then using Young's inequality, we deduce
\begin{align*}
\|Xv\|_2^2+2\langle Xv,g\rangle&\geq \|Xv\|^2_2 - 2t\sqrt{\frac{d}{k}}\|Hv\|_{2}\|Xv\|_{2}\geq -t^2\frac{d}{k}\|Hv\|_2^2.
\end{align*}
Consequently, we have 
$$\|(X+G)v\|_2^2=\|g\|_2^2+\|Xv\|_2^2+2\langle Xv,g\rangle\geq ((1-\varepsilon)^2-t^2)\frac{d}{k}\|Hv\|^2_2.$$
Using the inequality $(1-\varepsilon)^2-t\geq 1-2\varepsilon-t$ and replacing $t$ and $\varepsilon$ by $\varepsilon/3$ completes the proof. 
\end{proof}

\paragraph{Residual connections preserve column norms.}

In Section~\ref{sec:transformers} we will apply the blockwise calculus rules from this subsection to control the stable rank of activations inside a transformer at Gaussian initialization. Several of those bounds, most notably RMSNorm (Theorem~\ref{thm:rms-stablerank}), require uniform control of the column norms of the matrices entering a block, not just their stable rank. Thus we must understand how column norms change when we pass through a residual connection. At initialization, a typical residual update has the form $x_t \mapsto x_t + W h_t$, where $W$ is a Gaussian matrix. The following lemma records the needed fact: uniformly over columns, $\tfrac{1}{d}\|x_t+Wh_t\|_2^2$ concentrates around $\tfrac{1}{d}\|x_t\|_2^2+\frac{1}{k}\|h_t\|_2^2$, ensuring that the column-norm regularity required by the subsequent stable-rank bounds is preserved under such residual updates.

\begin{lem}[Residual connection and column norms]
\label{lem:gaussian-residual-preserves-column-norms}
Let $X=[x_1,\dots,x_n]\in\R^{d\times n}$ and $H=[h_1,\dots,h_n]\in\R^{k\times n}$ be
deterministic and let $W\in\R^{d\times k}$ have iid entries $\cN(0,\tfrac{1}{k})$.  Then for any $\varepsilon\in(0,1)$, with probability at least
$1-C\,n\exp(-c\,\varepsilon^2 d)$, simultaneously for all $t=1,\ldots,n$ we have
\[
(1-\varepsilon)\Bigl(\tfrac{1}{d}\|x_t\|_2^2+\tfrac{1}{k}\|h_t\|_2^2\Bigr)
\;\le\;
\tfrac{1}{d}\|x_t+Wh_t\|_2^2
\;\le\;
(1+\varepsilon)\Bigl(\tfrac{1}{d}\|x_t\|_2^2+\tfrac{1}{k}\|h_t\|_2^2\Bigr).
\]
\end{lem}

\begin{proof}
Fix $t$ and set $\sigma_t^2:=\|h_t\|_2^2/k$.  Then $g_t:=Wh_t$ follows the law $\cN(0,\sigma_t^2 I_d)$, and therefore we may write
$g_t=\sigma_t z$ with $z\sim\cN(0,I_d)$ and $\|g_t\|_2^2=\sigma_t^2\|z\|_2^2$.
The standard $\chi^2$ tail bound implies
\[
\PP\Big(\big|\|z\|_2^2-d\big|\ge \tfrac{\varepsilon}{4}d\Big)\le 2\exp(-c\varepsilon^2 d),
\]
and hence on this event we have
\begin{equation}\label{eq:gt-conc-new}
(1-\tfrac{\varepsilon}{4})\,\sigma_t^2 d \;\le\; \|g_t\|_2^2 \;\le\; (1+\tfrac{\varepsilon}{4})\,\sigma_t^2 d.
\end{equation}
Now assume that $x_t \neq 0$; otherwise, the result follows from~\eqref{eq:gt-conc-new}.

Note that $\langle x_t,g_t\rangle$ is Gaussian with mean $0$ and variance $\sigma_t^2\|x_t\|_2^2$, and therefore
\[
\PP\Big(|2\langle x_t,g_t\rangle|\ge \tfrac{\varepsilon}{2}\big(\|x_t\|_2^2+\sigma_t^2 d\big)\Big)
\le 2\exp\!\Big(-c\,\varepsilon^2\,\frac{\big(\|x_t\|_2^2+\sigma_t^2 d\big)^2}{\sigma_t^2\|x_t\|_2^2}\Big)
\le 2\exp(-c\varepsilon^2 d),
\]
where the last inequality follows from the algebraic identity $(\|x_t\|_2^2+\sigma_t^2 d)^2/(\sigma_t^2\|x_t\|_2^2)\ge d$.

On the intersection of these two events, we deduce that the quantity
\[
\|x_t+g_t\|_2^2=\|x_t\|_2^2+\|g_t\|_2^2+2\langle x_t,g_t\rangle
\]
lies between
\[
\|x_t\|_2^2+(1-\tfrac{\varepsilon}{4})\sigma_t^2 d-\tfrac{\varepsilon}{2}\big(\|x_t\|_2^2+\sigma_t^2 d\big)
\quad\text{and}\quad
\|x_t\|_2^2+(1+\tfrac{\varepsilon}{4})\sigma_t^2 d+\tfrac{\varepsilon}{2}\big(\|x_t\|_2^2+\sigma_t^2 d\big),
\]
which are respectively bounded below by $(1-\varepsilon)(\|x_t\|_2^2+\sigma_t^2 d)$ and above by
$(1+\varepsilon)(\|x_t\|_2^2+\sigma_t^2 d)$ for $\varepsilon\in(0,1)$.
Since $\sigma_t^2 d=(d/k)\|h_t\|_2^2$, this proves the desired inequality for the fixed $t$.
Finally, the failure probability for a fixed $t$ is at most $C\exp(-c\varepsilon^2 d)$, and a union bound over
$t=1,\dots,n$ yields the stated probability $1-Cn e^{-c\varepsilon^2 d}$ after adjusting constants.
\end{proof}

\subsubsection{Gated nonlinear layer}

The final building block we will consider is the gating transformation, which has the form 
$$X_{\rm gate}:=\sigma(VZ)\odot WX$$ for fixed data matrices $X$ and $Z$ and random weight matrices $V$ and $W$. Here, $\odot$ denotes the Hadamard product. The key to establishing propagation of the stable rank through the gating block is the following theorem, which states that under reasonable conditiones the operator norm and trace of a matrix of the form $\sum_{i=1}^k (a_i a_i^\top)\odot H$ is well distributed among the summands.

\begin{thm}[Dispersion of operator norm and trace]
\label{thm:sum-dominates-max-relu-LU}
Consider a positive semidefinite matrix $H\in\mathbb{R}^{n\times n}$ and a sequence of random vectors $a_1,\ldots, a_k\in \R^n$. Define now the matrices
\[
X_i \;:=\; (a_i a_i^\top)\odot H \;\in\;\mathbb{R}^{n\times n}
\qquad\textrm{and their sum}\qquad
S \;:=\; \sum_{i=1}^k X_i.
\]
We suppose that $a_i$ are iid realizations of the vector $a=\sigma(Z^{\top}v)$ where $v\sim N(0,\frac{1}{d}I_{d})$ and $Z=[z_1,\dots,z_n]\in\mathbb{R}^{d\times n}$ is a fixed matrix. We suppose moreover that 
$\sigma\colon\R\to\R$ is an activation function that is $1$-Lipschitz with $\sigma(0)=0$.
Then the following are true.
\begin{enumerate}
\item {\bf (Trace dispersion)} Suppose that the diagonal entries of $H$ satisfy 
$$
L \;\le\; H_{jj} \;\le\; U
\qquad\text{for all }j=1,\dots,n,
$$
for some values $0<L\le U<\infty$. Suppose moreover that the first Hermite coefficient is nonzero: 
\begin{equation}\label{eqn:defnp2}
p:=\min_{i=1\ldots,n}~\tfrac{d}{\|z_i\|_2^2}\cdot\hat{\sigma}^2_1\left(\tfrac{\|z_i\|_{2}}{\sqrt{d}}\right)>0.
\end{equation}
Then for any constant $q>0$, there exist constants $c,C<\infty$ such that the estimate 
$$\frac{\tr(S)}{\displaystyle\max_{1\leq i\leq k} \tr(X_i)}\geq \frac{cL}{U}\frac{kp}{\log k}$$
holds with probability at least $1-\exp(-C p^2k)-k^{-q}$.
\item {\bf (Operator norm dispersion)} Suppose that the columns $z_j$ of $Z$ satisfy
\[
L \;\le\; \|z_j\|_2^2 \;\le\; U
\qquad\text{for all }j=1,\dots,n,
\]
for some values $0<L\le U<\infty$. Suppose moreover that for some numerical constant $\kappa>0$ we have $\left|\underset{g\sim \mathcal{N}(0,s^2)}{\EE}\sigma(g)\right|\geq \kappa s$
    for all $s\in [\sqrt{L/d},\sqrt{U/d}]$.
Then for any constant $q>0$, there exist constants $c,C<\infty$ such that the estimate
$$\frac{\|S\|_{\op}}{\displaystyle\max_{1\leq i\leq k}\|X_i\|_{\op}}\geq \frac{cd}{\log(kn)}\cdot\frac{k}{d}\cdot\frac{L}{U},$$
holds with probability at least $1-\exp\left(-Ck\frac{L^2}{U^2}\right)-\frac{1}{(nk)^q}$. 
\end{enumerate}
\end{thm}

\begin{proof}
We first dispense with the Trace Dispersion claim, which is straightforward. Define the matrix $A=\sigma(Z^{\top}V)$ where the columns $v_i$ of $V\in\R^{d\times k}$ are Gaussian $N(0,\frac{1}{d}I_{d})$. Thus $a_i$ are the columns of $A$. Summing the diagonal entries of $X_i$ and using the upper and lower bounds on the diagonal entries of $H$ directly yields the estimates: 
\begin{align*}
L\|a_i\|_2^2&\leq \tr(X_i)\leq U\|a_i\|_2^2\qquad \forall i=1,\ldots, n,\\
L\|A\|_F^2&\leq \tr(S)\leq U \|A\|_F^2.
\end{align*}
We first lower bound $\|A\|^2_F$. To this end, Theorem~\ref{thm:nonline_activation} (equation~\eqref{eqn:gaussi_preserve frob})  shows that there exist constant $c_1,c_2$ such that 
$$\PP\Big(
    \|A\|_F^2
    \;\ge\;
    c_1p\frac{k}{d}\,\|Z\|_F^2
  \Big)
  \;\ge\;
  1 - 8\,\exp\big(-c_2p^2k\big).$$
Let us next upper bound the maximal row norms $\max_i \|a_i\|_2$.
Since $\sigma$ is 1-Lipschitz and satisfies $\sigma(0)=0$, we have $\left\|a_i\|^2\leq \|Z^{\top}v_i\right\|_2^2$. The Hanson-Wright inequality \cite[Exercise 6.13]{vershynin2018high} together with monotonicity of the subgaussian norm then implies $$\left\|\|a_i\|_2-\frac{1}{\sqrt{d}}\|Z\|_F\right\|_{\psi_2}\lesssim \frac{1}{\sqrt{d}}\|Z\|_{\op}.$$
Therefore, for any $t\geq 0$ we have 
$$\PP\left(\|a_i\|_2\geq \frac{1+t}{\sqrt{d}}\|Z\|_F\right)\leq \exp(-ct^2).$$
Taking a union bound over $i=1,\ldots, k$, we deduce $$\PP\left(\max_{i=1,\ldots,k}\|a_i\|_2\geq \frac{1+t}{\sqrt{d}}\|Z\|_F\right)\leq \exp(\log(k)-ct^2).$$
With the choice $t^2=\frac{q+1}{c}\log(k)$, we deduce that there is a constant $C>0$ such that
$$\PP\left(\max_{i=1,\ldots,k}\|a_i\|_2\leq C\sqrt{\frac{{\log(k)}}{d}}\cdot\|Z\|_F\right)\geq 1-\frac{1}{k^q}.$$
Summarizing, we have established that there exist constants $c,C>0$ such that the estimate 
$$\frac{\tr(S)}{\displaystyle\max_{1\leq i\leq k} \tr(X_i)}\geq \frac{cL}{U}\cdot \frac{\frac{pk}{d}\|Z\|^2_F}{\frac{\log(k)}{d}\|Z\|^2_F}=\frac{cL}{U}\frac{pk}{\log k}$$
holds with probability at least $1-8\exp(-C p^2 k)-k^{-q}$.

The remainder of the proof focuses on establishing dispersion of the operator norm.
For any vector $a\in\mathbb{R}^n$, define the diagonal matrix $D_a:=\mathrm{Diag}(a)\in\mathbb{R}^{n\times n}$.
Then clearly equality holds:
\begin{equation}
(a a^\top)\odot H \;=\; D_a\,H\,D_a.
\label{eq:hadamard-diag-LU}
\end{equation}

\medskip
\noindent\textbf{Upper bound on $\|X_i\|_{\op}$.}
Using \eqref{eq:hadamard-diag-LU} and submultiplicativity, we deduce
\[
\|X_i\|_{\mathrm{op}}=\|D_{a_i} H D_{a_i}\|_{\mathrm{op}}
\le \|D_{a_i}\|_{\mathrm{op}}^2\,\|H\|_{\mathrm{op}}
= \|a_i\|_\infty^2\,\|H\|_{\mathrm{op}}.
\]
Fix indices $(i,j)$. Since $v_i\sim\mathcal{N}(0,\frac1d I_d)$, we have
$v_i^\top z_j \sim \mathcal{N}(0,\|z_j\|_2^2/d)$. Since $\sigma$ is $1$-Lipschitz with $\sigma(0)=0$, the entry $(a_i)_j=\sigma(v_i^\top z_j)$ satisfies 
\[
\mathbb{P}\Big(|(a_i)_j| \ge t\Big)
\;\le\;
\mathbb{P}\Big(|v_i^\top z_j| \ge t\Big)
\;\le\;
2\exp\!\Big(-\frac{d t^2}{2\|z_j\|_2^2}\Big)
\;\le\;
2\exp\!\Big(-\frac{d t^2}{2U}\Big),
\]
for any $t\ge 0$.
Taking a union bound over all $kn$ pairs $(i,j)$ yields
\[
\mathbb{P}\Big(\max_{1\le i\le k}\|a_i\|_\infty \ge t\Big)
\;\le\;
\exp\!\Big(\log(2kn)-\frac{d t^2}{2U}\Big).
\]
With the choice $t^2=(q+1)\frac{2U}{d}\log\!\big(2kn\big)$, the right-hand side is at most
$\frac{1}{(kn)^{q}}$. Thus with probability at least $1-\frac{1}{(kn)^{q}}$, we have
\begin{equation}\label{eqn:upper_bound_OptX}
\max_{1\le i\le k}\|X_i\|_{\mathrm{op}}
\le
\|H\|_{\mathrm{op}}\max_{1\le i\le k}\|a_i\|_\infty^2
\le
\frac{2(q+1)U}{d}\,\|H\|_{\mathrm{op}}\,
\log (2kn).
\end{equation}

\medskip
\noindent\textbf{Deterministic lower-bound on $\|S\|_{\op}$.}
Define $\lambda:=\|H\|_{\mathrm{op}}$ and let $u\in\mathbb{R}^n$ be a unit top eigenvector of $H$, that is
$Hu=\lambda u$.
In particular, we clearly have $H\succeq \lambda uu^\top$.
We compute
\begin{align}
\|S\|_{\mathrm{op}}
\;\ge\;
u^\top S u
=\sum_{i=1}^k u^\top D_{a_i} H D_{a_i} u
&=\sum_{i=1}^k (D_{a_i}u)^\top H (D_{a_i}u)\notag
\\
&\ge\;
\lambda \sum_{i=1}^k (D_{a_i}u)^\top (u u^\top) (D_{a_i}u)
=
\lambda \sum_{i=1}^k\underbrace{(u^\top D_{a_i}u)^2}_{=:T_i}.\label{eqn:expansition_TSs}
\end{align}

\medskip
\noindent\textbf{A lower bound on $\mathbb{E}T_i$ via the mean outer product.}
Let $v\sim \mathcal{N}(0,\frac1d I_d)$ and set $a:=\sigma(Z^\top v)\in\mathbb{R}^n$. Observe that each coordinate $a_j$ has the form $a_j=\sigma(z_j^{\top}v)\sim\sigma(s\gamma)$ where $s=\sqrt{\|z_j\|^2_2/d}$ and $\gamma\sim \mathcal{N}(0,1)$. The assumption that for some numerical constant $\kappa>0$ we have $\left|\underset{g\sim \mathcal{N}(0,s^2)}{\EE}\sigma(g)\right|\geq \kappa s$
    for all $s\in [\sqrt{L/d},\sqrt{U/d}]$ therefore ensures that the mean $\mu:=\EE[a]$ satisfies $\mu_j\geq \kappa\sqrt{\frac{L}{d}}$ for all $j$.

Define the kernel matrix $K:=\mathbb{E}[a a^\top]\in\mathbb{R}^{n\times n}$ and observe the estimate
\begin{equation}
K=\mathbb{E}[a a^\top]
=\mathrm{Cov}(a)+\mu\mu^\top
\succeq \mu\mu^\top.
\label{eq:K-lower-by-mean-LU}
\end{equation}
Define now the weights $w_j:=u_j^2$, which clearly satisfy $w_j\geq 0$ and $\sum_{j} w_j=1$. Observe that $T_i$ have the same distribution as $T:=(w^{\top}a)^2$. Therefore, we deduce 
\begin{equation}\label{eqn:lower_T_exp}    
\EE[T]=\EE (w^{\top}a)^2= w^{\top}Kw\geq (w^{\top}\mu)^2\geq \min_j \mu^2_j\gtrsim \frac{L}{d},
\end{equation}
where the second inequality follows from the fact that the weight vector $\{w_j\}_j$ lies on the simplex.

\medskip
\noindent\textbf{Concentration of $\sum_{i=1}^k T_i$ below its mean.}
We will now show that $T-\mathbb{E}T$ is subexponential with parameter $\asymp U/d$.
To see this, write $v=g/\sqrt{d}$ with $g\sim\mathcal{N}(0,I_d)$ and define
\[
f(g) := w^{\top}a= \sum_{j=1}^n w_j \sigma\left(\tfrac{1}{\sqrt{d}}g^\top z_j\right).
\]
Since $\sigma$ is $1$-Lipschitz, for any $g,g'\in\mathbb{R}^d$, we have
\[
|f(g)-f(g')|
\le \frac{1}{\sqrt{d}}\sum_{j=1}^n w_j |(g-g')^\top z_j|
\le \frac{\|g-g'\|_2}{\sqrt{d}} \sum_{j=1}^n w_j\|z_j\|_2
\le \sqrt{\frac{U}{d}}\,\|g-g'\|_2,
\]
Thus $f$ is $\sqrt{\tfrac{U}{{d}}}$-Lipschitz. By Gaussian concentration for Lipschitz functions \cite[Theorem 5.2.3]{vershynin2018high}, the random variable $Y:=f(g)=w^{\top}a$ 
satisfies
\begin{equation}
\|Y-\mathbb{E}Y\|_{\psi_2} \lesssim \sqrt{\frac{U}{{d}}}.
\label{eq:Y-subgaussian-LU}
\end{equation}
Therefore we deduce
\[
\|(Y-\mathbb{E}Y)^2-\mathbb{E}(Y-\mathbb{E}Y)^2\|_{\psi_1}\lesssim \|(Y-\mathbb{E}Y)^2\|_{\psi_1}\asymp \|Y-\mathbb{E}Y\|^2_{\psi_2}
\lesssim
\frac{U}{d},
\]
where the first inequality follows from the centering lemma \cite[Exercise 2.44]{vershynin2018high} while the equality follows from \cite[Lemma 2.8.5]{vershynin2018high}. Moreover, expanding the squares on the left and isolating the cross term gives
$$\|(Y-\mathbb{E}Y)^2-\mathbb{E}(Y-\mathbb{E}Y)^2\|_{\psi_1}\geq \|Y^2-\EE Y^2\|_{\psi_1}-2\underbrace{|\EE Y|}_{\lesssim  \|\mu\|_{\infty}}\cdot\underbrace{\|(Y-\EE(Y))\|_{\psi_1}}_{\lesssim \sqrt{\frac{U}{{d}}}}.$$
Thus, we have arrived at 
$$\|T-\EE T\|_{\psi_1}= \|Y^2-\EE Y^2\|_{\psi_1}\lesssim \frac{U}{d}+\sqrt{\frac{U}{{d}}}\cdot \|\mu\|_{\infty}\lesssim \frac{U}{d},$$
where we use the bound that $\|\mu\|_{\infty} \lesssim \sqrt{U/d}$.
Applying the Bernstein inequality \cite[Theorem 2.9.1]{vershynin2018high} we deduce for any $s\geq 0$ the estimate 
$\sum_{i=1}^k T_i\geq k\cdot \EE T-s$
holds with probability at least $$1-\exp\left(-c\left(\frac{d^2s^2}{kU^2}\wedge \frac{ds}{U}\right)\right).$$
Recall that from \eqref{eqn:lower_T_exp} we further have the lower bound $\EE T\geq \frac{CL}{d}$ for some numerical constant $C$. Therefore setting $s=\frac{kCL}{2d}$, the estimate $\sum_{i=1}^k T_i\geq \frac{kCL}{2d}$ holds with probability at least $1-\exp\left(-C'k\frac{L^2}{U^2}\right)$. In this event, returning to \eqref{eqn:expansition_TSs}, we have 
$$\|S\|_{\op}\geq CL\frac{k}{d}\|H\|_{\op}.$$
Therefore, taking a union bound with the event which ensured \eqref{eqn:upper_bound_OptX}, we deduce that the estimate 
$$\frac{\|S\|_{\op}}{\max_{1\leq i\leq k}\|X_i\|_{\op}}\gtrsim \frac{d}{\log(kn)}\cdot\frac{k}{d}\cdot\frac{L}{U},$$
holds with probability at least $1-\exp\left(-C'k\frac{L^2}{U^2}\right)-\frac{1}{(nk)^q}$. This completes the proof.
\end{proof}

Armed with Theorem~\ref{thm:sum-dominates-max-relu-LU} we can now establish propagation of the stable rank for the gating block, induced by a nonlinear activation with a nonzero mean (e.g. ReLU).

\begin{thm}[Gating]\label{thm:gating}
    Consider the matrix 
    $$Q=\sigma(VZ)\odot WX$$
    where $Z\in \R^{d_1\times n}$ and $X\in \R^{d_2\times n}$ are fixed and $V\in \R^{k\times d_1}$ and $W\in \R^{k\times d_2}$ have iid Gaussian entries, $\mathcal{N}(0,\frac{1}{d_1})$ and $\mathcal{N}(0,\frac{1}{d_2})$ respectively. Suppose moreover that we are in the proportionate regime $k\asymp d_1$.
    Suppose that the column norms of $X$ and $Z$ satisfy 
    $$L\leq \|x_i\|^2_2\leq U \quad \textrm{and}\quad L\leq \|z_i\|^2_2\leq U\qquad \forall i\in [n].$$ Suppose moreover that the activation function $\sigma$ is $1$-Lipschitz with $\sigma(0)=0$ and that for some numerical constants $p,\kappa>0$ we have 
    $$\left|\underset{g\sim \mathcal{N}(0,s^2)}{\EE}\sigma(g)\right|\geq \kappa s\qquad \textrm{and}\qquad \hat{\sigma}_1(s)\geq \sqrt{p}\cdot s\qquad \forall s\in [\sqrt{L/d_1},\sqrt{U/d_1}].$$
 Then for any constant $q>0$, there exist constants $c,C<\infty$ such that the estimate 
    $$\st(Q)\leq \frac{(1+\varepsilon)^2}{(1-\varepsilon)^3}\cdot \frac{U^2}{L^2}\cdot \kappa^{-2}\cdot \st(X)$$
    holds with probability at least $1-C\left(\exp\left(-c\varepsilon^2 \frac{k}{\log^2(k)}\right)-\exp\left(-c\varepsilon^2 k\frac{L^2}{U^2}\right)-k^{-q}\right).$
\end{thm}
\begin{proof}
We proceed by analyzing the gram matrix of $Q$ by first conditioning on $V$. 
Setting the notation, define the matrices $A:=\sigma(VZ)$ and $B:=WX$, let $q_j^{\top}$ denote the $j$'th rows of $Q$, and define the matrices $$S:=Q^{\top}Q=\sum_{j=1}^k q_jq_j^{\top},\qquad  M_j:=\EE[q_jq_j^{\top}\mid V], \qquad M:=\EE [S\mid V].$$ 
    Let us first compute $q_jq_j^{\top}$ and $M_j$ in terms of the rows $b_j^{\top}$ and $w_j^{\top}$ of $B$ and $W$, respectively. To this end, observe that equality $q_j^{\top}=a_j^{\top}\odot (w_j^{\top}X)$ holds and therefore we may write the outer product
    $$q_j^{\top}q_j=a_ja_j^{\top}\odot (X^{\top}w_jw_j^{\top}X).$$
    In particular, taking the expectation with respect to $W$ and with $V$ fixed, we deduce the equalities: 
    \begin{align*}
    M_j=\frac{1}{d_2}(a_ja_j^{\top})\odot (X^{\top}X)\qquad \textrm{and}\qquad M=\frac{1}{d_2}(A^{\top}A)\odot (X^{\top}X)
    \end{align*}
    Thus conditioned on $V$, the vectors $q_j$ are independent mean-zero Gaussian with covariance $M_j$.

     We will now show that the effective rank of $S$ is not much larger than the effective rank of $M$ with high probability over the pair $(V,W)$. We do so by estimating the operator norm and trace, in order. Fix a constant $\varepsilon\in (0,1)$ and define the event 
    $$\mathcal{E}_1=\left\{\|S\|_{\mathrm{op}} \ge (1-\varepsilon)\|M\|_{\mathrm{op}}\right\}.$$
    Observe that $q_j$ is a mean-zero Gaussian random vector with covariance $M_j$. Therefore, we now apply Theorem~\ref{thm:lower_tail_top_eig} with $K_j=1$ and with $V$ fixed. We thus deduce that there exists a constant $c>0$ (independent of $\varepsilon$) such that
    \begin{equation}\label{eqn:upper_bound_needed}
    \displaystyle
\mathbb{P}(\mathcal{E}_1^c\mid V
) \leq \
\exp\!\left(
-c\,\min\!\left\{\frac{\varepsilon^2\|M\|^2_{\op}}{\sum_{j=1}^k  \|M_j\|_{\op}^2},\,\frac{\varepsilon\|M\|_{\op}}{\displaystyle\max_{1\leq j\leq k} \|M_j\|_{\op}}\right\}
\right).
\end{equation}
Now applying Theorem~\ref{thm:sum-dominates-max-relu-LU} we see that for any constant $q>0$, there exist constants $c,C<\infty$ such that the estimate $$\frac{\|M\|_{\op}}{\displaystyle\max_{1\leq j\leq k}\|M_j\|_{\op}}\geq \frac{cd_2}{\log(kn)}\cdot\frac{k}{d_2}\cdot\frac{L}{U},$$
holds with probability at least $1-\exp\left(-Ck\frac{L^2}{U^2}\right)-\frac{1}{(nk)^q}$.
Within this event, which we denote by $\mathcal{E}_2$, the right-side of \eqref{eqn:upper_bound_needed} is bounded by $\omega:=\exp(-\frac{c'\varepsilon^2 k}{\log^2(kn)} )$ for some numerical constant $c'<\infty$, which follows by upper bounding the sum by the $k$ times the maximum. Thus the probability of the complementary event $\mathcal{E}_1$ can be lower bounded as 
\begin{equation}\label{eqn:conditioning}
\begin{aligned}
\PP[\mathcal{E}_1^c]=\EE[\PP(\mathcal{E}_1^c\mid V)]&=\EE[\PP(\mathcal{E}_1^c\mid V){\bf 1}_{\mathcal{E}_2}]+\EE[\PP(\mathcal{E}_1^c\mid V){\bf 1}_{\mathcal{E}^c_2}]\\
&\leq \omega+\PP(\mathcal{E}_2^c)\\
&\leq\exp\left(-\tfrac{c'\varepsilon^2k}{\log^2(kn)} \right)+\exp\left(-Ck\tfrac{L^2}{U^2}\right)+\frac{1}{(nk)^q}.
\end{aligned}
\end{equation}

Next, we estimate the trace of $S$. To this end, fix $\varepsilon>0$ and define the event  
$$\mathcal{F}_1=\left\{\tr(S) \le (1+\varepsilon)\tr(M)\right\}.$$
 We now apply Theorem~\ref{thm:trace} with $K_j=1$ and with $V$ fixed. We thus deduce that there exists a constant $c>0$ (independent of $\varepsilon$) such that 
 \begin{equation}\label{eqn:herereagaian}
\displaystyle
\mathbb{P}(\mathcal{F}_1^c\mid V
) \leq 2\exp\left(-c\left(\frac{\varepsilon^2\tr(M)^2}{\sum_{i=1}^n \tr(M_i)^2}\wedge \frac{\varepsilon \tr(M)}{\displaystyle\max_{i=1,\ldots, k} \tr(M_i)}\right)\right)
\end{equation}
Trace dispersion of Theorem~\ref{thm:sum-dominates-max-relu-LU} guarantees 
$$\frac{\tr(M)}{\displaystyle\max_{1\leq i\leq k} \tr(M_i)}\geq \frac{cL}{U}\frac{k}{\log k}$$
holds with probability at least $1-2n\exp(-C k)-k^{-q}$. Within this event, the right side of \eqref{eqn:herereagaian} is bounded by $\exp\left(\frac{-c'\varepsilon^2 k}{\log^2(k)}\right)$.
A completely analogous argument to \eqref{eqn:conditioning} shows that 
$$\PP[\mathcal{F}_1^c]\leq \exp\left(\frac{-c'\varepsilon^2 k}{\log^2(k)}\right)+2n\exp(-C k)+k^{-q}.$$
It remains now to estimate $\tr(M)$ and $\|M\|_{\op}$. Clearly, we have 
$$\tr(M)\leq \frac{U}{d_2}\|A\|_F^2.$$
Since $\sigma$ is $1$-Lipschitz with $\sigma(0)=0$, we have $\|A\|_F^2\leq \|VZ\|_F^2$. Using Theorem~\ref{lem:frob_gauss_scaling} the estimate $\|VZ\|_F^2\leq (1+\varepsilon)\frac{k}{d_1}\|Z\|_F^2$ holds with probability at least $1-2\exp(-C\varepsilon^2 k)$.

Next, we lower-bound the operator norm $\|M\|_{\op}$. To this end, define $\mu=\EE[a]$. 
Fix $j$ and note that $(a_i)_j=\sigma(v_i^\top z_j)$, where conditional on $z_j$ we have
$v_i^\top z_j\sim \mathcal N(0,\|z_j\|_2^2/d)$ and $\|z_j\|_2^2\le U$. Hence
$\|v_i^\top z_j\|_{\psi_2} \leq \,\|z_j\|_2/\sqrt d\le \sqrt{U/(d)}$.
Since $\sigma$ is $1$-Lipschitz with $\sigma(0)=0$, we have $|\sigma(t)|\le |t|$, and therefore
$\|(a_i)_j\|_{\psi_2}\le \sqrt{U/(d)}$, and moreover $\|(a_i)_j-\mu_j\|_{\psi_2}\le 2\sqrt{U/(d)}$.
Thus, the sub-Gaussian Hoeffding bound yields
$$\PP\!\left(|\hat\mu_j-\mu_j|\ge \varepsilon|\mu_j|\right)
\le 2\exp\!\left(-c\,\varepsilon^2 k\,\frac{\mu_j^2}{U/d}\right).$$
Using the assumption $|\mu_j|=|\EE\sigma(v^\top z_j)|\ge \kappa\sqrt{\|z_j\|_2^2/d}\ge \kappa\sqrt{L/d}$, we get
$$|\hat{\mu}_j-\mu_j|\leq \varepsilon|\mu_j|$$
with probability at least $1-2\exp\!\big(-c\,\varepsilon^2 k\,\kappa^2 L/U\big)$.
Taking a union bound over $j=1,\dots,n$ we deduce
$\sup_j|\hat{\mu}_j-\mu_j|\le \varepsilon|\mu_j|$ with probability at least
$1-2n\exp\!\big(-c\,\varepsilon^2 k\,\kappa^2 L/U\big)$.
Using the elementary observation $\frac{1}{k}A^TA\succeq \hat \mu\hat \mu^{\top}$, we may now estimate
\[
  \tfrac{d_2}{k}\cdot M= (\tfrac{1}{k}A^\top A)\odot(X^\top X)
  \succeq (\hat\mu\hat\mu^\top)\odot (X^\top X)=\Diag(\hat\mu)\,X^\top X\,\Diag(\hat\mu)\succeq (1-\varepsilon)^2\min_{j}\mu_j^2\cdot X^\top X.
\]
Therefore we deduce $$\|M\|_{\op}\geq \frac{(1-\varepsilon)^2 k}{d_2}\min_{j}\mu_j^2 \|X\|_{\op}^2\geq \frac{(1-\varepsilon)^2L\kappa^2}{d_1d_2} \|X\|_{\op}^2,$$
where the last inequality follows from the assumption that $\left|\underset{g\sim \mathcal{N}(0,s^2)}{\EE}\sigma(g)\right|\geq \kappa s$
    for all $s\in [\sqrt{L/d_1},\sqrt{U/d_1}]$. 
    
Therefore, in the intersection of the considered events we have 
$$\st(Q)\leq \frac{1+\varepsilon}{1-\varepsilon}\cdot\er(M)\leq \frac{1+\varepsilon}{1-\varepsilon}\cdot\frac{(1+\varepsilon) k U\|Z\|_F^2/(d_1d_2)}{(1-\varepsilon)^2 k L\kappa^2 \|X\|_{\op}^2/(d_1d_2)}\leq \frac{(1+\varepsilon)^2}{(1-\varepsilon)^3}\cdot \frac{U^2}{L^2}\cdot \kappa^{-2}\cdot \st(X),$$
where the last inequality uses the relation $\|Z\|_F^2\leq n U$ and $\|X\|_F^2\geq n L$. The proof is complete.
\end{proof}

\paragraph{Column norm bounds for Gated activations.}

Finally, we bound the column norms of gated activations. This lemma will be used in conjunction with Lemma~\ref{lem:gaussian-residual-preserves-column-norms} to control the evolution of column norms throughout the transformer architecture.

\begin{lem}[Gated column norms]
\label{lem:gated-colnorm-cond}
Fix a vector $a\in\R^{d}$ satisfying $\|a\|_2^2=d$ and a function $\sigma\colon\R\to\R$ that $1$-Lipschitz with $\sigma(0)=0$ and a second moment $m_2:=\EE[\sigma(\gamma)^2]$ for $\gamma\sim\cN(0,1)$ that is bounded from above and below by a positive numerical constant.   Let $w^{\mathrm{in}},w^{\mathrm{gate}}\in\R^{d}$ be independent random vectors with
iid entries $\cN(0,\tfrac{1}{d})$.
Define
\[
z:=\langle w^{\mathrm{in}},a\rangle,\qquad h:=\langle w^{\mathrm{gate}},a\rangle,\qquad
y:=\sigma(z)\,h.
\]
Define now the vector
$b:=(y_1,\dots,y_k)\in\R^{k}$ where $y_i$ are iid copies of $y$.  Then there exists a numerical constant $c$ such that for every $\varepsilon\in(0,1)$, we have
\[
\PP\!\left(\|b\|_2^2 \ \le\ (1+\varepsilon)^2\,m_2\,k\right)
\ \ge\
1-4\exp(-c\varepsilon^2 k)-2k\exp\!\left(-\tfrac12 k^{1/3}\right)
-2\exp\!\left(-c\varepsilon^2(1-\varepsilon)m_2\,k^{2/3}\right).
\]
\end{lem}

\begin{proof}
Let $(z_i,h_i)_{i=1}^k$ be iid with $z_i,h_i\sim\cN(0,1)$ and set
$y_i=\sigma(z_i)h_i$.  Define $s_i:=\sigma(z_i)^2$ and
\[
S:=\sum_{i=1}^k s_i,\qquad T:=\sum_{i=1}^k s_i(h_i^2-1).
\]
Then clearly, we may write $\|b\|_2^2=\sum_{i=1}^k s_i h_i^2=S+T$.  Conditional on $(s_i)_{i=1}^k$, the random variables
$h_i^2-1$ are independent, mean zero, and sub-exponential with a constant parameter
$C>0$.  Bernstein's
inequality therefore yields a numerical $c_0>0$ such that, for every $u>0$, it holds:
\[
\PP\!\left(T\ge u \,\middle|\, (s_i)_{i=1}^k\right)
\le
\exp\!\left(
-c_0\min\left\{\frac{u^2}{\sum_{i=1}^k s_i^2},\ \frac{u}{\max_{i\le k}s_i}\right\}
\right).
\]
Setting $u:=\varepsilon S$ and noting $\sum_{i=1}^k s_i^2\le (\max_{i\le k}s_i)\sum_{i=1}^k s_i=(\max_i s_i)S$,
we deduce
\[
\min\left\{\frac{u^2}{\sum_{i=1}^k s_i^2},\ \frac{u}{\max_{i\le k}s_i}\right\}
\ge
\min\left\{\frac{\varepsilon^2 S^2}{(\max_i s_i)S},\ \frac{\varepsilon S}{\max_i s_i}\right\}
=
\frac{\varepsilon^2 S}{\max_i s_i},
\]
and hence
\begin{equation}\label{eq:cond-bernstein-gated}
\PP\!\left(T\ge \varepsilon S \,\middle|\, (s_i)_{i=1}^k\right)
\le
\exp\!\left(-c_0\varepsilon^2\,\frac{S}{\max_i s_i}\right).
\end{equation}
Define the events
\[
\mathcal{E}:=\{T\geq \varepsilon S\},\qquad
\mathcal{E}_{\mathrm{sum}}:=\Big\{(1-\varepsilon)m_2k\le S\le (1+\varepsilon)m_2k\Big\},
\qquad
\mathcal{E}_{\max}:=\Big\{\max_{i\le k}s_i\le k^{1/3}\Big\}.
\]
Then we compute
\begin{align}
\PP[\mathcal{E}]=\EE[\PP(\mathcal{E}\mid (s_i)_{i=1}^k)]&=\EE[\PP(\mathcal{E}\mid (s_i)_{i=1}^k)){\bf 1}_{\mathcal{E}_{\mathrm{sum}}\cap \mathcal{E}_{\mathrm{max}}}]+\EE[\PP(\mathcal{E}\mid (s_i)_{i=1}^k)){\bf 1}_{(\mathcal{E}_{\mathrm{sum}}\cap \mathcal{E}_{\mathrm{max}})^c}]\notag\\
&\leq \exp\!\left(-c_0\varepsilon^2(1-\varepsilon)m_2\,k^{2/3}\right)+\PP(\mathcal{E}_{\mathrm{sum}}^{c})+\PP(\mathcal{E}_{\max}^{c}),\label{eq:main-uncond-bound}
\end{align}
where the last inequality follows from \eqref{eq:cond-bernstein-gated} and a union bound over the two events $\mathcal{E}_{\mathrm{sum}}^{c}$ and ${\mathcal E}_{\max}^{c}$.

We now bound $\PP(E_{\mathrm{sum}}^{c})$ and $\PP(E_{\max}^{c})$.  Since $\sigma$ is $1$-Lipschitz with
$\sigma(0)=0$, we have $|\sigma(t)|\le |t|$ for all $t$, and hence $0\le s_i\le z_i^2$.  In particular,
$s_i-m_2$ is sub-exponential with $\|s_i-m_2\|_{\psi_1}\le C$ for a numerical constant $C>0$. Therefore, Bernstein's inequality yields $\PP(\mathcal{E}_{\mathrm{sum}}^{c})\le 2\exp(-c\varepsilon^2 k)$ for a numerical
$c>0$.

For $\mathcal{E}_{\max}$, using $s_i\le z_i^2$ and the Gaussian tail bound
$\PP(z_i^2\ge t)\le 2\exp(-t/2)$, we obtain
\[
\PP(\mathcal{E}_{\max}^{c})
\le
\PP\!\left(\max_{i\le k}z_i^2>k^{1/3}\right)
\le
2k\exp\!\left(-\tfrac12 k^{1/3}\right),
\]
where the last inequality follows from a union bound.
Substituting these estimates into \eqref{eq:main-uncond-bound} shows that, with probability at least
\[
1-2\exp(-c\varepsilon^2 k)-2k\exp\!\left(-\tfrac12 k^{1/3}\right)
-\exp\!\left(-c_0\varepsilon^2(1-\varepsilon)m_2\,k^{2/3}\right),
\]
we have $T\le \varepsilon S$. Taking a further union bound with the event $\mathcal{E}_{\rm sum}$ we further deduce,
\[
\|b\|_2^2=S+T\le (1+\varepsilon)S\le (1+\varepsilon)^2 m_2k,
\]
which is the claim after adjusting constants in the last term.
\end{proof}

\subsection{Compound operations: attention and MLP layers}\label{sec:compoundoperations}
In Section~\ref{sec:transformers}, we will analyze transformer architectures at Gaussian initialization.
Those architectures (see Section~\ref{sec:multiple}) are compositions of two compound sublayers
(attention and MLP), each wrapped with RMS normalization and a residual connection.
To firther iterate transfomer blocks in Section~\ref{sec:transformers} using the atomic calculus rules above, we need two
layerwise propagation inputs: (a) a uniform column-norm envelope for the matrices entering RMSNorm (since
Theorem~\ref{thm:rms-stablerank} depends on the ratio of maximal to minimal column norm), and (b) control
of how the stable rank changes when passing through one attention sublayer and one MLP sublayer.

The next subsubsections provide these packaged bounds by composing the atomic lemmas already proved
(RMSNorm, Gaussian linear maps, pointwise nonlinearities, and Gaussian residual connection in
Lemma~\ref{lem:gaussian-residual-preserves-column-norms} and Theorem~\ref{lem:stablerank_X_plus_WH}).
For the attention operation, the proofs use only the fact that the headwise mixing operators
are column-stochastic; the resulting bounds are uniform in the number of heads and do not depend on
whether masks or positional encodings are used.

For MLP sublayers, we distinguish two activation regimes.
In the \emph{mean-spike-inducing} regime (Section~\ref{sec:propagate}, Theorem~\ref{thm:gaussian_weights_low_sr}), a nonzero Gaussian mean introduces a rank-one contribution to the second moment and forces the post-activation matrix to have uniformly low stable rank; this is recorded in Assumption~\ref{ass:stablesigmarank} and used in Lemma~\ref{lem:mlp-standard}.
In the \emph{propagation} regime treated below, the activation does not provide such a uniform reset; instead, the post-activation matrix has stable rank controlled by that of the RMS-normalized input, and we additionally track a uniform upper bound on the column norms needed for the residual connection and subsequent RMSNorm. Two natural examples are gated MLPs and pointwise activations applied to normalized inputs.

Finally, we include one mixture-of-experts calculation as a one-off illustration of the breadth of the
calculus rules: under arbitrary routing, stable rank still propagates assuming only $\hat\sigma_1(1)>0$
from Theorem~\ref{thm:nonline_activation}.  We will not return to MoE layers elsewhere in the paper.

\subsubsection{Attention layer propagation}

We now derive an attention layer estimate by combining the atomic rules already proved:
RMS normalization (Theorem~\ref{thm:rms-stablerank}), Gaussian linear maps (Theorem~\ref{lem:frob_gauss_scaling}),
and Gaussian residual connections for column norms and stable rank
(Lemma~\ref{lem:gaussian-residual-preserves-column-norms} and Theorem~\ref{lem:stablerank_X_plus_WH}).
The only property of the attention kernels used below is that they act as column-stochastic mixing operators. See the companion Figure~\ref{fig:mha-sublayer}.

\newcommand{\mha}{\mathrm{MHA}}
\newcommand{\concat}{\mathrm{Concat}}

\begin{definition}[Multi-headed value aggregation map]
\label{def:mha-value-agg}
{\rm
Fix integers $n_{\mathrm{head}}\ge 1$ and $d\ge 1$ with  $d$ divisible by $n_{\mathrm{head}}$, and set
$d_{\mathrm{head}}:=d/n_{\mathrm{head}}$.
Fix matrices $P^{(h)}\in\R^{n\times n}$ for $h\in[n_{\mathrm{head}}]$, each with columns that are probability vectors.
Fix matrices $W_O\in\R^{d\times d}$ and $W_V^{(h)}\in\R^{d_{\mathrm{head}}\times d}$ for $h\in[n_{\mathrm{head}}]$.
Given any input $X\in\R^{d\times n}$ define
\[
A^{\mathrm{rms}}:=\RMSNorm(X),\qquad
Y^{(h)}:=A^{\mathrm{rms}}P^{(h)},\qquad
H^{(h)}:=W_V^{(h)}Y^{(h)}\in\R^{d_{\mathrm{head}}\times n},
\]
\[
H:=\concat_{h\in[n_{\mathrm{head}}]} H^{(h)}\in\R^{d\times n},\qquad
X^{\mathrm{att}}:=X+W_OH,\qquad
A^{\mathrm{rms}}_{\mathrm{mlp}}:=\RMSNorm(X^{\mathrm{att}}),
\]
and set $\mha(X):=X^{\mathrm{att}}$.
}
\end{definition}

\paragraph{Where the matrices $P^{(h)}$ come from.}
In a standard decoder block, $P^{(h)}$ is obtained from the headwise query/key projections
$Q^{(h)}=W_Q^{(h)}A^{\mathrm{rms}}$ and $K^{(h)}=W_K^{(h)}A^{\mathrm{rms}}$ via a (masked) softmax:
\[
P^{(h)}=\softmax\!\bigl(d_{\mathrm{head}}^{-1/2}(K^{(h)})^\top Q^{(h)}\bigr),
\]
possibly with positional encodings (RoPE) folded into $Q^{(h)},K^{(h)}$.
For the bounds below we do not use this structure; we only require that each $P^{(h)}$ is column-stochastic.
To keep notation light we write $\mha(X)$, although it depends on all parameters
$\{P^{(h)}\}_{h\in[n_{\mathrm{head}}]}$, $\{W_V^{(h)}\}_{h\in[n_{\mathrm{head}}]}$, and $W_O$.


\begin{figure}[h]
\centering
\begin{tikzpicture}[
  >=LaTeX,
  node distance = 1.0cm and 1.6cm,
  every node/.style = {font=\small},
  every path/.style = {->, thick, draw=gray!70},
  state/.style = {
    draw=blue!60!black,
    rounded corners,
    thick,
    inner sep=1.8pt,
    fill=blue!6
  },
  att/.style = {
    draw=teal!60!black,
    rounded corners,
    thick,
    inner sep=1.8pt,
    fill=teal!6
  },
  sum/.style = {
    circle,
    draw=gray!60,
    thick,
    inner sep=1pt,
    fill=gray!10
  }
]

\node[state] (X) {$X$};

\node[att, below=of X] (rms1)
  {$A^{\mathrm{rms}}=\RMSNorm(X)$};

\node[att, below=0.8cm of rms1, align=left] (heads)
  {$\begin{aligned}
    \forall h\in[n_{\mathrm{head}}]:\quad
    &Y^{(h)}=A^{\mathrm{rms}}P^{(h)}\\
    &H^{(h)}=W_V^{(h)}Y^{(h)}
  \end{aligned}$};

\node[att, below=0.8cm of heads] (concat)
  {$H=\concat_{h\in[n_{\mathrm{head}}]} H^{(h)}$};

\node[att, below=0.8cm of concat] (WOH)
  {$W_OH$};

\node[sum, right=2.2cm of WOH] (plus) {$+$};

\node[state, below=of plus] (Xatt)
  {$X^{\mathrm{att}}=X+W_OH$};

\node[att, below=0.8cm of Xatt] (rms2)
  {$A^{\mathrm{rms}}_{\mathrm{mlp}}=\RMSNorm(X^{\mathrm{att}})$};

\draw (X) -- (rms1);
\draw (rms1) -- (heads);
\draw (heads) -- (concat);
\draw (concat) -- (WOH);
\draw (WOH) -- (plus);
\draw (plus) -- (Xatt);
\draw (Xatt) -- (rms2);

\draw (X.east) -| (plus);

\end{tikzpicture}
\caption{Multi-headed value aggregation sublayer (the attention kernels $P^{(h)}$ are treated as column-stochastic matrices).}
\label{fig:mha-sublayer}
\end{figure}

\begin{lem}[Attention/RMS sublayer bounds]
\label{lem:block-att-common}
Fix constants $c_-, c_+ > 0$. Let $X \in \RR^{d \times n}$ be a matrix. Suppose the squared column norms of $X$ lie between $c_-d$ and $c_+d$.  We work in the setting of Definition~\ref{def:mha-value-agg}. Assume that $W_O\in\R^{d\times d}$ and
$\{W_V^{(h)}\in\R^{d_{\mathrm{head}}\times d}\}_{h\in[n_{\mathrm{head}}]}$ have iid
entries $\cN(0,1/d)$.  Fix $\delta\in(0,\tfrac14)$. Then there exist numerical constants $c,C>0$ such that,
whenever $d_{\mathrm{head}}\ge C\delta^{-2}\log(nn_{\mathrm{head}})$, the following hold with probability at least
$1-Ce^{-c\delta^2 d_{\mathrm{head}}}$:
\begin{enumerate}
\item[(i)] \textbf{Column norms of $X^{\mathrm{att}}$.} For all $t=1,\dots,n$, we have
\[
(1-\delta)c_-\,d \;\le\; \|x_t^{\mathrm{att}}\|_2^2 \;\le\; (1+\delta)(c_+ + 1)\,d.
\]
\item[(ii)] \textbf{Stable ranks.} The matrices $A^{\mathrm{rms}},X^{\mathrm{att}},A^{\mathrm{rms}}_{\mathrm{mlp}}$ satisfy
\begin{align*}
\st(A^{\mathrm{rms}})
&\le \frac{c_+}{c_-}\,\st(X),\\
\st(X^{\mathrm{att}})
&\le \frac{1+\delta}{1-\delta}\Bigl(1+\frac{2}{c_-}\Bigr)\st(X),\\
\st(A^{\mathrm{rms}}_{\mathrm{mlp}})
&\le
\frac{(1+\delta)^2}{(1-\delta)^2}\,
\frac{c_+ + 1}{c_-}\Bigl(1+\frac{2}{c_-}\Bigr)\st(X).
\end{align*}
\end{enumerate}
\end{lem}

\begin{proof}
Throughou the proof, set $\eta:=\delta/4$. We go  in order through the operations making up the multi-head attention layer.

\paragraph{RMS normalization of the input.}
By the column norm assumptions on $X$, we have $c_-d\le \|x_t\|_2^2\le c_+d$ for all $t$.
Applying Theorem~\ref{thm:rms-stablerank} with $L=c_-d$ and $U=c_+d$ yields
\[
\st(A^{\mathrm{rms}})\le \frac{c_+}{c_-}\st(X),
\qquad
\|A^{\mathrm{rms}}\|_F^2=\sum_{t=1}^n \|a_t\|_2^2=nd,
\]
since each RMS-normalized column satisfies $\|a_t\|_2^2=d$.

\paragraph{Bounds for $Y^{(h)}$ and $H^{(h)}$; concatenation.}
Fix a head $h\in[n_{\mathrm{head}}]$ and write $Y^{(h)}=[y^{(h)}_1,\dots,y^{(h)}_n]$.
Since each column of $P^{(h)}$ is a probability vector, Jensen's inequality yields
\[
\|y^{(h)}_t\|_2^2
=\Big\|\sum_{i=1}^n P^{(h)}_{it}a_i\Big\|_2^2
\le \sum_{i=1}^n P^{(h)}_{it}\|a_i\|_2^2
= d,
\qquad \forall t=1,\dots,n,
\]
and hence $\|Y^{(h)}\|_F^2\le nd$.

Condition on $Y^{(h)}$ and let $W_V^{(h)}\in\R^{d_{\mathrm{head}}\times d}$ have iid entries
$\cN(0,1/d)$.  By Theorem~\ref{lem:frob_gauss_scaling} (with $k=d_{\mathrm{head}}$ and $\varepsilon=\eta$),
with probability at least $1-4e^{-c\eta^2 d_{\mathrm{head}}}$, we have
\[
\|H^{(h)}\|_F^2=\|W_V^{(h)}Y^{(h)}\|_F^2 \le (1+\eta)\frac{d_{\mathrm{head}}}{d}\,\|Y^{(h)}\|_F^2
\le (1+\eta)\,n\,d_{\mathrm{head}}.
\]
Applying the same theorem to each column $y^{(h)}_t\in\R^{d\times 1}$ and taking a union bound over
$(h,t)\in[n_{\mathrm{head}}]\times[n]$, whenever $d_{\mathrm{head}}\ge C\eta^{-2}\log(nn_{\mathrm{head}})$ we also have
with probability at least $1-Ce^{-c\eta^2 d_{\mathrm{head}}}$, the estimates
\[
\|h^{(h)}_t\|_2^2=\|W_V^{(h)}y^{(h)}_t\|_2^2
\le (1+\eta)\frac{d_{\mathrm{head}}}{d}\,\|y^{(h)}_t\|_2^2
\le (1+\eta)\,d_{\mathrm{head}}
\qquad \forall (h,t).
\]
On this event, we have
\[
\|H\|_F^2=\sum_{h=1}^{n_{\mathrm{head}}}\|H^{(h)}\|_F^2 \le (1+\eta)\,n\,d,
\qquad
\|h_t\|_2^2=\sum_{h=1}^{n_{\mathrm{head}}}\|h^{(h)}_t\|_2^2 \le (1+\eta)\,d
\quad \forall t.
\]
Denote this event by $E_V$.

\paragraph{Column norms of $X^{\mathrm{att}}=X+W_OH$.}
Condition on $H$ and let $W_O\in\R^{d\times d}$ have i.i.d.\ entries $\cN(0,1/d)$, independent of the
$\{W_V^{(h)}\}$.  Apply Lemma~\ref{lem:gaussian-residual-preserves-column-norms} with parameter $\eta$
(here $k=d$): on an event $E_O$ with probability at least $1-Ce^{-c\eta^2 d}$, simultaneously for all $t$, we have
\[
(1-\eta)\big(\|x_t\|_2^2+\|h_t\|_2^2\big)
\le \|x_t^{\mathrm{att}}\|_2^2
\le (1+\eta)\big(\|x_t\|_2^2+\|h_t\|_2^2\big).
\]
On $E_V\cap E_O$, using $c_-d\le \|x_t\|_2^2$ and $\|h_t\|_2^2\ge 0$ gives
\[
\|x_t^{\mathrm{att}}\|_2^2 \ge (1-\eta)c_-d \ge (1-\delta)c_-d.
\]
Using $\|x_t\|_2^2\le c_+d$ and $\|h_t\|_2^2\le (1+\eta)d$ (from $E_V$) yields
\[
\|x_t^{\mathrm{att}}\|_2^2
\le (1+\eta)\bigl(c_+d+(1+\eta)d\bigr)
\le(1+\eta)^2(c_+ + 1)d
\le (1+\delta)(c_+ + 1)d,
\]
since $(1+\eta)^2\le 1+\delta$ for $\eta=\delta/4$ and $\delta\in(0,1)$.
This proves (i).

\paragraph{Stable rank of $X^{\mathrm{att}}$.}
Apply Theorem~\ref{lem:stablerank_X_plus_WH} (with parameter $\delta$) to $X^{\mathrm{att}}=X+W_OH$:
\[
\st(X^{\mathrm{att}})
\le \frac{1+\delta}{1-\delta}\cdot\frac{\|X\|_F^2+\|H\|_F^2}{\|X\|_{\op}^2}
=\frac{1+\delta}{1-\delta}\Bigl(1+\frac{\|H\|_F^2}{\|X\|_F^2}\Bigr)\st(X).
\]
On $E_V$ we have $\|H\|_F^2\le (1+\eta)nd$, while $\|X\|_F^2=\sum_t\|x_t\|_2^2\ge nc_-d$, hence
$\|H\|_F^2/\|X\|_F^2\le (1+\eta)/c_-\le 2/c_-$. Therefore, we deduce
\[
\st(X^{\mathrm{att}})
\le \frac{1+\delta}{1-\delta}\Bigl(1+\frac{2}{c_-}\Bigr)\st(X).
\]

\paragraph{Stable rank of $A^{\mathrm{rms}}_{\mathrm{mlp}}=\RMSNorm(X^{\mathrm{att}})$.}
Apply Theorem~\ref{thm:rms-stablerank} to $X^{\mathrm{att}}$ using (i), i.e.\ with
$L=(1-\delta)c_-d$ and $U=(1+\delta)(c_+ + 1)d$:
\[
\st(A^{\mathrm{rms}}_{\mathrm{mlp}})
\le \frac{(1+\delta)(c_+ + 1)}{(1-\delta)c_-}\st(X^{\mathrm{att}}),
\]
and substitute the bound on $\st(X^{\mathrm{att}})$.

\paragraph{Probability.}
With the condition $d_{\mathrm{head}}\ge C\eta^{-2}\log(nn_{\mathrm{head}})$, the event $E_V$ fails with probability at most
$Ce^{-c\eta^2 d_{\mathrm{head}}}$ after absorbing the union bound over $(h,t)$.  The event $E_O$ fails with probability at most
$Ce^{-c\eta^2 d}\le Ce^{-c\eta^2 d_{\mathrm{head}}}$, and Theorem~\ref{lem:stablerank_X_plus_WH} fails with probability at most
$Ce^{-c\delta^2 d}\le Ce^{-c\delta^2 d_{\mathrm{head}}}$.  Since $\eta=\delta/4$, a final union bound (conditionally) yields the stated probability.
\end{proof}

We next move on to the MLP layer of the transformer. We isolate two cases depending on the weather the activation function is mean-spike-inducing (Section~\ref{sec:mean_spike_mlp})
or is not (Section~\ref{sec:not_MSI_mlp}).

\subsubsection{Mean-spike-inducing (MSI) MLP layer propagation}\label{sec:mean_spike_mlp}

This subsection packages the mean-spike-inducing (MSI) MLP sublayer as a composition of the atomic operations already analyzed:
RMS normalization (Theorem~\ref{thm:rms-stablerank}), a Gaussian linear map followed by a pointwise activation (captured here by Assumption~\ref{ass:stablesigmarank}), and a Gaussian residual update (Lemma~\ref{lem:gaussian-residual-preserves-column-norms} and Theorem~\ref{lem:stablerank_X_plus_WH}).  We impose the following assumption. For the activations we will use later, Assumption~\ref{ass:stablesigmarank} follows by combining
Theorem~\ref{thm:gaussian_weights_low_sr} and Theorem~\ref{thm:nonline_activation}.
In particular, for the ReLU activation, one may take $\st_\sigma=2\pi$, $p_\sigma=1/2$, and $P(\sigma, d, k,n)=1-c\exp(-Ck)$.

\begin{assumption}\label{ass:stablesigmarank}
{\rm The activation function $\sigma$ is $1$-Lipschitz with $\sigma(0) = 0$. For any matrix $X \in \RR^{d\times n}$ with equal column norms and any matrix $W \in \RR^{k \times d}$ with iid entries $\mathcal{N}(0,\tfrac{1}{d})$, we have 
$$
\st(\sigma(WX)) \leq \st_\sigma \qquad \text{ and } \qquad \|\sigma(WX)\|_F^2 \geq \frac{kp_\sigma}{d}\|X\|_F^2
$$
with probability at least $P(\sigma, d, k,n)$, where $\st_\sigma$ and $p_\sigma$ are some numerical constants.}
\end{assumption}

{\rm
Now, fix dimensions $d,k,n\ge 1$, an activation $\sigma:\R\to\R$, and weight matrices
$W_1\in\R^{k\times d}$ and $W_2\in\R^{d\times k}$.
For any input $X\in\R^{d\times n}$ define
\[
A^{\mathrm{rms}}:=\RMSNorm(X),
\qquad
B:=\sigma\!\bigl(W_1A^{\mathrm{rms}}\bigr),
\qquad
X^+:=X+W_2B.
\]
}
See the companion Figure~\ref{fig:ngmlp-sublayer}.
The following theorem analyzes this setup.

\begin{figure}[h]
\centering
\begin{tikzpicture}[
  >=LaTeX,
  node distance = 1.0cm and 1.6cm,
  every node/.style = {font=\small},
  every path/.style = {->, thick, draw=gray!70},
  state/.style = {
    draw=blue!60!black,
    rounded corners,
    thick,
    inner sep=1.8pt,
    fill=blue!6
  },
  att/.style = {
    draw=teal!60!black,
    rounded corners,
    thick,
    inner sep=1.8pt,
    fill=teal!6
  },
  sum/.style = {
    circle,
    draw=gray!60,
    thick,
    inner sep=1pt,
    fill=gray!10
  }
]

\node[state] (X) {$X$};

\node[att, below=of X] (rms)
  {$A^{\mathrm{rms}}=\RMSNorm(X)$};

\node[att, below=0.8cm of rms] (W1)
  {$U=W_1A^{\mathrm{rms}}$};

\node[att, below=0.8cm of W1] (B)
  {$B=\sigma(U)$};

\node[att, below=0.8cm of B] (W2B)
  {$W_2B$};

\node[sum, right=2.2cm of W2B] (plus) {$+$};

\node[state, below=of plus] (Xplus)
  {$X^+=X+W_2B$};

\draw (X) -- (rms);
\draw (rms) -- (W1);
\draw (W1) -- (B);
\draw (B) -- (W2B);
\draw (W2B) -- (plus);
\draw (plus) -- (Xplus);

\draw (X.east) -| (plus);

\end{tikzpicture}
\caption{MLP sublayer (RMSNorm, pointwise activation, and residual update).}
\label{fig:ngmlp-sublayer}
\end{figure}


\begin{lem}[Mean-spike-inducing (MSI) MLP sublayer bounds]
\label{lem:mlp-standard}
Fix constants $c_-,c_+>0$. Let $X\in\R^{d\times n}$ be a matrix whose squared column norms lie
between $c_-d$ and $c_+d$.  Assume $\sigma$ satisfies Assumption~\ref{ass:stablesigmarank}.
Assume $W_1\in\R^{k\times d}$ and $W_2\in\R^{d\times k}$ are independent Gaussian matrices with i.i.d.\ entries of variance
$1/d$ and $1/k$, respectively.  Fix $\delta\in(0,\tfrac14)$.
Then there exist numerical constants $c,C>0$ such that, whenever $\min\{d,k\}\ge C\delta^{-2}\log n$, the following holds
with probability at least
\[
1-Ce^{-c\delta^2 d}-Ce^{-c\delta^2 k}-\bigl(1-P(\sigma,d,k,n)\bigr).
\]
\begin{enumerate}
\item[(i)] \textbf{Column norms of $X^+$.} For all $t=1,\dots,n$,
\[
(1-\delta)c_-\,d \;\le\; \|x_t^+\|_2^2 \;\le\; (1+\delta)^2(c_+ + 1)\,d.
\]
\item[(ii)] \textbf{Stable ranks.} We have
\[
\st(B)\le \st_\sigma,
\qquad
\st(X^+)\le \frac{1+\delta}{1-\delta}\left(1+\frac{c_+}{p_\sigma}\right)\st_\sigma. 
\]
\end{enumerate}
\end{lem}
\begin{proof}
Throughout the proof let $\eta:=\delta/2$. We work through the MLP layer in order.

\paragraph{Column norms of $X^+$.}
Write $A^{\mathrm{rms}}=[a_1,\dots,a_n]$.  By definition of $\RMSNorm$, $\|a_t\|_2^2=d$ for all $t$.
For each $t$, define $u_t:=W_1a_t$.  Conditional on $a_t$, we have $u_t\sim \cN(0,I_k)$, and since
$\sigma$ is $1$-Lipschitz with $\sigma(0)=0$, we have $|\sigma(u)|\le |u|$ entrywise and hence
\[
\|b_t\|_2^2 \le \|u_t\|_2^2.
\]
A $\chi^2$ tail bound and a union bound over $t=1,\dots,n$ imply that, whenever $k\ge C\eta^{-2}\log n$,
with probability at least $1-Ce^{-c\eta^2 k}$, we have
\begin{equation}\label{eq:bt-ub-mlp}
\|b_t\|_2^2 \le (1+\eta)k
\qquad\text{for all }t.
\end{equation}

Condition on $(X,B)$ and apply Lemma~\ref{lem:gaussian-residual-preserves-column-norms} with parameter
$\eta$ to $X^+=X+W_2B$ (here $W_2$ has i.i.d.\ $\cN(0,1/k)$ entries).  On an event of probability at least
$1-Ce^{-c\eta^2 d}$ (absorbing the union bound over $t$ using $d\ge C\eta^{-2}\log n$), simultaneously for
all $t$, we deduce
\[
(1-\eta)\Bigl(\|x_t\|_2^2+\frac{d}{k}\|b_t\|_2^2\Bigr)
\le \|x_t^+\|_2^2
\le (1+\eta)\Bigl(\|x_t\|_2^2+\frac{d}{k}\|b_t\|_2^2\Bigr).
\]
Using $\|b_t\|_2^2\ge 0$ and $\|x_t\|_2^2\ge c_-d$ yields
\[
\|x_t^+\|_2^2 \ge (1-\eta)c_-d \ge (1-\delta)c_-d.
\]
Using $\|x_t\|_2^2\le c_+d$ and \eqref{eq:bt-ub-mlp} yields
\[
\|x_t^+\|_2^2
\le (1+\eta)\Bigl(c_+d+(1+\eta)d\Bigr)
=(1+\eta)^2(c_+ + 1)d
\le (1+\delta)^2(c_+ + 1)d,
\]
which proves (i).

\paragraph{Stable rank of $B$.}
By Assumption~\ref{ass:stablesigmarank} applied to the equal-column-norm matrix $A^{\mathrm{rms}}$ and the
Gaussian matrix $W_1$, on an event $E_\sigma$ with $\PP(E_\sigma)\ge P(\sigma,d,k,n)$ we have
\[
\st(B)\le \st_\sigma,
\qquad
\|B\|_F^2 \ge \frac{k p_\sigma}{d}\,\|A^{\mathrm{rms}}\|_F^2.
\]
Since $\|A^{\mathrm{rms}}\|_F^2=nd$, this gives $\|B\|_F^2\ge p_\sigma k n$ on $E_\sigma$.

\paragraph{Stable rank of $X^+$.}
Condition on $B$ and apply Theorem~\ref{lem:stablerank_X_plus_WH} (with parameter $\delta$) to
$X^+=X+W_2B$ to get
\[
\st(X^+)
\le \frac{1+\delta}{1-\delta}\left(
\frac{k}{d}\frac{\|X\|_F^2}{\|B\|_{\op}^2}+\st(B)\right).
\]
Using $\|X\|_F^2\le nc_+d$ and, on $E_\sigma$, the bounds $\st(B)\le \st_\sigma$ and
$\|B\|_{\op}^2= \|B\|_F^2/\st(B)\ge p_\sigma k n/\st_\sigma$, we obtain
\[
\frac{k}{d}\frac{\|X\|_F^2}{\|B\|_{\op}^2}
\le
\frac{k}{d}\cdot \frac{nc_+d}{p_\sigma k n/\st_\sigma}
=
\frac{\st_\sigma\,c_+}{p_\sigma}.
\]
Therefore, on $E_\sigma$, we have
\[
\st(X^+)
\le
\frac{1+\delta}{1-\delta}\left(\frac{\st_\sigma\,c_+}{p_\sigma}+\st_\sigma\right)
=
\st_\sigma\,\frac{1+\delta}{1-\delta}\left(1+\frac{c_+}{p_\sigma}\right),
\]
which is (ii).

\paragraph{Probability.}
Take a union bound (conditionally) over: the column-norm bound \eqref{eq:bt-ub-mlp} (probability
$\ge 1-Ce^{-c\eta^2 k}$ after $k\ge C\eta^{-2}\log n$), the residual column-norm event for $X^+$
(probability $\ge 1-Ce^{-c\eta^2 d}$ after $d\ge C\eta^{-2}\log n$), the event from
Theorem~\ref{lem:stablerank_X_plus_WH} (probability $\ge 1-Ce^{-c\delta^2 d}$), and $E_\sigma$
(probability $\ge P(\sigma,d,k,n)$).  Since $\eta=\delta/2$, this yields the stated probability.
\end{proof}

\subsubsection{MLP layer propagation without mean induced spikes}\label{sec:not_MSI_mlp}

Unlike the mean-spike-inducing MLP sublayer (Lemma~\ref{lem:mlp-standard}), here we treat MLP sublayers in which the post-activation matrix does \emph{not} reset to uniformly low stable rank. Instead, stable rank propagates from the RMS-normalized input, and we additionally track a uniform upper bound on the column norms of the intermediate matrix in order to control the subsequent residual update and to re-enter RMSNorm in the next block.
We package the two quantitative requirements on the intermediate matrix $B$ into the following assumption.

\begin{assumption}[Propagation interface]
\label{ass:propagation}
{\rm
Fix $\delta\in(0,1)$ and dimensions $d,k,n\ge 1$.
Let $X\in\R^{d\times n}$ be deterministic and set $A^{\mathrm{rms}}:=\RMSNorm(X)$.
Let $B=[b_1,\dots,b_n]\in\R^{k\times n}$ be a (possibly random) matrix.
We assume there exist constants $C_{\mathrm{PROP}}\ge 1$ and $m_\ast\ge 0$ and a success probability
$P_{\mathrm{PROP}}(\delta)\in[0,1]$ such that, with probability at least $P_{\mathrm{PROP}}(\delta)$, it holds:
\[
\st(B)\le C_{\mathrm{PROP}}\;\st\!\bigl(A^{\mathrm{rms}}\bigr)
\qquad\text{and}\qquad
\|b_t\|_2^2\le m_\ast\,k\quad\text{for all }t\in[n].
\]}
\end{assumption}

There are two natural constructions of $B$: a gated activation (see Figure~\ref{fig:gmlp-sublayer})and a pointwise activation with $\hat\sigma_1(1)^2>0$ applied to RMS-normalized inputs. The next lemma verifies that both satisfy Assumption~\ref{ass:propagation} at Gaussian initialization with explicit constants.


\begin{figure}[h]
\centering
\begin{tikzpicture}[
  >=LaTeX,
  node distance = 1.0cm and 1.6cm,
  every node/.style = {font=\small},
  every path/.style = {->, thick, draw=gray!70},
  state/.style = {
    draw=blue!60!black,
    rounded corners,
    thick,
    inner sep=1.8pt,
    fill=blue!6
  },
  att/.style = {
    draw=teal!60!black,
    rounded corners,
    thick,
    inner sep=1.8pt,
    fill=teal!6
  },
  sum/.style = {
    circle,
    draw=gray!60,
    thick,
    inner sep=1pt,
    fill=gray!10
  }
]

\node[state] (X) {$X$};

\node[att, below=of X] (rms)
  {$A^{\mathrm{rms}}=\RMSNorm(X)$};

\node[att, below left=0.9cm and 1.8cm of rms] (U)
  {$U=W_{\mathrm{in}}A^{\mathrm{rms}}$};

\node[att, below right=0.9cm and 1.8cm of rms] (G)
  {$G=W_{\mathrm{gate}}A^{\mathrm{rms}}$};

\node[att, below=0.9cm of U] (phiU)
  {$\sigma(U)$};

\node[sum, right=1.8cm of phiU] (had) {$\had$};

\node[att, below=0.9cm of had] (B)
  {$B=\sigma(U)\had G$};

\node[att, below=0.9cm of B] (W2B)
  {$W_2B$};

\node[sum, right=3.0cm of W2B] (plus) {$+$};

\node[state, below=of plus] (Xplus)
  {$X^+=X+W_2B$};

\draw (X) -- (rms);
\draw (rms) -- (U);
\draw (rms) -- (G);

\draw (U) -- (phiU);
\draw (phiU.east) -- (had.west);
\draw (G.south) |- (had.east);

\draw (had) -- (B);
\draw (B) -- (W2B);
\draw (W2B) -- (plus);
\draw (plus) -- (Xplus);

\draw (X.east) -- ++(6.0cm,0) |- (plus);

\end{tikzpicture}
\caption{Gated MLP sublayer (two linear maps, pointwise gate, and residual update).}
\label{fig:gmlp-sublayer}
\end{figure}

\begin{lem}[Gated and Hermite activations satisfy Assumption~\ref{ass:propagation}]
\label{lem:propagation-examples}
Fix $\delta\in(0,1)$ and let $A=[a_1,\dots,a_n]\in\R^{d\times n}$ satisfy $\|a_t\|_2^2=d$ for all $t\in[n]$.
\begin{enumerate}
\item[(a)] \textbf{Gated case.}
Assume the setting of Theorem~\ref{thm:gating}
(in particular, $k\asymp d$) and set $m_1=|\EE\sigma(\gamma)|$ and $m_1=\EE\sigma(\gamma)^2$ where $\gamma\sim \mathcal{N}(0,1)$ is a standard Gaussian.
Let $W_{\mathrm{in}},W_{\mathrm{gate}}\in\R^{k\times d}$ be independent with iid entries $\cN(0,1/d)$ and set
\[
B:=\sigma(W_{\mathrm{in}}A)\had (W_{\mathrm{gate}}A)\in\R^{k\times n}.
\]
Then Assumption~\ref{ass:propagation} holds (with $A^{\mathrm{rms}}$ replaced by $A$) with
\[
C_{\mathrm{PROP}}=\frac{(1+\delta)^2}{(1-\delta)^3}\cdot \frac{1}{m_1^2},
\qquad
m_\ast=(1+\delta) m_2,
\]
and with probability at least
\[
P_{\mathrm{PROP}}(\delta)
\ \ge\
1
-
n\Bigl(
2e^{-c\delta^2 k}
+2k e^{-\tfrac12 k^{1/3}}
+2e^{-c\delta^2(1-\delta)m_2\,k^{2/3}}
\Bigr)
-
C\exp\!\Bigl(-c\delta^2 \tfrac{k}{\log^2 k}\Bigr)
-\frac{1}{k^{q}},
\]
for any fixed $q>0$, with constants $c,C$ depending only on $q$.

\item[(b)] \textbf{Hermite case.}
Suppose $p:=\hat\sigma_1(1)^2 >0$.
Let $W_1\in\R^{k\times d}$ have i.i.d.\ entries $\cN(0,1/d)$ and set
\[
B:=\sigma(W_1A)\in\R^{k\times n}.
\]
Then Assumption~\ref{ass:propagation} holds (with $A^{\mathrm{rms}}$ replaced by $A$) with
\[
C_{\mathrm{PROP}}=\frac{1+\delta}{1-\delta}\cdot \frac{1}{p},
\qquad
m_\ast=1+\delta,
\]
and with probability at least
\[
P_{\mathrm{PROP}}(\delta)
\ \ge\
1
-
8\exp(-c\delta^2p^2k)
-
C n\exp(-c\delta^2k).
\]
\end{enumerate}
\end{lem}

\begin{proof}
\textbf{(a) Gated case.}
For the stable-rank bound, apply Theorem~\ref{thm:gating} with $Z=X=A$, $V=W_{\mathrm{in}}$, $W=W_{\mathrm{gate}}$. 
Since $\|a_t\|_2^2=d$ for all $t$, the parameters in Theorem~\ref{thm:gating} satisfy $L=U=d$ and the mean condition there holds
with $\kappa=m_1$, yielding
\[
\st(B)\le \frac{(1+\delta)^2}{(1-\delta)^3}\cdot \frac{1}{m_1^2}\,\st(A)
\]
with the stated probability contribution.

For the column-norm envelope, apply Lemma~\ref{lem:gated-colnorm-cond} to each column $a_t$ (with parameter $\delta/3$)
and take a union bound over $t\in[n]$. Since $(1+\delta/3)^2\le 1+\delta$ for $\delta\in(0,1)$, this gives
$\|b_t\|_2^2\le (1+\delta)m_2k$ for all $t$ with the stated probability after adjusting constants.

\textbf{(b) simple nonlinear activation case.}
Since $\|a_t\|_2^2=d$, the parameter $p$ in \eqref{eqn:defnp} for Theorem~\ref{thm:nonline_activation} equals $\hat\sigma_1(1)^2$.
Applying Theorem~\ref{thm:nonline_activation} to $W_1$ and $A$ gives
\[
\st(B)\le \frac{1+\delta}{1-\delta}\cdot \frac{1}{p}\,\st(A)
\]
with probability at least $1-8\exp(-c\delta^2p^2k)$.

For the column-norm envelope, fix $t\in[n]$. Since $\sigma$ is $1$-Lipschitz with $\sigma(0)=0$, we have $|\sigma(u)|\le |u|$, hence
$\|b_t\|_2^2\le \|W_1a_t\|_2^2$. But $W_1a_t\sim\cN(0,I_k)$, so a $\chi^2$ tail bound yields
$\PP(\|W_1a_t\|_2^2>(1+\delta)k)\le \exp(-c\delta^2k)$.
A union bound over $t\in[n]$ gives the stated probability.
\end{proof}

We now show how Assumption~\ref{ass:propagation} feeds into the residual update $X^+=X+W_2B$. The only additional hypothesis needed is that $B$ is independent of the Gaussian output matrix $W_2$.

\begin{lem}[Propagation-activation MLP sublayer bounds]
\label{lem:mlp-gated}
Fix $\delta\in(0,\tfrac14)$ and constants $c_-,c_+>0$.
Let $X\in\R^{d\times n}$ be a matrix whose squared column norms lie between $c_-d$ and $c_+d$.
Set $A^{\mathrm{rms}}:=\RMSNorm(X)$.
Let $B=[b_1,\dots,b_n]\in\R^{k\times n}$ be a random matrix satisfying Assumption~\ref{ass:propagation}
(with this $\delta$, constants $C_{\mathrm{PROP}},m_\ast$, and success probability $P_{\mathrm{PROP}}(\delta)$).
Assume $W_2\in\R^{d\times k}$ has i.i.d.\ entries $\cN(0,1/k)$ and is independent of $B$, and define
\[
X^+ := X + W_2 B.
\]
Then there exist numerical constants $c,C>0$ such that, whenever $d\ge C\delta^{-2}\log n$, the following holds
with probability at least
\[
1
-
Ce^{-c\delta^2 d}
-
\bigl(1-P_{\mathrm{PROP}}(\delta)\bigr).
\]
\begin{enumerate}
\item[(i)] \textbf{Column norms of $X^+$.} For all $t\in[n]$, it holds:
\[
(1-\delta)c_-\,d \;\le\; \|x_t^+\|_2^2 \;\le\; (1+\delta)\,(c_+ + m_\ast)\,d.
\]
\item[(ii)] \textbf{Stable ranks.} On the same event, it holds:
\[
\st(B)\le C_{\mathrm{PROP}}\,\st(A^{\mathrm{rms}})
\le \frac{c_+}{c_-}\,C_{\mathrm{PROP}}\,\st(X),
\]
and
\[
\st(X^+)
\le
\frac{1+\delta}{1-\delta}\left(1+\frac{m_\ast}{c_-}\right)\st(X).
\]
\end{enumerate}
\end{lem}

\begin{proof}
Let $E_{\mathrm{prop}}$ denote the event in Assumption~\ref{ass:propagation}, so $\PP(E_{\mathrm{prop}})\ge P_{\mathrm{PROP}}(\delta)$.

\paragraph{Column norms.}
Condition on $(X,B)$ and apply Lemma~\ref{lem:gaussian-residual-preserves-column-norms} (with parameter $\delta$) to
$X^+=X+W_2B$.
On an event $E_{\mathrm{col}}$ with probability at least $1-Cn\exp(-c\delta^2 d)$, simultaneously for all $t\in[n]$,
\[
(1-\delta)\Bigl(\|x_t\|_2^2+\frac{d}{k}\|b_t\|_2^2\Bigr)
\le \|x_t^+\|_2^2
\le (1+\delta)\Bigl(\|x_t\|_2^2+\frac{d}{k}\|b_t\|_2^2\Bigr).
\]
On $E_{\mathrm{prop}}\cap E_{\mathrm{col}}$, using $\|x_t\|_2^2\ge c_-d$ gives the lower bound
$\|x_t^+\|_2^2\ge (1-\delta)c_-d$.
Using $\|x_t\|_2^2\le c_+d$ and $\|b_t\|_2^2\le m_\ast k$ from $E_{\mathrm{prop}}$ yields
\[
\|x_t^+\|_2^2
\le (1+\delta)\Bigl(c_+d+m_\ast d\Bigr)
= (1+\delta)(c_+ + m_\ast)d,
\]
which proves (i).

\paragraph{Stable ranks.}
On $E_{\mathrm{prop}}$ we have $\st(B)\le C_{\mathrm{PROP}}\st(A^{\mathrm{rms}})$.
Since the squared column norms of $X$ lie between $c_-d$ and $c_+d$, Theorem~\ref{thm:rms-stablerank} gives
$\st(A^{\mathrm{rms}})\le \frac{c_+}{c_-}\st(X)$, proving the displayed bound for $\st(B)$.

For $X^+$, condition on $B$ and apply Theorem~\ref{lem:stablerank_X_plus_WH} (with parameter $\delta$) to $X^+=X+W_2B$.
Lower bounding the denominator in \eqref{eq:stablerank_X_plus_WH_goal} by $\tfrac{1}{d}\|X\|_{\op}^2$ yields
\[
\st(X^+)
\le
\frac{1+\delta}{1-\delta}\left(1+\frac{d}{k}\frac{\|B\|_F^2}{\|X\|_F^2}\right)\st(X).
\]
On $E_{\mathrm{prop}}$, we have $\|B\|_F^2=\sum_{t=1}^n\|b_t\|_2^2\le nm_\ast k$, while
$\|X\|_F^2=\sum_{t=1}^n\|x_t\|_2^2\ge nc_-d$, so
\[
\frac{d}{k}\frac{\|B\|_F^2}{\|X\|_F^2}\le \frac{m_\ast}{c_-},
\]
which gives the stated bound on $\st(X^+)$.

\paragraph{Probability.}
By independence of $W_2$ from $B$, the residual column-norm event and Theorem~\ref{lem:stablerank_X_plus_WH} hold with the same probabilities after conditioning on $(X,B)$.
A union bound gives failure probability at most $(1-P_{\mathrm{PROP}}(\delta))+Cn e^{-c\delta^2 d}+10e^{-c\delta^2 d}$.
Under $d\ge C\delta^{-2}\log n$, we have $Cn e^{-c\delta^2 d}\le Ce^{-c'\delta^2 d}$ after adjusting constants, which yields the claim.
\end{proof}

\subsubsection{Mixture of Experts layer propagation}

We include this final MoE calculation as an  illustration of the calculus rules developed above.
Unlike the MLP sublayers studied earlier, we will not return to this layer in a later section. Instead, we present this result simply to illustrate that the atomic lemmas already suffice to control propagation through a mixture-of-experts update with \emph{arbitrary} routing: the routing weights enter only through the simplex bound $\sum_{e} r_{e,t}^2\le 1$, while stable-rank control of each expert activation is supplied directly by Theorem~\ref{thm:nonline_activation}. See the companion Figure~\ref{fig:ngmoe-sublayer}.

\newcommand{\ngmoe}{\mathrm{MoE\mbox{-}MLP}}

\begin{definition}[MoE-MLP sublayer map]
\label{def:ngmoe}
{\rm
Fix dimensions $d,k,n,n_{\mathrm{exp}}\ge 1$, an activation $\sigma:\R\to\R$, expert weight matrices
$\{W_1^{(e)}\}_{e\in[n_{\mathrm{exp}}]}\subset \R^{k\times d}$ and $\{W_2^{(e)}\}_{e\in[n_{\mathrm{exp}}]}\subset \R^{d\times k}$,
and routing weights $R=(r_{e,t})\in[0,1]^{n_{\mathrm{exp}}\times n}$ with $\sum_{e\in[n_{\mathrm{exp}}]} r_{e,t}=1$ for each $t\in[n]$.
Let $D^{(e)}:=\diag(r_{e,1},\dots,r_{e,n})$.
For any input $X\in\R^{d\times n}$ define
\[
\begin{aligned}
A^{\mathrm{rms}}&:=\RMSNorm(X),\\
B^{(e)}&:=\sigma\!\bigl(W_1^{(e)}A^{\mathrm{rms}}\bigr)\qquad (e\in[n_{\mathrm{exp}}]),\\
X^+&:=\ngmoe(X):=X+\sum_{e\in[n_{\mathrm{exp}}]} W_2^{(e)}\,B^{(e)}\,D^{(e)}.
\end{aligned}
\]
We suppress the dependence of $\ngmoe(\cdot)$ on $(\{W_1^{(e)},W_2^{(e)}\}_{e\in[n_{\mathrm{exp}}]},\sigma,R)$ in the notation.
}
\end{definition}


\begin{figure}[h]
\centering
\begin{tikzpicture}[
  >=LaTeX,
  node distance = 1.0cm and 1.6cm,
  every node/.style = {font=\small},
  every path/.style = {->, thick, draw=gray!70},
  state/.style = {
    draw=blue!60!black,
    rounded corners,
    thick,
    inner sep=1.8pt,
    fill=blue!6
  },
  att/.style = {
    draw=teal!60!black,
    rounded corners,
    thick,
    inner sep=1.8pt,
    fill=teal!6
  },
  sum/.style = {
    circle,
    draw=gray!60,
    thick,
    inner sep=1pt,
    fill=gray!10
  }
]

\node[state] (X) {$X$};

\node[att, below=of X] (rms)
  {$A^{\mathrm{rms}}=\RMSNorm(X)$};

\node[att, below=0.8cm of rms, align=left] (experts)
  {$\forall e\in[n_{\mathrm{exp}}]:\quad
    B^{(e)}=\sigma\!\bigl(W_1^{(e)}A^{\mathrm{rms}}\bigr)$};

\node[att, below=0.8cm of experts, align=left] (mix)
  {$H=\displaystyle\sum_{e\in[n_{\mathrm{exp}}]} W_2^{(e)}\,B^{(e)}\,D^{(e)}$};

\node[sum, right=2.6cm of mix] (plus) {$+$};

\node[state, below=of plus] (Xplus)
  {$X^+=X+H$};

\draw (X) -- (rms);
\draw (rms) -- (experts);
\draw (experts) -- (mix);
\draw (mix) -- (plus);
\draw (plus) -- (Xplus);

\draw (X.east) -| (plus);

\end{tikzpicture}
\caption{MoE-MLP sublayer (RMSNorm, expert MLPs with routing weights, aggregation, and residual update).}
\label{fig:ngmoe-sublayer}
\end{figure}

\begin{lem}[MoE-MLP sublayer bounds]
\label{lem:mlp-moe}
Fix constants $c_-,c_+>0$ and an integer $n_{\mathrm{exp}}\ge 1$. Let $X\in\R^{d\times n}$ be a matrix whose squared column norms lie
between $c_-d$ and $c_+d$. Work in the setting of Definition~\ref{def:ngmoe}.
Assume $\sigma$ is $1$-Lipschitz with $\sigma(0)=0$, and assume $\hat \sigma(1)^2 > 0$.
Assume $\{W_1^{(e)}\}_{e\in[n_{\mathrm{exp}}]}$ and $\{W_2^{(e)}\}_{e\in[n_{\mathrm{exp}}]}$ are mutually independent Gaussian matrices with i.i.d.\ entries of variance
$1/d$ and $1/k$, respectively.  Fix $\delta\in(0,\tfrac14)$.
Then there exist numerical constants $c,C>0$ such that, whenever $\min\{d,k\}\ge C\delta^{-2}\log(n\,n_{\mathrm{exp}})$, the following holds
with probability at least
\[
1-Ce^{-c\delta^2 d}-Ce^{-c\delta^2 k}-8n_{\mathrm{exp}}e^{-c\delta^2\hat \sigma(1)^4k}.
\]
\begin{enumerate}
\item[(i)] \textbf{Column norms of $X^+$.} For all $t\in[n]$,
\[
(1-\delta)c_-\,d \;\le\; \|x_t^+\|_2^2 \;\le\; (1+\delta)^2(c_+ + 1)\,d.
\]
\item[(ii)] \textbf{Stable ranks.} For all $e\in[n_{\mathrm{exp}}]$, it holds:
\[
\st\!\bigl(B^{(e)}\bigr)
\le
\frac{1+\delta}{1-\delta}\,\frac{1}{\hat\sigma_1^2(1)}\,\st(A^{\mathrm{rms}})
\le
\frac{1+\delta}{1-\delta}\,\frac{c_+}{c_-}\,\frac{1}{\hat\sigma_1^2(1)}\,\st(X),
\]
and
\[
\st(X^+)\le \frac{1+\delta}{1-\delta}\left(1+\frac{1+\delta}{c_-}\right)\st(X).
\]
\end{enumerate}
\end{lem}

\begin{proof}
Let $\eta:=\delta/2$.

\paragraph{Column norms of $X^+$.}
Write $A^{\mathrm{rms}}=[a_1,\dots,a_n]$.  By definition of $\RMSNorm$, $\|a_t\|_2^2=d$ for all $t$.
For each expert $e\in[n_{\mathrm{exp}}]$ and token $t\in[n]$, define $u_t^{(e)}:=W_1^{(e)}a_t$ and $b_t^{(e)}:=\sigma(u_t^{(e)})$.
Conditional on $a_t$, we have $u_t^{(e)}\sim \cN(0,I_k)$, and since $\sigma$ is $1$-Lipschitz with $\sigma(0)=0$, we have
$|\sigma(u)|\le |u|$ entrywise and hence $\|b_t^{(e)}\|_2^2 \le \|u_t^{(e)}\|_2^2$.
A $\chi^2$ tail bound and a union bound over $(e,t)\in[n_{\mathrm{exp}}]\times[n]$ imply that, whenever
$k\ge C\eta^{-2}\log(n\,n_{\mathrm{exp}})$, with probability at least $1-Ce^{-c\eta^2 k}$,
\begin{equation}\label{eq:bt-ub-moe-mlp}
\|b_t^{(e)}\|_2^2 \le (1+\eta)k
\qquad\text{for all }e\in[n_{\mathrm{exp}}],\ t\in[n].
\end{equation}

Define the block matrices
\[
W_2^{\mathrm{moe}}
:=\frac{1}{\sqrt{n_{\mathrm{exp}}}}\bigl[\,W_2^{(1)}\ \cdots\ W_2^{(n_{\mathrm{exp}})}\,\bigr]
\in\R^{d\times (k n_{\mathrm{exp}})},
\qquad
B^{\mathrm{moe}}
:=\begin{bmatrix}
\sqrt{n_{\mathrm{exp}}}\,B^{(1)}D^{(1)}\\
\vdots\\
\sqrt{n_{\mathrm{exp}}}\,B^{(n_{\mathrm{exp}})}D^{(n_{\mathrm{exp}})}
\end{bmatrix}
\in\R^{(k n_{\mathrm{exp}})\times n},
\]
so that $X^+=X+W_2^{\mathrm{moe}}B^{\mathrm{moe}}$.
By construction, $W_2^{\mathrm{moe}}$ has i.i.d.\ $\cN(0,1/(k n_{\mathrm{exp}}))$ entries and is independent of $(X,B^{\mathrm{moe}})$.
Write $B^{\mathrm{moe}}=[b^{\mathrm{moe}}_1,\dots,b^{\mathrm{moe}}_n]$.
On the event \eqref{eq:bt-ub-moe-mlp}, for each $t\in[n]$,
\[
\|b_t^{\mathrm{moe}}\|_2^2
=
n_{\mathrm{exp}}\sum_{e\in[n_{\mathrm{exp}}]} r_{e,t}^2\,\|b_t^{(e)}\|_2^2
\le
n_{\mathrm{exp}}(1+\eta)k\sum_{e\in[n_{\mathrm{exp}}]} r_{e,t}^2
\le
n_{\mathrm{exp}}(1+\eta)k,
\]
since $\sum_{e} r_{e,t}=1$ implies $\sum_{e} r_{e,t}^2\le 1$.

Condition on $(X,B^{\mathrm{moe}})$ and apply Lemma~\ref{lem:gaussian-residual-preserves-column-norms} with parameter $\eta$
to $X^+=X+W_2^{\mathrm{moe}}B^{\mathrm{moe}}$.  On an event of probability at least $1-Ce^{-c\eta^2 d}$ (absorbing the union bound over $t$
using $d\ge C\eta^{-2}\log n$), simultaneously for all $t$,
\[
(1-\eta)\Bigl(\|x_t\|_2^2+\frac{d}{k n_{\mathrm{exp}}}\|b_t^{\mathrm{moe}}\|_2^2\Bigr)
\le \|x_t^+\|_2^2
\le (1+\eta)\Bigl(\|x_t\|_2^2+\frac{d}{k n_{\mathrm{exp}}}\|b_t^{\mathrm{moe}}\|_2^2\Bigr).
\]
Using $\|b_t^{\mathrm{moe}}\|_2^2\ge 0$ and $\|x_t\|_2^2\ge c_-d$ yields
\[
\|x_t^+\|_2^2 \ge (1-\eta)c_-d \ge (1-\delta)c_-d.
\]
On \eqref{eq:bt-ub-moe-mlp}, using $\|x_t\|_2^2\le c_+d$ and $\|b_t^{\mathrm{moe}}\|_2^2\le n_{\mathrm{exp}}(1+\eta)k$ yields
\[
\|x_t^+\|_2^2
\le (1+\eta)\Bigl(c_+d+(1+\eta)d\Bigr)
=(1+\eta)^2(c_+ + 1)d
\le (1+\delta)^2(c_+ + 1)d,
\]
which proves (i).

\paragraph{Stable ranks of $B^{(e)}$.}
Fix $e\in[n_{\mathrm{exp}}]$ and apply Theorem~\ref{thm:nonline_activation} to the Gaussian matrix $W_1^{(e)}$ and the input $A^{\mathrm{rms}}$,
with $\varepsilon=\delta$.  Since $\|a_t\|_2=\sqrt d$ for all $t$, the parameter $p$ in \eqref{eqn:defnp} reduces to $\hat\sigma_1^2(1)$,
and Theorem~\ref{thm:nonline_activation} yields
\[
\st\!\bigl(B^{(e)}\bigr)\le \frac{1+\delta}{1-\delta}\,\frac{1}{\hat\sigma_1^2(1)}\,\st(A^{\mathrm{rms}})
\]
with probability at least $1-8e^{-c\delta^2p^2k}$.
A union bound over $e\in[n_{\mathrm{exp}}]$ gives the first inequality in (ii) simultaneously for all experts.
The second inequality in (ii) follows from Theorem~\ref{thm:rms-stablerank}, which gives $\st(A^{\mathrm{rms}})\le \frac{c_+}{c_-}\st(X)$.

\paragraph{Stable rank of $X^+$.}
Apply Theorem~\ref{lem:stablerank_X_plus_WH} (with parameter $\delta$) to $X^+=X+W_2^{\mathrm{moe}}B^{\mathrm{moe}}$:
\[
\st(X^+)
\le
\frac{1+\delta}{1-\delta}\cdot
\frac{\tfrac{1}{d}\|X\|_F^2+\tfrac{1}{k n_{\mathrm{exp}}}\|B^{\mathrm{moe}}\|_F^2}{\max\{\tfrac{1}{d}\|X\|_{\op}^2,\tfrac{1}{k n_{\mathrm{exp}}}\|B^{\mathrm{moe}}\|_{\op}^2\}}.
\]
Lower bound the denominator by $\tfrac{1}{d}\|X\|_{\op}^2$ to obtain
\[
\st(X^+)
\le
\frac{1+\delta}{1-\delta}\left(1+\frac{d}{k n_{\mathrm{exp}}}\frac{\|B^{\mathrm{moe}}\|_F^2}{\|X\|_F^2}\right)\st(X),
\qquad
\text{since }\st(X)=\frac{\|X\|_F^2}{\|X\|_{\op}^2}.
\]
Using $\|X\|_F^2=\sum_t\|x_t\|_2^2\ge nc_-d$ and, on \eqref{eq:bt-ub-moe-mlp},
\[
\|B^{\mathrm{moe}}\|_F^2=\sum_{t=1}^n \|b_t^{\mathrm{moe}}\|_2^2 \le n\cdot n_{\mathrm{exp}}(1+\eta)k,
\]
we obtain
\[
\frac{d}{k n_{\mathrm{exp}}}\frac{\|B^{\mathrm{moe}}\|_F^2}{\|X\|_F^2}
\le
\frac{d}{k n_{\mathrm{exp}}}\cdot \frac{n\cdot n_{\mathrm{exp}}(1+\eta)k}{nc_-d}
=
\frac{1+\eta}{c_-}
\le
\frac{1+\delta}{c_-},
\]
which yields the stated bound on $\st(X^+)$.

\paragraph{Probability.}
Take a union bound over: the event \eqref{eq:bt-ub-moe-mlp} (probability $\ge 1-Ce^{-c\eta^2 k}$ after $k\ge C\eta^{-2}\log(n\,n_{\mathrm{exp}})$),
the residual column-norm event for $X^+$ (probability $\ge 1-Ce^{-c\eta^2 d}$ after $d\ge C\eta^{-2}\log n$),
the event from Theorem~\ref{lem:stablerank_X_plus_WH} (probability $\ge 1-Ce^{-c\delta^2 d}$),
and the $n_{\mathrm{exp}}$ activation events from Theorem~\ref{thm:nonline_activation}.
Since $\eta=\delta/2$, this yields the stated probability after adjusting constants.
\end{proof}

\section{MLPs with Gaussian initialization}\label{sec:deep-mlp}

We now combine the one-layer results to control the post-activation matrices of a depth-$L$ bias-free MLP
at Gaussian initialization (random weights and fixed data). There are two distinct behaviors depending on the activation.
If $\EE[\sigma(G)]\neq 0$ (mean-spike inducing), then every post-activation matrix has stable rank bounded
by a constant depending only on the one-dimensional Gaussian moments of $\sigma$; in particular, stable
rank is \emph{reset} at every layer, independently of the stable rank of the input data.
If instead the first Hermite coefficient of $\sigma$ is uniformly nondegenerate, then stable rank
propagates through depth and can inflate at most exponentially.

\paragraph{Architecture.}
Consider a feedforward neural network parametrized by weight matrices $W_1,\ldots, W_L$ and a pointwise
activation $\sigma\colon\R\to\R$ applied entrywise.
Let $X\in\R^{d_0\times n}$ be a data matrix, and define the post-activation matrices by
\begin{equation}\label{eqn:post_activ}
A_0=X
\qquad\textrm{and}\qquad
A_\ell=\sigma(W_\ell A_{\ell-1})
\qquad \forall \ell=1,\ldots, L.
\end{equation}
Unless stated otherwise, we assume that $W_\ell\in\R^{d_\ell\times d_{\ell-1}}$ has iid entries $\cN(0,1/d_{\ell-1})$,
and that the matrices $\{W_\ell\}_{\ell=1}^L$ are mutually independent.

\paragraph{Mean-spike inducing activations.}
Theorem~\ref{thm:gaussian_weights_low_sr} bounds $\st(\sigma(WX))$ by a one-dimensional ratio
$s_\sigma(a_X,b_X)$ when $\kappa_{\sigma,q}(a_X,b_X)>0$, i.e., as soon as the Gaussian mean
$\EE[\sigma(sG)]$ does not vanish on the scale interval induced by the column norms of $X$ and obeys a power-$q$ lower bound there.
For homogeneous activations this dependence on $(a_X,b_X)$ disappears, giving  input-independent bound.

\begin{cor}[All post-activations are low-stable rank with mean-induced spikes]\label{cor:mlp-mean-spike}
Assume that $\sigma$ is $p$-homogeneous for some $p\ge 1$, satisfies $\sigma(0)=0$, and the polynomial growth bound
\[
|\sigma(t)|\le L_\sigma |t|^p
\qquad\text{for all }t\in\R,
\]
for some constant $L_\sigma<\infty$.
Let $\gamma\sim\cN(0,1)$ be a standard Gaussian and define
\[
m_1:=\big|\EE[\sigma(\gamma)]\big|,
\qquad
m_2:=\EE[\sigma(\gamma)^2],
\]
and assume $m_1>0$. Set $r:=L_\sigma/m_1$.
Fix $0<\varepsilon_1\le 1$ and $0<\varepsilon_2<1$.
Then there exist a constant $c>0$ that depends only on $p$, $\sigma(1)$, $\sigma(-1)$, $m_1$ such that, with probability at least
\[
1-\sum_{\ell=1}^L\Bigg(
2\exp\!\Big(-c\,\min\{d_\ell\,\varepsilon_2^2,\ d_\ell^{1\wedge\tfrac{2}{p}}\varepsilon_2^{2/p}\}\Big)
+
2\exp\!\Big(-c\,\min\{d_\ell\,\varepsilon_1^2,\ d_\ell^{1\wedge\tfrac{1}{p}}\varepsilon_1^{1/p}\}\Big)
\Bigg),
\]
the following holds simultaneously for every $\ell=1,\dots,L$:
\[
\st(A_\ell)\ \le\ \frac{1+\varepsilon_1}{(1-\varepsilon_2)^2}\cdot \frac{m_2}{m_1^2}.
\]
\end{cor}

\begin{proof}
This follows immediately by applying Corollary~\ref{cor:q_homogeneous_low_sr} conditionally layer by layer.
\end{proof}

\paragraph{Propagation through nonlinearities.}
When $\EE[\sigma(G)]=0$, Corollary~\ref{cor:mlp-mean-spike} does not apply.
In this case we control stable rank using the nondegeneracy of the first Hermite coefficient
(Theorem~\ref{thm:nonline_activation}).
For $s>0$, write $\hat\sigma_1(s)$ for the first Hermite coefficient of $\sigma(s\gamma)$
(with $\gamma\sim\cN(0,1)$), and define the uniform Hermite nondegeneracy parameter
\begin{equation}\label{eq:psigma-global}
p_\sigma\;:=\;\inf_{s>0}\ \frac{\hat\sigma_1(s)^2}{s^2}.
\end{equation}
For activations such as ReLU, leaky ReLU, exact GELU, and SiLU, the ratio $\hat\sigma_1(s)/s$ is constant,
so $p_\sigma$ is a positive absolute constant; see Table~\ref{table_common_act_func}.

\begin{cor}[Propagation of stable rank]\label{cor:mlp-propagation}
Assume that $\sigma$ is $1$-Lipschitz and satisfies $\sigma(0)=0$, and that $p_\sigma>0$ as in
\eqref{eq:psigma-global}.
Fix $\varepsilon\in(0,1)$.
Then there exists a numerical constant $c>0$ such that, with probability at least
\[
1-\sum_{\ell=1}^L 8\exp\!\Big(-c\,\varepsilon^2\,p_\sigma^2\,d_\ell\Big),
\]
the following holds simultaneously for every $\ell=1,\dots,L$:
\[
\st(A_\ell)\ \le\ \left(\frac{1+\varepsilon}{1-\varepsilon}\cdot\frac{1}{p_\sigma}\right)^\ell \st(X).
\]
\end{cor}

\begin{proof}
This follows immediately by applying Theorem~\ref{thm:nonline_activation} conditionally layer by layer.
\end{proof}

\section{Transformers at Gaussian initialization}\label{sec:transformers}

In this section we bound the stable rank of the matrices that appear inside an $L$-layer decoder-only
transformer at Gaussian initialization.  The input is the embedding matrix $X^{(0)}\in\R^{d\times n}$ for a
fixed token sequence.  By Section~\ref{sec:embedding}, the embedding already has low stable rank, governed
by the inverse frequency of the most common token.  Our goal is to show that this low--stable--rank
property persists throughout the network: at every depth we obtain explicit high-probability bounds for the
RMS-normalized activations (and, in particular, for $A^{\mathrm{rms}}_{\mathrm{final}}=\RMSNorm(X^{(L)})$).

The proofs follow by an induction over depth, based on the blockwise bounds proved in earlier sections.  Two quantities are
propagated from layer to layer.  First, we maintain control on the ratio of column norms of $X^{(\ell)}$,
because the RMSNorm bound (Theorem~\ref{thm:rms-stablerank}) multiplies stable rank by the ratio between
maximal and minimal column norms.  Second, we propagate stable rank by composing the attention sublayer
bound (Lemma~\ref{lem:block-att-common}) with an MLP sublayer bound: Lemma~\ref{lem:mlp-standard} in the
mean-spike-inducing (MSI) case and Lemma~\ref{lem:mlp-gated} for activations that propagate stable rank as in Assumption~\ref{ass:propagation}.
The MSI regime yields polynomial (quadratic) growth in depth, while the nonMSI propagation regime yields an
exponentially increasing bound.

We work with the layerwise updates recorded in Assumption~\ref{ass:L-layer-transformer}.  In particular, at each
layer and for each head we represent attention by a mixing matrix $P^{(h)}\in\R^{n\times n}$ whose columns are
probability vectors.  In a standard transformer $P^{(h)}$ is obtained from queries and keys via a softmax
(and may incorporate masking, RoPE, etc.); see the discussion surrounding the attention definition.
However, all bounds in this section use only column-stochasticity of $P^{(h)}$ together with Gaussian
initialization of the \emph{value} and \emph{output} projections.  Consequently, we impose no assumptions on
the initialization of the query/key maps (or on the particular way $P^{(h)}$ is generated), beyond the
measurability/independence conditions recorded in Assumption~\ref{ass:L-layer-transformer}.

\begin{assumption}[$L$-layer transformer at Gaussian initialization]
\label{ass:L-layer-transformer}
{\rm
Fix integers $L\ge 1$, $n_{\mathrm{head}}\ge 1$, and $d,k\ge 1$ where $d$ is divisible by $n_{\mathrm{head}}$, and set
$d_{\mathrm{head}}:=d/n_{\mathrm{head}}$.
Let $X^{(0)}\in\R^{d\times n}$ be the (pre-normalization) embedding matrix for a fixed token sequence,
constructed from a Gaussian embedding as in Lemma~\ref{lem:embedding_rms_stablerank}, with empirical token
frequencies $p_j=c_j/n$ and $p_{\max}:=\max_j p_j$.

For each depth $\ell\in\{0,\dots,L-1\}$ and head $h\in[n_{\mathrm{head}}]$, let
$P^{(h,\ell)}\in\R^{n\times n}$ be a (possibly random) matrix whose columns are probability vectors.
Let $W_O^{(\ell)}\in\R^{d\times d}$ and $W_V^{(h,\ell)}\in\R^{d_{\mathrm{head}}\times d}$ be weight matrices.
Given any block input $X^{(\ell)}\in\R^{d\times n}$ define the attention-sublayer intermediates
\[
\begin{aligned}
A^{\mathrm{rms},(\ell)} &:= \RMSNorm\!\bigl(X^{(\ell)}\bigr),\\
Y^{(h,\ell)} &:= A^{\mathrm{rms},(\ell)}P^{(h,\ell)}, \qquad (h\in[n_{\mathrm{head}}]),\\
H^{(h,\ell)} &:= W_V^{(h,\ell)}Y^{(h,\ell)}, \qquad (h\in[n_{\mathrm{head}}]),\\
H^{(\ell)} &:= \concat_{h\in[n_{\mathrm{head}}]} H^{(h,\ell)}\in\R^{d\times n},\\
X^{\mathrm{att},(\ell)} &:= X^{(\ell)} + W_O^{(\ell)}H^{(\ell)},\\
A^{\mathrm{rms},(\ell)}_{\mathrm{mlp}} &:= \RMSNorm\!\bigl(X^{\mathrm{att},(\ell)}\bigr).
\end{aligned}
\]
Next define the MLP sublayer output
\[
X^{(\ell+1)} := X^{\mathrm{att},(\ell)}+W_2^{(\ell)}B^{(\ell)},
\]
where $B^{(\ell)}\in\R^{k\times n}$ is a (possibly random) matrix that is independent of $W_2^{(\ell)}$.
Finally, we write $x_t^{(\ell)}$ and $x_t^{\mathrm{att},(\ell)}$ for the $t$th columns of $X^{(\ell)}$ and
$X^{\mathrm{att},(\ell)}$, respectively.

All trainable weight matrices are mutually independent across all layers and are initialized with independent
Gaussian entries with variance $1/m$, where $m$ is the input dimension of the map: in particular
$(W_O^{(\ell)})_{ij},(W_V^{(h,\ell)})_{ij}\sim\cN(0,1/d)$
and $(W_2^{(\ell)})_{ij}\sim \cN(0,1/k)$.

The mixing matrices $\{P^{(h,\ell)}\}_{h,\ell}$ may be arbitrary and may depend on all randomness external
to the collection of Gaussian weights listed above (including, e.g., query/key projections, masking, RoPE,
etc.).  The only requirement is that for each fixed $(h,\ell)$ the matrix $P^{(h,\ell)}$ is independent of
$W_O^{(\ell)}$, $\{W_V^{(h',\ell)}\}_{h'\in[n_{\mathrm{head}}]}$, and $W_2^{(\ell)}$.
}
\end{assumption}

We now state the stable-rank bounds for the MSI and nonMSI architectures.
Before the induction, we record one additional fact about the embedding matrix, complementary to
Lemma~\ref{lem:embedding_rms_stablerank}.  It supplies the uniform column-norm control needed to invoke
Theorem~\ref{thm:rms-stablerank} at depth $0$, and to start the column-norm recursion used in both proofs. The proof is standard and we omit it.

\begin{lem}[Gaussian embedding column norms]\label{lem:embedding-colnorms}
Fix $\varepsilon\in(0,1)$.  Let $E\in\R^{d\times V}$ have independent columns
$e_1,\dots,e_V\sim\cN(0,I_d)$.  For a fixed token sequence $(i_1,\dots,i_n)\in\{1,\dots,V\}^n$ define
\[
X^{(0)}=[x_1,\dots,x_n]\in\R^{d\times n},
\qquad
x_t:=e_{i_t}.
\]
Then there exist numerical constants $c,C>0$ such that whenever $d\ge C\varepsilon^{-2}\log n$, the
estimate
\[
(1-\varepsilon)d \;\le\; \|x_t\|_2^2 \;\le\; (1+\varepsilon)d
\qquad\text{for all }t=1,\dots,n
\]
holds with probability at least $1-Ce^{-c\varepsilon^2 d}$.  
\end{lem}

\subsection{MSI transformer}

We first consider the mean-spike-inducing (MSI) regime, in which the MLP post-activations are given by a pointwise
activation applied to a Gaussian projection of $A^{\mathrm{rms},(\ell)}_{\mathrm{mlp}}$.
The theorem below gives stable-rank bounds for the MLP post-activations and for the RMS-normalized activations at all
depths.  In particular, the MLP post-activations satisfy $\st(B^{(\ell)})\le \st_\sigma$, and the stable ranks of the RMS activations grow at most quadratically in $\ell$, with explicit
dependence on $\st_\sigma/p_\sigma$.

\begin{thm}[Low--stable--rank representations in an $L$-layer transformer]
\label{cor:deep-transformer-stablerank-linear}
Suppose that Assumption~\ref{ass:L-layer-transformer} holds.  Assume moreover that for each $\ell=0,\dots,L-1$,
\[
B^{(\ell)}:=\sigma\!\bigl(W_1^{(\ell)}A^{\mathrm{rms},(\ell)}_{\mathrm{mlp}}\bigr),
\]
where $\sigma$ satisfies Assumption~\ref{ass:stablesigmarank} and the matrices
$\{W_1^{(\ell)}\in\R^{k\times d}\}_{\ell=0}^{L-1}$ are mutually independent with i.i.d.\ entries $\cN(0,1/d)$
and are independent of the randomness in Assumption~\ref{ass:L-layer-transformer}.
Then there exist numerical constants
$c,C>0$ such that whenever
\[
d\ge C L^2\log(nL),
\qquad
k\ge C L^2\log(nL),
\]
with probability at least
\[
1- C L\exp\!\Big(-c\,\frac{d}{L^2}\Big)- C L\exp\!\Big(-c\,\frac{k}{L^2}\Big)
- L\bigl(1-P(\sigma,d,k,n)\bigr),
\]
the following holds:
\begin{enumerate}
\item[(i)] \textbf{MLP post-activations.} For every $\ell=0,\dots,L-1$,
\[
\st\!\bigl(B^{(\ell)}\bigr)\ \le\ \st_\sigma.
\]

\item[(ii)] \textbf{Embedding-layer stable ranks.} We have
\[
\st\!\bigl(X^{(0)}\bigr)\ \le\ \frac{1.01}{p_{\max}},
\qquad
\st\!\bigl(A^{\mathrm{rms},(0)}\bigr)\ \le\ \frac{1.03}{p_{\max}},
\qquad
\st\!\bigl(A^{\mathrm{rms},(0)}_{\mathrm{mlp}}\bigr)\ \le\ \frac{6.03}{p_{\max}}.
\]

\item[(iii)] \textbf{RMS activations at depth $\ell\ge 1$.} For every $\ell=1,\dots,L-1$,
\[
\begin{aligned}
\st\!\bigl(A^{\mathrm{rms},(\ell)}\bigr)
&\le 2.1\,\st_\sigma(1+\ell)\;+\;4.3\,\frac{\st_\sigma}{p_\sigma}\,\ell(\ell+1),\\
\st\!\bigl(A^{\mathrm{rms},(\ell)}_{\mathrm{mlp}}\bigr)
&\le 6.4\,\st_\sigma(1+\ell)\;+\;13\,\frac{\st_\sigma}{p_\sigma}\,\ell(\ell+1),\\
\st\!\bigl(A^{\mathrm{rms}}_{\mathrm{final}}\bigr)
&\le 2.1\,\st_\sigma(1+L)\;+\;4.3\,\frac{\st_\sigma}{p_\sigma}\,L(L+1).
\end{aligned}
\]
\end{enumerate}
\end{thm}
\begin{proof}
Set $\delta:=\frac{1}{512L}$ and $\varepsilon:=\delta$, and write $\alpha:=\frac{1+\delta}{1-\delta}$.
Let $\Omega$ be the event on which Lemmas~\ref{lem:embedding_rms_stablerank} and \ref{lem:embedding-colnorms}
hold with parameter $\varepsilon$, and for every $\ell=0,\dots,L-1$, Lemmas~\ref{lem:block-att-common} and
\ref{lem:mlp-standard} hold with parameter $\delta$ at layer $\ell$.
A union bound yields the stated lower bound on $\PP(\Omega)$ after adjusting constants and using
$d,k\ge C\delta^{-2}\log(nL)$.  Work throughout on $\Omega$.

\paragraph{Embedding layer.}
Lemma~\ref{lem:embedding_rms_stablerank} yields
\[
\st\bigl(X^{(0)}\bigr)\le \alpha\,\frac{1}{p_{\max}}.
\]
Lemma~\ref{lem:embedding-colnorms} yields $(1-\varepsilon)d\le \|x_t^{(0)}\|_2^2\le (1+\varepsilon)d$ for all $t$,
hence Theorem~\ref{thm:rms-stablerank} gives
\[
\st\bigl(A^{\mathrm{rms},(0)}\bigr)\le \alpha\,\st\bigl(X^{(0)}\bigr)\le \alpha^2\,\frac{1}{p_{\max}}.
\]
Since $\delta\le \frac{1}{512}$, we have $\alpha\le \frac{513}{511}<1.01$, hence the first two bounds in (ii).

Applying Lemma~\ref{lem:block-att-common}(ii) to $X^{(0)}$ with $c_-=1-\varepsilon$ and $c_+=1+\varepsilon$ gives
\[
\st\bigl(A^{\mathrm{rms},(0)}_{\mathrm{mlp}}\bigr)
\le
\alpha^2\,
\frac{2+\varepsilon}{1-\varepsilon}\Bigl(1+\frac{2}{1-\varepsilon}\Bigr)\st\bigl(X^{(0)}\bigr).
\]
Using $\varepsilon=\delta\le \frac{1}{512}$ and $\st(X^{(0)})\le \alpha/p_{\max}$, a direct bound gives
\[
\frac{(1+\delta)^2}{(1-\delta)^2}\cdot \frac{2+\delta}{1-\delta}\cdot\Bigl(1+\frac{2}{1-\delta}\Bigr)\cdot \alpha
\ \le\ \frac{31}{5},
\]
which proves the last bound in (ii).

\paragraph{Column-norm control.}
For each $\ell\ge 0$, define $c_-^{(\ell)},c_+^{(\ell)}$ by
\[
c_-^{(\ell)}d\le \|x_t^{(\ell)}\|_2^2\le c_+^{(\ell)}d
\qquad \forall t.
\]
On $\Omega$, Lemma~\ref{lem:embedding-colnorms} gives $c_-^{(0)}=1-\delta$ and $c_+^{(0)}=1+\delta$.
Lemma~\ref{lem:block-att-common}(i) gives
\[
(1-\delta)c_-^{(\ell)}d \le \|x_t^{\mathrm{att},(\ell)}\|_2^2 \le (1+\delta)(c_+^{(\ell)}+1)d
\qquad\forall t,
\]
and Lemma~\ref{lem:mlp-standard}(i), applied to $X^{\mathrm{att},(\ell)}$, gives
\begin{equation}\label{eq:col-recur-cor-tight}
c_-^{(\ell+1)}=(1-\delta)^2c_-^{(\ell)},
\qquad
c_+^{(\ell+1)}=(1+\delta)^2\Bigl((1+\delta)(c_+^{(\ell)}+1)+1\Bigr).
\end{equation}
Hence $c_-^{(\ell)}=(1-\delta)^{2\ell+1}\ge 1-(2\ell+1)\delta\ge \frac{509}{512}$ for all $\ell\le L$.
For the upper envelope, write \eqref{eq:col-recur-cor-tight} as $c_+^{(\ell+1)}=a\,c_+^{(\ell)}+b$ with
$a=(1+\delta)^3$ and $b=(1+\delta)^2(2+\delta)$.  Then
\[
c_+^{(\ell)}
=a^\ell c_+^{(0)}+b\sum_{j=0}^{\ell-1}a^j
\le a^\ell\bigl(c_+^{(0)}+b\ell\bigr)
\le a^L\bigl((1+\delta)+b\ell\bigr).
\]
Since $\delta L\le 1/512$, we have $a^L\le e^{3\delta L}\le e^{3/512}<1.01$ and $b\le (1+\delta)^2(2+\delta)\le 2.01$.
Thus, for all $\ell\le L$,
\begin{equation}\label{eq:cplus-linear-tight}
c_+^{(\ell)}\le 1.03+2.03\,\ell,
\qquad\text{and hence}\qquad
c_+^{(\ell)}+1\le 2.03(1+\ell).
\end{equation}

\paragraph{Stable ranks of $B^{(\ell)}$ and $X^{(\ell)}$.}
Item (ii) of Lemma~\ref{lem:mlp-standard}, applied at layer $\ell$, yields $\st(B^{(\ell)})\le \st_\sigma$, proving (i).
Let $s_\ell:=\st(X^{(\ell)})$.  Lemma~\ref{lem:mlp-standard}(ii), applied to $X^{\mathrm{att},(\ell)}$ whose
upper column-norm parameter is $(1+\delta)(c_+^{(\ell)}+1)$, yields
\[
s_{\ell+1}\le \alpha\left(1+\frac{(1+\delta)(c_+^{(\ell)}+1)}{p_\sigma}\right)\st_\sigma.
\]
Using $\alpha<1.01$ and \eqref{eq:cplus-linear-tight} gives, for $\ell\le L$,
\begin{equation}\label{eq:s-linear-tight}
s_\ell\le 1.01\left(1+\frac{2.04\,\ell}{p_\sigma}\right)\st_\sigma.
\end{equation}

\paragraph{RMS activations.}
Fix $\ell\in\{1,\dots,L-1\}$.
Theorem~\ref{thm:rms-stablerank}, \eqref{eq:cplus-linear-tight}, and $c_-^{(\ell)}\ge 509/512$ yield
\[
\st\bigl(A^{\mathrm{rms},(\ell)}\bigr)
\le \frac{c_+^{(\ell)}}{c_-^{(\ell)}}\,s_\ell
\le \frac{1.03+2.03\,\ell}{509/512}\,s_\ell
\le 2.05(1+\ell)\,s_\ell.
\]
Combining with \eqref{eq:s-linear-tight} gives
\[
\st\bigl(A^{\mathrm{rms},(\ell)}\bigr)
\le 2.05\cdot 1.01\,(1+\ell)\st_\sigma
+2.05\cdot 1.01\cdot 2.04\,\frac{\ell(\ell+1)}{p_\sigma}\st_\sigma
\le 2.1\,\st_\sigma(1+\ell)+4.3\,\frac{\st_\sigma}{p_\sigma}\,\ell(\ell+1).
\]

For $A^{\mathrm{rms},(\ell)}_{\mathrm{mlp}}$, Lemma~\ref{lem:block-att-common}(ii), \eqref{eq:cplus-linear-tight}, and $c_-^{(\ell)}\ge 509/512$ yield
\[
\st\bigl(A^{\mathrm{rms},(\ell)}_{\mathrm{mlp}}\bigr)
\le
\frac{(1+\delta)^2}{(1-\delta)^2}\,
\frac{c_+^{(\ell)}+1}{c_-^{(\ell)}}\Bigl(1+\frac{2}{c_-^{(\ell)}}\Bigr)s_\ell
\le 6.3(1+\ell)\,s_\ell,
\]
using $\delta\le \frac{1}{512}$.
Combining with \eqref{eq:s-linear-tight} gives
\[
\st\bigl(A^{\mathrm{rms},(\ell)}_{\mathrm{mlp}}\bigr)
\le 6.3\cdot 1.01\,(1+\ell)\st_\sigma
+6.3\cdot 1.01\cdot 2.04\,\frac{\ell(\ell+1)}{p_\sigma}\st_\sigma
\le 6.4\,\st_\sigma(1+\ell)+13\,\frac{\st_\sigma}{p_\sigma}\,\ell(\ell+1).
\]

Finally, applying the same RMSNorm estimate as for $A^{\mathrm{rms},(\ell)}$ to $X^{(L)}$ and using
\eqref{eq:s-linear-tight} with $\ell=L$ yields
\[
\st\bigl(A^{\mathrm{rms}}_{\mathrm{final}}\bigr)
\le 2.1\,\st_\sigma(1+L)+4.3\,\frac{\st_\sigma}{p_\sigma}\,L(L+1),
\]
thereby completing the proof.
\end{proof}

\subsection{Transformer without mean induced spikes}

We now consider the nonMSI propagation regime.  In contrast
to the MSI case, the MLP sublayer does not reset stable rank, and the hidden-state recursion becomes
multiplicative.  The theorem below records the resulting exponential-in-depth bounds for the stable ranks
of the RMS-normalized activations and the MLP post-activations. Recall that Lemma~\ref{lem:propagation-examples} verified Assumption~\ref{ass:propagation} at Gaussian initialization for two natural
classes of such activations: gated MLPs and pointwise activations with $\hat\sigma_1(1)^2>0$.

\begin{thm}[Low--stable--rank representations in an $L$-layer transformer]
\label{thm:deep-transformer-stablerank-gated}
Suppose that Assumption~\ref{ass:L-layer-transformer} holds.  Assume that for each $\ell=0,\dots,L-1$ the matrix $B^{(\ell)}$
satisfies Assumption~\ref{ass:propagation} with $\delta=\frac{1}{512L}$, constants $(C_{\mathrm{PROP}},m_\ast)$, and success
probability $P_{\mathrm{PROP}}(\tfrac{1}{512L})$.  Set
\[
\rho:=3.1(1+m_\ast).
\]
Then there exist numerical constants $c,C>0$ such that whenever
\[
d\ge C L^2\log(nL),
\qquad
k\ge C L^2\log(nL),
\]
with probability at least
\[
1
- C L\exp\!\Big(-c\,\frac{d}{L^2}\Big)
- L\Bigl(1-P_{\mathrm{PROP}}\!\bigl(\tfrac{1}{512L}\bigr)\Bigr),
\]
the following holds:
\begin{enumerate}
\item[(i)] \textbf{Embedding-layer stable ranks.} We have
\[
\st\!\bigl(X^{(0)}\bigr)\ \le\ \frac{1.01}{p_{\max}},
\qquad
\st\!\bigl(A^{\mathrm{rms},(0)}\bigr)\ \le\ \frac{1.03}{p_{\max}},
\qquad
\st\!\bigl(A^{\mathrm{rms},(0)}_{\mathrm{mlp}}\bigr)\ \le\ \frac{6.03}{p_{\max}}.
\]

\item[(ii)] \textbf{Activation stable ranks.}  For every $\ell=0,\dots,L-1$,
\begin{align*}
\st\!\bigl(A^{\mathrm{rms},(\ell)}\bigr)
&\le \Bigl(1.02+1.02(1+m_\ast)\ell\Bigr)\,\frac{1.01}{p_{\max}}\,\rho^{\ell}, \\
\st\!\bigl(A^{\mathrm{rms},(\ell)}_{\mathrm{mlp}}\bigr)
&\le \Bigl(6.13+3.09(1+m_\ast)\ell\Bigr)\,\frac{1.01}{p_{\max}}\,\rho^{\ell}, \\
\st\!\bigl(B^{(\ell)}\bigr)
&\le C_{\mathrm{PROP}}\Bigl(6.13+3.09(1+m_\ast)\ell\Bigr)\,\frac{1.01}{p_{\max}}\,\rho^{\ell}, \\
\st\!\bigl(A^{\mathrm{rms}}_{\mathrm{final}}\bigr)
&\le \Bigl(1.02+1.02(1+m_\ast)L\Bigr)\,\frac{1.01}{p_{\max}}\,\rho^{L}.
\end{align*}
\end{enumerate}
\end{thm}
\begin{proof}
Set $\delta:=\frac{1}{512L}$ and $\varepsilon:=\delta$, and write $\alpha:=\frac{1+\delta}{1-\delta}$.
Let $\Omega$ be the event on which Lemmas~\ref{lem:embedding_rms_stablerank} and \ref{lem:embedding-colnorms}
hold with parameter $\varepsilon$, and for every $\ell=0,\dots,L-1$, Lemmas~\ref{lem:block-att-common} and
\ref{lem:mlp-gated} hold with parameter $\delta$ at layer $\ell$.
A union bound gives the stated lower bound on $\PP(\Omega)$ after adjusting constants and using
$d,k\ge C\delta^{-2}\log(nL)$.  Work throughout on $\Omega$.

\paragraph{Embedding layer.}
The argument is identical to the MSI case: using Lemmas~\ref{lem:embedding_rms_stablerank},
\ref{lem:embedding-colnorms}, and Lemma~\ref{lem:block-att-common} at $\ell=0$ with $\varepsilon=\delta\le 1/512$
gives the bounds in (i).

\paragraph{Column-norm control.}
For each $\ell\ge 0$, define $c_-^{(\ell)},c_+^{(\ell)}$ by
\[
c_-^{(\ell)}d\le \|x_t^{(\ell)}\|_2^2\le c_+^{(\ell)}d
\qquad \forall t.
\]
On $\Omega$, Lemma~\ref{lem:embedding-colnorms} gives $c_-^{(0)}=1-\delta$ and $c_+^{(0)}=1+\delta$.
Lemma~\ref{lem:block-att-common}(i) gives
\[
(1-\delta)c_-^{(\ell)}d \le \|x_t^{\mathrm{att},(\ell)}\|_2^2 \le (1+\delta)(c_+^{(\ell)}+1)d
\qquad\forall t,
\]
and Lemma~\ref{lem:mlp-gated}(i), applied to $X^{\mathrm{att},(\ell)}$, gives
\begin{equation}\label{eq:col-recur-gated}
c_-^{(\ell+1)}=(1-\delta)^2c_-^{(\ell)},
\qquad
c_+^{(\ell+1)}=(1+\delta)\Bigl((1+\delta)(c_+^{(\ell)}+1)+m_\ast\Bigr).
\end{equation}
Hence $c_-^{(\ell)}=(1-\delta)^{2\ell+1}\ge 1-(2\ell+1)\delta\ge \frac{509}{512}$ for all $\ell\le L$.
Writing \eqref{eq:col-recur-gated} as $c_+^{(\ell+1)}=a\,c_+^{(\ell)}+b$ with
$a=(1+\delta)^2$ and $b=(1+\delta)\bigl((1+\delta)+m_\ast\bigr)$, we obtain
\[
c_+^{(\ell)}
\le a^\ell c_+^{(0)}+b\sum_{j=0}^{\ell-1}a^j
\le a^L\bigl((1+\delta)+b\ell\bigr)
\le 1.01+1.01(1+m_\ast)\ell,
\qquad \ell\le L,
\]
since $\delta L\le 1/512$ implies $a^L(1+\delta)^2 <1.01$

\paragraph{Hidden-state stable rank recursion.}
Set $s_\ell:=\st(X^{(\ell)})$.  Lemma~\ref{lem:block-att-common}(ii) and $c_-^{(\ell)}\ge 509/512$ imply
\begin{equation}\label{eq:att-sr-gated}
\st\!\bigl(X^{\mathrm{att},(\ell)}\bigr)
\le \alpha\Bigl(1+\frac{2}{c_-^{(\ell)}}\Bigr)s_\ell
\le 3.03\,s_\ell.
\end{equation}
Lemma~\ref{lem:mlp-gated}(ii), applied to $X^{\mathrm{att},(\ell)}$ whose lower column-norm parameter is
$(1-\delta)c_-^{(\ell)}\ge 0.992$, yields
\begin{equation}\label{eq:mlp-sr-gated}
s_{\ell+1}
=\st\!\bigl(X^{(\ell+1)}\bigr)
\le \alpha\left(1+\frac{m_\ast}{(1-\delta)c_-^{(\ell)}}\right)\st\!\bigl(X^{\mathrm{att},(\ell)}\bigr)
\le (1.01+1.02\,m_\ast)\,\st\!\bigl(X^{\mathrm{att},(\ell)}\bigr).
\end{equation}
Combining \eqref{eq:att-sr-gated} and \eqref{eq:mlp-sr-gated} gives
\[
s_{\ell+1}\le 3.1(1+m_\ast)\,s_\ell=\rho\,s_\ell,
\]
and iterating yields $s_\ell\le s_0\rho^\ell$ for all $\ell$.
Since $\st(X^{(0)})\le 1.01/p_{\max}$ on $\Omega$, we have
\begin{equation}\label{eq:sl-gated-bound}
s_\ell\le \frac{1.01}{p_{\max}}\,\rho^\ell
\qquad \text{for all }\ell=0,\dots,L.
\end{equation}

\paragraph{Activation stable ranks.}
For $A^{\mathrm{rms},(\ell)}=\RMSNorm(X^{(\ell)})$, Theorem~\ref{thm:rms-stablerank} yields
\[
\st\!\bigl(A^{\mathrm{rms},(\ell)}\bigr)\le \frac{c_+^{(\ell)}}{c_-^{(\ell)}}\,s_\ell
\le \Bigl(1.02+1.02(1+m_\ast)\ell\Bigr)\,\frac{1.01}{p_{\max}}\,\rho^\ell,
\]
using $c_-^{(\ell)}\ge 509/512$, $c_+^{(\ell)}\le 1.01+1.01(1+m_\ast)\ell$, and \eqref{eq:sl-gated-bound}.
For $A^{\mathrm{rms},(\ell)}_{\mathrm{mlp}}=\RMSNorm(X^{\mathrm{att},(\ell)})$, Lemma~\ref{lem:block-att-common}(ii) yields
\[
\st\!\bigl(A^{\mathrm{rms},(\ell)}_{\mathrm{mlp}}\bigr)
\le
\frac{(1+\delta)^2}{(1-\delta)^2}\,
\frac{c_+^{(\ell)}+1}{c_-^{(\ell)}}\Bigl(1+\frac{2}{c_-^{(\ell)}}\Bigr)s_\ell
\le \Bigl(6.13+3.09(1+m_\ast)\ell\Bigr)\,\frac{1.01}{p_{\max}}\,\rho^\ell,
\]
using the same bounds on $c_\pm^{(\ell)}$ and \eqref{eq:sl-gated-bound}.

For $B^{(\ell)}$, Lemma~\ref{lem:mlp-gated}(ii) gives
\[
\st\!\bigl(B^{(\ell)}\bigr)
\le C_{\mathrm{PROP}}\,\st\!\bigl(A^{\mathrm{rms},(\ell)}_{\mathrm{mlp}}\bigr),
\]
and substituting the bound for $\st(A^{\mathrm{rms},(\ell)}_{\mathrm{mlp}})$ yields the claimed inequality for $\st(B^{(\ell)})$.

Finally, $A^{\mathrm{rms}}_{\mathrm{final}}=\RMSNorm(X^{(L)})$ satisfies the same bound as $A^{\mathrm{rms},(L)}$ by
Theorem~\ref{thm:rms-stablerank}, giving the last display in (ii).
\end{proof}

\section{Nuclear Rank in random feature models}\label{sec:atinitialization}

The layerwise criterion \eqref{eq:sd-vs-gd-condition} shows that spectral updates are most
advantageous precisely when the nuclear rank of the gradient is large compared to the stable
rank of the incoming activations. At the same time, there is no a priori reason for the
gradient nuclear rank to be large early on in training; in fact, we will see that it can be
bounded by a numerical constant in simple,
yet representative, random-feature models.
The goal of this section is to
show that a \emph{few} gradient steps are already enough to change this picture entirely leading to a nuclear rank that scales linearly with dimension.

Throughout the section, we consider the quadratic random-feature objective
\[
  \mathcal L(W) := \tfrac12\|WA-Y\|_F^2\qquad \textrm{for}\qquad W\in\R^{m\times k},
\]
and study two choices of labels $Y$. In the first model, the ``realizable" case, the labels are
generated by a ground-truth matrix $W_\sharp$ via $Y = W_\sharp A$, and the Gram matrix
$B := AA^\top$ follows the rank-one spike–plus–bulk structure in
Assumption~\ref{assump:spiked}. The main difficulty here is spectral: at initialization the
gradient is $-W_\sharp B$ and inherits the poor conditioning of $B$, so its nuclear rank is
small, whereas after a few steps the relevant block becomes $W_\sharp (I-\eta B)^tB$, and we must
show that the polynomial $(I-\eta B)^tB$ flattens the spectrum of $B$: it squeezes the top spike
down to constant scale while leaving the bulk eigenvalues at constant scale, so that the nuclear 
rank of the resulting gradient grows with the feature dimension. In the second
model, a teacher–student setup, the labels are generated by a different feature map
$\overline A$ as $Y = \overline W\,\overline A$ with
\[
  A = \sigma(VX),\qquad \overline A = \sigma(\overline V X),
\]
so the gradient involves the cross-Gram matrix $\overline A A^\top$. Here we work under the
spiked assumptions~\ref{assump:spiked} and~\ref{assump:spiked2}, which encompass both the basic
rank-one random-feature setting and its multi-spike extensions, and we show that these models
are realized with high probability for ReLU random features with Gaussian inputs and weights:
the gradient nuclear rank is $O(1)$ at initialization but becomes dimension-dependent (of order
$\Theta(d)$) after a few gradient steps. Moreover, we show that it then remains $\Omega(d)$
over a window of at least $\Theta(d)$ gradient-descent iterates
(Theorems~\ref{thm:multimodel1} and~\ref{thm:main_thrmgrfjishdfklds}); a refinement of the same arguments
yields longer $\Theta(d\log d)$ windows at the cost of a slightly lower nuclear rank, as explained in the subsequent remark.

Before turning to the formal statements, we illustrate these effects numerically in both
random-feature models. Figures~\ref{fig:rf-nr-vs-step-both} and~\ref{fig:rf-first-step-both}
show how the gradient nuclear rank evolves over the first few gradient steps and how its
first-step value scales with the feature dimension~$k$.

\begin{figure}[t]
\centering
\begin{subfigure}[t]{0.48\textwidth}
  \centering
  \includegraphics[width=\linewidth]{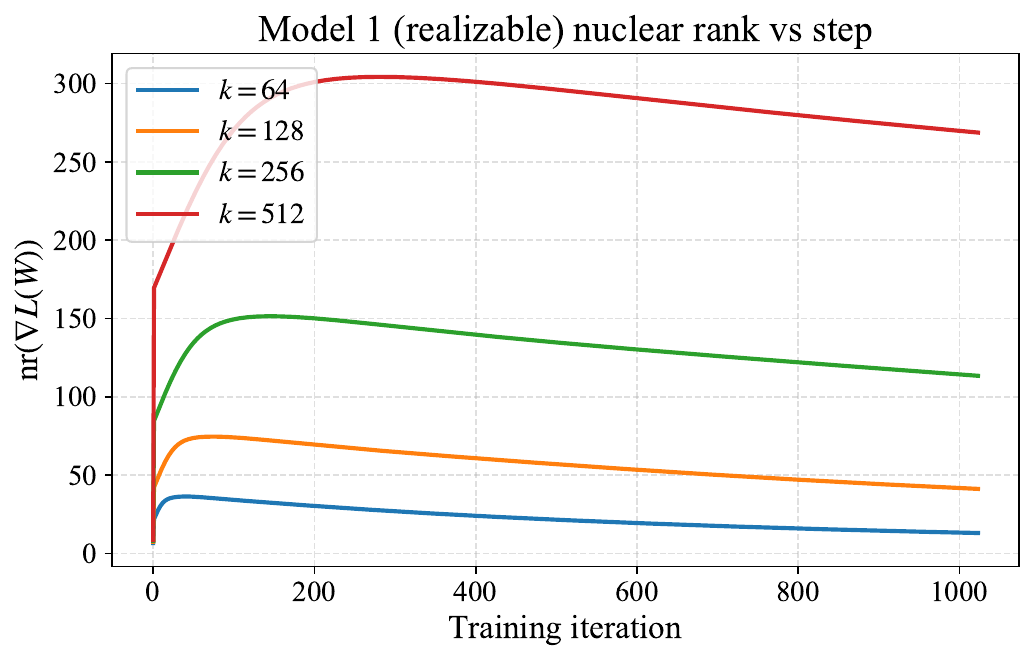}
  \caption{Realizable random-feature model.}
  \label{fig:rf-model1-nr-vs-step}
\end{subfigure}\hfill
\begin{subfigure}[t]{0.48\textwidth}
  \centering
  \includegraphics[width=\linewidth]{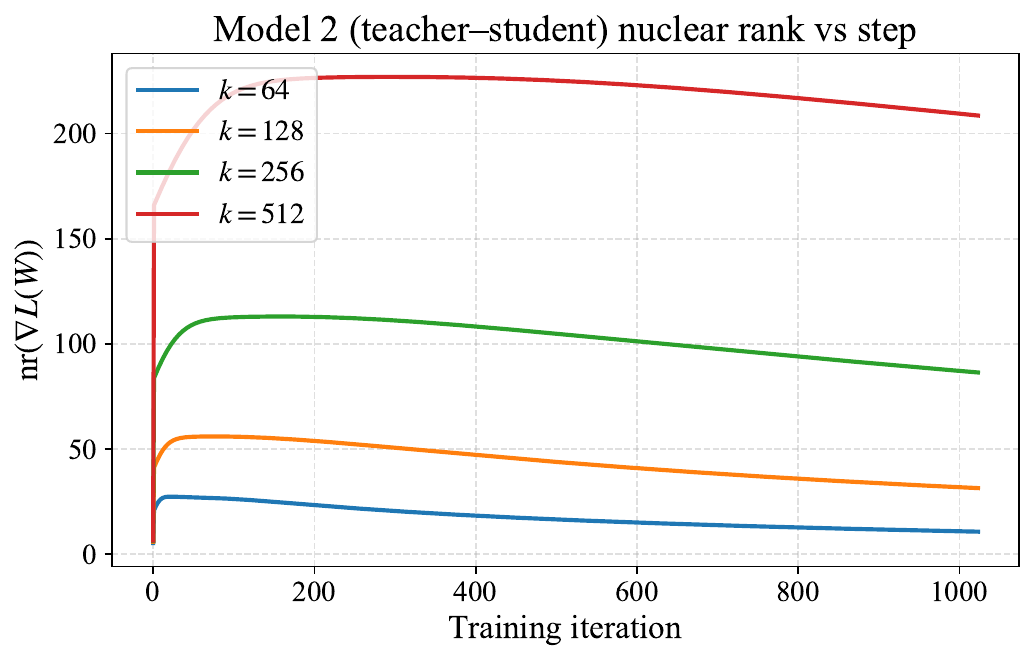}
  \caption{Teacher--student random-feature model.}
  \label{fig:rf-model2-nr-vs-step}
\end{subfigure}
\caption{Gradient nuclear rank as a function of the gradient step. 
We plot $\nr(\nabla_W \cL(W_t))$ for several feature dimensions~$k$. Here $n = 2k$ and $d = k/2$.
In both the realizable (left) and teacher--student (right) models, the nuclear rank grows rapidly in the first few steps and does so more strongly for larger~$k = 2d$, indicating that spectral updates become increasingly advantageous as the feature dimension grows.}
\label{fig:rf-nr-vs-step-both}
\end{figure}

\begin{figure}[t]
\centering
\begin{subfigure}[t]{0.48\textwidth}
  \centering
  \includegraphics[width=\linewidth]{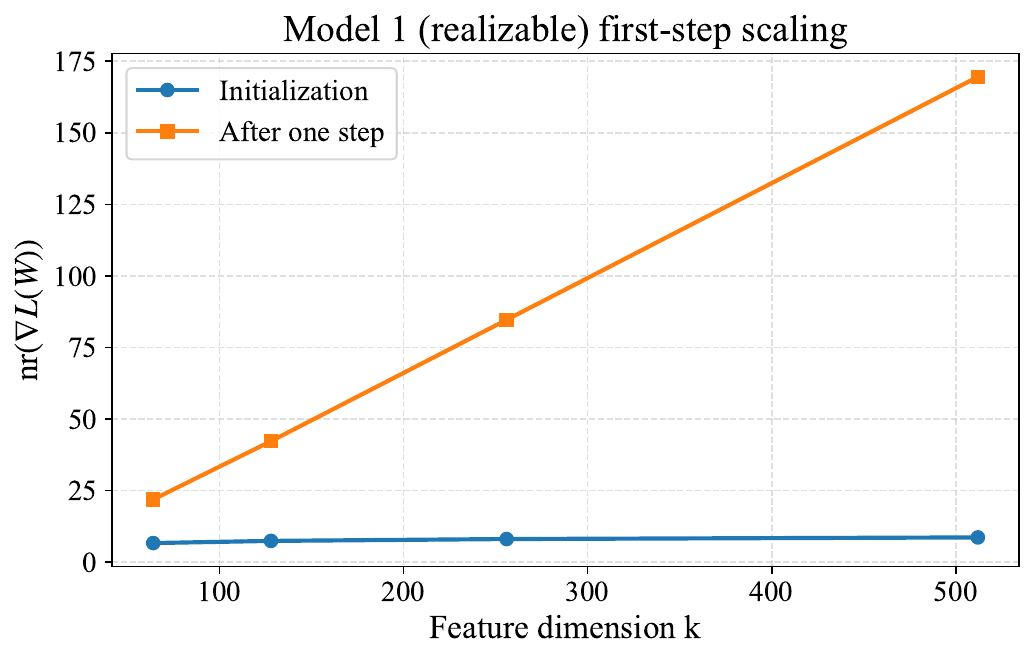}
  \caption{Realizable random-feature model.}
  \label{fig:rf-model1-first-step}
\end{subfigure}\hfill
\begin{subfigure}[t]{0.48\textwidth}
  \centering
  \includegraphics[width=\linewidth]{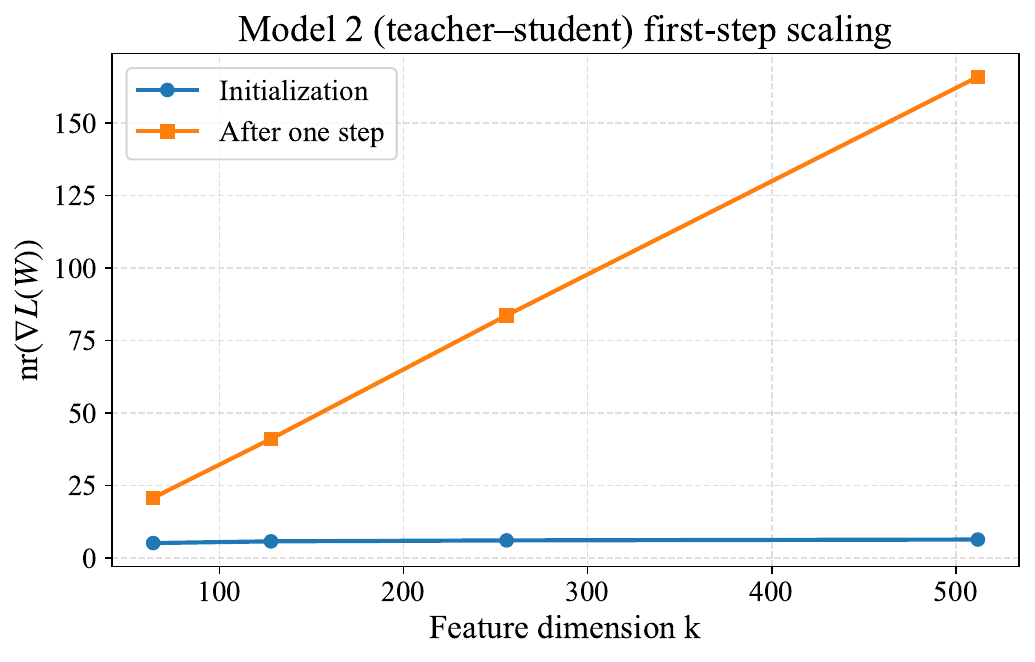}
  \caption{Teacher--student random-feature model.}
  \label{fig:rf-model2-first-step}
\end{subfigure}
\caption{First-step scaling of the gradient nuclear rank with feature dimension. 
For each feature dimension~$k$ we plot the nuclear rank at initialization, $\nr(\nabla_W \cL(W_0))$, and after one gradient step, $\nr(\nabla_W \cL(W_1))$. Here $n = 2k$ and $d = k/2$. 
In both the realizable (left) and teacher--student (right) models, the initialization nuclear rank is essentially independent of~$k$, while the post-first-step nuclear rank grows approximately linearly with~$k$. 
This illustrates the regime captured by our theory: the condition $\nr(\nabla_W \cL(W_1)) \gg \st(A)$ is increasingly activated as the feature dimension increases.}
\label{fig:rf-first-step-both}
\end{figure}

\paragraph{Notation.} For any $V\in \R^{m\times k}$ we  order its singular values as 
$s_1(V)\geq s_1(V) \geq \ldots \geq s_{\min\{k,m\}}(V).$
Similarly, the eigenvalues of any symmetric matrix $B\in \R^{k\times k}$ are 
$\lambda_1(B)\geq \lambda_2(B)\geq \ldots \geq \lambda_k(B).$ 

\subsection{Random feature model I}
In this section, we consider the problem of random feature regression, discussed in Section~\ref{sec:rand_regress} of the introduction. We show two interesting facts that hold under mild assumptions: (1) when initializing at the origin $W_0$, the nuclear rank $\nr(\nabla \mathcal{L}(W_0))$ is upper bounded by a numerical constant and (2) after a single gradient step the nuclear rank $\nr(\nabla \mathcal{L}(W_1))$ grows with the dimension. Together these facts show that one can expect the nuclear rank to be large at least early on in training and therefore for spectral descent to outperform the Euclidean gradient method.

Setting the stage, consider the problem 
\begin{equation}\label{eqn:rfm_app}
    \min_{W\in \mathbb{R}^{m\times k}}~ \mathcal{L}(W):=\tfrac{1}{2n}\|WA-Y\|_F^2\qquad \textrm{where}\qquad Y=W_{\sharp}A.
\end{equation}
The dimensions of the involved matrices are $W,W_{\sharp}\in \R^{m\times k}$ and $A\in \R^{k\times n}$ and we will be interested in the proportionate regime $$d\asymp k\qquad \textrm{as }\quad d\to \infty.$$ 
We will focus on the setting when the Gram matrix $B:=\tfrac{1}{n}AA^{\top}$ has a ``spike+bulk'' structure:

\begin{assumption}[Spiked model]\label{assump:spiked}{\rm
Suppose that we may write
    \begin{equation}\label{eqn:spike_bulk}
B=uu^\top+\Sigma,
\end{equation}
for some vector $u\in\R^{k}$ satisfying $\|u\|_2^2\asymp d$ and a positive definite matrix $\Sigma\in\R^{k\times k}$ satisfying $c_1 I_k\preceq \Sigma \preceq c_2 I_k$ for some constants $0< c_1\leq c_2<\infty$.}
\end{assumption}

In particular, the rank-one interlacing theorem \cite[Chapter 4]{HornJohnson2013} implies separation of eigenvalues:
$$\lambda_1(B)\asymp d\qquad \textrm{and}\qquad \lambda_{i}(B)\in [c_1,c_2]\qquad \forall i\geq 2.$$
Spiked data matrices play an prominent role in random matrix theory \cite{johnstone2001distribution,baik2005phase}. The main example of spiked data matrices for us is the post-activation matrix generated from random data. For the sake of concreteness, we focus on the ReLU activation function, although many other activation functions elicit the same phenomenon.

\subsubsection{ReLU post-activation matrices follow the spiked model}
Consider the matrix $A=\sigma(VX)$ for some fixed matrix $V\in \R^{k\times d}$ and where $X\in \R^{d\times n}$ is a random matrix with iid standard Gaussian entries $\mathcal{N}(0,1)$ and $\sigma$ is the ReLU activation function. We will show that under a mild incoherence condition on the rows of $V$, the post-activation matrix $A$ indeed follows the spiked model (Assumption~\ref{assump:spiked}).
To see this, we first analyze the expectation of $AA^T$. Observe that setting $a:=\sigma(Vx)$ for a standard Gaussian random vector $x\in \R^{d}$, we may write
$$\EE\left[\tfrac{1}{n}AA^T\right]=\EE[aa^\top]=\mu\mu^\top+\Sigma,$$
where we define
$$\mu:=\EE[a]\qquad \textrm{and}\qquad \Sigma:={\rm Cov}(a)=\EE(a-\mu)(a-\mu)^{\top}.$$
A standard computation shows the explicit formulas:
$$\mu=\frac{1}{2\pi}\begin{bmatrix}\|v_1\|_2&\ldots &\|v_k\|_2\end{bmatrix}^{\top}\qquad\textrm{and}\qquad \Sigma_{ij}
= \frac{\|v_i\|_2\|v_j\|_2}{2\pi}
\varphi\left(\frac{\langle v_i,v_j\rangle}{\|v_i\|_2\|v_j\|_2}\right)
,$$
where we set $\varphi(t)=\left(
\sqrt{1-t^2}
+ (\pi - \arccos t)\, t
- 1
\right)$; see Figure~\ref{eqn:kappa_rep} for an illustration of  $\varphi$.

\begin{figure}[h]
\centering
\begin{subfigure}[t]{0.48\textwidth}
  \centering
  \includegraphics[width=\linewidth]{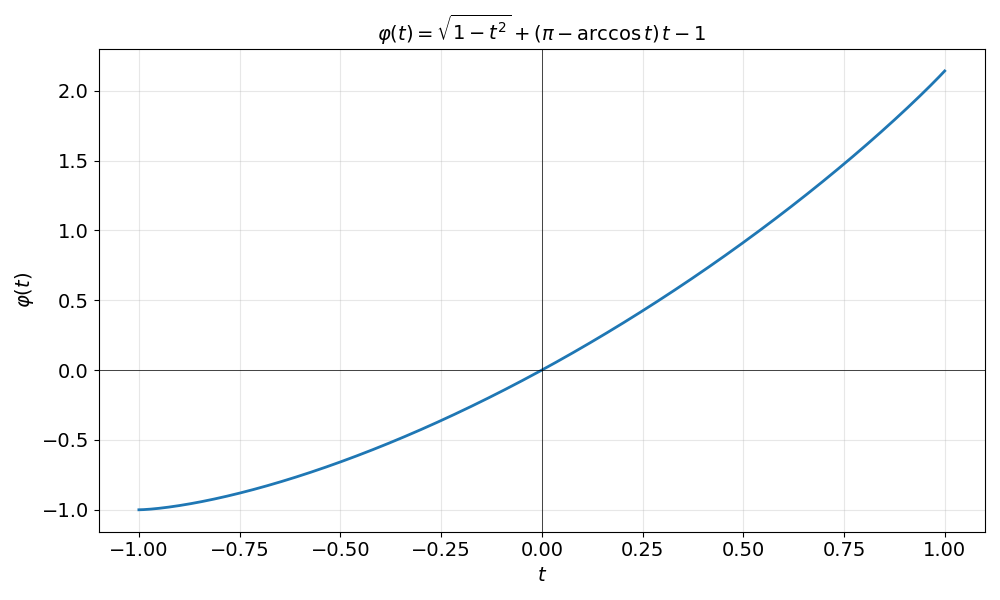}
  \caption{The reparametrizing function $\varphi$\label{eqn:kappa_rep}}
\end{subfigure}
\caption{Illustration of $\varphi$.}
\end{figure}

Thus $\EE[\frac{1}{n}AA^{\top}]$ has the spike+bulk structure \eqref{eqn:spike_bulk} with $\|\mu\|^2_2=\frac{1}{4\pi^2}\|V\|^2_F$. Now in order to estimate the extremal eigenvalues of $\Sigma$, we will have to assume that the rows of $V$ have roughly the same norm and are nearly orthogonal---a form of incoherence. This is the content of Theorem~\ref{thm:incoherence}. The proof of Theorem~\ref{thm:incoherence} relies on the following  lemma describing a few standard properties of the Hadamard product.  Namely, item~\ref{it1} is elementary, item~\ref{it2} appears in \cite[Thm.~1.4.1]{Bhatia2007PDM}, and item~\ref{it3} is the Schur product theorem \cite[Sec.~1.2]{Bhatia2007PDM}. 
\begin{lem}[Properties of Hadamard products]\label{lem:gen_hadamard}
For any two matrices $Q, P\in \mathbb{R}^{k\times k}$, it holds:
\begin{enumerate}
    \item \label{it1}
    $\|Q\odot P\|_{\op}\leq \max_{ij}|Q_{ij}|\cdot\,\|P\|_{F}$
    \item \label{it2} $\|Q\odot P\|_{\op}\leq  \max_{i} Q_{ii}\cdot \|P\|_{\op}$ if $Q\succeq 0$.
    \item \label{it3} $Q\odot P\succeq 0$ if $Q\succeq 0$ and $P\succeq 0$.
\end{enumerate}
\end{lem}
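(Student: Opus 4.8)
The plan is to prove the three items in order, using item~\ref{it3} (the Schur product theorem) as an input to item~\ref{it2}, and otherwise treating everything as standard matrix algebra. For item~\ref{it1}, the bound is purely entrywise: since the operator norm is dominated by the Frobenius norm, $\|Q\odot P\|_{\op}\le\|Q\odot P\|_F$, and
\[
\|Q\odot P\|_F^2=\sum_{i,j}Q_{ij}^2P_{ij}^2\le\Big(\max_{i,j}|Q_{ij}|\Big)^2\|P\|_F^2 ,
\]
so taking square roots gives the claim with no structural hypothesis on $Q$ or $P$.

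For item~\ref{it3}, I would pick any PSD decompositions $Q=\sum_s u_su_s^\top$ and $P=\sum_t v_tv_t^\top$ (e.g.\ from the spectral theorem, or Cholesky), and use the elementary identity $(ab^\top)\odot(cd^\top)=(a\odot c)(b\odot d)^\top$ to write $Q\odot P=\sum_{s,t}(u_s\odot v_t)(u_s\odot v_t)^\top$, a nonnegative combination of rank-one PSD matrices, hence PSD. This is the usual proof of the Schur product theorem and needs nothing beyond bilinearity of $\odot$.

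For item~\ref{it2}, I would first handle the case of symmetric $P$ via a ``squeeze'' argument built on item~\ref{it3}: from $\|P\|_{\op}I\pm P\succeq0$ and item~\ref{it3} we get $Q\odot(\|P\|_{\op}I\pm P)\succeq0$; since $Q\odot(\|P\|_{\op}I)=\|P\|_{\op}\,\mathrm{diag}(Q_{11},\dots,Q_{kk})\preceq\|P\|_{\op}\big(\max_iQ_{ii}\big)I$, this yields $\pm\,Q\odot P\preceq\|P\|_{\op}\big(\max_iQ_{ii}\big)I$, i.e.\ $\|Q\odot P\|_{\op}\le\big(\max_iQ_{ii}\big)\|P\|_{\op}$. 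For general (not necessarily symmetric) $P$ I would instead argue directly from a Gram representation $Q_{ij}=\langle w_i,w_j\rangle$: for unit vectors $x,y$, the quantity $y^\top(Q\odot P)x$ can be rewritten, after grouping the coordinates of the vectors $\{y_iw_i\}_i$ and $\{x_jw_j\}_j$, as a sum of terms $\tilde y_a^\top P\,\tilde x_a$, and Cauchy--Schwarz bounds it by $\|P\|_{\op}\sqrt{\sum_iy_i^2Q_{ii}}\,\sqrt{\sum_jx_j^2Q_{jj}}\le\big(\max_iQ_{ii}\big)\|P\|_{\op}$. Either route gives the stated inequality, which is also exactly \cite[Thm.~1.4.1]{Bhatia2007PDM}.

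I do not expect a genuine obstacle here: items~\ref{it1} and~\ref{it3} are one-line classical facts, and item~\ref{it2} is routine once item~\ref{it3} is available. The only point that calls for a little care is the non-symmetric case of item~\ref{it2}, where the squeeze argument no longer applies verbatim and must be replaced by the Gram-vector/Cauchy--Schwarz computation above (or one simply invokes Bhatia, as the paper does, since in all of its applications $P$ is in fact symmetric PSD).
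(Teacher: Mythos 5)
Your proposal is correct. The paper does not actually prove this lemma: it declares item~\ref{it1} elementary and cites \cite[Thm.~1.4.1 and Sec.~1.2]{Bhatia2007PDM} for items~\ref{it2} and~\ref{it3}, so there is no in-paper argument to compare against; your proofs are precisely the standard ones from those references (Frobenius domination for item~\ref{it1}, the rank-one decomposition for the Schur product theorem, and the squeeze/Gram-vector argument for item~\ref{it2}). Your extra care about non-symmetric $P$ in item~\ref{it2} is well placed and your Cauchy--Schwarz computation handles it correctly, though in every application in the paper (e.g.\ Theorems~\ref{thm:incoherence} and~\ref{thm:exp}) the matrix $P$ is symmetric positive semidefinite, so the simpler squeeze argument built on item~\ref{it3} would already suffice there.
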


\begin{thm}[Perfect conditioning of the covariance]\label{thm:incoherence}
Suppose that the rows $v_i$ of $V$ are incoherent in the following sense: there exist constants $\varepsilon\in (0,1)$ and $c>0$ satisfying  
\begin{equation}\label{eqn:incoherence}
1-\varepsilon\leq \|v_i\|^2_2\leq 1+\varepsilon\qquad \textrm{and}\qquad \frac{|\langle v_i,v_j\rangle|}{\|v_i\|\|v_j\|}\leq c\sqrt{\frac{\log(k)}{{d}}}\qquad \forall i\neq j. 
\end{equation}
Then $\Sigma$ satisfies the two-sided bound:
\begin{equation}\label{eqn:two_sided}
\left(\tfrac{1+\pi^{-1}(1-\varepsilon)^{-1}}{4}\|V\|_{\op}^2+\tfrac{(\pi-3)(1+\varepsilon)}{4\pi}+O(\tfrac{k\log^2(k)}{d^2})\right)I_{k}\succeq\Sigma\succeq \tfrac{1}{4}VV^{\top} +\left(\tfrac{(\pi-3)(1-\varepsilon)}{4\pi}+O(\tfrac{k\log^2(k)}{d^2})\right)I_k.
\end{equation}
\end{thm}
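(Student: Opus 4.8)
The plan is to work from the closed form $\Sigma_{ij}=\tfrac{\|v_i\|\|v_j\|}{2\pi}\,\varphi(\rho_{ij})$ with $\rho_{ij}:=\langle v_i,v_j\rangle/(\|v_i\|\|v_j\|)$, and to subtract off the first two Taylor coefficients of $\varphi$ at the origin. Introduce $G:=VV^\top$, the correlation matrix $\bar G:=[\rho_{ij}]_{ij}$, and $N:=\diag(\|v_1\|,\dots,\|v_k\|)$, so that $G=N\bar GN$; note $G,\bar G\succeq 0$ (Gram matrices of the rows $v_i$) and $\bar G$ has unit diagonal. Writing $\tfrac1{2\pi}\varphi(\rho)=\tfrac{\rho}{4}+\tfrac{\rho^2}{4\pi}+h(\rho)$ with
\[
  h(\rho):=\tfrac1{2\pi}\varphi(\rho)-\tfrac{\rho}{4}-\tfrac{\rho^2}{4\pi},
\]
and using $\|v_i\|\|v_j\|\rho_{ij}=\langle v_i,v_j\rangle$ and $\|v_i\|\|v_j\|\rho_{ij}^2=(\bar G\odot G)_{ij}$, one gets the exact identity
\[
  \Sigma \;=\; \tfrac14 VV^\top \;+\; \tfrac1{4\pi}\,(\bar G\odot G) \;+\; NTN,
  \qquad T:=[h(\rho_{ij})]_{ij}.
\]
The first summand supplies $\tfrac14 VV^\top$ in the lower bound and $\tfrac14\|V\|_{\op}^2$ in the upper bound; the middle summand is nonnegative by the Schur product theorem (Lemma~\ref{lem:gen_hadamard}) and contributes only $\tfrac1{4\pi}\|V\|_{\op}^2$ to the upper bound; so the whole statement reduces to controlling $NTN$.

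Next I would record the analytic facts about $h$. Differentiating $\varphi(t)=\sqrt{1-t^2}+(\pi-\arccos t)\,t-1$ gives $\varphi'(t)=\pi-\arccos t$ and $\varphi''(t)=(1-t^2)^{-1/2}$, so $\varphi(0)=\varphi'''(0)=0$, $\varphi'(0)=\pi/2$, $\varphi''(0)=1$; hence $h(0)=h'(0)=h''(0)=h'''(0)=0$. By Taylor's theorem $h(\rho)=O(\rho^4)$ as $\rho\to0$, and since $h$ is continuous on $[-1,1]$ and $C^\infty$ on $(-1,1)$, the function $\rho\mapsto h(\rho)/\rho^4$ (with its limiting value at $0$) is continuous on the compact interval $[-1,1]$, hence bounded: there is an absolute constant $C_h$ with $|h(\rho)|\le C_h\,\rho^4$ for all $|\rho|\le 1$. (The only subtlety is that $\varphi''$ blows up at $\pm1$, but $\varphi,\varphi'$ stay finite there, so $h$ is continuous up to the endpoints.) Finally, from $\varphi(1)=\pi-1$ one gets $h(1)=\tfrac{\pi-1}{2\pi}-\tfrac14-\tfrac1{4\pi}=\tfrac{\pi-3}{4\pi}$.

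Now combine these with the incoherence hypothesis \eqref{eqn:incoherence}, which gives $\rho_{ij}^2\le c^2\log(k)/d$ for $i\ne j$ and $\|v_i\|^2\in[1-\varepsilon,1+\varepsilon]$, i.e.\ $(1-\varepsilon)I\preceq N^2\preceq(1+\varepsilon)I$. The diagonal entries of $T$ are all $h(1)=\tfrac{\pi-3}{4\pi}$, and the off-diagonal entries satisfy $|T_{ij}|\le C_h\rho_{ij}^4\le C_h c^4\log^2(k)/d^2$, so writing $T=\tfrac{\pi-3}{4\pi}I+T_{\mathrm{off}}$ we have $\|T_{\mathrm{off}}\|_{\op}\le\|T_{\mathrm{off}}\|_F\le C_h c^4\,k\log^2(k)/d^2 =: \eta = O\!\big(k\log^2(k)/d^2\big)$. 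For the lower bound I drop the Schur-PSD term $\tfrac1{4\pi}(\bar G\odot G)\succeq0$ and estimate $NTN=\tfrac{\pi-3}{4\pi}N^2+NT_{\mathrm{off}}N\succeq \tfrac{(\pi-3)(1-\varepsilon)}{4\pi}I-(1+\varepsilon)\eta\,I$, which yields $\Sigma\succeq\tfrac14 VV^\top+\big(\tfrac{(\pi-3)(1-\varepsilon)}{4\pi}+O(k\log^2(k)/d^2)\big)I$. For the upper bound I use subadditivity of $\lambda_{\max}$ (Weyl): $\lambda_{\max}(\Sigma)\le\tfrac14\|V\|_{\op}^2+\tfrac1{4\pi}\|\bar G\odot G\|_{\op}+\|NTN\|_{\op}$, where $\|\bar G\odot G\|_{\op}\le(\max_i\bar G_{ii})\|G\|_{\op}=\|V\|_{\op}^2$ by the Hadamard--operator-norm bound in Lemma~\ref{lem:gen_hadamard}, and $\|NTN\|_{\op}\le(1+\varepsilon)\big(\tfrac{\pi-3}{4\pi}+\eta\big)$; summing and using $\tfrac14+\tfrac1{4\pi}\le\tfrac14+\tfrac1{4\pi(1-\varepsilon)}=\tfrac{1+\pi^{-1}(1-\varepsilon)^{-1}}{4}$ gives the stated upper bound.

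The one real design choice --- and the step that must be gotten right --- is to subtract off \emph{both} the linear and the quadratic parts of $\varphi$ at $0$, not merely the linear part. If one kept only $\tfrac{\rho}{4}$, the remainder matrix would have off-diagonal entries of size $\rho_{ij}^2=O(\log(k)/d)$, forcing an error term $O(k\log(k)/d)$; peeling off $\tfrac{\rho^2}{4\pi}$ as its own (Schur-PSD) block pushes the remainder's off-diagonal entries down to $O(\rho_{ij}^4)=O(\log^2(k)/d^2)$ --- exactly the $O(k\log^2(k)/d^2)$ in the statement --- at the price of the slightly smaller constant $\tfrac{\pi-3}{4\pi}$ (rather than $\tfrac{\pi-2}{4\pi}$) on the identity in the lower bound. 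Everything else is routine: Weyl's inequality, the Schur product theorem, the Hadamard norm bounds of Lemma~\ref{lem:gen_hadamard}, and the bracketing $(1-\varepsilon)I\preceq N^2\preceq(1+\varepsilon)I$.
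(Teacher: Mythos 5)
Your proof is correct and follows the same approach as the paper: Taylor-expand $\varphi$ at the origin through the quadratic term, recognize the linear and quadratic pieces as $\tfrac14 VV^\top$ and a Schur–PSD Hadamard term, and control the quartic remainder entrywise by $O(\log^2(k)/d^2)$ using the incoherence hypothesis and the Hadamard bounds of Lemma~\ref{lem:gen_hadamard}. The only (cosmetic) differences are that you package the remainder as $h(\rho)=\tfrac1{2\pi}\varphi(\rho)-\tfrac{\rho}{4}-\tfrac{\rho^2}{4\pi}$ and evaluate $h(1)=\tfrac{\pi-3}{4\pi}$ directly, whereas the paper expands only $\od(K)$ and adds back $\Diag(K)=(\pi-1)I$; and your Hadamard bound $\|\bar G\odot G\|_{\op}\le\|G\|_{\op}$ (using $\bar G_{ii}=1$) is actually marginally tighter than what the statement requires, which you correctly note and relax to match the claimed constant.
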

\begin{proof}
Notice that we may write
$$\Sigma=\tfrac{1}{2\pi}\Lambda K\Lambda$$
where the entries of $K$ are $K_{ij}=\varphi\left(\frac{\langle v_i,v_j\rangle}{\|v_i\|_2\|v_j\|_2}\right)$ and $\Lambda$ is a diagonal matrix with $\Lambda_{ii}=\|v_i\|$. We define $\widehat V\in \R^{k\times d}$ to be a matrix whose $i$'th row is the normalized vector $v_i/\|v_i\|_2$.
We will treat $K$ first and deal with $\Lambda$ at the end of the argument. 

The proof proceeds by carefully Taylor expanding $\varphi$ and treating diagonal and off-diagonal elements separately. First, a quick computation yields the derivatives:
$$\varphi(0)=0,~~\varphi'(0)=\frac{\pi}{2},~~\varphi''(0)=1,~~\varphi^{(3)}(0)=0,~~ \varphi^{(4)}(0)=1, ~~\varphi^{(5)}(0)=0.$$
For any square matrix $Q$, let $\od(Q)$ denote the matrix obtained by zeroing out the diagonal of $Q$, that is $\od(Q)=Q-\Diag(Q)$.
Using Taylor's theorem with remainder, we may write
\begin{equation}\label{eqn:basic_offdiag}
\od(K)=\frac{\pi}{2}\od(\widehat{V}\widehat{V}^{\top})+\frac{1}{2}\od((\widehat{V}\widehat{V}^{\top})^{\odot 2})+\frac{1}{4!} D\odot \od((\widehat{V}\widehat{V}^{\top})^{\odot 4}),
\end{equation}
where the entries of $D$ are fourth-order derivatives of $\varphi$ at some points in the interval $[-\tfrac{c}{\sqrt{d}},\tfrac{c}{\sqrt{d}}]$ and are therefore bounded by some numerical constant. Using Lemma~\ref{lem:gen_hadamard} (Item~\ref{it1}), we deduce 
$$\left\| D\odot \od((\widehat{V}\widehat{V}^{\top})^{\odot 4})\right\|_{\op}\leq \max_{ij} |D_{ij}|\cdot \left\|\od((\widehat{V}\widehat{V}^{\top})^{\odot 4})\right\|_{F}=O(k\log^2(k)d^{-2}),$$
where the last equality follows from the fact that each entry of $\od((\widehat{V}\widehat{V}^{\top})^{\odot 4})$ is bounded by $O(\log^2(k)d^{-2})$. Adding back the diagonal $\Diag(K)=(\pi-1)I_{k}$ to \eqref{eqn:basic_offdiag} we therefore obtain
$$K=\tfrac{\pi}{2}\widehat{V}\widehat{V}^{\top}+\tfrac{1}{2}(\widehat{V}\widehat{V}^{\top})^{\odot 2}+\tfrac{\pi-3}{2}I_k+E$$
where the error term satisfies $\|E\|_{\op}=O(k\log^2(k)d^{-2})$.
Conjugating by $\Lambda$ and multiplying by $\frac{1}{2\pi}$ yields the expression 
\begin{equation}\label{eqn:sigma_eqn}
\Sigma=\frac{1}{4}VV^{\top}+\frac{1}{4\pi}\Lambda^{-1}(VV^T)^{\odot 2}\Lambda^{-1}+\frac{\pi-3}{4\pi}\Lambda^2+\frac{1}{2\pi} E.
\end{equation}
Using Lemma~\ref{lem:gen_hadamard} (Item \ref{it3}) we see that $(VV^{\top})^{\odot 2}$ is positive semidefinite, and therefore we obtain the claimed lower bound: $$\Sigma\succeq \frac{1}{4}VV^{\top} +\tfrac{(\pi-3)(1-\varepsilon)}{4\pi}I_k+\frac{1}{2\pi} E.$$
Conversely, using Lemma~\ref{lem:gen_hadamard} (Item \ref{it2}), we see that $\|({V}{V}^{\top})^{\odot 2}\|_{\op}\leq \|{V}{V}^{\top}\|_{\op}$ and therefore from \eqref{eqn:sigma_eqn} we deduce $$\Sigma\preceq \left(\tfrac{1+\pi^{-1}(1-\varepsilon)^{-1}}{4}\|V\|_{\op}^2+\tfrac{(\pi-3)(1+\varepsilon)}{4\pi}+O(k\log^2(k)d^{-2})\right)I_{k},$$ which completes the proof.
\end{proof}

In words, Theorem~\ref{thm:incoherence} shows that under the incoherence condition~\eqref{eqn:incoherence} and in the regime $k/d^2\to 0$ as $k$ and $d$ tend to infinity, $\EE[nB]$ follows the spiked model \eqref{eqn:spike_bulk} with $\|u\|_2^2\asymp\|V\|_F^2$ and $\Sigma$ satisfying 
$c_1\preceq \Sigma \preceq c_2(1+\|V\|^2_{\rm op})$ where $c_1,c_2>0$ are numerical constants. Reassuringly random matrices $V$ with iid Gaussian entries $\mathcal{N}(0,\frac{1}{d}I)$ satisfy the incoherence condition \eqref{eqn:incoherence} and moreover $\|V\|_{\rm op}$ has order $O(1)$ in the regime $k \asymp d$ while the spike has order $\|\mu\|^2_2\asymp d$. This is the content of the following lemma; we omit the proof since it is standard.

\begin{lem}[Incoherence with random $V$]
Let $v_1,\dots,v_n\in\mathbb{R}^d$ be independent random vector  with
$\,v_i\sim\mathcal N(0,\tfrac1d I_d)$ and suppose  $\frac{\log k}{d}\le \tfrac{1}{4}$. Then we have
\begin{equation}\label{eqn:prob_eqn_poly}
\mathbb{P}\!\left(
\max_{i\neq j}\left|\frac{\langle v_i,v_j\rangle}{\|v_i\|_2\|v_j\|_2}\right|
\le 4\sqrt{\frac{\log k}{d}}
\ \text{ and }\
\max_{i\leq k}\big|\|v_i\|_2^2-1\big|
\le 4\sqrt{\frac{\log k}{d}}
\right)
\ \ge\
1-\frac{4}{k}.
\end{equation}
Moreover, with probability at least $1-4e^{-c\delta^2 d}$ it holds:
\begin{equation}\label{eqn:matrix_norm_bounds}
    \|V\|_{\op}\;\le\;\sqrt{\frac{k}{d}}+(1+\delta)\qquad\textrm{and}\qquad
(1-\delta)\,k \;\le\; \|V\|_{F}^2 \;\le\; (1+\delta)\,k.
\end{equation}
\end{lem}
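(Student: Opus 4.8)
The plan is to establish the three assertions — the pairwise cosine-similarity bound, the squared row-norm bound, and the bounds on $\|V\|_{\op}$ and $\|V\|_F$ — by separate but entirely standard Gaussian concentration estimates, and then to combine them by a union bound. Throughout I write $v_1,\dots,v_k$ for the rows of $V$, so that $d\|v_i\|_2^2\sim\chi^2_d$ for each $i$ and the entries of $\sqrt d\,V\in\R^{k\times d}$ are i.i.d.\ $\mathcal N(0,1)$.

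First I would control the squared row norms. By Bernstein's inequality for $\chi^2$ variables (equivalently the Laurent–Massart bound), $\PP\bigl(\bigl|\|v_i\|_2^2-1\bigr|\ge t\bigr)\le 2\exp(-c\,d\,t^2)$ for $t$ in the range permitted here; the hypothesis $\log k/d\le\tfrac14$ keeps $t:=4\sqrt{\log k/d}$ bounded, so the quadratic Laurent–Massart correction is dominated by the linear term and one remains in the sub-Gaussian regime of the $\chi^2$ tail. With this choice of $t$ the per-row failure probability is at most $2/k^2$, and a union bound over the $k$ rows gives $\max_i\bigl|\|v_i\|_2^2-1\bigr|\le 4\sqrt{\log k/d}$ with probability at least $1-2/k$.

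Next I would bound the pairwise cosine similarities $\rho_{ij}:=\langle v_i,v_j\rangle/(\|v_i\|_2\|v_j\|_2)$ for $i\ne j$. Since a standard Gaussian vector, once normalized, is uniform on $\sphere^{d-1}$ and independent of its length, conditioning on $v_j$ shows that $\rho_{ij}$ has the distribution of the inner product of two independent uniform points on $\sphere^{d-1}$, and by rotational invariance this has the same law as a single coordinate of a uniform point on $\sphere^{d-1}$. The classical spherical-marginal tail bound $\PP(|\rho_{ij}|\ge s)\le 2\exp(-c\,d\,s^2)$ evaluated at $s=4\sqrt{\log k/d}$ is at most $2k^{-c'}$ with $c'$ large (much smaller than $1/k^2$), so a union bound over the at most $k^2$ pairs gives $\max_{i\ne j}|\rho_{ij}|\le 4\sqrt{\log k/d}$ with overwhelming probability. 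Intersecting this with the norm event of the previous paragraph yields the first claimed inequality with probability at least $1-4/k$.

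For the last display I would invoke two well-known facts. Writing $\|V\|_F^2=\tfrac1d\,\chi^2_{kd}$ and applying the same $\chi^2$ concentration at deviation $\delta$ gives $(1-\delta)k\le\|V\|_F^2\le(1+\delta)k$ with probability at least $1-2\exp(-c\delta^2 kd)\le 1-2\exp(-c\delta^2 d)$, using $k\ge1$. For the operator norm, since $\sqrt d\,V$ has i.i.d.\ $\mathcal N(0,1)$ entries, the Davidson–Szarek / Gordon bound gives $\|\sqrt d\,V\|_{\op}\le\sqrt k+\sqrt d+r$ with probability at least $1-2\exp(-r^2/2)$; taking $r=\delta\sqrt d$ and dividing by $\sqrt d$ yields $\|V\|_{\op}\le\sqrt{k/d}+(1+\delta)$ with probability at least $1-2\exp(-\delta^2 d/2)$. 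A union bound over these two events gives the stated probability $1-4\exp(-c\delta^2 d)$ after adjusting $c$. None of these steps presents a genuine obstacle — each is a textbook concentration inequality — and the only thing requiring care is matching the explicit numerical constants in the first two displays, which is exactly where the hypothesis $\log k/d\le\tfrac14$ enters: it keeps the relevant deviation level $4\sqrt{\log k/d}$ small enough that the clean sub-Gaussian and spherical tail estimates apply with the claimed constant.
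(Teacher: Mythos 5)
Your proposal is correct and takes the approach the paper must have in mind (the paper declares the proof "standard" and omits it): $\chi^2$ concentration for the squared row norms, the uniform-sphere marginal tail bound for the pairwise cosines, the Davidson--Szarek/Gordon bound for $\|V\|_{\op}$, and another $\chi^2$ bound for $\|V\|_F^2$, all glued together by a union bound. The only place that warrants extra care is the one you already flag: at the boundary $\log k/d=1/4$ the deviation $t=4\sqrt{\log k/d}$ is as large as $2$, so the sub-Gaussian Laurent--Massart form $2\exp(-c\,d\,t^2)$ is no longer valid as written and the exact Chernoff bound $\exp\!\bigl(-\tfrac{d}{2}(t-\log(1+t))\bigr)$ does not quite reproduce the constants $4$ and $1-4/k$ without further slack; this is a constant-matching issue in the lemma's statement rather than a flaw in your argument.
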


It remains to take into account concentration of $\frac{1}{n}AA^\top$ around its mean in order to deduce that it follows the spiked model. This is the content of the following theorem, which is immediate from standard results on concentration of covariance matrices. In the theorem, we denote the $i$'th column of A by $a_i$.

\begin{thm}[Spiked model from Gaussian initialization]\label{th:gauss_spike1}
Let $V\in \R^{k\times d}$ have iid Gaussian entries $N(0,\tfrac{1}{d} I_d)$. Then there exist constants $C_0,C_1,C_2>0$ such that in the regime $ k\asymp d$ and $n\geq C_0d$, with probability tending to one as $d\to \infty$, we may write 
$$\tfrac{1}{n}AA^\top=\bar \mu\bar \mu^\top+\overline\Sigma,$$
where the empirical mean $\bar \mu=\frac{1}{n}\sum_{i=1}^n a_i$ and the empirical covariance $\overline{\Sigma}=\frac{1}{n}\sum_{i=1}^n (a_i-\bar \mu)(a_i-\bar \mu)^\top$ satisfy
$ \|\bar \mu\|^2\asymp d $ and $C_1 I\preceq\overline{\Sigma}\preceq C_2 I$.
\end{thm}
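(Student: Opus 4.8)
The plan is to condition on the Gaussian matrix $V$ lying in a favorable event, transfer from the population mean and covariance of the i.i.d.\ columns $a_i=\sigma(Vx_i)$ to their empirical counterparts, and assemble everything with a union bound. Since $V$ has i.i.d.\ $\mathcal N(0,\tfrac{1}{d})$ entries and $k\asymp d$, the preceding incoherence lemma for random $V$ shows that, with probability $1-O(1/k)-O(e^{-cd})$, the rows of $V$ satisfy the incoherence condition~\eqref{eqn:incoherence} with $\varepsilon=4\sqrt{\log k/d}\to0$, together with $\|V\|_{\op}=O(1)$ and $\|V\|_F^2\asymp k\asymp d$. Fix such a $V$. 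With $a=\sigma(Vx)$ for $x\sim\mathcal N(0,I_d)$, the closed-form expressions recorded before Theorem~\ref{thm:incoherence} give $\|\mu\|_2^2=\tfrac{1}{4\pi^2}\|V\|_F^2\asymp d$ for $\mu:=\EE a$, and Theorem~\ref{thm:incoherence} (applicable since $k/d^2\to0$ and $\|V\|_{\op}=O(1)$) produces numerical constants $c_1,c_2>0$ with $c_1 I\preceq\Sigma\preceq c_2 I$ for $\Sigma:=\mathrm{Cov}(a)$. Moreover, since ReLU is $1$-Lipschitz coordinatewise, $x\mapsto\sigma(Vx)$ is $\|V\|_{\op}$-Lipschitz, so as in Example~\ref{ex:single_relu_activation} Gaussian concentration shows that $\tilde a:=a-\mu$ satisfies $\|\langle\tilde a,u\rangle\|_{\psi_2}\le K\|u\|_2$ for all $u$ with $K\asymp\|V\|_{\op}=O(1)$; since $\langle\Sigma u,u\rangle\ge c_1\|u\|_2^2$, the sub-Gaussian hypothesis~\eqref{eqn:subGassumpt} then holds for $\tilde a$ with a dimension-free constant.

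With $V$ fixed, we use the randomness in $X$. The columns $a_i$ are i.i.d.\ copies of $a$, so standard sub-Gaussian concentration of the empirical mean in $\R^k$ gives $\|\bar\mu-\mu\|_2^2\le C_3 K^2\,k/n$ with probability $\ge1-e^{-ck}$, and the operator-norm covariance estimate of Theorem~\ref{thm:op}, applied to the centered vectors $\tilde a_i$ (for which $\er(\Sigma)\asymp k$ and $\|\Sigma\|_{\op}\asymp1$), gives $\|\Sigma_n-\Sigma\|_{\op}\le C_4 K^4\sqrt{k/n}$ with probability $\ge1-2e^{-d}$, where $\Sigma_n:=\tfrac{1}{n}\sum_i\tilde a_i\tilde a_i^\top$; both require only $n\ge CK^8 d$. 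Because $k\asymp d$ and $n\ge C_0 d$, both right-hand sides are $O(1/\sqrt{C_0})$, so taking $C_0$ large enough (depending only on $c_1,c_2,K$) forces $\|\bar\mu-\mu\|_2^2\le c_1/4$ and $\|\Sigma_n-\Sigma\|_{\op}\le c_1/4$. Writing $\bar\mu,\overline\Sigma$ for the empirical mean and covariance as in the statement, we have $\tfrac{1}{n}AA^\top=\bar\mu\bar\mu^\top+\overline\Sigma$ and the algebraic identity $\overline\Sigma=\Sigma_n-(\bar\mu-\mu)(\bar\mu-\mu)^\top$; combining these with $c_1 I\preceq\Sigma\preceq c_2 I$ gives
\[
  \tfrac{c_1}{2}I \;\preceq\; \overline\Sigma \;\preceq\; \bigl(c_2+\tfrac{c_1}{4}\bigr)I ,
\]
while $\bigl|\,\|\bar\mu\|_2-\|\mu\|_2\,\bigr|\le\|\bar\mu-\mu\|_2=O(1)=o(\sqrt d)$ together with $\|\mu\|_2\asymp\sqrt d$ yields $\|\bar\mu\|_2^2\asymp d$. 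A union bound over the good event for $V$ and these two events for $X$---all with probability tending to $1$ as $d\to\infty$ since $k\asymp d$---finishes the proof with $C_1=c_1/2$ and $C_2=c_2+c_1/4$.

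The one genuinely delicate point is the rank-one correction $\overline\Sigma=\Sigma_n-(\bar\mu-\mu)(\bar\mu-\mu)^\top$: subtracting $(\bar\mu-\mu)(\bar\mu-\mu)^\top$ can reduce the smallest eigenvalue of $\overline\Sigma$ by $\|\bar\mu-\mu\|_2^2$, so this defect must be made strictly smaller than $c_1$. This is exactly why $C_0$ has to be taken large relative to the numerical constants of Theorem~\ref{thm:incoherence}, and why it is essential that both $K$ and the conditioning of $\Sigma$ are dimension-free; the rest is routine concentration bookkeeping.
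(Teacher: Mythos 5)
Your proposal is correct and follows essentially the same route as the paper: condition on the incoherence event for the random $V$, invoke Theorem~\ref{thm:incoherence} for the population covariance, use the Lipschitz/sub-Gaussian argument of Example~\ref{ex:single_relu_activation}, and then apply concentration of the empirical mean and covariance. Your explicit treatment of the rank-one defect $\overline\Sigma=\Sigma_n-(\bar\mu-\mu)(\bar\mu-\mu)^\top$ arising from centering at $\bar\mu$ rather than $\mu$ is a small but welcome refinement of a step the paper's proof passes over silently.
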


\begin{proof}
Suppose that the events \eqref{eqn:prob_eqn_poly} and \eqref{eqn:matrix_norm_bounds} hold, which occurs with probability at least $1-\frac{4}{k}-4\exp(-c\delta^2 d)$. Then Theorem~\ref{thm:incoherence} implies that the bound \eqref{eqn:two_sided} holds say for $\varepsilon=1/2$ as long as $d\geq c\log(k)$ for some constant $c$.

Define the empirical mean $\bar \mu=\frac{1}{n}\sum_{i=1}^n a_i$ and the empirical covariance $\overline{\Sigma}=\frac{1}{n}\sum_{i=1}^n (a_i-\bar \mu)(a_i-\bar \mu)^\top$ and note the equality: 
$$\frac{1}{n}AA^\top=\bar \mu\bar \mu^\top +\overline{\Sigma}.$$
Note that $\langle a,u\rangle=\langle \sigma(Vx),u\rangle$ is a Lipschitz continuous function of the Gaussian vector $x$ with Lipschitz constant $\|V\|_{\op}\cdot \|u\|_2$. 
Therefore, Gaussian concentration \cite[Theorem 5.2.3]{vershynin2018high} shows:
$$\|\langle a-\EE a,u\rangle\|_{\psi_2}\lesssim \underbrace{\|V\|_{\op}}_{\lesssim 1}\|u\|_2\qquad \forall u.$$
Moreover, the minimal eigenvalues of $\Sigma$ is bounded from below by a constant \eqref{eqn:two_sided} and therefore we may write 
$$\|\langle a-\EE a,u\rangle\|_{\psi_2}\lesssim\sqrt{\langle \Sigma u,u\rangle}\qquad \forall u.$$
Appealing to concentration of covariance matrices \cite[Remark 9.2.3]{vershynin2018high}, we deduce
that there exist universal constants $c,C>0$ such that for all $u\ge 1$, with probability at least $1-2e^{-d}$, it holds:
\[
\left\|\overline\Sigma - \Sigma\right\|_{\op}
\ \lesssim \left(\sqrt{\frac{d}{n}}+\frac{d}{n}\right)\,\underbrace{\|\Sigma\|_{\op}}_{\lesssim 1}.
\]
Appealing to concentration of the mean \cite[Theorem 1]{hsu2012tail}, we also obtain:
$$\|\bar \mu-\mu\|_2\lesssim \sqrt{\frac{d}{n}},$$
with probability $1-e^{-d}$. In particular, taking into account \eqref{eqn:matrix_norm_bounds} in this event we may estimate 
$$\sqrt{k}-\sqrt{\frac{d}{n}}\lesssim \|\bar \mu \|_2\lesssim \sqrt{k}+\sqrt{\frac{d}{n}},$$
which completes the proof.
\end{proof}

\subsubsection{The nuclear rank at initialization is constant}
We now look at the consequences of the spiked model \eqref{eqn:spike_bulk} on the nuclear rank of the gradient. Namely, consider initializing gradient descent on the problem~\eqref{eqn:rfm_app} at $W_0$. Then taking into account the expression for the gradient 
$$
\nabla \mathcal{L}(W)=(W-W_{\sharp})B\qquad \textrm{with}\qquad B:=\tfrac{1}{n}AA^\top,$$
we have 
\begin{equation}\label{eq:grad_init}
\nabla \mathcal{L}(W_0)=-W_{\sharp}B.
\end{equation}
Now suppose that the ground truth matrix $W_{\sharp}$ does not eliminate the spike in the following sense:
$$\|W_{\sharp}\|_{\op}\leq C \qquad \textrm{and}\qquad \|W_{\sharp}u\|_2\geq \kappa\cdot \|u\|_2,$$ 
where $C$ and $\kappa$ are numerical constants.
This is indeed the case for example if the singular values $s_1(W_{\sharp})$ and $s_{k}(W_{\sharp})$ are bounded from above and below by a numerical constant or is true with high probability if the entries of $W_{\sharp}\in \R^{m\times k}$ are iid Gaussian $\mathcal{N}(0,\tfrac{1}{m})$ and $m\gtrsim k$.
Then we can estimate the nuclear rank of the gradient as:
$$\sqrt{\nr(W_{\sharp}B)}=\frac{\|W_{\sharp}B\|_{*}}{\|W_{\sharp}B\|_F}\leq \frac{C\|B\|_{*}}{\|W_{\sharp}uu^\top\|_F- \|W_{\sharp}\Sigma\|_{F}}\leq \frac{C\|u\|_2^2+C\|\Sigma\|_*}{\kappa\|u\|^2_2-C\|\Sigma\|_F}\lesssim \frac{d}{d-\sqrt{d}}\asymp 1.$$
Thus indeed the nuclear rank of the gradient $\nabla \mathcal L(W_0)$ is  bounded by a constant.

\subsubsection{The nuclear rank after one gradient step scales with dimension.}
Let us now analyze what happens after a single gradient step $W_{1}=W_0-\eta \nabla \mathcal{L}(W_0)$. Observe the gradient of the objective function at $W_1$ is 
$$\nabla \mathcal{L}(W_1)=(\eta W_{\sharp}B-W_{\sharp})B=-W_{\sharp}\underbrace{(I-\eta B)B}_{=:T_{\eta}}.$$
We will now see that although $B$ is a poorly conditioned matrix, $T_{\eta}$ is perfectly conditioned for any stepsize $\eta=\frac{1}{c+\|B\|_{\op}}$ with constant $c>0$. 
\begin{lem}[Nuclear rank after one iteration grows with dimension]\label{lem:nuc_rank_grows}
Suppose that Assumption~\ref{assump:spiked} holds and fix a step-size $\eta=\frac{1}{c+\|B\|_{\op}}$ for any constant $c>0$. Then the condition number of $T_{\eta}$ is bounded by a numerical constant in the regime $k\asymp d$ as $d\to\infty$, and consequently, we have 
 $$\nr(\nabla \mathcal{L}(W_1))\gtrsim \nr(W_{\sharp}).$$
\end{lem}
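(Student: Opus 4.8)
The plan is to reduce the statement to a one-line spectral computation for $T_\eta$ and then transfer the resulting conditioning bound to the product $W_\sharp T_\eta = -\nabla\mathcal L(W_1)$. Since $T_\eta = (I-\eta B)B$ is a polynomial in the symmetric matrix $B$, namely $T_\eta = g(B)$ with $g(\lambda):=\lambda(1-\eta\lambda)$, its eigenvalues are exactly $g(\lambda_i(B))$. Recall from the discussion following Assumption~\ref{assump:spiked} that, under the spiked model, $B$ has a single spike $\lambda_1(B)=\|B\|_{\op}\asymp d$ and bulk eigenvalues $\lambda_i(B)\in[c_1,c_2]$ for $i\ge 2$; consequently the prescribed step size satisfies $\eta = 1/(c+\|B\|_{\op})\asymp 1/d$.

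The key step is to evaluate $g$ at these two scales. On the spike,
\[
g(\lambda_1(B)) = \lambda_1(B)\Bigl(1-\tfrac{\lambda_1(B)}{c+\lambda_1(B)}\Bigr) = \frac{c\,\lambda_1(B)}{c+\lambda_1(B)} \in \Bigl[\tfrac{c}{2},\,c\Bigr]
\]
once $\lambda_1(B)\ge c$ (and $\to c$ as $d\to\infty$). On the bulk, $\eta\lambda_i(B)\le \eta c_2 = O(1/d)$, so $g(\lambda_i(B)) = \lambda_i(B)\bigl(1-\eta\lambda_i(B)\bigr)\in[c_1/2,\,c_2]$ for all large $d$, where the upper bound uses $g(\lambda)\le\lambda$ for $\lambda\ge 0$ and the lower bound uses $\lambda_i(B)\ge c_1$ together with $\eta\lambda_i(B)^2 = O(1/d)$. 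Hence, for $d$ large, the spectrum of $T_\eta$ lies in a fixed interval $[m_\star,M_\star]$ with $0<m_\star\le M_\star<\infty$ depending only on $c,c_1,c_2$; in particular $T_\eta\succ 0$ and $\kappa(T_\eta):=\lambda_{\max}(T_\eta)/\lambda_{\min}(T_\eta)\le M_\star/m_\star = O(1)$, which is the first assertion.

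For the consequence, note that $\nr$ is sign-invariant, so $\nr(\nabla\mathcal L(W_1)) = \nr(W_\sharp T_\eta)$. Since $T_\eta$ is invertible, Schatten submultiplicativity gives $\|W_\sharp T_\eta\|_F\le \|T_\eta\|_{\op}\,\|W_\sharp\|_F$ and $\|W_\sharp\|_* = \|(W_\sharp T_\eta)\,T_\eta^{-1}\|_*\le \|T_\eta^{-1}\|_{\op}\,\|W_\sharp T_\eta\|_*$, i.e. $\|W_\sharp T_\eta\|_*\ge \lambda_{\min}(T_\eta)\,\|W_\sharp\|_*$. Therefore
\[
\nr(\nabla\mathcal L(W_1)) = \frac{\|W_\sharp T_\eta\|_*^2}{\|W_\sharp T_\eta\|_F^2}\ \ge\ \frac{\lambda_{\min}(T_\eta)^2}{\lambda_{\max}(T_\eta)^2}\cdot\frac{\|W_\sharp\|_*^2}{\|W_\sharp\|_F^2} = \frac{\nr(W_\sharp)}{\kappa(T_\eta)^2}\ \gtrsim\ \nr(W_\sharp).
\]

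The computation is elementary throughout, so I do not anticipate a genuine obstacle. The one point deserving care is that $g$ is not monotone on all of $[0,1/\eta]$, so one cannot simply say that $g$ preserves the eigenvalue ordering of $B$; what rescues the argument is that the sole eigenvalue lying past the peak of $g$ is the spike itself, and the identities above show both that the spike is not crushed ($g(\lambda_1(B))\to c>0$) and that the bulk is not inflated ($g(\lambda)\le\lambda$). The substantive content of this section lies in pushing this one-step picture to a macroscopic window of gradient-descent iterates and to the teacher--student model, not in the present lemma.
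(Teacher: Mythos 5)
Your proposal is correct and follows essentially the same route as the paper: compute the eigenvalues $g(\lambda_i(B))$ of $T_\eta=g(B)$, use the spike-plus-bulk structure to show they all lie in a fixed interval $[m_\star,M_\star]$, and transfer the resulting condition-number bound to $W_\sharp T_\eta$. The only difference is cosmetic — you spell out via Schatten submultiplicativity the inequality $\nr(W_\sharp T_\eta)\gtrsim \kappa(T_\eta)^{-2}\,\nr(W_\sharp)$ that the paper invokes as an "elementary observation."
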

\begin{proof}
The eigenvalues of $T_{\eta}$ have the form $\lambda_i\left(1-\frac{\lambda_i}{c+\lambda_1}\right)$ where $\lambda_i$ is the $i$'th largest eigenvalue of $B$. Now clearly, we have 
$\lambda_1(B)\geq \|u\|_2^2\asymp d$. Moreover by the rank one interlacing theorem we have
$$\lambda_i(\Sigma)\geq \lambda_{i+1}(B)\geq \lambda_{i+1}(\Sigma)\qquad\forall i=1,\ldots,d-1.$$
Consequently, the eigenvalues $\lambda_i(B)$ for $i=2,\ldots, d$ all lie in the interval $[c_1,c_2]$. We therefore deduce for $i\geq 2$ the estimates:
$$1\lesssim c_1\left(1-\frac{c_2}{c+\lambda_1}\right)\leq \lambda_i\left(1-\frac{\lambda_i}{c+\lambda_1}\right)\leq c_2.$$
Moreover, for $i=1$ we have 
$\lambda_1\left(1-\frac{\lambda_1}{c+\lambda_1}\right)=\frac{c\lambda_1}{c+\lambda_1}\asymp 1$. Thus $T_{\eta}$ is bounded by a numerical constant in the regime $k\asymp d$ as $d\to\infty$. Using the elementary observation $\nr(W_{\sharp}T_{\eta})\gtrsim\left(\frac{\lambda_k(T_{\eta})}{\lambda_1(T_{\eta})}\right)^2\nr(W_{\sharp})$ completes the proof.
\end{proof}

Since typical ground truth matrices $W_{\sharp}$ (e.g. Gaussian) have nuclear rank that grows with dimension, Lemma~\ref{lem:nuc_rank_grows} shows that the nuclear rank of $\nabla \mathcal{L}(W_1)$ grows with dimension as well.

\subsubsection{Multi-step and Multi-spike}\label{sec:multimodel1}
The results in the previous section established that, under a spiked model and for a specific choice of stepsize, the nuclear rank of the gradient is initially small at $W_0$ and becomes large already at $W_1$. There are, however, two serious simplifications in this setup. First, we assumed a stepsize of the form $\eta = 1/(c+\|B\|_{\op})$, whereas in practice it is more natural to consider the choice $\eta = 1/(c\|B\|_{\op})$ with $c \ge 1$. Second, we worked with a single spike $uu^\top$, while random feature Gram matrices in applications often exhibit multiple prominent directions (e.g, \cite{benigni2022largest,wang2024nonlinear} ). In this subsection, we show that both of these limitations can be removed simultaneously: under a multi-spiked model and with the more realistic stepsize $\eta = 1/(c\|B\|_{\op})$, the same qualitative conclusions on the growth of nuclear rank continue to hold. The main difference is that the phenomenon no longer occurs at the very first update; instead, after a short burn-in period, gradient descent produces iterates $W_t$ for which $\nabla \mathcal{L}(W_t)$ has large nuclear rank over a long time window.

\begin{assumption}[Multi-spiked model]\label{assump:multispiked}{\rm
Suppose that we may write
    \begin{equation}\label{eqn:spike_bulk}
B=S+\Sigma,
\end{equation}
for some symmetric matrices $S,\Sigma\in\R^{k\times k}$ with $S$ having rank $r\in \mathbb{N}$ and satisfying $c_1d^{\ell}I_k \preceq  S \preceq c_2 d^{u} I_k$ and $c_1 I_k\preceq \Sigma \preceq c_2 I_k$ for some constants $0< c_1\leq c_2<\infty$ and $\ell,u>0$.}
\end{assumption}

In words, Assumption~\ref{assump:multispiked} asserts that the matrix $B$ can be decomposed
into a low-rank ``signal'' part $S$ and a full-rank, well-conditioned ``bulk''
part $\Sigma$. The matrix $S$ has rank $r$ and eigenvalues that grow
polynomially with $d$, between $c_1 d^{\ell}$ and $c_2 d^{u}$, so there are only
$r$ directions in feature space along which $B$ is very large. In contrast,
$\Sigma$ has all of its eigenvalues uniformly bounded above and below by
numerical constants, so it behaves like a well-conditioned background covariance
on the remaining $k-r$ directions. Thus, this assumption generalizes the
rank-one spiked model considered earlier: instead of a single prominent
direction $u$, we now allow a low-dimensional subspace of ``spiky'' directions
with substantially larger eigenvalues than the bulk. The parameters $\ell$ and
$u$ control how quickly the spike eigenvalues can grow, while
a small rank $r$ ensures that the spiky subspace remains negligible in
dimension compared to the overall ambient dimension.

We now turn to the dynamics of gradient descent with stepsize $\eta$, given by
\[
W_{t+1} \;=\; W_t - \eta \nabla \mathcal{L}(W_t)
\;=\; W_t - \eta (W_t - W_\sharp)B.
\]
Recall that we initialize the algorithm at the all zero-matrix $W_0$. Introducing the error matrix $E_t := W_t - W_\sharp$, this recursion becomes
\[
E_{t+1} = E_t(I - \eta B)=E_0 (I - \eta B)^t.
\]
Substituting this into the expression for the gradient, we obtain the convenient expression
\[
\nabla \mathcal{L}(W_t)
= (W_t - W_\sharp)B
= -\,W_\sharp \underbrace{(I - \eta B)^t B}_{=:T_{\eta,t}}.
\]

The following is the main result of the section. It shows that under Assumption~\ref{assump:multispiked}, after a short burn-in period of $d^{u-\ell}\log(d)$,
the gradient's nuclear rank is lower-bounded by the dimension $\nr(\nabla \mathcal{L}(W_t))\gtrsim d$ until iteration $t=Cd^{u}$ where $C$ is an arbitrary numerical constant.\footnote{Note in this statement, we choose $C>0$ first and then let $d$ tend to infinity.} In particular, in the setting $\ell=u$ the burn-in period is of logarithmic size $\log(d)$ while the time-window of the large nuclear ranks is linear in $d$. To get a better sense of the scaling, recall that gradient descent will find an $\varepsilon$-optimal solution to the problem after order $d^u\log(\frac{d}{\varepsilon})$ many steps. Thus, if we treat the accuracy $\varepsilon$ as being constant, the time window when the nuclear rank of the gradient is large (order $d$) is shorter than the full run of the algorithm only by a $\log(d)$ factor.

\begin{thm}[Multi-step gradient descent]\label{thm:multimodel1}
Suppose that Assumption~\ref{assump:multispiked} holds. Fix $\eta=\frac{1}{c\|B\|_{\op}}$ for any constant $c\geq 1$ and suppose that $\|W_{\sharp}\|_{op}\lesssim 1$. Let $\overline{M}\in \R^{k\times (k-r)}$ be the matrix consisting of the last $k-r$ columns of the matrix product $W_{\sharp}U$, where $U\in \R^{k\times k}$ is the orthogonal factor in the (decreasing) ordered eigenvalue decomposition of ${B}$.
Then there exists a constant $C_1>0$ such that for any constant $C_2>0$
the two estimates hold:
\begin{equation}\label{eqn:key_est}
\begin{aligned}
\|\nabla \mathcal{L}(W_t)\|_{*}&\gtrsim \|\overline{M}\|_*-r\\
\|\nabla \mathcal{L}(W_t)\|_{F}&\lesssim \|\overline{M}\|_F+\sqrt{r}.
\end{aligned}
\end{equation}
for any $t$ satisfying $C_1d^{u-\ell}\log(d)\leq t\leq C_2 d^u$.
In particular, if $W_{\sharp}$ has iid Gaussian entries $\mathcal{N}(0,\frac{1}{d})$ independent of $A$, then in the regime $m\asymp k\asymp d$ there exists a constant $C_3$ such that conditionally on the realizations of $A$, with probability tending to one as $d$ tends to infinity, the estimate
$$
\nr(\mathcal{L}(W_t))\geq C_3d\qquad \textrm{holds for any}~ t~ \textrm{satisfying}~C_1d^{u-\ell}\log(d)\leq t\leq C_2 d^u.$$
\end{thm}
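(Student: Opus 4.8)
The plan is to reduce the whole statement to the behaviour of the scalar polynomial $g_t(\lambda):=(1-\eta\lambda)^t\lambda$ on the spectrum of $B$. Write $B=U\Lambda_B U^\top$ for the eigendecomposition with $\lambda_1\ge\dots\ge\lambda_k>0$ and $\eta=1/(c\lambda_1)$, so that $T_{\eta,t}=(I-\eta B)^t B=U\,g_t(\Lambda_B)\,U^\top$ and hence $\nabla\cL(W_t)=-W_\sharp U\,g_t(\Lambda_B)\,U^\top$. Since nuclear and Frobenius norms are unitarily invariant, it suffices to control $N\,g_t(\Lambda_B)$, where $N:=W_\sharp U$; I would split $N=[\,N_{\mathrm{sp}}\mid\overline M\,]$ into its first $r$ columns $N_{\mathrm{sp}}$ and its last $k-r$ columns $\overline M$ (this $\overline M$ is exactly the matrix in the statement), and correspondingly write $g_t(\Lambda_B)=\diag(D_{\mathrm{sp}},D_{\mathrm{bk}})$ with $D_{\mathrm{sp}}=\diag(g_t(\lambda_1),\dots,g_t(\lambda_r))$ and $D_{\mathrm{bk}}=\diag(g_t(\lambda_{r+1}),\dots,g_t(\lambda_k))$, so that $N g_t(\Lambda_B)=[\,N_{\mathrm{sp}}D_{\mathrm{sp}}\mid\overline M D_{\mathrm{bk}}\,]$.

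The technical core is a ``spectral flattening'' estimate for $g_t$. First, Weyl's inequalities applied to $B=S+\Sigma$ together with Assumption~\ref{assump:multispiked} give the eigenvalue separation $\lambda_i\ge c_1 d^{\ell}$ for $i\le r$, $\lambda_i\in[c_1,c_2]$ for $i>r$, and $\|B\|_{\op}=\lambda_1\asymp d^{u}$ (this last point is where it matters that $\eta$ is calibrated to $\|B\|_{\op}$ and that the window endpoint $C_2 d^{u}$ is measured in units of $1/\eta$; in the homogeneous case $\ell=u$ there is nothing to check). I would then show that for every $t$ with $C_1 d^{u-\ell}\log d\le t\le C_2 d^{u}$: (i) each spike eigenvalue is squeezed, $g_t(\lambda_i)\le 1$ for $i\le r$, which follows from $g_t(\lambda_i)\le e^{-\eta\lambda_i t}\lambda_1\le \exp(-c'd^{\ell-u}t)\cdot O(d^{u})$ upon choosing $C_1$ large enough in terms of $c,c_1,c_2,u$; and (ii) each bulk eigenvalue is essentially unchanged, $c_3\le g_t(\lambda_i)\le c_4$ for $i>r$, since there $\eta\lambda_i\asymp d^{-u}$ forces $(1-\eta\lambda_i)^t\ge e^{-2\eta\lambda_i t}\ge e^{-2C_2/c}$, whence $g_t(\lambda_i)=(1-\eta\lambda_i)^t\lambda_i\in[e^{-2C_2/c}c_1,\,c_2]$. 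I expect this estimate — in particular verifying that the \emph{upper} window endpoint is short enough that the bulk block is not contracted by more than a constant factor, simultaneously over all admissible $t$ — to be the main obstacle; everything else is bookkeeping.

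Given the flattening estimate, the deterministic bounds \eqref{eqn:key_est} follow by column splitting: from $c_3 I\preceq D_{\mathrm{bk}}\preceq c_4 I$ one gets $\sigma_i(\overline M D_{\mathrm{bk}})\ge c_3\,\sigma_i(\overline M)$ via $\overline M D_{\mathrm{bk}}^2\overline M^\top\succeq c_3^2\,\overline M\overline M^\top$, hence $\|\overline M D_{\mathrm{bk}}\|_*\ge c_3\|\overline M\|_*$, while $\|N_{\mathrm{sp}}D_{\mathrm{sp}}\|_*\le r\|N_{\mathrm{sp}}\|_{\op}\le r\|W_\sharp\|_{\op}\lesssim r$ (using $\|D_{\mathrm{sp}}\|_{\op}\le1$ and $\|W_\sharp\|_{\op}\lesssim1$); the triangle inequality for $\|\cdot\|_*$ then yields $\|\nabla\cL(W_t)\|_*\gtrsim\|\overline M\|_*-r$, and symmetrically $\|\nabla\cL(W_t)\|_F\le\|N_{\mathrm{sp}}D_{\mathrm{sp}}\|_F+\|\overline M D_{\mathrm{bk}}\|_F\lesssim\sqrt r+\|\overline M\|_F$. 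For the Gaussian specialization I would condition on $A$ (hence on $B$ and $U$): by rotational invariance $N=W_\sharp U$ has iid $\mathcal N(0,1/d)$ entries, so $\overline M$ is $m\times(k-r)$ with iid $\mathcal N(0,1/d)$ entries, and standard concentration gives, with probability tending to one in the regime $m\asymp k\asymp d$, $r=O(1)$, that $\|\overline M\|_F^2=\Theta(d)$, $\|\overline M\|_{\op}=O(1)$, and $\|W_\sharp\|_{\op}=O(1)$; combined with \eqref{eqn:key_est} and $\|\overline M\|_*\ge\|\overline M\|_F^2/\|\overline M\|_{\op}=\Omega(d)$, this gives $\|\nabla\cL(W_t)\|_*=\Omega(d)$ and $\|\nabla\cL(W_t)\|_F=O(\sqrt d)$, hence $\nr(\nabla\cL(W_t))\ge C_3 d$. (Equivalently, one can skip the nuclear-norm split and use $\nr(M)\ge\st(M)=\|M\|_F^2/\|M\|_{\op}^2$ together with $\|\nabla\cL(W_t)\|_F^2\ge c_3^2\|\overline M\|_F^2=\Omega(d)$ and $\|\nabla\cL(W_t)\|_{\op}\le\|W_\sharp\|_{\op}\max_i g_t(\lambda_i)=O(1)$.) Since all the random events above are $t$-independent, the conclusion holds uniformly over the entire window $C_1 d^{u-\ell}\log d\le t\le C_2 d^{u}$.
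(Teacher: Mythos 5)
Your proposal is correct and follows essentially the same route as the paper's proof: analyze the scalar polynomial $g_t(\lambda)=(1-\eta\lambda)^t\lambda$ on the spike/bulk spectrum of $B$, show it squeezes the $r$ spike eigenvalues to $O(1)$ once $t\gtrsim d^{u-\ell}\log d$ while keeping the bulk eigenvalues $\Theta(1)$ for $t\lesssim d^u$, then split $W_\sharp U$ into its spike and bulk column blocks and apply the triangle inequality for nuclear and Frobenius norms (together with the $\|W_\sharp\|_{\op}\lesssim1$ hypothesis to discard the spike block), and finish with Gaussian concentration for $\overline M$. The only cosmetic difference is that you drop the rightmost $U^\top$ factor up front by unitary invariance, whereas the paper carries it along and cancels it later; you also flag explicitly — correctly, and slightly more carefully than the paper — that the bulk estimate over the window endpoint $t\le C_2 d^u$ tacitly requires $\|B\|_{\op}\asymp d^u$, which is what calibrates $\eta$ so that $\eta\lambda_i t=O(1)$ on the bulk.
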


\begin{proof}
The eigenvalues of $T_{\eta,t}$ have the form $\lambda_i\left(1-\frac{\lambda_i}{c\lambda_1}\right)^t$ where $\lambda_i$ is the $i$'th largest eigenvalue of $B$. Let us next describe the asymptotic growth of the eigenvalues of $B$.
Clearly, we have 
$\lambda_1(B)\lesssim d^{u}$ and the condition $B\succeq S$ ensures $\lambda_{r}(B)\gtrsim \lambda_{r}(S)\gtrsim d^{\ell}$. Now the rank $r$ interlacing theorem show that for every $j$ with $1\leq j\leq k-r$ we have 
$$\lambda_{j+r}(\Sigma)\leq \lambda_j(B)\leq \lambda_{j-r}(\Sigma).$$
In particular, we deduce $\lambda_{r+1}(B)\leq \lambda_1(\Sigma)\lesssim 1.$ Moreover, the condition $B\succeq\Sigma$ ensures $\lambda_{k}(B)\geq \lambda_{k}(\Sigma)\gtrsim 1$. Summarizing, we have 
$$d^{\ell}\lesssim\lambda_i(B)\lesssim d^{u}\quad \forall i\in [1:r] \qquad \textrm{and}\qquad c_1\leq \lambda_i(B)\leq c_2 \quad \forall i\in[r+1:k].$$
Let us now look at how the eigenvalues of $T_{\eta}$ depend on the iteration counter $t$. For any $i\in [1:r]$, using the standard inequality $1-x\leq \exp(-x)$, we have 
\begin{equation}\label{eqn:upper_bd_needed}
\lambda_i\left(1-\frac{\lambda_i}{c\lambda_1}\right)^t\leq e^{u\log(d)-t\Omega(d^{\ell-u})}
\end{equation}
Consequently, as soon as $t\geq cd^{u-\ell}\log(d)$ for a sufficiently large constant $c$, the right side of \eqref{eqn:upper_bd_needed} is bounded by a numerical constant uniformly over all $i\in [i:r]$. Now for all $i\in [r+1:k]$, using the inequality $1-x\geq \exp(-2x)$ for $x\in [0,0.8]$,
we have the lower bound
\begin{equation}\label{eqn:lower_bd_needed}
\lambda_i\left(1-\frac{\lambda_i}{c\lambda_1}\right)^t\geq c_1e^{-O(2td^{-u})}.
\end{equation}
Consequently, as long as $t\leq Cd^u$ for any constant $C$, the right side of \eqref{eqn:lower_bd_needed} is lower bounded by a positive numerical constant uniformly over all $i\in [r+1:k]$. 
Let us now form an eigenvalue decomposition of $T_{\eta}=U\Lambda U^{\top}$ where the diagonal matrix $\Lambda$ contains the eigenvalues of $T_{\eta}$ according to the increasing order of $\lambda_i$. We now decompose $U$ as $U=\begin{bmatrix} U_1&U_2\end{bmatrix}$ where $U_1$ has $r$ columns and $U_2$ has $k-r$. Similarly we decompose $\Lambda$ into two diagonal matrices $\Lambda_1$ and $\Lambda_2$. We then compute
\begin{align*}
\|W_{\sharp}T_{\eta}\|_*&\geq \|W_{\sharp} U_2\Lambda_2U_2^{\top}\|_*-\|W_{\sharp} U_1\Lambda_1U_1^{\top}\|_*\\
&\gtrsim \|W_{\sharp} U_2\Lambda_2\|_*-\|U_1\Lambda_1U_1^{\top}\|_*\\
&\gtrsim \|W_{\sharp} U_2\Lambda_2\|_*-r\\
&\gtrsim \|W_{\sharp} U_2\|_*-r.
\end{align*}
Similarly we compute 
\begin{align*}
\|W_{\sharp}T_{\eta}\|_F&\leq \|W_{\sharp} U_2\Lambda_2U_2^{\top}\|_F+\|W_{\sharp} U_1\Lambda_1U_1^{\top}\|_F\lesssim \|W_{\sharp }U_2\|_F+\sqrt{r}.
\end{align*}
Suppose now that $W_{\sharp}$ has iid Gaussian entries $\mathcal{N}(0,\frac{1}{d})$ independent of $A$. Note that $\overline M\in \R^{k\times r}$ has iid Gaussian entries $\mathcal{N}(0,\frac{1}{d})$ and therefore conditionally on realizations of $A$ we have $\|\overline{M}\|_{\op}= O_{d,\PP}(1),\|\overline{M}\|_F= O_{d,\PP}(\sqrt{d})$, and $\|\overline{M}\|_*= \Omega_{d,\PP}(d)$. The result follows.
\end{proof}

\begin{remark}[Extending the window to $d^u\log d$]{\rm
In Theorem~\ref{thm:multimodel1} we showed that, under
Assumption~\ref{assump:multispiked} and the Gaussian ground-truth model for $W_\sharp$,
there exist constants $C_1,C_2,C_3>0$ such that with probability tending to one,
\[
\nr\bigl(\nabla\mathcal L(W_t)\bigr)\;\ge\;C_3 d
\qquad
\text{for all }t\in[C_1 d^{u-\ell}\log d,\;C_2 d^u].
\]
Thus on this window the gradient nuclear rank is actually linear in $d$.

The same bulk-versus-spike analysis used in the proof of
Theorem~\ref{thm:multimodel1} yields a slightly larger window (by a logarithmic factor) of large nuclear ranks. Namely, for any
$\varepsilon>0$ there exist constants $c_0$ and $c=c(\varepsilon)>0$ such that, with probability
tending to one as $d\to\infty$, we have
\[
\nr\bigl(\nabla\mathcal L(W_t)\bigr)\;\ge\; d^{1-\varepsilon}
\qquad\text{for all }t\in\bigl[c_0d^{u-\ell}\log d,\;c\,d^u\log d\bigr].
\]
The only required modification to the proof is to realize that by setting the end horizon to be $cd^{u}\log(d)$ for a sufficiently small constant $c(\varepsilon)$, we can be sure that the right side of \eqref{eqn:lower_bd_needed} is lower bounded by $d^{-\varepsilon}$. The rest of the proof proceeds unchanged.}
\end{remark}

\subsection{Random feature model II}
Next, we consider a modification of the random feature model \eqref{eqn:rfm_app} corresponding to a ``teacher-student setup'':
\begin{equation}\label{eqn:rfm_app_second}
    \min_{W\in \mathbb{R}^{m\times k}}~ \mathcal{L}(W):=\tfrac{1}{2n}\|WA-Y\|_F^2\qquad \textrm{where}\qquad Y={\overline{W}}\,{\overline{A}},~~\overline{A}=\sigma(\overline{V}X),~~A=\sigma(VX).
\end{equation}
Here, the dimensions of the involved matrices are $W,\overline{W}\in \R^{m\times k}$ and $V,\overline{V}\in \R^{k\times d}$ and $X\in \R^{d\times n}$. Problem~\eqref{eqn:rfm_app} corresponds exactly to the setting $V=\overline{V}$, that is when the first layer weights have been learned perfectly and only the second layer weights need to be trained. 
As in the previous section, we will be interested in the proportional regime $$k\asymp d\qquad \textrm{as }\quad d\to \infty.$$ 
Analogously to Assumption~\ref{assump:spiked}, we will assume that both matrices $\tfrac{1}{n}AA^{\top}$ and $\tfrac{1}{n}\overline{A}A^\top$ follow a spiked model in the sense of Assumption~\ref{assump:spiked2}. Note that we stipulate that the spike for both matrices has the same right singular vector.

\begin{assumption}[Spiked model]\label{assump:spiked2}{\rm
Suppose that we may write
   \begin{equation}\label{eqn:spike_bulk2}
\begin{aligned}    
 \tfrac{1}{n}A\, A^{\top}=uu^\top+\Gamma\qquad \textrm{and}\qquad
 \tfrac{1}{n}\overline A\, A^{\top}=vu^\top+\Sigma,
\end{aligned}
\end{equation}
for some vector $u,v\in\R^{k}$ satisfying $\|v\|_2\asymp \|u\|_2\asymp \sqrt{d}$ and matrices $\Gamma,\Sigma\in\R^{k\times k}$. Suppose moreover: 
\begin{itemize}
    \item $\Gamma$ is symmetric and for some constants $0< c_1\leq c_2<\infty$, we have $c_1 I_k\preceq \Gamma \preceq c_2I_k$,
    \item there exists $r\asymp d$ satisfying  
    $c_2\geq s_1(\Sigma)\geq s_r(\Sigma)\geq c_1$. 
\end{itemize}
}
\end{assumption}

In particular, as we have already seen, the rank one interlacing theorem \cite[Chapter 4]{HornJohnson2013} implies separation of eigenvalues
$$\lambda_1(\tfrac{1}{n}{A}\,A^{\top})\asymp d\qquad \textrm{and}\qquad \lambda_{i}(\tfrac{1}{n}{A}\,A^{\top})\in [c_1,c_2]\qquad \forall i\geq 2.$$
Weyl inequality \cite[7.3.P16]{HornJohnson2013} and the interlacing theorem for singular values \cite[Theorem 1]{thompson1976behavior} imply:
\begin{align*}
&s_1(\tfrac{1}{n}\overline{A}\,A^{\top})\asymp d\\
&s_i(\tfrac{1}{n}\overline{A}\,A^{\top})\in [c_1,c_2]\quad \forall i\in [2:r].
\end{align*}
The main example of spiked data matrices for us is the post-activation matrix generated from random data. This is the content of the following section.

\subsubsection{ReLU post-activation matrices follow the spiked model}
Consider the matrices $A=\sigma(VX)$ and $\overline{A}=\sigma(\overline{V}\, X)$, where the matrices $V,\overline{V}\in \R^{k\times d}$ have iid Gaussian entries $\mathcal{N}(0,\frac{1}{d})$ and $X\in \R^{d\times n}$ has iid standard Gaussian entries $\mathcal{N}(0,1)$. Recall as before $\sigma$ is the ReLU activation function. We will show that Assumption 
\ref{assump:spiked2} indeed holds with probability tending to one in the proportionate regime $d\asymp k$. Since the requisite properties of $AA^T$ have already been established in Theorem~\ref{th:gauss_spike1}, it only remains to analyze $\overline{A}\,A^{\top}$. To this end, we first focus on the expectation of $\overline{A}\,A^T$.
Define $a:=\sigma(Vx)$ and $\overline a:=\sigma(\overline{V}x)$ where $x\sim \mathcal{N}(0,I_d)$ is a standard Gaussian random vector.
Then, we may write
$$\EE_x\left[\frac{1}{n}\overline{A}\,A^{\top}\right]=\EE_x[\overline{a}a^\top]=\mu\nu^\top+\Sigma,$$
where we define
$$\nu:=\EE_x[\overline a],\quad \mu= \EE_x[a]\qquad \textrm{and}\qquad \Sigma:=\EE_x(\overline a-\mu)(a-\nu)^{\top}.$$
Letting $v_i$ and $\overline{v}_i$ denote the rows of V and $\overline{V}$, respectively, a standard computation shows the explicit formulas:
\begin{align*}
\mu&=\frac{1}{2\pi}\begin{bmatrix}\|v_1\|_2&\ldots &\|v_k\|_2\end{bmatrix}^{\top},\\
\nu&=\frac{1}{2\pi}\begin{bmatrix}\|\overline{v}_1\|_2&\ldots &\|\overline{v}_k\|_2\end{bmatrix}^{\top},\\
\Sigma_{ij}
&= \frac{\|\overline{v}_i\|_2\|{v}_j\|_2}{2\pi}
\varphi\left(\frac{\langle \overline{v}_i,{v}_j\rangle}{\|\overline{v}_i\|_2\|{v}_j\|_2}\right),
\end{align*}
where we set $\varphi(t)=\left(
\sqrt{1-t^2}
+ (\pi - \arccos t)\, t
- 1
\right)$; see Figure~\ref{eqn:kappa_rep} for an illustration of  $\varphi$. The next theorem establishes the asymptotics of $\Sigma$ in the proportionate regime $k\asymp d$ as $d\to \infty$.
Although, this result can be directly proved using the seminar work of \cite{karoui2010spectrum,cheng2013spectrum}, we provide a short self-contained argument here based on the hypercontractivity of Gaussian random vectors.

\begin{thm}[Asymptotics]\label{thm:exp}
Consider the regime where $k$ and $d$ tend to infinity with $\sqrt{k}/d\to 0$. Assume that $V$ and $\bar V$ are matrices with iid Gaussian entries with variance $1/d$. Then for any $\epsilon>0$ it holds:
$$\left\|\Sigma-\left(\tfrac{1}{2\pi}\overline{V}V^\top+\tfrac{1}{d}{\bf 1}{\bf 1}^\top\right)\right\|_{\rm op}=O_{d,\PP}\left(\frac{k^{\tfrac{1}{2}+\epsilon}}{d}\right).$$
\end{thm}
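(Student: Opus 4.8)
The plan is to follow the template of the proof of Theorem~\ref{thm:incoherence}: Taylor-expand the arc-cosine kernel $\varphi$ about the origin, keep its linear and quadratic Hadamard powers as explicit matrices, and discard everything else using the Hadamard operator-norm inequalities of Lemma~\ref{lem:gen_hadamard}. Write $\Sigma=\tfrac{1}{2\pi}\Lambda_{\bar V}K\Lambda_V$, where $\Lambda_V:=\Diag(\|v_1\|_2,\dots,\|v_k\|_2)$, $\Lambda_{\bar V}$ is analogous, and $K_{ij}=\varphi(\rho_{ij})$ with $\rho_{ij}:=\langle\bar v_i,v_j\rangle/(\|\bar v_i\|_2\|v_j\|_2)$; set $M:=(\rho_{ij})_{ij}=\widehat{\bar V}\widehat V^\top$ for the row-normalized matrices $\widehat V,\widehat{\bar V}$, and note the exact identity $\Lambda_{\bar V}M\Lambda_V=\bar V V^\top$. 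Exactly as in the incoherence lemma following Theorem~\ref{thm:incoherence} (now applied also to the independent cross inner products $\langle\bar v_i,v_j\rangle$), a Gaussian-concentration and union bound puts us, with probability at least $1-c/k-e^{-cd}$, on the event where $|\|v_j\|_2^2-1|,\,|\|\bar v_i\|_2^2-1|\le 4\sqrt{\log k/d}=:\delta_0$, $\max_{ij}|\rho_{ij}|\le 8\sqrt{\log k/d}$, $\|V\|_{\op},\|\bar V\|_{\op}\le\sqrt{k/d}+2$, and $\|M\|_F^2=\sum_{ij}\rho_{ij}^2\lesssim k^2/d$. Condition on this event henceforth.

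Since all $|\rho_{ij}|\le\tfrac12$ for $d$ large, Taylor's theorem with $\varphi(0)=0,\ \varphi'(0)=\pi/2,\ \varphi''(0)=1,\ \varphi'''(0)=0$ gives, entrywise, $K=\tfrac{\pi}{2}M+\tfrac12 M^{\odot2}+\mathcal R$ with $|\mathcal R_{ij}|\le C\rho_{ij}^4$, hence
\[
\Sigma=\tfrac14\,\bar V V^\top\;+\;\tfrac{1}{4\pi}\,\Lambda_{\bar V}M^{\odot2}\Lambda_V\;+\;\tfrac{1}{2\pi}\,\Lambda_{\bar V}\mathcal R\Lambda_V .
\]
The remainder is negligible: writing $\mathcal R=D\odot M^{\odot2}$ with $\|D\|_{\max}\le C\delta_0^2$ and using Lemma~\ref{lem:gen_hadamard}(\ref{it1}) together with $\|M^{\odot2}\|_F\le\delta_0\|M\|_F$ yields $\|\Lambda_{\bar V}\mathcal R\Lambda_V\|_{\op}\lesssim\delta_0^3\|M\|_F\lesssim(\log k)^{3/2}k/d^2=o(k^{1/2+\varepsilon}/d)$. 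Since the linear part $\tfrac14\bar V V^\top$ is already in closed form, the only real work is to show that the quadratic part $\tfrac1{4\pi}\Lambda_{\bar V}M^{\odot2}\Lambda_V$ concentrates to a rank-one matrix proportional to $\tfrac1d\mathbf 1\mathbf 1^\top$ (the coefficients of the two leading terms being $\tfrac1{2\pi}\varphi'(0)$ and $\tfrac1{2\pi}\cdot\tfrac12\cdot\EE[\rho_{ij}^2]\cdot d$).

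The quadratic term is the heart of the matter, and this is where hypercontractivity enters. Every entry of $M^{\odot2}$ has expectation $\EE[\rho_{ij}^2]=1/d$, so one expects $M^{\odot2}\approx\tfrac1d\mathbf 1\mathbf 1^\top$; but the naive Frobenius bound only gives $\|M^{\odot2}-\tfrac1d\mathbf 1\mathbf 1^\top\|_{\op}\lesssim k/d$, a factor $\sqrt k$ too large. Instead, use the tensorization identity $M^{\odot2}=\widehat{\bar V}^{(2)}\big(\widehat V^{(2)}\big)^\top$, where $\widehat V^{(2)}\in\R^{k\times d^2}$ has $j$-th row $\widehat v_j\otimes\widehat v_j=\vecop(\widehat v_j\widehat v_j^\top)$. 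Since $\EE[\widehat v_j\otimes\widehat v_j]=\tfrac1d\vecop(I)$ and, crucially, the shared direction is annihilated on centering ($\widehat V^{(2)}\vecop(I)=\mathbf 1$, so $C_V\vecop(I)=0$ for $C_V:=\widehat V^{(2)}-\tfrac1d\mathbf 1\vecop(I)^\top$), all the cross terms drop out and
\[
M^{\odot2}-\tfrac1d\mathbf 1\mathbf 1^\top \;=\; C_{\bar V}\,C_V^\top ,
\]
a product of two \emph{independent} mean-zero random matrices. Conditioning on $\widehat V$, the matrix $C_{\bar V}C_V^\top$ has independent mean-zero rows (row $i$ depends only on $\bar v_i$), with conditional second-moment matrices of operator norm $O\!\big(\tfrac{k}{d^2}(1+\tfrac kd)\big)$ and row norms $\lesssim \sqrt k\,\log k/d$; here Gaussian hypercontractivity (the estimate $\EE g^4\le 3^{2p}(\EE g^2)^2$ used in the proof of Theorem~\ref{thm:stab_rankthm3}) is what bounds the fourth moments of the quadratic forms $\widehat{\bar v}_i^\top(\cdot)\widehat{\bar v}_i$ and $(\widehat{\bar v}_i^\top\widehat v_l)^2$ entering these computations. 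Matrix Bernstein then gives $\|M^{\odot2}-\tfrac1d\mathbf 1\mathbf 1^\top\|_{\op}=O_{d,\PP}(k^{1/2+\varepsilon}/d)$. Finally one peels off the diagonal factors: $\Lambda_{\bar V}M^{\odot2}\Lambda_V=\tfrac1d(\Lambda_{\bar V}\mathbf 1)(\Lambda_V\mathbf 1)^\top+\Lambda_{\bar V}\big(M^{\odot2}-\tfrac1d\mathbf 1\mathbf 1^\top\big)\Lambda_V$, where $\Lambda_{\bar V}\mathbf 1=\mathbf 1+e_{\bar V}$ with $\|e_{\bar V}\|_2\lesssim\sqrt{k/d}$ contributes corrections of operator norm $\lesssim k/d^{3/2}$ (again $O(k^{1/2+\varepsilon}/d)$ in the proportionate regime $k\asymp d$), and $\|\Lambda_{\bar V}(M^{\odot2}-\tfrac1d\mathbf 1\mathbf 1^\top)\Lambda_V\|_{\op}\le(1+\delta_0)^2\|M^{\odot2}-\tfrac1d\mathbf 1\mathbf 1^\top\|_{\op}$. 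Collecting the three pieces gives the claimed bound.

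The main obstacle is precisely the operator-norm estimate $\|M^{\odot2}-\tfrac1d\mathbf 1\mathbf 1^\top\|_{\op}=O(k^{1/2+\varepsilon}/d)$: any argument passing through the Frobenius norm loses a factor of order $\sqrt k$, so one genuinely needs the tensorization-plus-matrix-Bernstein route above — equivalently, a sharp bound on the operator norm of a product of two independent centered random matrices — and it is in controlling the fourth moments of inner products of random unit vectors (lifted to Gaussian polynomial forms) that hypercontractivity does its work. Everything else — the Taylor expansion, the Hadamard-norm bound on the remainder, the row-norm bookkeeping — is routine once this estimate is in hand.
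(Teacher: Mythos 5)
Your proof is correct and follows essentially the same route as the paper's: Taylor-expand $\varphi$, discard the quartic remainder via the Hadamard-norm inequality of Lemma~\ref{lem:gen_hadamard}, and control the quadratic Hadamard power by concentration for a random matrix with conditionally independent rows, with Gaussian hypercontractivity supplying the degree-four moment bounds (the paper invokes Vershynin's independent-heavy-tailed-rows theorem where you invoke matrix Bernstein, and your tensorization identity $M^{\odot2}-\tfrac1d\mathbf 1\mathbf 1^\top=C_{\bar V}C_V^\top$ is simply a cleaner way of exhibiting the same conditionally independent, exactly mean-zero rows that the paper works with directly). The only remark worth recording is that your leading constants $\tfrac14\,\overline VV^\top+\tfrac{1}{4\pi d}\mathbf 1\mathbf 1^\top$ agree with the paper's own Taylor computation, whereas the theorem statement displays $\tfrac{1}{2\pi}\overline VV^\top+\tfrac1d\mathbf 1\mathbf 1^\top$ --- a constant discrepancy internal to the paper, not a flaw in your argument.
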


\begin{proof}
The proof begins in the same way as the proof of Theorem~\ref{thm:incoherence}. Namely, we write $$\Sigma=\tfrac{1}{2\pi}\overline{\Lambda} K{\Lambda}$$
where the entries of $K$ are $K_{ij}=\varphi\left(\frac{\langle \overline{v}_i,v_j\rangle}{\|\overline{v}_i\|_2\|v_j\|_2}\right)$ and $\overline{\Lambda}$ and $\overline{\Lambda}$ are diagonal matrices with $\overline{\Lambda}_{ii}=\|\overline{v}_i\|$ and ${\Lambda}_{ii}=\|{v}_i\|$, respectively. We define $\overline{U}\in \R^{k\times d}$ to be a matrix whose $i$'th row is the normalized vector $\overline{v}_i/\|\overline{v}_i\|_2$ and similarly we define $U\in \R^{k\times d}$ to be a matrix whose $i$'th row is the normalized vector $v_i/\|v_i\|_2$
We will treat $K$ first and deal with $\Lambda,\overline{\Lambda}$ at the end of the argument. 
Performing the same Taylor expansion as in the proof of Theorem~\ref{thm:incoherence} we arrive at the expression \begin{equation}\label{eqn:basic_offdiag2}
K=\frac{\pi}{2}\overline{U}{U}^{\top}+\frac{1}{2}(\overline{U}{U}^{\top})^{\odot 2}+\frac{1}{4!} D\odot (\overline{U}{U}^{\top})^{\odot 4},
\end{equation}
where the entries of $D$ and of $\overline{U}{U}^{\top}$ are bounded in absolute value 
 by $c_1\sqrt{\frac{\log k}{d}}$ with probability at least $1-\frac{c_2}{k}$.
In this event, using Lemma~\ref{lem:gen_hadamard} (Item~\ref{it1}), we may bound the last term in \eqref{eqn:basic_offdiag2} as 
$$\left\| D\odot (\overline{U}{U}^{\top})^{\odot 4}\right\|_{\op}\leq \max_{ij} |D_{ij}|\cdot \left\|(\overline{U}{U}^{\top})^{\odot 4}\right\|_{F}\lesssim k\log^2(k)d^{-2}.$$
Next, we deal with the term $(\overline{U}{U}^{\top})^{\odot 2}$ in \eqref{eqn:basic_offdiag2}. To this end, let us first estimate $(\overline{V}{V}^{\top})^{\odot 2}$. We compute $\EE(\overline{V}{V}^{\top})^{\odot 2}=\frac{1}{d}{\bf 1}{\bf 1}^{\top}$. Let $z_i$ be the $i$'th row of $(\overline{V}{V}^{\top})^{\odot 2}-\frac{1}{d}{\bf 1}{\bf 1}^{\top}$.
Note that conditionally on $v_1,\ldots, v_k$, the vectors $z_i$ are independent and identically distributed. Let $\EE_v$ be conditional expectation on $v_1,\ldots, v_k$.
Then using \cite[Theorem 5.48]{vershynin2010introduction}, we deduce
$$\sqrt{\EE_v\|(\overline{V}{V}^{\top})^{\odot 2}-\tfrac{1}{d}{\bf 1}{\bf 1}^{\top}\|^2_{\op}}\leq \left\|\Sigma\right\|^{1/2}_{\op}d^{1/2}+Cm^{1/2}\log^{1/2}(\min\{k,d\})$$
where $\Sigma=\EE_v z_iz_i^{\top}$ is the conditional second moment for any index $i$ and $\displaystyle m=\EE_v[\max_{i=1,\ldots,k}\|z_i\|_2^2]$. Now a straightforward computation shows that the $(j,l)$ entry of $\Sigma$ is given by $\frac{\|v_j\|^2_2\|v_l\|^2_2+2\langle v_j,v_l\rangle^2-\|v_j\|_2^2-\|v_l\|^2_2+1}{d^2}$. Thus we may write
$$\Sigma=\frac{1}{d^2}\left(2(VV^\top)^{\odot 2}+(q-{\bf 1})(q-{\bf 1})^\top\right),$$
where we define the vector $q=(\|v_1\|^2_2,\ldots,\|v_k\|^2_2)$. Thus we deduce
$$\|\Sigma\|_{\op}\leq \frac{2\|(VV^\top)^{\odot 2}\|_{\op}+\|q-{\bf 1}\|_2^2}{d^2}.$$
Now elementary algebra shows $\EE\|q-{\bf 1}\|^2=\frac{2k}{d}$.
Now we break up $(VV^\top)^{\odot 2}$ into the diagonal and off-diagonal parts:
\begin{align*}
(VV^\top)^{\odot 2}&=\|\Diag(q^2)+((VV^\top)^{\odot 2}-\Diag(q^2))\|_{\op}\\
&\leq \|\Diag(q^2)\|_{\op}+\|((VV^\top)^{\odot 2}-\Diag(q^2))\|_{F}\\
&\leq \max_{i}\|v_i\|^4_2+\sqrt{\sum_{i\neq j} \langle v_i,v_j\rangle^4}. 
\end{align*}
Now simple algebra shows $\EE \sum_{i\neq j} \langle v_i,v_j\rangle^4\lesssim \frac{k^2}{d^2}$ and a standard concentration argument shows $\EE \max_{i}\|v_i\|^4_2\lesssim \left(1+\frac{\log k}{d}\right)^2.$ Thus we deduce
$\EE\|\Sigma\|_{\op}\lesssim \frac{1+\frac{k}{d}}{d^2}$.
Next, we compute 
\begin{align*}
m&=\EE_v[\max_{i=1,\ldots,k}\|z_i\|_2^2]\leq [\EE_v\max_{i=1,\ldots,k}\|z_i\|_2^{2p}]^{1/p}\leq [\EE_v \sum_{i=1}^k\|z_i\|_2^{2p}]^{1/p}=k^{1/p}(\EE_v \|z_i\|_2^{2p})^{1/p}.
\end{align*}
Observe that $\|z_i\|_2^2$ is a a degree 4 polynomial of a Gaussian and therefore by Gaussian hypercontractivity \cite[Chapter 3.2]{ledoux2013probability} satisfies 
$(\EE_v \|z_i\|_2^{2p})^{1/p}\leq (p-1)^2 (\EE_v \|z_i\|^4_2)^{1/2}$. Taking the expectation now with respect to $v_1,\ldots, v_k$, we deduce 
$\EE m\leq k^{1/p}(p-1)^2 (\EE \|z_i\|^4_2)^{1/2}\lesssim k^{1/p}(p-1)^2 \frac{k}{d^2}$. Thus we deduce
$$\EE\|(\overline{V}{V}^{\top})^{\odot 2}-\tfrac{1}{d}{\bf 1}{\bf 1}^{\top}\|^2_{\op}\lesssim \frac{1+\frac{k}{d}}{d^2}+k^{2/p}(p-1)^4\frac{k\log(\min\{k,d\})}{d^2}.$$
Applying Markov's inequality completes the proof. 
\end{proof}

Finally, we pass from the expectation $\EE[\overline{A}\,A^{\top}]$ to its empirical version
$\frac{1}{n}\overline{A}\,A^{\top}$.

\begin{cor}[Spiked model from Gaussian initialization]\label{th:gauss_spike2}
 Assume that $V,\bar V\in\R^{k\times d}$ are matrices with iid Gaussian entries $\mathcal{N}(0,\frac{1}{d})$ and let $X\in \R^{k\times n}$ be an independent matrix with iid Gaussian entries $\mathcal{N}(0,1)$. Then in the regime $ \tfrac{k}{d}\to\gamma\in (1,\infty)$ and $n\geq C_0d$, with probability tending to one as $d\to \infty$, Assumption~\ref{assump:spiked2} holds.
\end{cor}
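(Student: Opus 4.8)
The plan is to lean on the apparatus already built for this model and reduce the corollary to careful bookkeeping around Theorems~\ref{th:gauss_spike1} and~\ref{thm:exp}. The half of Assumption~\ref{assump:spiked2} that concerns $\tfrac{1}{n}AA^\top$ is precisely Theorem~\ref{th:gauss_spike1}: with probability tending to one, $\tfrac{1}{n}AA^\top=\bar\mu\bar\mu^\top+\bar\Gamma$ with $\|\bar\mu\|_2^2\asymp d$ and $C_1 I_k\preceq\bar\Gamma\preceq C_2 I_k$, where $\bar\mu=\tfrac{1}{n}\sum_i a_i$; we take $u:=\bar\mu$ and $\Gamma:=\bar\Gamma$. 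So everything reduces to analyzing $\tfrac{1}{n}\bar A A^\top$ and, crucially, to checking that its rank-one spike has the \emph{same} right singular vector $u$. Here I would use the centering identity $\tfrac{1}{n}\sum_i\bar a_i a_i^\top=\bar\nu\bar\mu^\top+\bar\Sigma$ with $\bar\nu:=\tfrac{1}{n}\sum_i\bar a_i$ and $\bar\Sigma:=\tfrac{1}{n}\sum_i(\bar a_i-\bar\nu)(a_i-\bar\mu)^\top$, and set $v:=\bar\nu$; since the same empirical mean $\bar\mu$ enters both decompositions, the matching of right singular vectors is automatic, and $\bar\Sigma$ is the matrix playing the role of ``$\Sigma$'' in Assumption~\ref{assump:spiked2}.

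For the spike magnitude $\|v\|_2\asymp\sqrt d$, I would combine the explicit formula $\nu=\tfrac{1}{2\pi}(\|\bar v_1\|_2,\dots,\|\bar v_k\|_2)^\top$, which gives $\|\nu\|_2^2=\tfrac{1}{4\pi^2}\|\bar V\|_F^2\asymp k\asymp d$, with concentration of the empirical mean around $\nu$ exactly as in the proof of Theorem~\ref{th:gauss_spike1}: the sub-Gaussian estimate of Example~\ref{ex:single_relu_activation} together with \cite{hsu2012tail} gives $\|\bar\nu-\nu\|_2\lesssim\sqrt{d/n}\lesssim C_0^{-1/2}$, hence $\|v\|_2^2\asymp d$ with probability tending to one. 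The substantive step is the spectrum of the bulk. First I would pin down the population cross-covariance $\Sigma_{\mathrm{pop}}:=\EE_x(\bar a-\nu)(a-\mu)^\top$ via Theorem~\ref{thm:exp}: in the regime $k\asymp d$ it equals $\tfrac{1}{2\pi}\bar V V^\top+\tfrac{1}{d}\mathbf{1}\mathbf{1}^\top$ up to an operator-norm error $O_{d,\PP}(k^{1/2+\epsilon}/d)=o(1)$. The matrix $\tfrac{1}{2\pi}\bar V V^\top$ has rank exactly $d$ almost surely, operator norm $\lesssim\|\bar V\|_{\op}\|V\|_{\op}\lesssim 1$, and --- the key point --- $d$-th singular value $s_d\bigl(\tfrac{1}{2\pi}\bar V V^\top\bigr)\ge\tfrac{1}{2\pi}\,s_d(\bar V)\,s_d(V)\gtrsim 1$ with high probability, since for a $k\times d$ Gaussian with entry variance $1/d$ and $k/d\to\gamma>1$ its smallest singular value satisfies $s_d(\bar V)\gtrsim\sqrt{k/d}-1\asymp 1$ by the standard rectangular-Gaussian bound (\cite[Thm.~4.6.1]{vershynin2018high}), and the product inequality follows by restricting the variational characterization of $s_d(\bar V V^\top)$ to the $d$-dimensional subspace $(\ker V^\top)^\perp$. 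Combining Weyl's inequality for the $o(1)$ perturbation with rank-one interlacing for $\tfrac{1}{d}\mathbf{1}\mathbf{1}^\top$ yields $s_1(\Sigma_{\mathrm{pop}})\lesssim 1$ and $s_i(\Sigma_{\mathrm{pop}})\gtrsim 1$ for every $i\le d-1$; thus with $r:=d-1\asymp d$ the population bulk obeys the required two-sided bound.

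Finally I would pass from $\Sigma_{\mathrm{pop}}$ to $\bar\Sigma$ by concentration of sample covariance applied to the centered stacked vectors $(a_i-\mu;\,\bar a_i-\nu)\in\R^{2k}$, which are sub-Gaussian with parameter $O(1)$ (each is a Lipschitz image of the standard Gaussian $x$, by Example~\ref{ex:single_relu_activation}). Applying \cite[Remark~9.2.3]{vershynin2018high} to this $2k$-dimensional vector, and noting that the operator norm of the off-diagonal block of a matrix is at most that of the whole matrix, gives $\|\tfrac{1}{n}\sum_i(\bar a_i-\nu)(a_i-\mu)^\top-\Sigma_{\mathrm{pop}}\|_{\op}\lesssim\sqrt{k/n}+k/n\lesssim C_0^{-1/2}$; replacing $\mu,\nu$ by $\bar\mu,\bar\nu$ contributes only further $O(1/C_0)$ terms, so $\|\bar\Sigma-\Sigma_{\mathrm{pop}}\|_{\op}\lesssim C_0^{-1/2}$. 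Choosing the constant $C_0$ large enough that this quantity is below half the population lower bound, Weyl's inequality transfers the two-sided spectral bound to $\bar\Sigma$ with $r$ unchanged, and a union bound over the finitely many high-probability events (Theorems~\ref{th:gauss_spike1} and~\ref{thm:exp}, the mean concentration, the rectangular-Gaussian singular-value bounds, and the covariance concentration) completes the proof. I expect the only genuinely delicate point to be the $\Omega(1)$ lower bound on the $d$-th singular value of the Gaussian product $\bar V V^\top$, held uniformly over its entire $d$-dimensional range --- this is exactly what guarantees $r\asymp d$ and survives both the rank-one bump $\tfrac{1}{d}\mathbf{1}\mathbf{1}^\top$ and the $o(1)$ error from Theorem~\ref{thm:exp}; everything else is concentration already packaged in the cited results.
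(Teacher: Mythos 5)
Your proposal is correct and follows essentially the same route as the paper: Theorem~\ref{th:gauss_spike1} for $\tfrac1nAA^\top$, the centering identity $\tfrac1n\overline AA^\top=\bar\nu\bar\mu^\top+\overline\Sigma$, mean and cross-covariance concentration via the stacked $2k$-dimensional sub-Gaussian vector, Theorem~\ref{thm:exp} for the population bulk, and the bound $s_d(\overline VV^\top)\ge s_d(\overline V)\,s_d(V)\gtrsim1$ from rectangular Gaussian singular-value estimates. The only (immaterial) difference is that you keep the $\tfrac1d\mathbf 1\mathbf 1^\top$ term in the bulk and dispose of it by rank-one singular-value interlacing, obtaining $r=d-1$, whereas the paper absorbs it into the spike term after noting it is close to a multiple of $\overline\nu\bar\mu^\top$; both yield $r\asymp d$ and the required two-sided spectral bounds.
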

\begin{proof}
Let $a_i$ and $\overline{a}_i$ denote the $i$'th rows of $A$ and $\overline{A}$, respectively. We have already seen in Theorem~\ref{th:gauss_spike1} that there exist constants $C_0,C_1,C_2>0$ such that in the regime $ k\asymp d$ and $n\geq C_0d$, with probability tending to one as $d\to \infty$, we may write 
$$\tfrac{1}{n}AA^\top=\bar \mu\bar \mu^\top+\overline\Sigma,$$
where the empirical mean $\bar \mu=\frac{1}{n}\sum_{i=1}^n a_i$ and the empirical covariance $\overline{\Sigma}=\frac{1}{n}\sum_{i=1}^n (a_i-\bar \mu)(a_i-\bar \mu)^\top$ satisfy
$ \|\bar \mu\|^2\asymp d $ and $C_1 I\preceq\overline{\Sigma}\preceq C_2 I$. Therefore we now focus on $\tfrac{1}{n}\overline{A}A^\top$. To this end, define the empirical means
\[
\bar\mu:=\frac1n\sum_{i=1}^n a_i,
\qquad
\bar\nu:=\frac1n\sum_{i=1}^n \overline a_i,
\]
and the empirical cross--covariance
\[
\overline\Sigma
:=\frac1n\sum_{i=1}^n (\overline a_i-\overline\nu)\,(a_i-\bar\mu)^\top.
\]
A simple expansion shows that
\[
\overline B
=\tfrac1n\overline A A^\top
=\overline{\nu}\,\bar\mu^\top+\overline\Sigma,
\]
so $\overline B$ already has the desired rank--one plus bulk structure with
right spike vector $\bar\mu$.

Exactly as in the proof of Theorem~\ref{th:gauss_spike1} (using Gaussian concentration), both $a$ and $\overline a$ are
subgaussian:
\[
\|\langle a-\EE a,u\rangle\|_{\psi_2}\lesssim \|u\|_2,
\qquad
\|\langle \overline a-\EE \overline a,u\rangle\|_{\psi_2}\lesssim \|u\|_2
\qquad\forall u\in\R^k.
\]
Applying the covariance concentration inequality
\cite[Remark~9.2.3]{vershynin2018high} to the joint vector
$(a^\top,\overline a^\top)^\top\in\R^{2k}$ and reading off the off-diagonal blocks yields, for $n\ge C_0 d$, the estimate
\[
\|\overline\Sigma-\Sigma\|_{\op}
\;\lesssim\;
\sqrt{\frac{d}{n}}+\frac{d}{n},
\]
with probability at least $1-2e^{-d}$.
Similarly, by \cite[Theorem~1]{hsu2012tail} we obtain
\[
\|\bar\mu-\mu\|_2\lesssim \sqrt{\tfrac{d}{n}},
\qquad
\|\overline{\nu}-\nu\|_2\lesssim \sqrt{\tfrac{d}{n}},
\]
so that $\|\bar\mu\|_2\asymp\|\bar\nu\|_2\asymp \sqrt d$ whenever $n\ge C_0 d$. Now Theorem~\ref{thm:exp} showed \[
\Bigl\|\Sigma-\Bigl(\tfrac1{2\pi}\,\overline V V^\top
+\tfrac1d{\bf 1}{\bf 1}^\top\Bigr)\Bigr\|_{\op}
=o_d(1).
\]
Now observe that 
$$\left\|\frac{1}{d}{\bf 1}{\bf 1}^{\top}-\frac{1}{d}{\overline{\nu}}\,\overline{\mu}^{\top}\right\|_{\op}\leq \left\|\frac{1}{d}{\bf 1}{\bf 1}^{\top}-\frac{1}{d}{{\nu}}\,{\mu}^{\top}\right\|_{\op}+\left\|\frac{1}{d}{\nu}{\mu}^{\top}-\frac{1}{d}{\overline{\nu}}\,\overline{\mu}^{\top}\right\|_{\op}\leq o_{d}(1)+\sqrt{\frac{d}{n}}.$$
Thus we may write
$$\tfrac1n\overline A A^\top
=(1+\tfrac{1}{d})\overline{\nu}\,\bar\mu^\top+\tfrac1{2\pi}\,\overline V V^\top+E
$$
where $\|E\|_{\op}\lesssim o_{d}(1)+\sqrt{\frac{d}{n}}$.
Now observe $\overline V\, V^{\top}(\overline V\, V^{\top})^{\top}\succeq \lambda_{d}( V^{\top}V)\overline V\,\overline V^\top$.
Thus standard results \cite[Theorem 4.6.1]{vershynin2010introduction} on the extremal singular values of tall Gaussian matrices (recall we are in the regime $k/d\to \gamma\in (1,\infty)$)  imply that
there exist constants $0<c_1\le c_2<\infty$ such that with probability tending to one as $d\to\infty$, we have 
\begin{equation}\label{eq:Sigma-pop-sv-structure}
c_2 \ \ge\ s_1(\overline V V^\top)\ \ge\ s_d(\overline V V^\top)\ \ge\ c_1.
\end{equation}
Therefore after possibly inflating the constants $c_1,c_2>0$ the bulk $Q:=\tfrac1{2\pi}\,\overline V V^\top+E$ satisfies 
$$c_2 \ \ge\ s_1(Q)\ \ge\ s_d(Q)\ \ge\ c_1\qquad \textrm{and}\qquad  s_{d+1}\lesssim o_{d,\PP}(1)+O_{d,\PP}\left(\sqrt{\frac{d}{n}}\right).$$
Thus the proof is complete.
\end{proof}

\subsubsection{The nuclear rank at initialization is constant.}
We now look at the consequences of the spiked model \eqref{eqn:spike_bulk2} on the nuclear rank of the gradient. Namely, consider initializing gradient descent on the problem~\eqref{eqn:rfm_app_second} at the all-zero matrix $W_0=0$. Then taking into account the expression for the gradient 
$$
\nabla \mathcal{L}(W)=\tfrac{1}{n}(WA-\overline W\,\overline A)A^{\top},$$
we have 
\begin{equation}\label{eq:grad_init}
\nabla \mathcal{L}(W_0)=-\overline{W}\,\overline{B}\qquad \textrm{where}\qquad \overline{B}:=\tfrac{1}{n}\overline{A}A^{\top}.
\end{equation}
Now suppose that the ground truth matrix $\overline{W}$ does not eliminate the spike in the following sense:
$$\|\overline{W}\|_{\op}\leq C \qquad \textrm{and}\qquad \|\overline{W}u\|_2\geq \kappa\cdot \|u\|_2.$$ 
This is indeed the case for example if the singular values $s_1(\overline{W})$ and $s_{k}(\overline{W})$ are bounded from above and below by a numerical constant or is true with high probability if the entries of $\overline{W}\in \R^{m\times k}$ are iid Gaussian $\mathcal{N}(0,\tfrac{1}{m})$ and $m\gtrsim k$.
Then we can estimate the nuclear rank of the gradient as:
$$\sqrt{\nr(\overline{W}\,\overline{B})}=\frac{\|\overline{W}\,\overline{B}\|_{*}}{\|\overline{W}\,\overline{B}\|_F}\leq \frac{C\|\overline{B}\|_{*}}{\|\overline{W}uv^\top\|_F- \|\overline{W}\Sigma\|_{F}}\leq \frac{C\|u\|_2\|v\|_2+C\|\Sigma\|_*}{\kappa\|u\|_2\|v\|_2-C\|\Sigma\|_F}\lesssim \frac{d}{d-\sqrt{d}}\asymp 1.$$
Thus indeed the nuclear rank of the gradient $\nabla \mathcal L(W_0)$ is  bounded by a constant.

\subsubsection{The nuclear rank after one gradient step scales with dimension.}
Let us now analyze what happens after a single gradient step $W_{1}=W_0-\eta \nabla \mathcal{L}(W_0)$. Observe the gradient of the objective function at $W_1$ is 
$$\nabla \mathcal{L}(W_1)=-\overline{W}\,\underbrace{\overline{B}(I-\eta  B)}_{=:T_{\eta}},$$
where recall that we set $\overline{B}=\tfrac{1}{n}\overline{A}\,A^{\top}$ and ${B}=\tfrac{1}{n}{A}\,A^{\top}$.
We will now show that for any stepsize $\eta=\frac{1}{c+\|B\|_{\op}}$ under natural randomness assumptions, 
the nuclear rank of $\nabla \mathcal{L}(W_1)$ grows with dimension---the main result of the section.

\begin{thm}[Nuclear rank after one iteration grows with dimension]\label{lem:key_result_buya}
Suppose that Assumption~\ref{assump:spiked2} holds and fix $\eta=\frac{1}{c+\|B\|_{\op}}$ for any constant $c>0$. Let $\overline{M}\in \R^{k\times r}$ be the matrix consisting of the first $r$ columns of $\overline{W}\,\overline{L}$, where $\overline{L}\in \R^{k\times k}$ is the left orthogonal factor in the ordered singular value decomposition of $\overline{B}$.
Then the following estimates hold in the regime $k\asymp d$ for all large $d$:
\begin{equation}\label{eqn:key_est}
\begin{aligned}
\nr(\nabla \mathcal{L}(W_1))\geq \frac{\|\overline M\|_*}{\|\overline W\|_F}.
\end{aligned}
\end{equation}
In particular, if $\overline{W}$, $\overline{V}$, $V$  have iid Gaussian entries $\mathcal{N}(0,\frac{1}{d})$, and we are in the regime $m\asymp d$ and $\tfrac{k}{d}\to\gamma\in (1,\infty)$, then there exists a constant $C$ such that as long as $n\geq C d$ we have 
\begin{equation}\label{eqn:claim_main}
\nr(\mathcal{L}(W_1))\geq \Omega_{d,\PP}(d).
\end{equation}
\end{thm}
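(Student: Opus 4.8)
The plan is to mirror the realizable case (Lemma~\ref{lem:nuc_rank_grows}) and the multi-step analysis (Theorem~\ref{thm:multimodel1}): reduce $\nabla\mathcal L(W_1)$ to a single matrix factor, show that one gradient step ``flattens'' the spectrum of that factor, and then transfer the resulting large nuclear rank across the left multiplication by $\overline W$. Since $W_0=0$ we have $W_1=\eta\,\overline W\,\overline B$ with $\overline B:=\tfrac1n\overline A A^\top$, so writing $B:=\tfrac1n AA^\top$ a direct computation gives
\[
\nabla\mathcal L(W_1)=W_1B-\overline W\,\overline B=-\overline W\,\overline B(I-\eta B)=-\overline W T_\eta,\qquad T_\eta:=\overline B(I-\eta B).
\]
Using the spiked structure $\overline B=vu^\top+\Sigma$, $B=uu^\top+\Gamma$ of Assumption~\ref{assump:spiked2} (and its interlacing consequences $\lambda_1(B)\asymp d$, $\lambda_i(B)\in[c_1,c_2]$ for $i\ge2$), I would decompose
\[
T_\eta=\underbrace{v\,(u-\eta Bu)^\top}_{\text{rank one}}+\Sigma(I-\eta B).
\]

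The core step is the ``flattening'' estimate: $\|T_\eta\|_{\op}=O(1)$ while $T_\eta$ has $\gtrsim d$ singular values bounded below by a numerical constant. For the operator norm, since $Bu=\|u\|_2^2u+\Gamma u$ and $\eta=1/(c+\|B\|_{\op})$ with $\|B\|_{\op}=\lambda_1(B)=\|u\|_2^2+O(1)$ and $\|u\|_2^2\asymp d$, one gets $\eta\|u\|_2^2=1-O(1/d)$, hence $\|u-\eta Bu\|_2=O(1/\sqrt d)$ and the rank-one term has operator norm $\|v\|_2\cdot O(1/\sqrt d)=O(1)$; together with $\|\Sigma(I-\eta B)\|_{\op}\le\|\Sigma\|_{\op}\le c_2$ this gives $\|T_\eta\|_{\op}=O(1)$. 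For the lower bound on singular values, let $P_\perp$ be the orthogonal projection onto the complement of the top eigenvector $u_1$ of $B$. As $P_\perp$ is a spectral projection of $B$ it commutes with $I-\eta B$, and for $d$ large enough that $\eta c_2\le\tfrac12$ every eigenvalue of $(I-\eta B)^2$ on the range of $P_\perp$ is at least $\tfrac14$, so $(I-\eta B)^2\succeq\tfrac14P_\perp$. Conjugating by $\Sigma$ yields $[\Sigma(I-\eta B)][\Sigma(I-\eta B)]^\top\succeq\tfrac14(\Sigma P_\perp)(\Sigma P_\perp)^\top$, hence $s_i(\Sigma(I-\eta B))\ge\tfrac12s_i(\Sigma P_\perp)$; since $\Sigma P_\perp=\Sigma-(\Sigma u_1)u_1^\top$ is a rank-one perturbation of $\Sigma$, Weyl's inequality for singular values gives $s_i(\Sigma P_\perp)\ge s_{i+1}(\Sigma)\ge c_1$ for $i\le r-1$. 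Finally $T_\eta$ differs from $\Sigma(I-\eta B)$ by a rank-one matrix, so $s_i(T_\eta)\ge s_{i+1}(\Sigma(I-\eta B))\ge\tfrac{c_1}{2}$ for $i\le r-2$. Since $r\asymp d$ and $k\asymp d$, this shows $\|T_\eta\|_F^2\asymp d$ and, more precisely, that the ``bulk'' of $T_\eta$ consists of $\asymp d$ singular values confined to a fixed interval $[\tfrac{c_1}{2},O(1)]$.

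Given these two facts, the displayed estimate follows the template of Theorem~\ref{thm:multimodel1}: writing $\overline B=\overline L\overline S\overline R^\top$ so that $\nabla\mathcal L(W_1)=-(\overline W\overline L)\,\overline S\,\overline R^\top(I-\eta B)$ and splitting $\overline L$ into its first $r$ columns (whose $\overline W$-image is $\overline M$) and the rest, the rank-one spike contributes at most $O(r)$ to the nuclear norm while leaving an $O(1)$ operator-norm factor elsewhere, so $\|\nabla\mathcal L(W_1)\|_*\gtrsim\|\overline M\|_*-O(r)$ and $\|\nabla\mathcal L(W_1)\|_F\le\|\overline W\|_F\|T_\eta\|_{\op}\lesssim\|\overline W\|_F$, which yields the stated lower bound on $\nr(\nabla\mathcal L(W_1))$. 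For the Gaussian specialization I would condition on the probability-$(1-o(1))$ event (Corollary~\ref{th:gauss_spike2}) that Assumption~\ref{assump:spiked2} holds for $A,\overline A$, and then use only the independent randomness of $\overline W$: since $\|\nabla\mathcal L(W_1)\|_{\op}\le\|\overline W\|_{\op}\|T_\eta\|_{\op}=O_{d,\PP}(1)$ and, conditionally, the rows of $\overline W T_\eta$ are i.i.d.\ $\mathcal N(0,\tfrac1dT_\eta^\top T_\eta)$ so that $\|\nabla\mathcal L(W_1)\|_F^2$ concentrates around $\tfrac md\|T_\eta\|_F^2\asymp d$, the elementary inequality $\nr(\cdot)\ge\st(\cdot)=\|\cdot\|_F^2/\|\cdot\|_{\op}^2$ gives $\nr(\nabla\mathcal L(W_1))=\Omega_{d,\PP}(d)$; equivalently, $\overline M\in\R^{m\times r}$ then has i.i.d.\ $\mathcal N(0,1/d)$ entries with $m\asymp r\asymp d$, so $\|\overline M\|_*=\Omega_{d,\PP}(d)$ while $\|\overline W\|_F=O_{d,\PP}(\sqrt d)$.

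The main obstacle is the flattening step. In the realizable model the relevant factor $(I-\eta B)B$ is symmetric with an explicit eigendecomposition, so its conditioning is immediate; here $T_\eta=\overline B(I-\eta B)$ is a product of matrices that neither commute nor share an eigenbasis ($\overline B\ne B$), so there is no spectral formula. The resolution is to peel off the rank-one spike $v(u-\eta Bu)^\top$—using that $\overline B$ and $B$ share the \emph{right} spike direction $u$ and that $\eta\|u\|_2^2\approx1$, which is precisely what makes this term $O(1)$ rather than $\Theta(d)$—and then to control the bulk term $\Sigma(I-\eta B)$ through the operator inequality $(I-\eta B)^2\succeq\tfrac14P_\perp$ and singular-value interlacing, rather than through nuclear-norm triangle inequalities, which would be swamped by full-rank error terms of operator norm $O(1)$ and nuclear norm $O(d)$.
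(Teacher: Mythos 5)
Your reduction to $\nabla\mathcal L(W_1)=-\overline W\,T_\eta$ and your treatment of $T_\eta$ are correct but genuinely different from the paper's. The paper diagonalizes both Gram matrices, invokes Davis--Kahan (twice) to show that the top right singular vector of $\overline B$ and the top eigenvector of $B$ are both within $O(1/d)$ of $\hat u$, and then verifies that the resulting $k\times k$ matrix $Q=\overline S\,\overline R^\top L(I-\eta S)$ (after extracting $\Diag(1,\overline s_{\ge2})$) is well conditioned via a block upper-triangular lemma. Your route encodes the same alignment purely algebraically: $u^\top(I-\eta B)=(u-\eta Bu)^\top$ has norm $O(1/\sqrt d)$ because $\eta\|u\|_2^2=1-O(1/d)$, so the spike is killed without any perturbation theory, and the bulk is handled by $(I-\eta B)^2\succeq\tfrac14P_\perp$ plus singular-value interlacing. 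This is cleaner and correct: it delivers $\|T_\eta\|_{\op}=O(1)$ and $s_i(T_\eta)\gtrsim1$ for $i\lesssim r$. Your stable-rank route to \eqref{eqn:claim_main} is also valid and simpler than the paper's: given $\|\overline WT_\eta\|_{\op}=O_{d,\PP}(1)$ and $\|\overline WT_\eta\|_F^2\asymp_{\PP}\tfrac md\|T_\eta\|_F^2\asymp d$, the inequality $\nr\ge\st$ (from $\|G\|_F^2\le\|G\|_{\op}\|G\|_*$) gives $\Omega_{d,\PP}(d)$ directly, bypassing any nuclear-norm lower bound.

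The gap is in the deterministic estimate \eqref{eqn:key_est}. There you abandon your own decomposition and ``follow the template of Theorem~\ref{thm:multimodel1},'' asserting $\|\nabla\mathcal L(W_1)\|_*\gtrsim\|\overline M\|_*-O(r)$. Two problems. First, the subtracted term: in Theorem~\ref{thm:multimodel1} the correction is the rank of the spike, which there is a separate small parameter; here the spike of $\overline B$ has rank one, so the correction should be $O(1)$, whereas the $r$ of Assumption~\ref{assump:spiked2} satisfies $r\asymp d$ --- as written, $\|\overline M\|_*-O(r)=\Omega(d)-O(d)$ is vacuous. Second, and more substantively, your flattening of $T_\eta$ shows it has $\asymp d$ singular values of constant order, but says nothing about \emph{which} left singular subspace they occupy; to conclude $\|\overline W\,T_\eta\|_*\gtrsim\|[\overline W\,\overline L]_{1:r}\|_*$ you must relate the column space of $T_\eta$ to the leading columns of $\overline L$. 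This is precisely what the paper's Davis--Kahan and block-conditioning argument (Lemma~\ref{lem:block-upper-triangular-sigma-min}, then Lemma~\ref{lem:lower_bd_nuc}) supplies, and which your sketch omits. Your decomposition can be made to close this gap --- e.g.\ $\|\overline W\Sigma(I-\eta B)\|_*\ge\tfrac12\|\overline W\Sigma P_\perp\|_*\ge\tfrac12\|\overline W\Sigma\|_*-O(1)$, followed by Lemma~\ref{lem:lower_bd_nuc} applied in the singular basis of $\Sigma$ --- but that yields a lower bound in terms of the left singular factor of $\Sigma$ rather than of $\overline B$, so the statement you would prove is a variant of \eqref{eqn:key_est}, not the one in the theorem. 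Either supply the alignment step or restate the deterministic bound in terms of the subspace your argument actually controls.
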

\begin{proof}
Let us write the singular value decompositions
$$B=L S L^{\top}\qquad \textrm{and}\qquad \overline{B}=\overline{L}\,\overline{S}\, \overline{R}^{\top},$$
where $S,\overline{S}\in \R^{k\times k}$ are diagonal matrices with non-negative decreasing entries (singular values),  and $L,\overline{L}\in \R^{k\times k}$ and $\overline{R}\in \R^{k\times k}$ are orthogonal matrices. We then compute
$$\overline{L}^{\top}T_{\eta}L=\overline{S}\, \overline{R}^{\top}L(I-\eta S).$$
The goal is now to show that the product $\overline{R}^{\top}L$ nearly has a block form. To this end, let us write 
$$\overline R=\begin{bmatrix}r_0& R_0\end{bmatrix}\qquad \textrm{and}\qquad L=\begin{bmatrix} \ell_0&L_0\end{bmatrix}$$
for some vectors  $r_0,\ell_0\in \R^{k}$. Now the Davis-Kahan Theorem \cite{davis1970rotation} applied to the two matrices $AA^{\top}$ and $uu^{\top}$ shows that the corresponding top eigenvectors are close 
\begin{equation}\label{eqn:dk1}
\argmin_{q\in \{\pm 1\}}\|\hat{u}-q \ell_0\|_2\lesssim \frac{1}{d},
\end{equation}
where we set $\hat u=u/\|u\|_2$. Without loss of generality, we may assume the optimal orientation is $q=1$. Similarly, applying the Davis-Kahan theorem for singular values \cite{wedin1972perturbation} to the right singular vectors of $\overline{A}A^{\top}$ and $vu^{\top}$, we deduce 
\begin{equation}\label{eqn:dk2}
\argmin_{q\in \{\pm 1\}}\|\hat{u}-q r_0\|_2\lesssim \frac{1}{d}.
\end{equation}
Again without loss of generality, we may assume that  $q=1$ is optimal. Let us now write $\overline{R}^{\top}L$ in block form:
$$\overline{R}^{\top}L=\begin{bmatrix} r_0^{\top}\ell_0 & r_0^{\top}L_0\\
R_0^{\top}\ell_0 & R_0^{\top}L_0
\end{bmatrix}$$
Now using equations~\eqref{eqn:dk1} and \eqref{eqn:dk2} we deduce 
$$|r_0^{\top}\ell_0-1|\lesssim \frac{1}{d},\qquad \|r_0^{\top}L_0\|_2\lesssim \frac{1}{d},\qquad \|R_0^{\top}\ell_0\|_2\lesssim \frac{1}{d}.$$
Moreover, by standard principal--angle
arguments, the singular values of \(R_0^\top L_0\) are the cosines of the
principal angles between the subspaces \(r_0^\perp\) and \(\ell_0^\perp\)
(see, e.g.,~\cite{bjorck1973numerical} or~\cite{yu2015useful}).
Thus we deduce
$$s_{k-1}(R_0^\top L_0)=|r_0^\top\ell_0|\gtrsim 1-\frac{1}{d}.$$
Thus writing $S=\Diag(s)$ and $\overline{S}=\Diag(\overline{s})$ for some vectors $s$ and $\overline s$ we deduce 
\begin{align*}
\overline{S}\, \overline{R}^{\top}L(I-\eta S)&=\Diag(1,\overline{s}_{\geq 2})\cdot\underbrace{\begin{bmatrix} \overline{s}_1 (1-\eta s_1) r_0^{\top}\ell_0 & \overline{s}_1 r_0^{\top}L_0(I-\eta\Diag( s_{\geq 2}))\\
(1-\eta s_1)R_0^{\top}\ell_0 &  R_0^{\top}L_0 (I-\eta\Diag( s_{\geq 2}))
\end{bmatrix}}_{=:Q},
\end{align*}
where $s_{\geq 2}$ and $\overline{s}_{\geq 2}$ denote, respectively, the vectors $s$ and $\bar s$ with the first coordinate removed.
Now observe $\overline{s}_1\asymp d$ and $1-\eta s_1\asymp \left(1-\frac{\|A\|^2_{\rm op}}{c+\|A\|^2_{\op}}\right)\asymp \frac{1}{d}$. Therefore we may write  
$$Q=\begin{bmatrix}
a & z^{\top}\\
0 & D
\end{bmatrix}+E,$$
where $a\asymp 1$, $\|z\|_2\lesssim 1$, $\|E\|_{\op}\lesssim \frac{1}{d^2}$, and $1\lesssim s_{k-1}(D)\leq s_1(D)\lesssim 1$. It follows immediately that the operator norm of the blocked matrix is bounded by a constant, while its minimal singular value is bounded away from zero due to the standard linear algebraic Lemma~\ref{lem:block-upper-triangular-sigma-min}. Thus we deduce
$$1\lesssim s_{k}(Q)\lesssim s_{1}(Q)\lesssim 1.$$
We therefore deduce 
$$\nr(\nabla \mathcal{L}(W_1))=\nr(\overline{W}\,\overline{L}\,\overline{L}^{\top}T_{\eta}{L})=\nr(\overline{W}\,\overline{L}\Diag(1,\overline{s}_{\geq 2}) Q)\asymp \nr(\overline{W}\,\overline{L}\Diag(1,\overline{s}_{\geq 2})).$$
Using the linear algebra Lemma~\ref{lem:lower_bd_nuc} with $P=[1:r]$ we immediately deduce 
$$\|\overline{W}\,\overline{L}\Diag(1,\overline{s}_{\geq 2})\|_*\gtrsim \|\overline M\|_*,$$
while the inequality $\|\overline{W}\,\overline{L}\Diag(1,\overline{s}_{\geq 2})\|_F\lesssim \|\overline W\|_F$ holds trivially. The claimed estimate~\eqref{eqn:key_est} follows immediately.

Suppose now that $\overline{W},V,\overline{V}$  have iid Gaussian entries $\mathcal{N}(0,\frac{1}{d})$. Then we have already seen in Corollary~\ref{th:gauss_spike2} that Assumption~\ref{assump:spiked2} is satisfied with $r=d$. Note that $\overline M\in \R^{k\times r}$ has iid Gaussian entries $\mathcal{N}(0,\frac{1}{d})$ and therefore we have $\|\overline W\|_F= O_{d,\PP}(\sqrt{d})$, $\|\overline{M}\|_*= \Omega_{d,\PP}(d)$, and $\|\overline{W}\|_{\rm op}= O_{d,\PP}(1).$ 
The claimed estimate \eqref{eqn:claim_main} follows immediately.
\end{proof}

\subsubsection{Multi-step and multi-spike for the teacher--student model}

Theorem~\ref{lem:key_result_buya} shows that, under the
rank-one spiked model of Assumption~\ref{assump:spiked2}, the nuclear rank of the gradient
in the teacher--student random feature model jumps from $O(1)$ at initialization to order
$d$ after a single gradient step. In the realizable model, we further saw in Section~\ref{sec:multimodel1}
that a similar phenomenon persists for multi-spiked Gram matrices and over many
gradient steps (Theorem~\ref{thm:multimodel1}).
The goal of this section is to develop an analogous multi-step, multi-spike theory in
the teacher--student setting.
The main technical difference compared to the realizable case is that the gradient now
contains the cross-Gram matrix
$
\overline B := \frac{1}{n}\,\overline A A^\top,
$
rather than the symmetric Gram matrix $B = \tfrac{1}{n}AA^\top$ alone. As a result, the
gradient at time~$t$ takes the form $\overline W\,\overline B$ multiplied by a polynomial
in~$B$, and the relevant spectrum is that of the non-symmetric matrix
$\overline B(I-\eta B)^t$. We will see, however, that under a natural multi-spiked
generalization of Assumption~\ref{assump:spiked2}, the same qualitative picture holds:
after a short burn-in period the gradient nuclear rank is of order~$d$ for a long time
window, so that the spectral-versus-Euclidean advantage is again visible.

We continue to work in the proportional regime $k\asymp d$ as $d\to\infty$ and we use the
same notation as in the rest of this section. In particular,
\[
B := \frac1n AA^\top \in\R^{k\times k},
\qquad
\overline B := \frac1n \overline A A^\top \in\R^{k\times k},
\]
and gradient descent on the objective~\eqref{eqn:rfm_app_second} satisfies
\[
W_{t+1} = W_t - \eta \nabla \mathcal L(W_t),
\qquad
\nabla \mathcal L(W_t) = \frac1n (W_tA - \overline W\,\overline A)A^\top
= W_t B - \overline W\,\overline B.
\]

We first formulate a multi-spiked analogue of Assumption~\ref{assump:spiked2}. As in the
realizable case, we assume that $B$ decomposes into a low-rank ``signal'' part and a
well-conditioned bulk, as in Assumption~\ref{assump:multispiked}, and we impose a
compatible structure on the cross-Gram matrix $\overline B$.

\begin{assumption}[Multi-spiked teacher--student model]\label{assump:multispiked2}{\rm
Suppose that we may write
   \begin{equation}\label{eqn:spike_bulk2}
\begin{aligned}    
 \tfrac{1}{n}A\, A^{\top}=H_1+\Gamma\qquad \textrm{and}\qquad
 \tfrac{1}{n}\overline A\, A^{\top}=H_2+\Sigma,
\end{aligned}
\end{equation}
for some matrices $H_1,H_2\in\R^{k\times k}$. Suppose that the following conditions are true.
\begin{itemize}
\item {\bf Spike properties} $H_1$ and $H_2$ have rank $p=o(d)$, the matrix $H_1$ is symmetric positive semidefinite, and the singular values of $H_1$ and $H_2$ lie in the interval $[d^{\ell},d^u]$ for some constants $u,\ell\in(0,1]$. Moreover, $H_1$ and $H_2$ admit compact singular value decompositions
$$H_1=UQU^\top\qquad \textrm{and}\qquad H_2=V\overline{Q}U^\top$$
for orthogonal matrices $U,V\in\R^{k\times p}$ and nonnegative diagonal matrices $Q,\overline{Q}\in \R^{p\times p}$ with non-increasing diagonal entries. Note that the right orthogonal factors of $H_1$ and $H_2$ coincide.
    \item {\bf Bulk properties.} The matrix $\Gamma$ is symmetric and for some constants $0< c_1\leq c_2<\infty$, we have $c_1 I_k\preceq \Gamma \preceq c_2I_k$. Additionally, there exists $r\asymp d$ satisfying  
    $c_2\geq s_1(\Sigma)\geq s_r(\Sigma)\geq c_1$. 
\end{itemize}
}
\end{assumption}

In words, Assumption~\ref{assump:spiked2} ensures that the matrices $\tfrac{1}{n} A A^\top$ and $\tfrac{1}{n} \overline A A^\top$ decompose into a low-rank \emph{signal} (or ``spike'') part $H_1,H_2$ plus a \emph{bulk} part $\Gamma,\Sigma$.  
The spike matrices $H_1,H_2$ have rank $p=o(d)$, so the signal concentrates in a low-dimensional subspace compared to the ambient dimension, and their nonzero singular values grow like a power of $d$; moreover, the right singular vectors of $H_1$ and $H_2$ in an ordered SVD coincide, so both spikes are aligned in the same latent directions.  
The bulk matrix $\Gamma$ is well-conditioned (its eigenvalues are bounded above and below by fixed constants), while $\Sigma$ has $r\asymp d$ singular values of constant order.  
Overall, this is a multi-spike teacher–student model in which a small number of strong signal directions are embedded in an otherwise well-behaved high-dimensional bulk. In particular it is straightforward to check that Assumption~\ref{assump:spiked2} is a strict generalization of the
rank-one teacher--student model in Assumption~\ref{assump:spiked2}.

We begin analyzing gradient descent in the multi-spiked student-teacher setting by deriving a simple expression for the gradient of $\mathcal{L}$ along the iterates. This is the content of the following lemma which parallels the realizable case.

\begin{lem}[Gradient recursion]\label{lem:grad-recursion-ts}
Let $W_{t+1}=W_t-\eta\nabla\mathcal L(W_t)$ be the gradient descent iterates on the
objective~\eqref{eqn:rfm_app_second}, initialized at $W_0=0$. Then the gradients satisfy
\[
\nabla\mathcal L(W_t)
= -\,\overline W\,\overline B\,(I-\eta B)^t
\quad\text{for all }t\ge 0.
\]
\end{lem}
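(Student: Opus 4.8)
The plan is a straightforward induction on $t$, built on two ingredients already in hand: the closed form $\nabla\mathcal L(W)=WB-\overline W\,\overline B$ (with $B=\tfrac1nAA^\top$ and $\overline B=\tfrac1n\overline AA^\top$) and the recursion $W_{t+1}=W_t-\eta\nabla\mathcal L(W_t)$ initialized at $W_0=0$. For the base case $t=0$, I would simply note that $W_0=0$ forces $\nabla\mathcal L(W_0)=-\overline W\,\overline B=-\overline W\,\overline B(I-\eta B)^0$, as required.

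For the inductive step, assuming $\nabla\mathcal L(W_t)=-\overline W\,\overline B(I-\eta B)^t$, I would substitute the update $W_{t+1}=W_t-\eta\nabla\mathcal L(W_t)=W_t+\eta\,\overline W\,\overline B(I-\eta B)^t$ into $\nabla\mathcal L(W_{t+1})=W_{t+1}B-\overline W\,\overline B$, obtaining
\[
\nabla\mathcal L(W_{t+1})
=\bigl(W_tB-\overline W\,\overline B\bigr)+\eta\,\overline W\,\overline B(I-\eta B)^tB
=\nabla\mathcal L(W_t)+\eta\,\overline W\,\overline B(I-\eta B)^tB,
\]
and then factor, using the inductive hypothesis, to conclude
\[
\nabla\mathcal L(W_{t+1})
=-\overline W\,\overline B(I-\eta B)^t+\eta\,\overline W\,\overline B(I-\eta B)^tB
=-\overline W\,\overline B(I-\eta B)^t(I-\eta B)
=-\overline W\,\overline B(I-\eta B)^{t+1}.
\]
This closes the induction and proves the lemma.

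As an alternative I would record the residual-based derivation, which avoids induction entirely: writing $R_t:=W_tA-\overline W\,\overline A$, the update gives $R_{t+1}=R_t(I-\tfrac{\eta}{n}A^\top A)$ and hence $R_t=-\overline W\,\overline A(I-\tfrac{\eta}{n}A^\top A)^t$ since $R_0=-\overline W\,\overline A$; the polynomial push-through identity $(I-\tfrac{\eta}{n}A^\top A)^tA^\top=A^\top(I-\tfrac{\eta}{n}AA^\top)^t=A^\top(I-\eta B)^t$ then yields $\nabla\mathcal L(W_t)=\tfrac{1}{n}R_tA^\top=-\overline W\,\overline B(I-\eta B)^t$. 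The lemma is elementary and I do not anticipate any genuine obstacle; the only care needed is to keep the noncommuting products $\overline W$, $\overline B$, $B$ in the correct order and (for the alternative route) to apply the push-through identity on the correct side. I would present the induction as the primary proof.
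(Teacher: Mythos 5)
Your proposal is correct and takes essentially the same approach as the paper: an induction built on the closed-form gradient $\nabla\mathcal L(W)=WB-\overline W\,\overline B$ and the update rule. The only cosmetic difference is that the paper first derives the one-step recursion $G_{t+1}=G_t(I-\eta B)$ and then unrolls it, whereas you substitute the explicit formula for $G_t$ directly in the inductive step; these are the same argument.
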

\begin{proof}
Recall that $\nabla\mathcal L(W) = W B - \overline W\,\overline B$.
Define $G_t := \nabla\mathcal L(W_t)$. Then
\[
G_0 = \nabla\mathcal L(W_0)
= W_0B -\overline W\,\overline B
= -\overline W\,\overline B.
\]
Moreover, using the update $W_{t+1}=W_t-\eta G_t$ we obtain
\[
G_{t+1}
= W_{t+1}B - \overline W\,\overline B
= (W_t-\eta G_t)B - \overline W\,\overline B
= G_t - \eta G_t B
= G_t(I-\eta B).
\]
By induction this yields
$G_t = G_0 (I-\eta B)^t = -\overline W\,\overline B\,(I-\eta B)^t$ for all $t\ge 0$.
\end{proof}

Thus, the time-$t$ gradient is obtained by multiplying the initial gradient
$-\overline W\,\overline B$ on the right by the polynomial $(I-\eta B)^t$. In particular,
the spectral properties of the family of matrices
\[
T_{\eta,t} := \overline B(I-\eta B)^t
\]
control the evolution of $\nabla\mathcal L(W_t)$. We are now ready to estimate the nuclear rank of the gradient after a short burn-in period. This is the content of the following theorem, which closely parallels Theorem~\ref{thm:multimodel1} in the realizable case. Henceforth, for any matrix $M$ we let $M_{i:j}$ denote the submatrix formed by columns $i$ through $j$.

\begin{thm}[Nuclear rank for multi-step GD]\label{thm:main_thrmgrfjishdfklds}
Suppose that Assumption~\ref{assump:spiked2} holds and that $\|\overline{W}\|_{\op}\lesssim 1$. Let $\overline{M}$ consist of the columns $p+1$ through $r-p$ of the matrix ${\overline W}\, \overline{L}$, where $\overline{L}$ is the left orthogonal factor in the ordered (decreasing)  singular values decomposition of $\overline{B}$. Then for any constants $q,C_3>0$ there exists a constant constant $C_1>0$ such that the following estimates hold 
\begin{align*}
\|\nabla \mathcal{L}(W_t)\|_{*}&\gtrsim  \|\overline{M}\|_{*}-pd^{u-\ell}-d^{-q}\\
\|\nabla \mathcal{L}(W_t)\|_{F}&\lesssim \|\overline{W}\|_{F}+\sqrt{p}d^{u-\ell}+d^{-q},
\end{align*}
for all $t$ in the interval $[C_1d^{u-\ell}\log(d), C_3 d^{u}]$.
\end{thm}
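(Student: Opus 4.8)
The plan is to combine the multi-step spectral analysis from the proof of Theorem~\ref{thm:multimodel1} with the subspace–alignment argument from the proof of Theorem~\ref{lem:key_result_buya}, now carried out for rank-$p$ spikes. By Lemma~\ref{lem:grad-recursion-ts} we have $\nabla\mathcal L(W_t)=-\overline W\,T_{\eta,t}$ with $T_{\eta,t}=\overline B\,(I-\eta B)^t$, where $B=\tfrac1n AA^\top$ governs the polynomial factor and the cross–Gram $\overline B=\tfrac1n\overline A A^\top$ is the new ingredient relative to the realizable case. So everything reduces to understanding $\overline B\,(I-\eta B)^t$, and the statement is a one-sided comparison of its nuclear and Frobenius norms after $\overline W$ is multiplied in.

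First I would record the spectrum of $B$ exactly as in Theorem~\ref{thm:multimodel1}: writing $B=H_1+\Gamma$ with $H_1$ of rank $p$ and spectrum in $[d^\ell,d^u]$ and $c_1I\preceq\Gamma\preceq c_2I$, rank-$p$ interlacing gives $p$ ``spike'' eigenvalues of $B$ that are $\gtrsim d^\ell$ (and, as in that proof, $\|B\|_{\op}=\Theta(d^u)$) together with $k-p$ ``bulk'' eigenvalues in $[c_1,c_2]$. Diagonalizing $B=LSL^\top$ and splitting $L=[L_1\mid L_2]$, $S=\Diag(S_1,S_2)$ along the spike/bulk partition, the stepsize $\eta=1/(c\|B\|_{\op})$, $c\ge1$, makes the diagonal of $(I-\eta S)^t$ split as $\Diag(D_1,D_2)$; the two elementary inequalities used in Theorem~\ref{thm:multimodel1} ($1-x\le e^{-x}$ on the spike, $1-x\ge e^{-2x}$ on the bulk) show that $\|D_1\|_{\op}\le d^{-Q}$ once $t\ge C_1 d^{u-\ell}\log d$ with $C_1=C_1(Q)$ large, while $D_2$ has all entries in $[c',1)$ for every $t\le C_3 d^u$, with $c'=c'(C_3)>0$; I choose $Q=Q(q)$ large enough that all residual spike contributions below come out $O(d^{-q})$. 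I also need crude spectral facts for $\overline B=H_2+\Sigma$: Weyl's inequality and singular-value interlacing (as in the paragraph following Assumption~\ref{assump:spiked2}) give that $\overline B$ has $p$ large singular values and $s_i(\overline B)\in[c_1,c_2]$ for $i\in[p+1:r-p]$; write $\overline B=\overline L\,\overline S\,\overline R^\top$ and split $\overline L=[\overline L_1\mid\overline L_2]$, $\overline R=[\overline R_1\mid\overline R_2]$, $\overline S=\Diag(\overline S_1,\overline S_2)$ accordingly.

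The heart of the argument is the alignment of spike subspaces. Since $H_1=UQU^\top$ and $H_2=V\overline QU^\top$ share the right orthogonal factor $U$ (Assumption~\ref{assump:multispiked2}) and the spike singular values ($\gtrsim d^\ell$) dominate the bulk norms ($\lesssim1$), the Davis–Kahan theorem \cite{davis1970rotation} applied to $B$ versus $H_1$, and the Wedin perturbation theorem \cite{wedin1972perturbation} applied to $\overline B$ versus $H_2$, give that the top-$p$ eigenspace of $B$ and the top-$p$ right singular subspace of $\overline B$ each lie within $O(d^{-\ell})$ (operator norm of the difference of projections) of $\range(U)$. Hence $\|\overline R_1^\top L_2\|_{\op}$ and $\|\overline R_2^\top L_1\|_{\op}$ are $O(d^{-\ell})$, while $\overline R_2^\top L_2$ is $(1-O(d^{-\ell}))$-nearly orthogonal. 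Plugging $\nabla\mathcal L(W_t)=-\overline W\,\overline L\,\overline S\,\overline R^\top L\,(I-\eta S)^t L^\top$ into this block structure and discarding every piece supported on the spike block — each such piece has rank $\le p$ and operator norm $\lesssim d^{u-\ell}$, either because it carries the tiny factor $D_1$ or because it has the form $H_2L_2D_2L_2^\top$ with $\|H_2\|_{\op}\|U^\top L_2\|_{\op}\lesssim d^u\cdot d^{-\ell}$ — I obtain
\[
\nabla\mathcal L(W_t)=-\,\overline W\,\overline L_2\,\overline S_2\,\big(\overline R_2^\top L_2 D_2 L_2^\top\big)+\mathcal E,
\qquad
\|\mathcal E\|_*\lesssim p\,d^{u-\ell}+d^{-q},\quad \|\mathcal E\|_F\lesssim \sqrt p\,d^{u-\ell}+d^{-q},
\]
the $\sqrt p$ improvement in the Frobenius bound coming from the rank-$\le p$ structure of each discarded piece.

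Finally I would extract the two estimates. For the Frobenius bound, $\|\overline W\,\overline L_2\overline S_2(\overline R_2^\top L_2D_2L_2^\top)\|_F\le\|\overline W\|_F\,\|\overline S_2\|_{\op}\,\|\overline R_2^\top L_2D_2L_2^\top\|_{\op}\lesssim\|\overline W\|_F$, which together with $\|\mathcal E\|_F$ gives $\|\nabla\mathcal L(W_t)\|_F\lesssim\|\overline W\|_F+\sqrt p\,d^{u-\ell}+d^{-q}$. For the nuclear bound, set $G:=\overline R_2^\top L_2D_2L_2^\top$; using $L_2^\top L_2=I$ and $D_2\succeq c'I$ one checks $GG^\top\succeq c''I$, so $\|\overline W\,\overline L_2\overline S_2\,G\|_*\ge\sqrt{c''}\,\|\overline W\,\overline L_2\overline S_2\|_*$; restricting to the middle columns $[p+1:r-p]$ (a column submatrix has nuclear norm at most that of the full matrix) and then dividing out the column scalings $s_i(\overline B)\in[c_1,c_2]$ via Lemma~\ref{lem:lower_bd_nuc} yields $\|\overline W\,\overline L_2\overline S_2\|_*\gtrsim\|\overline M\|_*$, hence $\|\nabla\mathcal L(W_t)\|_*\gtrsim\|\overline M\|_*-p\,d^{u-\ell}-d^{-q}$, uniformly on $t\in[C_1 d^{u-\ell}\log d,\;C_3 d^u]$. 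The main obstacle is the rank-$p$ subspace–alignment step: unlike the realizable case of Theorem~\ref{thm:multimodel1}, $B$ and $\overline B$ do not share a spike, only the right factor $U$, so one must run an eigenvector (Davis–Kahan) bound for $B$ and a singular-vector (Wedin) bound for $\overline B$ and transfer both to $\range(U)$, and one must track that every spike-supported error term is collected at scale $p\,d^{u-\ell}$ in nuclear norm but only $\sqrt p\,d^{u-\ell}$ in Frobenius norm; the polynomial/time-window analysis is then essentially verbatim from Theorem~\ref{thm:multimodel1}.
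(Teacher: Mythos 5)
Your proposal is correct and follows essentially the same route as the paper: the same gradient recursion $\nabla\mathcal L(W_t)=-\overline W\,\overline B(I-\eta B)^t$, the same interlacing estimates on the spectra of $B$ and $\overline B$, the same Davis–Kahan/Wedin argument anchoring the top-$p$ eigenspace of $B$ and the top-$p$ right singular subspace of $\overline B$ to $\range(U)$, the same discarding of rank-$\le p$ spike-supported blocks (which gives the $p\,d^{u-\ell}$ nuclear vs.\ $\sqrt p\,d^{u-\ell}$ Frobenius discrepancy), and the same application of Lemma~\ref{lem:lower_bd_nuc} on the middle columns to produce $\|\overline M\|_*$ in the lower bound. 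The only cosmetic differences are that the paper drops $(I-\eta S_2)^t$ by bounding its diagonal entries above and below rather than packaging it into your matrix $G$ with $GG^\top\succeq c''I$, and your parenthetical ``a column submatrix has nuclear norm at most that of the full matrix'' is phrased in the wrong direction for the step being performed (the actual step is the direct lower bound from Lemma~\ref{lem:lower_bd_nuc}), but the computation itself is right.
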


\begin{proof}
First, recall that eigenvalue interlacing directly implies 
$$d^{\ell}\lesssim\lambda_i(B)\lesssim d^{u}\quad \forall i\in [1:r] \qquad \textrm{and}\qquad c_1\leq \lambda_i(B)\leq c_2 \quad \forall i\in[r+1:k].$$
Moreover, interlacing of singular values \cite{thompson1976behavior} shows:
$$s_{i+p}(\Sigma)\leq s_{i}(\overline B)\leq s_{i-p}(\Sigma).$$
In particular, we deduce
$$s_{i}(\overline{B})\in [c_1,c_2]\qquad \forall i\in [p+1,r-p].$$
Let us now write the singular value decompositions
$$B=L S L^{\top}\qquad \textrm{and}\qquad \overline{B}=\overline{L}\,\overline{S}\, \overline{R}^{\top},$$
where $S,\overline{S}\in \R^{k\times k}$ are diagonal matrices with non-negative decreasing entries (singular values),  and $L,\overline{L}\in \R^{k\times k}$ and $\overline{R}\in \R^{k\times k}$ are orthogonal matrices. We then compute
$$\overline{L}^{\top}T_{\eta,t}L=\overline{S}\, \overline{R}^{\top}L(I-\eta S)^t.$$
The goal is now to show that the product $\overline{R}^{\top}L$ nearly has a block form. To this end, let us write 
$$\overline R=\begin{bmatrix}R_1& R_2\end{bmatrix}\qquad \textrm{and}\qquad L=\begin{bmatrix} L_1&L_2\end{bmatrix}$$
for some matrices  $R_1,L_1\in \R^{k\times p}$ and $R_2,L_2\in \R^{k\times (k-p)}$. Now the Davis-Kahan Theorem \cite{davis1970rotation} applied to the two matrices $AA^{\top}$ and $UQU^{\top}$ shows that  
\begin{equation}\label{eqn:dk1new}
\|UU^\top-L_1L_1^{\top}\|_{\op}\leq \frac{\|\Gamma\|_{\op}}{d^{\ell}}\lesssim \frac{1}{d^{\ell}},
\end{equation}
Applying the Davis-Kahan theorem for singular values \cite{wedin1972perturbation} to the right singular vectors of $\overline{A}A^{\top}$ and $V\overline{Q}U^{\top}$, we deduce 
\begin{equation}\label{eqn:dk2new}
\|UU^{\top}-R_1R_1^\top\|_{\op}\leq \frac{\|\Sigma\|_{\op}}{d^{\ell}}\lesssim \frac{1}{d^{\ell}}.
\end{equation}
Let us now write $\overline{R}^{\top}L$ in block form:
$$\overline{R}^{\top}L=\begin{bmatrix} R_1^{\top}L_1 & R_1^{\top}L_2\\
R_2^{\top}L_1 & R_2^{\top}L_2
\end{bmatrix}.$$
Let us now partition $S=\begin{bmatrix}S_1& 0\\0 & S_2\end{bmatrix}$ and $\overline{S}=\begin{bmatrix}\overline{S}_1& 0\\0 & \overline{S}_2\end{bmatrix}$ in the obvious way. 
Thus we deduce
\begin{equation}\label{eqn:bs}\overline{S}\, \overline{R}^{\top}L(I-\eta S)^t=\begin{bmatrix} \overline{S}_1R_1^{\top}L_1(I-\eta S_1)^t & \overline{S}_1R_1^{\top}L_2(I-\eta S_2)^t\\
\overline{S}_2R_2^{\top}L_1(I-\eta S_1)^t & \overline{S}_2R_2^{\top}L_2(I-\eta S_2)^t
\end{bmatrix}.
\end{equation}
Now for any constant $a>0$, we can choose a constant $C>0$ such that for $t\geq Cd^{u-\ell}\log(d)$ we have $\|(I-\eta S_1)^t\|_{\op}\leq d^{-a}.$ Therefore, the first column on the right side of \eqref{eqn:bs} is small in operator norm and we may estimate
$$\left\|\overline{S}\, \overline{R}^{\top}L(I-\eta S)^t-\begin{bmatrix} 0 & \overline{S}_1R_1^{\top}L_2(I-\eta S_2)^t\\
0 & \overline{S}_2R_2^{\top}L_2(I-\eta S_2)^t
\end{bmatrix}\right\|_{\op}\lesssim d^{-(a-u)}.$$
Recalling the equality $\nabla\mathcal{L}(W_t)=-\overline{W} T_{\eta,t}$, we therefore deduce
$$\left\|\nabla\mathcal{L}(W_t)-\overline{W}\,\overline{L}\begin{bmatrix} 0 & -\overline{S}_1R_1^{\top}L_2(I-\eta S_2)^t\\
0 & -\overline{S}_2R_2^{\top}L_2(I-\eta S_2)^t
\end{bmatrix}L\right\|_{\op}\lesssim \|\overline{W}\|_{\op}\cdot d^{-(a-u)}.$$
It remains therefore to estimate the nuclear and Frobenius norms of 
$$\overline{W}\,\overline{L}\begin{bmatrix} 0 & -\overline{S}_1R_1^{\top}L_2(I-\eta S_2)^t\\
0 & -\overline{S}_2R_2^{\top}L_2(I-\eta S_2)^t
\end{bmatrix}L\qquad \textrm{or equivalently}\qquad \overline{W}\,\overline{L}\begin{bmatrix}  \overline{S}_1R_1^{\top}L_2\\
\overline{S}_2R_2^{\top}L_2
\end{bmatrix}(I-\eta S_2)^t.$$
Now observe that as long as $t\leq Cd^{u}$ for any constant $C$, the diagonal entries of $(I-\eta S_2)^t$ are bounded from above and below by positive numerical constants. Therefore it remains to estimate the nuclear and Frobenius norms of the matrix 
$$M=\overline W\,\overline{L}\,Q \qquad \textrm{where}\qquad Q:=\begin{bmatrix}  \overline{S}_1R_1^{\top}L_2\\
\overline{S}_2R_2^{\top}L_2
\end{bmatrix}.$$
Using \eqref{eqn:dk1new} and \eqref{eqn:dk2new} we have 
$\|L_1L_1^{\top}-R_1R_1^{\top}\|_{\op}\lesssim d^{-\ell}$. Now a standard argument based on principal angles shows the estimates 
$$\|L_1L_1^{\top}-R_1R_1^{\top}\|_{\op}=\|R_1^{\top}L_2\|_{\op}\qquad \textrm{and}\qquad s_{k-p}(R_2^{\top}L_2)= \sqrt{1-\|L_1L_1^{\top}-R_1R_1^{\top}\|_{\op}^2}.$$ In particular, we deduce $\|\overline{S}_1R_1^{\top}L_2\|_{\rm op}\lesssim d^{u-\ell}$. Therefore, we obtain the upper bound on the Frobenius norm
\begin{align*}
\|Q\|_*&\geq \left\|\overline{W}\,\overline{L}\,\begin{bmatrix}  0\\
\overline{S}_2R_2^{\top}L_2\end{bmatrix}\right\|_*-\left\|\overline{W}\,\overline{L}\begin{bmatrix}  \overline{S}_1R_1^{\top}L_2\\
0
\end{bmatrix}\right\|_*\\
&\geq \|[\overline W\, \overline{L}]_{p+1:k}\overline{S}_2R_2^{\top}L_2\|_{*}-pd^{u-\ell}\\
&\gtrsim \|[\overline W\, \overline{L}]_{p+1:k}\overline{S}_2\|_{*}-pd^{u-\ell}\\
&\gtrsim \|[\overline W\, \overline{L}]_{p+1:r-p}\|_{*} -pd^{u-\ell}, 
\end{align*}
where the last inequality follows from applying the linear algebraic Lemma~\ref{lem:lower_bd_nuc} with $P=[p+1:r-p]$.
Similarly, we obtain the bound on the Frobenius norm:
\begin{align*}
\|Q\|_F&\leq \left\|\overline{W}\,\overline{L}\,\begin{bmatrix}  0\\
\overline{S}_2R_2^{\top}L_2\end{bmatrix}\right\|_F-\left\|\overline{W}\,\overline{L}\begin{bmatrix}  \overline{S}_1R_1^{\top}L_2\\
0
\end{bmatrix}\right\|_F\\
&\lesssim \|[\overline W\, \overline{L}]_{p+1:k}\overline{S}_2R_2^{\top}L_2\|_{F}+\sqrt{p}d^{u-\ell}\\
&\lesssim \|[\overline W\, \overline{L}]_{p+1:k}\overline{S}_2\|_{F}+\sqrt{p}d^{u-\ell}\\
&\lesssim \|\overline{W}\|_{F}+\sqrt{p}d^{u-\ell}.
\end{align*}
The two claimed estimates follow directly.
\end{proof}

Theorem~\ref{thm:main_thrmgrfjishdfklds} thus shows that after a burn-in of length
$d^{u-\ell}\log d$ and up to iteration $d^{u}$, the gradient along the iterations satisfies 
$$\nr(W_t)\gtrsim \frac{\|[\overline W\, \overline{L}]_{p+1:r-p}\|^2_{*}}{\|\overline W\|^2_{*}},$$
assuming that the error terms in the numerator and denominator are negligible. Consequently, under a natural incoherance condition 
$$\|[\overline W\, \overline{L}]_{p+1:r-p}\|_{*}\gtrsim \|\overline W\|_*,$$
we get the desired conclusion $\nr(W_t)\gtrsim \nr(\overline W)$. Intuitively, the incoherance condition holds if the spectrum of $W_t$ is well-spread across the columns, and is true for example 
if $\overline W$ has iid Gaussian entries. This is the content of the following Corollary, which is the main result of the section. 

\begin{cor}[Nuclear rank with multiple spikes]\label{cor:main_multi_spike_hard}
Suppose that Assumption~\ref{assump:spiked2} holds. Suppose moreover that the entries of $\overline W$ are iid Gaussian $\mathcal{N}(1/d)$ and are independent of $A$ and $\overline{A}$, and we are in the regime $u-\ell<\frac{1}{2}$, $p=o_d(d^{1-2(u-\ell)})$, and $r\asymp d\asymp m\asymp k$ with $k/d\to \gamma\in (0,\infty)$ as $d\to\infty$. Then there exist constants $C_1,C_2>0$ such that for any constant $C_3>0$ the probability conditioned on $(A,\overline{A})$ of the event  
\begin{equation}\label{eqn:nr_lowernuc}
\nr(\nabla \mathcal{L}(W_t))\gtrsim d\qquad \textrm{holds for all}~ t\in [C_1d^{u-\ell}\log(d), C_3 d^{u}],
\end{equation}
 tends to one as $d$ tends to infinity.
\end{cor}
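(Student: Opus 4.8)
The plan is to deduce the corollary essentially for free from Theorem~\ref{thm:main_thrmgrfjishdfklds} together with off-the-shelf Gaussian random matrix estimates. First I would fix the conditioning: condition on the data $(A,\overline A)$, and work on the event $\mathcal E_0:=\{\|\overline W\|_{\op}\le C\}$, which has probability tending to one since $\overline W\in\R^{m\times k}$ has iid $\mathcal N(0,1/d)$ entries and $m\asymp k\asymp d$. On $\mathcal E_0$ the hypotheses of Theorem~\ref{thm:main_thrmgrfjishdfklds} are met, so for any fixed $q,C_3>0$ there is a constant $C_1>0$ with
\[
\|\nabla\cL(W_t)\|_*\gtrsim \|\overline M\|_*-p\,d^{u-\ell}-d^{-q},
\qquad
\|\nabla\cL(W_t)\|_F\lesssim \|\overline W\|_F+\sqrt p\,d^{u-\ell}+d^{-q},
\]
uniformly over every $t\in[C_1d^{u-\ell}\log d,\;C_3d^u]$, where $\overline M$ is the column-submatrix of $\overline W\,\overline L$ obtained by keeping columns $p+1,\dots,r-p$ and $\overline L$ is the left orthogonal factor of $\overline B$. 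Note that $\overline M$ (hence both bounds) does not depend on $t$, so no union bound over the window is needed.

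Second, I would pin down the conditional law of $\overline M$. Since $\overline L$ is orthogonal and $(A,\overline A)$-measurable while $\overline W$ is independent of $(A,\overline A)$, right-rotation invariance of the Gaussian ensemble shows that, conditionally on $(A,\overline A)$, the matrix $\overline W\,\overline L$ has iid $\mathcal N(0,1/d)$ entries; consequently $\overline M\in\R^{m\times(r-2p)}$ is itself a matrix with iid $\mathcal N(0,1/d)$ entries, and $m\asymp d$ while $r-2p\asymp d$ because $r\asymp d$ and $p=o(d)$. Standard concentration for Gaussian matrices then yields, with conditional probability tending to one: $\|\overline W\|_F=O_{d,\PP}(\sqrt d)$ (a scaled chi-square with $\asymp d^2$ degrees of freedom), $\|\overline M\|_{\op}=O_{d,\PP}(1)$, and $\|\overline M\|_F^2=\Omega_{d,\PP}(d)$; combining the last two via $\|\overline M\|_*\ge\|\overline M\|_F^2/\|\overline M\|_{\op}$ gives $\|\overline M\|_*=\Omega_{d,\PP}(d)$.

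Third, I would verify that the error terms are lower order. The assumption $p=o(d^{1-2(u-\ell)})$ with $u-\ell>0$ gives $p\,d^{u-\ell}=o(d^{1-(u-\ell)})=o(d)$ and $\sqrt p\,d^{u-\ell}=o(d^{1/2-(u-\ell)})\,d^{u-\ell}=o(\sqrt d)$, while $d^{-q}\to0$; the constraint $u-\ell<\tfrac12$ is exactly what makes the exponent $1-2(u-\ell)$ positive, so that $p=o(d^{1-2(u-\ell)})$ is consistent with $p\ge1$. Substituting the Gaussian estimates into the two displayed bounds therefore gives, on an event of conditional probability tending to one, $\|\nabla\cL(W_t)\|_*\gtrsim d$ and $\|\nabla\cL(W_t)\|_F\lesssim\sqrt d$ simultaneously for all $t$ in the window, whence
\[
\nr(\nabla\cL(W_t))=\frac{\|\nabla\cL(W_t)\|_*^2}{\|\nabla\cL(W_t)\|_F^2}\gtrsim\frac{d^2}{d}=d,
\]
which is the claim (the window is nonempty for large $d$ since $\ell>0$ forces $C_1d^{u-\ell}\log d\le C_3d^u$ eventually).

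I do not expect a real obstacle. The only points requiring care are organizational: arranging all randomness (the event $\|\overline W\|_{\op}\lesssim1$ needed by Theorem~\ref{thm:main_thrmgrfjishdfklds}, and the Gaussian estimates on $\overline W$ and $\overline M$) to live on one common event, using crucially that $\overline W$ is independent of $(A,\overline A)$ so that $\overline W\,\overline L$ is again a standard Gaussian ensemble conditionally on the data; and the elementary bookkeeping that $p\,d^{u-\ell}$, $\sqrt p\,d^{u-\ell}$, and $d^{-q}$ are all negligible against $d$ and $\sqrt d$ under the stated scaling.
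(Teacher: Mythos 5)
Your proposal is correct and follows essentially the same route as the paper's proof: invoke Theorem~\ref{thm:main_thrmgrfjishdfklds}, use rotational invariance of the Gaussian $\overline W$ (independent of $(A,\overline A)$) to conclude that $\overline M$ has iid $\mathcal N(0,1/d)$ entries, deduce $\|\overline W\|_F=O_{d,\PP}(\sqrt d)$ and $\|\overline M\|_*=\Omega_{d,\PP}(d)$, and check that the error terms $p\,d^{u-\ell}$ and $\sqrt p\,d^{u-\ell}$ are negligible under the stated scaling. Your write-up is in fact more careful than the paper's one-line argument (explicitly handling the $\|\overline W\|_{\op}\lesssim 1$ hypothesis and bounding $\|\overline M\|_*$ via $\|\overline M\|_F^2/\|\overline M\|_{\op}$), but the substance is identical.
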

\begin{proof}
The proof follows immediately from Theorem~\ref{thm:main_thrmgrfjishdfklds} upon noting that the entries of $\overline{M}=[\overline W\, \overline{L}]_{p+1:r-p}$ are iid Gaussian $\mathcal{N}(1/d)$ and therefore standard Gaussian estimates show $\|\overline{W}\|_F =O_{d,\PP}(\sqrt{d})$ and $\|\overline{M}\|_*=\Omega_{d,\PP}(k(r-2p+1))=\Omega_{d,P}(d)$. 
\end{proof}

Thus, under a random teacher setting the
nuclear rank of $\nabla\mathcal L(W_t)$ is linear in $d$ throughout the entire time
window $t\in[C_1 d^{u-\ell}\log d,\,C_3 d^u]$.
This confirms that the spectral-versus-Euclidean separation identified in the realizable
multi-spike setting persists in the teacher--student regime.

\begin{remark}{\rm
Just as in the realizable case, the window of large nuclear ranks can be extended by a logarithmic factor: Namely, for any
$\varepsilon>0$ there exist constants $c_0$ and $c=c(\varepsilon)>0$ such that, with probability
tending to one as $d\to\infty$, we have
\[
\nr\bigl(\nabla\mathcal L(W_t)\bigr)\;\ge\; d^{1-\varepsilon}
\qquad\text{for all }t\in\bigl[c_0d^{u-\ell}\log d,\;c\,d^u\log d\bigr].
\]
The only required modification to the proof is to realize that by setting the horizon to be $cd^{u}\log(d)$ for a sufficiently small constant $c(\varepsilon)$, we can be sure that the diagonal entries of $(I-\eta S_2)^t$ are lower bounded by $d^{-\varepsilon}$. The proof then proceeds similarly, while taking this observation into account.
}
\end{remark}

\section{A General Layered Model and Spectral Descent}
\label{sec:layered-descent}

The theoretical and empirical results in the earlier sections suggest that spectral updates
should be particularly effective when the features seen by each block have low stable rank and
the corresponding gradients have large nuclear/Frobenius ratios. Until now, this discussion has
been framed in terms of specific architectures (e.g., MLPs) and random-feature models. In this
section we formulate a general layered model that captures a broad class of  architectures
and show that, under mild smoothness assumptions, the same spectral-versus-Euclidean descent
tradeoff holds for a \emph{full} gradient step. The key ingredient is a layerwise Hessian bound
involving a dominant feature term $\sum_\ell\|\Delta W_\ell A_{\ell-1}\|_F^2$ and a smaller
parameter-only term $\sum_\ell\|\Delta W_\ell\|_{\op}^2$, and a one-block comparison which pulls
in both the stable rank of the incoming features and the stable rank and nuclear rank of the
gradient.
The model is designed to encompass:

\begin{itemize}
  \item \textbf{Fully-connected networks (MLPs)}: each block is a dense matrix $W_\ell$, and
        the activation $A_\ell$ is obtained by applying a (possibly vector-valued) nonlinearity
        to the current preactivation $X_\ell$ and, optionally, combining it with earlier
        preactivations and activations (residual/skip connections).
  \item \textbf{Convolutional networks}: after flattening, each convolutional or projection
        layer can be viewed as a matrix $W_\ell$ acting on an unfolded feature map
        $A_{\ell-1}$, with $A_\ell$ produced by pointwise nonlinearities, pooling, and
        residuals that are smooth functions of $(X_1,\dots,X_\ell)$.
  \item \textbf{Transformers and attention-based models}: multihead attention, feedforward
        sublayers, and output projections are linear maps $W_\ell$ applied to token/head
        features $A_{\ell-1}$; the resulting activations $A_\ell$ are smooth functions of all
        previous preactivations through attention, normalization, and residual mixing.
\end{itemize}

In all of these cases, each block is (affine) linear in its parameters $W_\ell$, and the
nonlinear structure is captured by smooth activation maps $\Phi_\ell$ that depend on the
preactivations $(X_1,\dots,X_\ell)$. We will show that the curvature of the composite loss can
be bounded in a layer-wise fashion by a feature term plus a smaller parameter-only term, and
that, once this bound is established, the spectral-versus-Euclidean comparison essentially
reduces to the same nuclear-rank versus stable-rank inequality as in the random-feature model,
with an additional dependence on the stable rank of the gradient.

\subsection{Model and smoothness assumptions}

Fix an integer $L\ge 1$. For each layer $\ell=1,\dots,L$ we have dimensions
$d_\ell$ (output) and $d_{\ell-1}$ (input). We also fix a column dimension $n$ and think of
all activations $A_\ell(W)\in\R^{d_\ell\times n}$ as stacking $n$ data points in columns. Let
$W_\ell\in\R^{d_\ell\times d_{\ell-1}}$ be the learnable weight matrix at layer $\ell$. We
denote the collection of all weights by $W=(W_1,\dots,W_L)$.

\paragraph{Preactivations and activations.}
We define preactivations and activations recursively as follows:

\begin{itemize}
  \item The input feature $A_0\in\R^{d_0\times n}$ is fixed (e.g., the current minibatch).
  \item For each $\ell=1,\dots,L$, the preactivation at layer $\ell$ is
        \[
          X_\ell(W) := W_\ell A_{\ell-1}(W)\in\R^{d_\ell\times n}.
        \]
  \item For each $\ell=1,\dots,L$, the activation at layer $\ell$ is given by a $C^2$-smooth map
        \[
          A_\ell(W) := \Phi_\ell\big(X_1(W),\dots,X_\ell(W)\big)\in\R^{d_\ell\times n}.
        \]
\end{itemize}

Thus each $A_\ell(W)$ depends only on the previous preactivations $(X_1(W),\dots,X_\ell(W))$,
and each $X_\ell(W)$ depends linearly on $W_\ell$ and on $A_{\ell-1}(W)$.

\paragraph{Outer loss.}
We assume a $C^2$-smooth outer loss
\[
  f:\ \mathcal{X}_1\times\cdots\times\mathcal{X}_L\to\R,
  \qquad \mathcal{X}_\ell = \R^{d_\ell\times n},
\]
and define the composite loss
\[
  \mathcal L(W) := f\big(X_1(W),\dots,X_L(W)\big).
\]
We denote by $G_\ell(W) := \nabla_{W_\ell}\mathcal L(W)$ the gradient with respect to block
$W_\ell$.

\subsection{Layerwise Hessian bound}
\label{subsec:layered-hessian}

In the previous subsection we introduced the layered model
\[
  X_\ell(W) = W_\ell A_{\ell-1}(W),
  \qquad
  A_\ell(W) = \Phi_\ell\big(X_1(W),\dots,X_\ell(W)\big),
  \qquad \ell=1,\dots,L,
\]
with $A_0$ fixed, and the composite loss
\[
  \cL(W) := f\big(X_1(W),\dots,X_L(W)\big).
\]
We now fix a parameter $W=(W_1,\dots,W_L)$ and study the Hessian quadratic form
\[
  Q_W(\Delta W)
  := \big\langle \Delta W,\nabla^2\cL(W)\,\Delta W\big\rangle
\]
for perturbations $\Delta W=(\Delta W_1,\dots,\Delta W_L)$. The goal is to show that $Q_W$ is
bounded by a feature term $\sum_\ell\|\Delta W_\ell A_{\ell-1}(W)\|_F^2$ and a parameter-only
term $\sum_\ell\|\Delta W_\ell\|_{\op}^2$, with constants depending only on local derivative
bounds and on the depth~$L$, and \emph{not} on the layer widths.

\paragraph{Notation.}
At the fixed parameter $W$, we introduce the following quantities:
\[
\begin{array}{lcl}
J_{\Phi,\ell}
&:=& \displaystyle
\sup_{\|U\|_F=1} \|D\Phi_\ell(X)[U]\|_F,\\[1.0ex]
H_{\Phi,\ell}
&:=& \displaystyle
\sup_{\|U\|_F=1} \|D^2\Phi_\ell(X)[U,U]\|_F,\\[1.0ex]
J_\Phi
&:=& \displaystyle
\max_{1\le\ell\le L} J_{\Phi,\ell},
\qquad
H_\Phi
:= \max_{1\le\ell\le L} H_{\Phi,\ell},\\[1.0ex]
J_f
&:=& \displaystyle
\big\|\nabla f(X_1,\dots,X_L)\big\|_F,\\[1.0ex]
H_f
&:=& \displaystyle
\sup_{\|U\|_F=1} \big|D^2 f(X_1,\dots,X_L)[U,U]\big|,\\[1.0ex]
W_*
&:=& \displaystyle
\max_{1\le\ell\le L} \|W_\ell\|_{\op},\\[1.0ex]
B_\ell
&:=& \displaystyle
\Delta W_\ell A_{\ell-1}(W),
\qquad
\|B^{(\ell)}\|_F^2 := \sum_{i=1}^\ell \|B_i\|_F^2,\\[1.0ex]
\Gamma
&:=& \displaystyle
\sqrt{2}\,\bigl(1+2J_\Phi^2 W_*^2\bigr)^{\frac{L}{2}},\\[1.0ex]
\Theta
&:=& \displaystyle
\bigl(1+J_\Phi W_*\bigr)^{L-1}.
\end{array}
\]
Here $X=(X_1(W),\dots,X_L(W))$ denotes the collection of preactivations at $W$.

We parameterize the perturbation along the path $W(t) = W+t\Delta W$ and define
\[
  X_\ell(t) := X_\ell(W(t)),
  \qquad
  A_\ell(t) := A_\ell(W(t)).
\]
We then write
\[
  U_\ell := \dot X_\ell(0),
  \quad
  \widetilde U_\ell := \ddot X_\ell(0),
  \qquad
  U^{(\ell)} := (U_1,\dots,U_\ell),
  \quad
  \widetilde U^{(\ell)} := (\widetilde U_1,\dots,\widetilde U_\ell),
\]
and similarly $$V_\ell := \dot A_\ell(0),\qquad \widetilde V_\ell := \ddot A_\ell(0).$$

The next two lemmas bound the first and second derivatives of the preactivations in terms of
the feature perturbations $B_\ell$ and the parameter perturbations $\Delta W_\ell$.

\begin{lem}[First-order propagation]\label{lem:first-order-U}
For every $1\le\ell\le L$ we have
\begin{equation}\label{eq:first-order-bound}
  \|U^{(\ell)}\|_F \;\le\; \Gamma\,\|B^{(\ell)}\|_F.
\end{equation}
\end{lem}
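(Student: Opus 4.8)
The plan is to prove the estimate by induction on $\ell$, by linearizing the recursion $X_\ell(t)=W_\ell(t)A_{\ell-1}(t)$ along the path $W(t)=W+t\Delta W$ and turning the resulting chain rule into a scalar recursion for the partial sums $\|U^{(\ell)}\|_F$.

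First I would record the linearization. Since $X_\ell(t)=(W_\ell+t\Delta W_\ell)A_{\ell-1}(t)$ with $A_0$ held fixed (so $V_0:=\dot A_0(0)=0$), differentiating at $t=0$ gives, for every $\ell\ge 1$,
\[
  U_\ell \;=\; \Delta W_\ell\,A_{\ell-1}(W) + W_\ell\,V_{\ell-1} \;=\; B_\ell + W_\ell V_{\ell-1},
\]
where $V_{\ell-1}=\dot A_{\ell-1}(0)=D\Phi_{\ell-1}\big(X\big)\big[U^{(\ell-1)}\big]$ by the chain rule applied to $A_{\ell-1}(t)=\Phi_{\ell-1}(X_1(t),\dots,X_{\ell-1}(t))$ (with the convention that the $\ell=1$ term is just $U_1=B_1$). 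By the definition of $J_{\Phi,\ell}$ as the Frobenius-to-Frobenius operator norm of $D\Phi_\ell(X)$ on the product space, one has $\|V_{\ell-1}\|_F\le J_{\Phi,\ell-1}\|U^{(\ell-1)}\|_F\le J_\Phi\|U^{(\ell-1)}\|_F$, so submultiplicativity of the operator norm and the triangle inequality yield
\[
  \|U_\ell\|_F \;\le\; \|B_\ell\|_F + J_\Phi W_*\,\|U^{(\ell-1)}\|_F .
\]

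Next I would pass to partial sums. Squaring the previous display via $(a+b)^2\le 2a^2+2b^2$ and adding $\|U^{(\ell-1)}\|_F^2$ gives the linear recursion
\[
  \|U^{(\ell)}\|_F^2 \;\le\; \bigl(1+2J_\Phi^2 W_*^2\bigr)\,\|U^{(\ell-1)}\|_F^2 + 2\,\|B_\ell\|_F^2 .
\]
Writing $\rho:=1+2J_\Phi^2 W_*^2\ge 1$ and unrolling from the base case $\|U^{(0)}\|_F^2=0$ gives $\|U^{(\ell)}\|_F^2\le 2\sum_{i=1}^\ell \rho^{\ell-i}\|B_i\|_F^2$. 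Since $\ell-i\le L$ and $\rho\ge 1$, this is at most $2\rho^L\sum_{i=1}^\ell\|B_i\|_F^2 = 2\rho^L\|B^{(\ell)}\|_F^2$, and taking square roots yields $\|U^{(\ell)}\|_F\le \sqrt2\,\rho^{L/2}\|B^{(\ell)}\|_F=\Gamma\|B^{(\ell)}\|_F$, as claimed.

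I do not expect a real obstacle: the argument is a direct induction. The only points needing care are (i) interpreting the tuple-valued derivative $D\Phi_\ell(X)[U^{(\ell)}]$ consistently with the operator-norm definition of $J_{\Phi,\ell}$, and (ii) choosing the exponent in $\Gamma$ to be $L$ rather than $\ell$ so that a single constant works uniformly across layers. Width-independence is automatic, since every inequality above uses only Frobenius norms of the $B_i$, operator norms of the $W_\ell$, and the derivative bounds $J_{\Phi,\ell}$, none of which involve the dimensions $d_\ell$ or $n$.
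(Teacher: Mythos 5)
Your proposal is correct and follows essentially the same route as the paper: the same linearization $U_\ell=B_\ell+W_\ell V_{\ell-1}$, the same bound $\|V_{\ell-1}\|_F\le J_\Phi\|U^{(\ell-1)}\|_F$, the same squared recursion with factor $1+2J_\Phi^2W_*^2$ for the partial sums, and the same unrolling to reach $\Gamma=\sqrt{2}\,(1+2J_\Phi^2W_*^2)^{L/2}$. No gaps.
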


\begin{proof}
Differentiating $X_\ell(t)=W_\ell(t)A_{\ell-1}(t)$ at $t=0$ yields
\[
  U_1 = \Delta W_1 A_0 = B_1,
  \qquad
  U_\ell = B_\ell + W_\ell V_{\ell-1}\quad(\ell\ge 2),
\]
where $V_{\ell-1}:=\dot A_{\ell-1}(0)$. Differentiating
$A_\ell(t)=\Phi_\ell(X^{(\ell)}(t))$ at $t=0$ and using the definition of
$J_{\Phi,\ell}$ gives
\[
  V_\ell = D\Phi_\ell(X)[U^{(\ell)}],
  \qquad
  \|V_\ell\|_F \;\le\; J_{\Phi,\ell}\,\|U^{(\ell)}\|_F \;\le\; J_\Phi\|U^{(\ell)}\|_F.
\]
Define the quantities
\[
  S_\ell := \sum_{i=1}^\ell\|U_i\|_F^2,
  \qquad
  b_\ell^2 := \sum_{i=1}^\ell\|B_i\|_F^2.
\]
Then $S_1=\|B_1\|_F^2\le b_1^2$. For $\ell\ge 2$ we have
\[
  \|U_\ell\|_F
  \;\le\; \|B_\ell\|_F + \|W_\ell\|_{\op}\|V_{\ell-1}\|_F
  \;\le\; \|B_\ell\|_F + W_*J_\Phi\|U^{(\ell-1)}\|_F,
\]
and hence
\[
  \|U_\ell\|_F^2
  \;\le\; 2\|B_\ell\|_F^2 + 2W_*^2J_\Phi^2 S_{\ell-1}.
\]
Therefore
\[
  S_\ell
  = S_{\ell-1} + \|U_\ell\|_F^2
  \;\le\; (1+2W_*^2J_\Phi^2)S_{\ell-1} + 2\|B_\ell\|_F^2.
\]
Setting $a:=1+2W_*^2J_\Phi^2$ and iterating from $\ell=1$ gives
$S_\ell \le 2 a^{\ell-1}b_\ell^2 \le 2 a^{L-1}b_\ell^2$. Thus, we deduce
\[
  \|U^{(\ell)}\|_F = \sqrt{S_\ell}
  \;\le\; \sqrt{2}\,a^{L/2}\,\|B^{(\ell)}\|_F
  = \Gamma\,\|B^{(\ell)}\|_F,
\]
as claimed.
\end{proof}

\begin{lem}[Second-order propagation]\label{lem:second-order-U}
The following estimate holds:
\begin{equation}\label{eq:second-order-bound}
  \|\widetilde U^{(L)}\|_F
  \;\le\;
  2\Theta J_\Phi \Gamma \sqrt{L}
  \left(\sum_{\ell=1}^L\|\Delta W_\ell\|_{\op}^2\right)^{\frac12}
  \|B^{(L)}\|_F
  \;+\;
  \Theta W_* H_\Phi \Gamma^2 L\,\|B^{(L)}\|_F^2.
\end{equation}
\end{lem}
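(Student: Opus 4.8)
The plan is to differentiate the layered recursions twice along the segment $W(t)=W+t\,\Delta W$, obtain a closed formula for the second-order preactivation perturbation $\widetilde U_\ell=\ddot X_\ell(0)$ in terms of the first-order data $U^{(\ell-1)},V_{\ell-1}$ and the second-order activation derivative $\widetilde V_{\ell-1}=\ddot A_{\ell-1}(0)$, and then close a recursion on the tuple norm $\widetilde t_\ell:=\|\widetilde U^{(\ell)}\|_F$. The first-order bound $\|U^{(\ell)}\|_F\le\Gamma\|B^{(\ell)}\|_F$ from Lemma~\ref{lem:first-order-U} will be used as a black box.

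First I would compute the derivatives. Since $W_\ell(t)$ is affine in $t$ with $\dot W_\ell=\Delta W_\ell$, $\ddot W_\ell=0$, and $A_0$ is constant, twice differentiating $X_\ell(t)=W_\ell(t)A_{\ell-1}(t)$ gives $\widetilde U_1=0$ and, for $\ell\ge 2$, $\widetilde U_\ell=2\Delta W_\ell V_{\ell-1}+W_\ell\widetilde V_{\ell-1}$. Twice differentiating $A_\ell(t)=\Phi_\ell(X^{(\ell)}(t))$ and applying the chain rule for the $C^2$ map $\Phi_\ell$ yields $\widetilde V_\ell=D^2\Phi_\ell(X)[U^{(\ell)},U^{(\ell)}]+D\Phi_\ell(X)[\widetilde U^{(\ell)}]$, hence $\|\widetilde V_\ell\|_F\le H_\Phi\,\|U^{(\ell)}\|_F^2+J_\Phi\,\|\widetilde U^{(\ell)}\|_F$ by the definitions of $H_{\Phi,\ell}\le H_\Phi$ and $J_{\Phi,\ell}\le J_\Phi$.

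Next I would feed in the first-order estimates. From Lemma~\ref{lem:first-order-U}, $\|U^{(\ell-1)}\|_F\le\Gamma\|B^{(\ell-1)}\|_F$, and since $V_{\ell-1}=D\Phi_{\ell-1}(X)[U^{(\ell-1)}]$ we get $\|V_{\ell-1}\|_F\le J_\Phi\Gamma\|B^{(\ell-1)}\|_F$; combining with the previous display, $\|\widetilde V_{\ell-1}\|_F\le H_\Phi\Gamma^2\|B^{(\ell-1)}\|_F^2+J_\Phi\|\widetilde U^{(\ell-1)}\|_F$. Plugging these into $\widetilde U_\ell=2\Delta W_\ell V_{\ell-1}+W_\ell\widetilde V_{\ell-1}$ and using submultiplicativity ($\|\Delta W_\ell V_{\ell-1}\|_F\le\|\Delta W_\ell\|_{\op}\|V_{\ell-1}\|_F$ and $\|W_\ell\widetilde V_{\ell-1}\|_F\le W_*\|\widetilde V_{\ell-1}\|_F$) gives, for $\ell\ge 2$, the bound $\|\widetilde U_\ell\|_F\le\alpha_\ell+W_*J_\Phi\|\widetilde U^{(\ell-1)}\|_F$ with $\alpha_\ell:=2\|\Delta W_\ell\|_{\op}J_\Phi\Gamma\|B^{(\ell-1)}\|_F+W_*H_\Phi\Gamma^2\|B^{(\ell-1)}\|_F^2$.

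The main step — and the place where one must avoid losing an extra power of $\Gamma$ — is closing the recursion on $\widetilde t_\ell=\|\widetilde U^{(\ell)}\|_F$. Mimicking the sum-of-squares recursion of Lemma~\ref{lem:first-order-U} forces a $(x+y)^2\le 2x^2+2y^2$ step that turns $\Gamma^2$ into $\Gamma^3$ and produces no $\Theta$ factor; instead I would use the lossless inequality $\widetilde t_\ell\le\widetilde t_{\ell-1}+\|\widetilde U_\ell\|_F$ (from $\sqrt{a^2+b^2}\le a+b$) together with the previous display to get the \emph{linear} recursion $\widetilde t_\ell\le(1+W_*J_\Phi)\widetilde t_{\ell-1}+\alpha_\ell$ with $\widetilde t_1=0$. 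Unrolling and using $1+W_*J_\Phi\ge 1$ gives $\widetilde t_L\le(1+W_*J_\Phi)^{L-2}\sum_{\ell=2}^L\alpha_\ell\le\Theta\sum_{\ell=2}^L\alpha_\ell$. Finally I would bound $\sum_{\ell=2}^L\alpha_\ell$ using monotonicity $\|B^{(\ell-1)}\|_F\le\|B^{(L)}\|_F$, Cauchy–Schwarz $\sum_{\ell=2}^L\|\Delta W_\ell\|_{\op}\le\sqrt{L}\,(\sum_\ell\|\Delta W_\ell\|_{\op}^2)^{1/2}$, and $L-1\le L$ on the quadratic term; this reproduces exactly the claimed estimate. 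The only mildly delicate aspects are the chain-rule bookkeeping for the multi-argument maps $\Phi_\ell$ and keeping every intermediate quantity controlled by $\|B^{(L)}\|_F$ rather than a width- or layer-dependent norm, both of which are routine once the linear recursion above is set up.
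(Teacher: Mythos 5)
Your proposal is correct and follows essentially the same route as the paper's proof: the same twice-differentiated recursions for $\widetilde U_\ell$ and $\widetilde V_\ell$, the same lossless linear recursion on $\|\widetilde U^{(\ell)}\|_F$ via the triangle inequality, and the same unrolling with $(1+W_*J_\Phi)^{L-k}\le\Theta$, Cauchy--Schwarz on the linear term, and the crude $L$-factor bound on the quadratic term. The only cosmetic difference is that you pull out $\|B^{(L)}\|_F$ by monotonicity before applying Cauchy--Schwarz, whereas the paper applies Cauchy--Schwarz to the weighted sum directly; both yield the identical final estimate.
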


\begin{proof}
Differentiating $X_\ell(t)=W_\ell(t)A_{\ell-1}(t)$ twice at $t=0$ gives
\[
  \widetilde U_1 = 0,\qquad
  \widetilde U_\ell = 2\Delta W_\ell V_{\ell-1} + W_\ell \widetilde V_{\ell-1}
  \quad(\ell\ge 2),
\]
where $V_\ell := \dot A_\ell(0)$ and $\widetilde V_\ell := \ddot A_\ell(0)$.
For $A_\ell(t)=\Phi_\ell(X^{(\ell)}(t))$, the second-order chain rule yields
\[
  \widetilde V_\ell
  = D^2\Phi_\ell(X)[U^{(\ell)},U^{(\ell)}]
    + D\Phi_\ell(X)[\widetilde U^{(\ell)}].
\]
By the definitions of $H_{\Phi,\ell}$ and $J_{\Phi,\ell}$, we have
\[
  \big\| D^2\Phi_\ell(X)[U^{(\ell)},U^{(\ell)}] \big\|_F
  \;\le\; H_{\Phi,\ell}\,\|U^{(\ell)}\|_F^2
  \;\le\; H_\Phi \,\|U^{(\ell)}\|_F^2,
\]
and
\[
  \big\| D\Phi_\ell(X)[\widetilde U^{(\ell)}] \big\|_F
  \;\le\; J_{\Phi,\ell}\,\|\widetilde U^{(\ell)}\|_F
  \;\le\; J_\Phi\,\|\widetilde U^{(\ell)}\|_F.
\]
Using the triangle inequality, we obtain
\begin{equation}\label{eq:Vtilde-bound}
  \|\widetilde V_\ell\|_F
  \;\le\; H_\Phi \|U^{(\ell)}\|_F^2 + J_\Phi\|\widetilde U^{(\ell)}\|_F
  \qquad\text{for all } \ell.
\end{equation}
Define now $M_\ell := \|\widetilde U^{(\ell)}\|_F$ for $\ell=1,\dots,L$. Then
$M_1 = \|\widetilde U_1\|_F = 0$. For $\ell\ge 2$ we have
\[
  M_\ell
  = \|\widetilde U^{(\ell)}\|_F
  \;\le\; \|\widetilde U^{(\ell-1)}\|_F + \|\widetilde U_\ell\|_F
  = M_{\ell-1} + \|\widetilde U_\ell\|_F.
\]
Using the expression for $\widetilde U_\ell$ and the bound
\eqref{eq:Vtilde-bound} for $\widetilde V_{\ell-1}$ yields
\begin{align*}
  \|\widetilde U_\ell\|_F
  &\le 2\|\Delta W_\ell\|_{\op}\,\|V_{\ell-1}\|_F
       + \|W_\ell\|_{\op}\,\|\widetilde V_{\ell-1}\|_F \\
  &\le 2\|\Delta W_\ell\|_{\op}\,\|V_{\ell-1}\|_F
       + W_*\big(H_\Phi \|U^{(\ell-1)}\|_F^2 + J_\Phi\|\widetilde U^{(\ell-1)}\|_F\big),
\end{align*}
and hence
\[
  M_\ell
  \;\le\; (1+W_*J_\Phi)M_{\ell-1}
       + 2\|\Delta W_\ell\|_{\op}\,\|V_{\ell-1}\|_F
       + W_*H_\Phi \|U^{(\ell-1)}\|_F^2.
\]
From Lemma~\ref{lem:first-order-U} we have
$\|V_{\ell-1}\|_F\le J_\Phi\Gamma\|B^{(\ell-1)}\|_F$ and
$\|U^{(\ell-1)}\|_F\le \Gamma\|B^{(\ell-1)}\|_F$, so
\[
  M_\ell
  \;\le\; (1+W_*J_\Phi)M_{\ell-1}
       + 2J_\Phi\Gamma\|\Delta W_\ell\|_{\op}\,\|B^{(\ell-1)}\|_F
       + W_*H_\Phi \Gamma^2 \|B^{(\ell-1)}\|_F^2.
\]
Unrolling this recursion from $\ell=2$ to $\ell=L$ and using $M_1=0$ and
$(1+W_*J_\Phi)^{L-k}\le \Theta$ for all $k$ gives
\[
  M_L
  \;\le\;
  \Theta\sum_{k=2}^L
  \Big(
    2J_\Phi\Gamma\|\Delta W_k\|_{\op}\,\|B^{(k-1)}\|_F
    + W_*H_\Phi \Gamma^2 \|B^{(k-1)}\|_F^2
  \Big).
\]
We bound the two sums separately. First,
\[
  \sum_{k=2}^L \|B^{(k-1)}\|_F^2
  = \sum_{k=2}^L \sum_{i\le k-1}\|B_i\|_F^2
  = \sum_{i=1}^{L-1}(L-i)\|B_i\|_F^2
  \;\le\; L\sum_{i=1}^L\|B_i\|_F^2
  = L\,\|B^{(L)}\|_F^2.
\]
Second, by Cauchy–Schwarz,
\[
  \sum_{k=2}^L \|\Delta W_k\|_{\op}\,\|B^{(k-1)}\|_F
  \;\le\;
  \sqrt{L}
  \left(\sum_{\ell=1}^L\|\Delta W_\ell\|_{\op}^2\right)^{\frac12}
  \|B^{(L)}\|_F.
\]
Substituting these bounds into the expression for $M_L$ gives
\[
  \|\widetilde U^{(L)}\|_F = M_L
  \;\le\;
  2\Theta J_\Phi \Gamma \sqrt{L}
  \left(\sum_{\ell=1}^L\|\Delta W_\ell\|_{\op}^2\right)^{\frac12}
  \|B^{(L)}\|_F
  \;+\;
  \Theta W_* H_\Phi \Gamma^2 L\,\|B^{(L)}\|_F^2,
\]
which is exactly \eqref{eq:second-order-bound}.
\end{proof}

We can now state the layerwise Hessian bound, which is the main conclusion of this subsection.

\begin{proposition}[Layerwise Hessian bound]\label{prop:layered-Hessian-mixed}
Let $W$ be fixed and consider the layered model and loss $\cL$ described
above. Then there exist finite constants $C_F(W),C_{\op}(W)>0$, depending only
on $J_\Phi,H_\Phi,J_f,H_f$, on the block operator norms $\|W_\ell\|_{\op}$ (through
$W_*$), and on the depth $L$, such that for every perturbation
$\Delta W=(\Delta W_1,\dots,\Delta W_L)$, we have
\begin{equation}\label{eq:layered-Hessian-mixed}
  \big|\langle \Delta W,\nabla^2\cL(W)\,\Delta W\rangle\big|
  \;\le\;
  C_F(W)\,\sum_{\ell=1}^L \|\Delta W_\ell A_{\ell-1}(W)\|_F^2
  \;+\;
  C_{\op}(W)\,\sum_{\ell=1}^L \|\Delta W_\ell\|_{\op}^2.
\end{equation}
Moreover, one can take for example
\begin{equation}\label{eq:CF-Cop-example}
  C_F(W)
  := \Gamma^2 H_f
     + J_f \Theta W_* H_\Phi \Gamma^2 L
     + J_f^2 \Theta \Gamma^2 L,
  \qquad
  C_{\op}(W)
  := \Theta J_\Phi^2.
\end{equation}
\end{proposition}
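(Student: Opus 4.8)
The plan is to reduce the bound to a one–dimensional computation along the segment $W(t):=W+t\,\Delta W$. Set $\phi(t):=\cL(W(t))$; since $f$ and every $\Phi_\ell$ are $C^2$, the composite $\cL$ is $C^2$, hence $\phi$ is $C^2$ and $\phi''(0)=\langle \Delta W,\nabla^2\cL(W)\,\Delta W\rangle$. Writing $X(t):=(X_1(t),\dots,X_L(t))$ for the tuple of preactivations along the path, the first–order chain rule gives $\phi'(t)=Df(X(t))[\dot X(t)]$, and differentiating once more, the second–order chain rule for the composite $f\circ X$ yields
\[
  \phi''(0)=D^2 f(X)\bigl[U^{(L)},U^{(L)}\bigr]+\bigl\langle \nabla f(X),\widetilde U^{(L)}\bigr\rangle,
\]
where $U^{(L)}=\dot X(0)$ and $\widetilde U^{(L)}=\ddot X(0)$ in the notation of Section~\ref{subsec:layered-hessian} and the inner product is the natural one on the product space $\mathcal{X}_1\times\cdots\times\mathcal{X}_L$. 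All cross–layer curvature of $f$ is absorbed into the single term $D^2 f(X)[U^{(L)},U^{(L)}]$, and the explicit appearance of $\nabla f$ in the second summand is the only place $J_f$ enters.

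Next I would bound the two summands. By the definitions of $H_f$ and $J_f$ one has $\bigl|D^2 f(X)[U^{(L)},U^{(L)}]\bigr|\le H_f\|U^{(L)}\|_F^2$ and $\bigl|\langle\nabla f(X),\widetilde U^{(L)}\rangle\bigr|\le J_f\|\widetilde U^{(L)}\|_F$. Plugging in Lemma~\ref{lem:first-order-U}, namely $\|U^{(L)}\|_F\le\Gamma\|B^{(L)}\|_F$, and Lemma~\ref{lem:second-order-U}, namely
\[
  \|\widetilde U^{(L)}\|_F\le 2\Theta J_\Phi\Gamma\sqrt{L}\,P\,\|B^{(L)}\|_F+\Theta W_*H_\Phi\Gamma^2 L\,\|B^{(L)}\|_F^2,
  \qquad P:=\Bigl(\textstyle\sum_{\ell}\|\Delta W_\ell\|_{\op}^2\Bigr)^{1/2},
\]
and applying the triangle inequality to $|\phi''(0)|$, I obtain a bound consisting of $H_f\Gamma^2\|B^{(L)}\|_F^2$, a further multiple $J_f\Theta W_*H_\Phi\Gamma^2 L\,\|B^{(L)}\|_F^2$ from the $H_\Phi$ term, and a genuine cross term $2J_f\Theta J_\Phi\Gamma\sqrt{L}\,P\,\|B^{(L)}\|_F$.

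Finally I would split the cross term by Young's inequality $2ab\le a^2+b^2$ with $a:=\Theta^{1/2}J_\Phi\,P$ and $b:=J_f\Theta^{1/2}\Gamma\sqrt{L}\,\|B^{(L)}\|_F$; this particular split is engineered so that the coefficient of $P^2$ produced by the cross term is exactly $\Theta J_\Phi^2$. Collecting all $\|B^{(L)}\|_F^2$–terms and all $P^2$–terms gives
\[
  |\phi''(0)|\le\bigl(\Gamma^2 H_f+J_f\Theta W_*H_\Phi\Gamma^2 L+J_f^2\Theta\Gamma^2 L\bigr)\|B^{(L)}\|_F^2+\Theta J_\Phi^2\,P^2,
\]
and since $\|B^{(L)}\|_F^2=\sum_{\ell}\|\Delta W_\ell A_{\ell-1}(W)\|_F^2$ and $P^2=\sum_\ell\|\Delta W_\ell\|_{\op}^2$ by definition, this is precisely \eqref{eq:layered-Hessian-mixed} with $C_F(W),C_{\op}(W)$ as in \eqref{eq:CF-Cop-example}. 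All constants manifestly depend only on $J_\Phi,H_\Phi,J_f,H_f$, on $W_*$, and on $L$, so the proposition follows. I expect no real analytic obstacle here — the substantive estimates are Lemmas~\ref{lem:first-order-U} and~\ref{lem:second-order-U} — and the only points requiring care are the second–order chain–rule bookkeeping for $f\circ X$ and choosing the Young split so the output constants match the stated example verbatim.
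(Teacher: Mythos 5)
Your proposal is correct and is essentially the same argument as in the paper: both proofs differentiate $\cL$ twice along the path $W+t\Delta W$, use the second-order chain rule to reduce to $H_f\|U^{(L)}\|_F^2 + J_f\|\widetilde U^{(L)}\|_F$, invoke Lemmas~\ref{lem:first-order-U} and~\ref{lem:second-order-U}, and split the resulting cross term by a weighted AM-GM chosen so that the coefficient of $\sum_\ell\|\Delta W_\ell\|_{\op}^2$ comes out as $\Theta J_\Phi^2$. Your direct choice of $a=\Theta^{1/2}J_\Phi P$, $b=J_f\Theta^{1/2}\Gamma\sqrt{L}\,\|B^{(L)}\|_F$ in Young's inequality is precisely the paper's choice of weights $A=\Theta J_\Phi^2$, $B=C_c(W)^2/A$ in the inequality $2C_c ab\le Aa^2+Bb^2$, so the constants match verbatim.
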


\begin{proof}
Along the path $W(t)=W+t\Delta W$, the chain rule for
$\cL(W(t))=f(X_1(t),\dots,X_L(t))$ gives
\[
  \frac{d^2}{dt^2}\Big|_{t=0} \cL(W(t))
  = D^2 f(X)[U^{(L)},U^{(L)}]
    + \big\langle \nabla f(X),\widetilde U^{(L)}\big\rangle.
\]
By the definitions of $H_f$ and $J_f$, we have
\begin{equation}\label{eq:Hessian-chain-rule-final}
  \big|\langle \Delta W,\nabla^2\cL(W)\,\Delta W\rangle\big|
  \;\le\;
  H_f \|U^{(L)}\|_F^2 + J_f\|\widetilde U^{(L)}\|_F.
\end{equation}
Lemma~\ref{lem:first-order-U} yields $\|U^{(L)}\|_F\le \Gamma\|B^{(L)}\|_F$, and
Lemma~\ref{lem:second-order-U} yields
\[
  \|\widetilde U^{(L)}\|_F
  \;\le\;
  2\Theta J_\Phi \Gamma \sqrt{L}
  \left(\sum_{\ell=1}^L\|\Delta W_\ell\|_{\op}^2\right)^{\frac12}
  \|B^{(L)}\|_F
  \;+\;
  \Theta W_* H_\Phi \Gamma^2 L\,\|B^{(L)}\|_F^2.
\]
Substituting these bounds into \eqref{eq:Hessian-chain-rule-final} and
collecting terms gives
\[
  \big|\langle \Delta W,\nabla^2\cL(W)\,\Delta W\rangle\big|
  \;\le\;
  A_B(W)\,\|B^{(L)}\|_F^2
  \;+\;
  2 C_c(W)\,a\,b,
\]
where
\[
  A_B(W)
  := \Gamma^2 H_f + J_f \Theta W_* H_\Phi \Gamma^2 L,
  \qquad
  C_c(W)
  := J_f \Theta J_\Phi \Gamma \sqrt{L},
\]
and
\[
  a := \left(\sum_{\ell=1}^L\|\Delta W_\ell\|_{\op}^2\right)^{\frac12},
  \qquad
  b := \|B^{(L)}\|_F.
\]
Using the inequality
\[
  2 C_c(W)\,a\,b \;\le\; A\,a^2 + B\,b^2
  \quad\text{for any }A,B>0\text{ with }AB\ge C_c(W)^2,
\]
and choosing, for instance,
\[
  A = \Theta J_\Phi^2,
  \qquad
  B = \frac{C_c(W)^2}{A}
    = J_f^2 \Theta \Gamma^2 L,
\]
we obtain
\[
  \big|\langle \Delta W,\nabla^2\cL(W)\,\Delta W\rangle\big|
  \;\le\;
  \big(A_B(W) + B\big)\,\|B^{(L)}\|_F^2
  \;+\;
  A\,\sum_{\ell=1}^L\|\Delta W_\ell\|_{\op}^2.
\]
Since equality $\|B^{(L)}\|_F^2 = \sum_{\ell=1}^L\|\Delta W_\ell A_{\ell-1}(W)\|_F^2$ holds, this is exactly
\eqref{eq:layered-Hessian-mixed} with $C_F(W)$ and $C_{\op}(W)$ as in
\eqref{eq:CF-Cop-example}.
\end{proof}

\paragraph{Scaling with the number of samples.}
For averaged losses such as the squared loss
\(
  f(X_1,\dots,X_L) = \tfrac{1}{2n}\|X_L - Y\|_F^2,
\)
we have
\[
  J_f = \|\nabla f(X)\|_F = \frac{1}{n}\,\|X_L - Y\|_F,
  \qquad
  H_f = \frac{1}{n},
\]
as we will verify below. If, in addition, the average loss $f$ remains $O(1)$
along training so that typically $\|X_L - Y\|_F^2 = O(n)$, then
$J_f = O(1/\sqrt n)$ and $H_f = O(1/n)$. A completely analogous scaling holds
for softmax cross-entropy: the per-example gradients are uniformly
bounded, so for the averaged loss one has $J_f = O(1/\sqrt n)$ and $H_f = O(1/n)$
regardless of the size of the prediction errors. These factors appear only in
$C_F(W)$, which multiplies the feature term
$\sum_\ell \|\Delta W_\ell A_{\ell-1}(W)\|_F^2$. Since $\|A_{\ell-1}(W)\|_F^2$
is of order $n$ when $A_{\ell-1}$ stacks $n$ data points in columns, the
resulting curvature contribution is of order one in $n$. In particular, the
constants $C_F(W)$ and $C_{\op}(W)$ in Proposition~\ref{prop:layered-Hessian-mixed}
do not grow with the number of samples $n$, nor with the layer widths.

\subsubsection{Derivative constants for MLPs and transformer blocks}

To interpret the layerwise Hessian bound in concrete architectures, it is useful to understand
the size of the derivative constants $J_{\Phi,\ell},H_{\Phi,\ell}$ and $J_f,H_f$. In this
subsubsection we sketch how these constants are instantiated in multilayer perceptrons and
transformer blocks. Throughout, all implicit constants depend only on the choice of nonlinearities
and loss, and not on the dimensions $d_\ell$ or $n$.

\paragraph{MLPs.}
Consider a fully-connected network with layers
\[
  X_1 = W_1 A_0,\quad A_1 = \sigma(X_1),\quad
  X_2 = W_2 A_1,\quad A_2 = \sigma(X_2),\quad\ldots,
\]
and a fixed elementwise activation $\sigma:\R\to\R$. In the layered notation
above, the nonlinear map at layer $\ell$ is simply
\[
  \Phi_\ell(X_1,\dots,X_\ell) = \sigma(X_\ell)\quad\text{(applied entrywise)}.
\]
Assume that $\sigma$ is $C^2$-smooth with
\[
  \sup_{t\in\R}|\sigma'(t)| \le L_1,
  \qquad
  \sup_{t\in\R}|\sigma''(t)| \le L_2.
\]
Then for the Jacobian we have
\[
  D\Phi_\ell(X)[U] = \sigma'(X_\ell)\odot U,
  \qquad
  \|D\Phi_\ell(X)[U]\|_F \le L_1\|U\|_F.
\]
Therefore we deduce
\[
  J_{\Phi,\ell} \le L_1
  \quad\text{and hence}\quad
  J_\Phi = \max_\ell J_{\Phi,\ell} \le L_1.
\]
Similarly, the second derivative takes the form
\[
  D^2\Phi_\ell(X)[U,U] = \sigma''(X_\ell)\odot U\odot U,
\]
and
\[
  \|D^2\Phi_\ell(X)[U,U]\|_F
  \le L_2\|U\odot U\|_F
  \le L_2\|U\|_F^2,
\]
Therefore, we deduce
\[
  H_{\Phi,\ell} \le L_2,
  \qquad
  H_\Phi \le L_2.
\]
Thus, for MLPs with a fixed smooth activation, the activation-side constants
$J_{\Phi,\ell},H_{\Phi,\ell}$ are controlled entirely by scalar bounds on
$\sigma',\sigma''$ and do not depend on width.

For the outer loss $f$, the squared loss and softmax cross-entropy examples
in the scaling paragraph above already show that $J_f$ and $H_f$ inherit only
the $1/n$ factors coming from the averaging in the loss, and do not introduce
any additional dependence on the layer widths.

\paragraph{Transformer blocks and the layered model.}
We now make the structure of a single decoder block explicit in the layered
notation. Fix a block input $X^{(0)}\in\R^{d_{\mathrm{model}}\times T}$ (hidden
dimension $d_{\mathrm{model}}$, sequence length $T$). We enumerate the
trainable matrices inside the block as
\[
  W_1 = W_Q,\quad W_2 = W_K,\quad W_3 = W_V,\quad
  W_4 = W_O,\quad W_5 = W_1,\quad W_6 = W_2,
\]
corresponding to the attention projections and MLP weights. Following the
layered model convention $X_\ell(W)=W_\ell A_{\ell-1}(W)$ and
$A_\ell(W)=\Phi_\ell(X_1(W),\dots,X_\ell(W))$, we introduce the following
internal variables and maps.

\smallskip
\emph{Block structure.}
Define first the RMS-normalized input
\[
  A_0(W) := A^{\mathrm{rms}}(W)
  := \mathrm{RMSNorm}\big(X^{(0)}(W)\big),
\]
so that the first three preactivations are
\[
  X_1(W) = W_1 A_0(W) = W_Q A^{\mathrm{rms}},
  \quad
  X_2(W) = W_2 A_0(W) = W_K A^{\mathrm{rms}},
  \quad
  X_3(W) = W_3 A_0(W) = W_V A^{\mathrm{rms}}.
\]
Next, we form the scaled attention scores and probabilities
\[
  S(W) := d_{\mathrm{model}}^{-1/2}\,X_2(W)^\top X_1(W),
  \qquad
  P(W) := \mathrm{softmax}\big(S(W)\big)\quad\text{(columnwise)},
\]
and the value aggregation
\[
  H(W) := X_3(W)P(W).
\]
The output projection preactivation is
\[
  X_4(W) = W_4 H(W) = W_O H(W),
\]
and the attention residual is
\[
  X^{\mathrm{att}}(W) := X^{(0)}(W) + X_4(W).
\]
We then RMS-normalize $X^{\mathrm{att}}$ as
\[
  A_4(W) := A^{\mathrm{rms}}_{\mathrm{mlp}}(W)
  := \mathrm{RMSNorm}\big(X^{\mathrm{att}}(W)\big),
\]
and form the MLP preactivations
\[
  X_5(W) = W_5 A_4(W) = W_1 A^{\mathrm{rms}}_{\mathrm{mlp}},
  \qquad
  X_6(W) = W_6 B(W) = W_2 B(W),
\]
where
\[
  B(W) := \sigma\big(X_5(W)\big)
\]
is the pointwise MLP activation. Finally, the block output is
\[
  X^{(+)}(W) := X^{\mathrm{att}}(W) + X_6(W).
\]

In this notation, the preactivations are $X_\ell(W)=W_\ell A_{\ell-1}(W)$ for
$\ell=1,\dots,6$, with $A_{-1}(W)=X^{(0)}(W)$ and $A_0(W),A_4(W),B(W)$ as
above. The corresponding activation maps
\[
  A_\ell(W) = \Phi_\ell\big(X_1(W),\dots,X_\ell(W)\big),
  \qquad \ell=0,\dots,6,
\]
collect the nonlinear operations that produce the data matrices feeding each
linear map; for example
\[
  \Phi_0(X^{(0)}) = \mathrm{RMSNorm}\big(X^{(0)}\big),
\]
\[
  \Phi_3(X_1,X_2,X_3) = A^{\mathrm{rms}}(W)
  \quad\text{(the attention inputs, reused for $W_Q,W_K,W_V$),}
\]
\[
  \Phi_4(X_1,\dots,X_4)
  = \mathrm{RMSNorm}\big(X^{(0)} + X_4\big)
  = A^{\mathrm{rms}}_{\mathrm{mlp}}(W),
\]
\[
  \Phi_5(X_1,\dots,X_5) = \sigma(X_5),
  \qquad
  \Phi_6(X_1,\dots,X_6) = X^{(+)} = X^{\mathrm{att}} + X_6.
\]

\smallskip
\emph{Derivative bounds.}
With this structure in hand, the derivative constants $J_{\Phi,\ell}$ and
$H_{\Phi,\ell}$ for the transformer block can be bounded by combining the
primitive bounds for RMSNorm, softmax, and the MLP nonlinearity. As before, we
assume that the MLP activation $\sigma$ satisfies
\[
  \sup_{t\in\R}|\sigma'(t)| \le L_1,
  \qquad
  \sup_{t\in\R}|\sigma''(t)| \le L_2.
\]
For the RMS normalization, a direct computation of the Jacobian and Hessian
shows that, as long as the rescaled squared norms
$d_{\mathrm{model}}^{-1}\|X^{(0)}\|_F^2$ and $d_{\mathrm{model}}^{-1}\|X^{\mathrm{att}}\|_F^2$
remain in a fixed compact interval, there exist constants
$C_{\mathrm{rms},1},C_{\mathrm{rms},2}$ (depending only on the RMSNorm
hyperparameters) such that
\[
  \|D(\mathrm{RMSNorm})(\cdot)[U]\|_F \le C_{\mathrm{rms},1}\,\|U\|_F,
  \qquad
  \|D^2(\mathrm{RMSNorm})(\cdot)[U,U]\|_F \le C_{\mathrm{rms},2}\,\|U\|_F^2.
\]
For the softmax, viewed columnwise on $S\in\R^{T\times T}$, there exist
constants $C_{\mathrm{sm},1},C_{\mathrm{sm},2}$ such that
\[
  \|D(\mathrm{softmax})(S)[U]\|_F \le C_{\mathrm{sm},1}\,\|U\|_F,
  \qquad
  \|D^2(\mathrm{softmax})(S)[U,U]\|_F \le C_{\mathrm{sm},2}\,\|U\|_F^2
\]
for all $S,U$. The linear projections $W_\ell A_{\ell-1}$ always contribute
their spectral norms:
\[
  \|D(W_\ell A_{\ell-1})[U]\|_F \le \|W_\ell\|_{\op}\,\|U\|_F,
  \qquad
  D^2(W_\ell A_{\ell-1})[\cdot,\cdot]\equiv 0.
\]

Combining these primitive estimates through the chain rule, one finds that the
constants $J_{\Phi,\ell},H_{\Phi,\ell}$ are bounded above by expressions of the
form
\[
  J_{\Phi,\ell} \le C_1 \prod_{\ell'\in \mathcal{N}(\ell)} \|W_{\ell'}\|_{\op},
  \qquad
  H_{\Phi,\ell} \le C_2 \left( 1 + \sum_{\ell'\in \mathcal{N}(\ell)} \|W_{\ell'}\|_{\op}^2\right),
\]
where $\mathcal{N}(\ell)$ denotes the (finite) collection of projection
matrices whose outputs feed into the nonlinearities affecting $A_{\ell-1}$, and
$C_1,C_2$ depend only on the scalar nonlinearities (RMSNorm, softmax, $\sigma$)
and not on $d_{\mathrm{model}}$ or $T$. In particular, these bounds show that $J_\Phi$ and $H_\Phi$ remain bounded as the width and
sequence length vary.

\subsection{From feature metric to blockwise Lipschitz constants}
\label{subsec:feature-to-lip}

The mixed Hessian bound of Proposition~\ref{prop:layered-Hessian-mixed} controls the curvature
of $\cL$ in terms of the feature perturbations $\Delta W_\ell A_{\ell-1}(W)$ and the operator
norms $\|\Delta W_\ell\|_{\op}$:
\[
  \big\langle \Delta W,\nabla^2\cL(W)\,\Delta W\big\rangle
  \;\le\;
  C_F(W)\sum_{\ell=1}^L \|\Delta W_\ell A_{\ell-1}(W)\|_F^2
  \;+\;
  C_{\op}(W)\sum_{\ell=1}^L \|\Delta W_\ell\|_{\op}^2.
\]
The feature term in this inequality plays exactly the same role as in the random-feature model:
it yields blockwise curvature constants that are determined by the activations $A_{\ell-1}(W)$.
The parameter-only term $C_{\op}(W)\sum_\ell \|\Delta W_\ell\|_{\op}^2$ contributes an extra,
geometry-independent correction which we will account for in the descent comparison, but it does
not affect which activation matrices are relevant for each block or the stable-rank ratio.

In many architectures of interest, and in particular in the multilayer and transformer examples
discussed in Section~\ref{sec:multiple}, each activation $A_{\ell-1}(W)$ admits a natural
factorization
\begin{equation}\label{eq:factorization}
  A_{\ell-1}(W)
  \;=\; M_{\ell,\mathrm{left}}(W)\,
  \widetilde A_{\ell-1}(W)\,M_{\ell,\mathrm{right}}(W),
\end{equation}
where:
\begin{itemize}
  \item $\widetilde A_{\ell-1}(W)$ is the \emph{core activation} feeding block~$\ell$
        (for example, the post-activation $A_{\ell-1}$ in an MLP, or an RMS-normalized
        activation or MLP post-activation in a transformer block as in
        Section~\ref{sec:multiple}); and
  \item $M_{\ell,\mathrm{left}}(W)$ and $M_{\ell,\mathrm{right}}(W)$ are fixed linear
        operators (at the current $W$) capturing the surrounding linear structure:
        projections, attention kernels, residual maps, etc.
\end{itemize}
The dependence on $W$ is smooth but, crucially, these matrices are independent of $\Delta W$.
Using submultiplicativity of the operator norm, we obtain the Frobenius and operator
bounds, for each block~$\ell$, given by
\begin{align*}
  \|\Delta W_\ell A_{\ell-1}(W)\|_F
  &\le\; \|\Delta W_\ell\|_F\,\|A_{\ell-1}(W)\|_{\op},\\[0.5ex]
  \|\Delta W_\ell A_{\ell-1}(W)\|_F
  &\le\; \|\Delta W_\ell\|_{\op}\,\|A_{\ell-1}(W)\|_{F}.
\end{align*}
Using the factorization~\eqref{eq:factorization} once more, we obtain
\begin{align*}
  \|A_{\ell-1}(W)\|_{\op}
  &\le
  \|M_{\ell,\mathrm{left}}(W)\|_{\op}\,
  \|\widetilde A_{\ell-1}(W)\|_{\op}\,
  \|M_{\ell,\mathrm{right}}(W)\|_{\op},\\[0.5ex]
  \|A_{\ell-1}(W)\|_{F}
  &\le
  \|M_{\ell,\mathrm{left}}(W)\|_{\op}\,
  \|\widetilde A_{\ell-1}(W)\|_{F}\,
  \|M_{\ell,\mathrm{right}}(W)\|_{\op}.
\end{align*}
Combining these inequalities with the feature part of the Hessian bound, we obtain feature-based
blockwise curvature constants
\begin{equation}\label{eq:block-lip-simple}
\begin{aligned}
  L_\ell^{\mathrm{F}}(W)
  &:= C_F(W)\,
  \|M_{\ell,\mathrm{left}}(W)\|_{\op}^2\,
  \|\widetilde A_{\ell-1}(W)\|_{\op}^2\,
  \|M_{\ell,\mathrm{right}}(W)\|_{\op}^2,\\[0.5ex]
  L_\ell^{\mathrm{op}}(W)
  &:= C_F(W)\,
  \|M_{\ell,\mathrm{left}}(W)\|_{\op}^2\,
  \|\widetilde A_{\ell-1}(W)\|_{F}^2\,
  \|M_{\ell,\mathrm{right}}(W)\|_{\op}^2,
\end{aligned}
\end{equation}
for which the feature term in the Hessian satisfies
\begin{align}
  \big\langle \Delta W,\nabla^2\cL(W)\,\Delta W\big\rangle_{\mathrm{feature}}
  &\le
  \sum_{\ell=1}^L L_\ell^{\mathrm{F}}(W)\,\|\Delta W_\ell\|_{F}^2,
  \label{eq:hess-F-simple}\\[0.5ex]
  \big\langle \Delta W,\nabla^2\cL(W)\,\Delta W\big\rangle_{\mathrm{feature}}
  &\le
  \sum_{\ell=1}^L L_\ell^{\mathrm{op}}(W)\,\|\Delta W_\ell\|_{\op}^2.
  \label{eq:hess-op-simple}
\end{align}
By construction, the ratio of these two constants depends only on the stable rank of the
\emph{core} activation $\widetilde A_{\ell-1}(W)$, while the surrounding linear maps cancel:
\begin{equation}\label{eq:stable-rank-ratio-simple}
  \frac{L_\ell^{\mathrm{op}}(W)}{L_\ell^{\mathrm{F}}(W)}
  =
  \frac{\|\widetilde A_{\ell-1}(W)\|_{F}^2}{\|\widetilde A_{\ell-1}(W)\|_{\op}^2}
  =
  \mathrm{st}\!\big(\widetilde A_{\ell-1}(W)\big).
\end{equation}
In words, the feature-based curvature in the spectral geometry is larger than that in the
Frobenius geometry by a factor equal to the stable rank of the core activation.

Two special cases are worth highlighting.

\begin{itemize}
  \item \textbf{Fully-connected networks.} For standard MLPs, we may take
        $M_{\ell,\mathrm{left}}(W)=I$ and $M_{\ell,\mathrm{right}}(W)=I$, so that
        $\widetilde A_{\ell-1}(W)=A_{\ell-1}(W)$ is exactly the layerwise post-activation
        introduced in~\eqref{eqn:post_activ}. In this case, $L_\ell^{\mathrm{F}}$ and
        $L_\ell^{\mathrm{op}}$ are proportional to $\|A_{\ell-1}\|_{\op}^2$ and
        $\|A_{\ell-1}\|_{F}^2$, and their ratio is the stable rank
        $\mathrm{st}(A_{\ell-1})$.
  \item \textbf{Transformer blocks.} In transformer architectures, the effective feature
        matrices $A_{\ell-1}(W)$ entering each block factor as in
        Section~\ref{sec:multiple}: the core activation $\widetilde A_{\ell-1}$ is one of
        the RMS-normalized hidden states or MLP post-activations, while
        $M_{\ell,\mathrm{left}}$ and $M_{\ell,\mathrm{right}}$ collect attention kernels,
        projection matrices, and residual maps. The expressions
        \eqref{eq:block-lip-simple}–\eqref{eq:stable-rank-ratio-simple} then show that the
        feature-based curvature is governed, up to fixed operator–norm factors, by the
        stable rank of the corresponding RMS or MLP activations.
\end{itemize}

The full Hessian bound of Proposition~\ref{prop:layered-Hessian-mixed} adds the parameter-only
term $C_{\op}(W)\sum_\ell \|\Delta W_\ell\|_{\op}^2$ to the right-hand side of
\eqref{eq:hess-F-simple}–\eqref{eq:hess-op-simple}. This term does not depend on the activations
and therefore does not change which feature matrices are relevant for a given block or the
stable-rank ratio~\eqref{eq:stable-rank-ratio-simple}; it will appear only as a small, blockwise
correction in the one-step descent analysis below.

\subsection{Descent bounds and spectral-vs.-Euclidean comparison}
\label{subsec:descent-comparison}

We now use the blockwise constants from
\eqref{eq:block-lip-simple}–\eqref{eq:stable-rank-ratio-simple} together with the full Hessian
bound of Proposition~\ref{prop:layered-Hessian-mixed} to compare one-step decreases under
Euclidean (Frobenius) and spectral updates acting on all matrix blocks. We write
$G_\ell(W) := \nabla_{W_\ell}\cL(W)$ for the block gradients, and let
$\mathcal S\subseteq\{1,\dots,L\}$ denote the subset of matrix-valued blocks on which
we intend to apply spectral updates (for example, the internal MLP and attention weights in a
transformer stack, as in Section~\ref{sec:multiple}). For notational brevity, we fix $W$ and
write $G_\ell$, $L_\ell^{\mathrm{F}}$, $L_\ell^{\mathrm{op}}$, $C_F$, and $C_{\op}$.

\paragraph{Taylor model.}
Expanding $\cL$ around $W$ in the direction $\Delta W=(\Delta W_1,\dots,\Delta W_L)$ yields
\begin{equation}\label{eq:taylor-simple}
  \cL(W+\Delta W)
  \;=\; \cL(W)
  + \sum_{\ell=1}^L \langle G_\ell,\Delta W_\ell\rangle
  + \frac12\langle \Delta W,\nabla^2\cL(W)\,\Delta W\rangle
  + R(\Delta W),
\end{equation}
where $R(\Delta W)$ collects third and higher order terms. Combining the feature-based bounds
\eqref{eq:hess-F-simple}–\eqref{eq:hess-op-simple} with the additional parameter term in
Proposition~\ref{prop:layered-Hessian-mixed}, we obtain the Frobenius-geometry bound
\begin{equation}\label{eq:hess-mixed-simple}
  \langle \Delta W,\nabla^2\cL(W)\,\Delta W\rangle
  \;\le\;
  \sum_{\ell} L_\ell^{\mathrm{F}}\|\Delta W_\ell\|_{F}^2
  \;+\;
  C_{\op}(W)\sum_{\ell} \|\Delta W_\ell\|_{\op}^2,
\end{equation}
valid for all $\Delta W$. For mixed Euclidean/spectral updates that use spectral geometry only
on the subset $\mathcal S$, we will also use the corresponding mixed bound
\[
  \langle \Delta W,\nabla^2\cL(W)\,\Delta W\rangle
  \;\le\;
  \sum_{\ell\in\mathcal S} L_\ell^{\mathrm{op}}\|\Delta W_\ell\|_{\op}^2
  \;+\;
  \sum_{\ell\notin\mathcal S} L_\ell^{\mathrm{F}}\|\Delta W_\ell\|_{F}^2
  \;+\;
  C_{\op}(W)\sum_{\ell} \|\Delta W_\ell\|_{\op}^2,
\]
which is obtained by applying the spectral feature bound on blocks in $\mathcal S$, the
Frobenius feature bound on blocks not in $\mathcal S$, and the same parameter-only term on all
blocks.

\paragraph{Definition of the GD and spectral updates.}
We now define a full Euclidean (Frobenius) gradient step and a mixed spectral step by choosing
blockwise step sizes that are natural for the quadratic model \eqref{eq:taylor-simple} under
the mixed Hessian bounds above.

For the Euclidean step we set, for each block $\ell$,
\[
  a_{\mathrm{GD},\ell}
  := L_\ell^{\mathrm{F}} + \frac{C_{\op}(W)}{\st(G_\ell)},
  \qquad
  \st(G_\ell) := \frac{\|G_\ell\|_F^2}{\|G_\ell\|_{\op}^2},
\]
and define
\[
  W_\ell^{\mathrm{GD}}
  := W_\ell - \frac{1}{a_{\mathrm{GD},\ell}}\,G_\ell,
  \qquad
  \Delta W_\ell^{\mathrm{GD}} := W_\ell^{\mathrm{GD}}-W_\ell
  = -\frac{1}{a_{\mathrm{GD},\ell}}\,G_\ell,
\]
so that $W^{\mathrm{GD}} := (W_1^{\mathrm{GD}},\dots,W_L^{\mathrm{GD}})$ is the full GD update.

For the spectral step we set, for each $\ell\in\mathcal S$,
\[
  a_{\mathrm{Spec},\ell}
  := L_\ell^{\mathrm{op}} + C_{\op}(W),
\]
and define
\[
  W_\ell^{\mathrm{Spec}}
  := W_\ell - \frac{\|G_\ell\|_*}{a_{\mathrm{Spec},\ell}}\,\mathrm{polar}(G_\ell),
  \qquad
  \Delta W_\ell^{\mathrm{Spec}} := W_\ell^{\mathrm{Spec}}-W_\ell
  = -\frac{\|G_\ell\|_*}{a_{\mathrm{Spec},\ell}}\,\mathrm{polar}(G_\ell),
\]
where $\mathrm{polar}(G_\ell)$ is the orthogonal factor in the polar decomposition of $G_\ell$.
For blocks $\ell\notin\mathcal S$ we simply set $W_\ell^{\mathrm{Spec}}:=W_\ell^{\mathrm{GD}}$.
In this way $W^{\mathrm{Spec}} := (W_1^{\mathrm{Spec}},\dots,W_L^{\mathrm{Spec}})$ is the full
mixed update which uses spectral geometry on blocks in $\mathcal S$ and Euclidean geometry on
the others.

\paragraph{One-step bound for the Euclidean step.}
For the Euclidean update, the linear term in \eqref{eq:taylor-simple} is
\[
  \langle G_\ell,\Delta W_\ell^{\mathrm{GD}}\rangle
  = -\frac{1}{a_{\mathrm{GD},\ell}}\|G_\ell\|_{F}^2.
\]
For the quadratic term we use \eqref{eq:hess-mixed-simple} together with
\[
  \|\Delta W_\ell^{\mathrm{GD}}\|_F^2
  = \frac{1}{a_{\mathrm{GD},\ell}^2}\|G_\ell\|_F^2,
  \qquad
  \|\Delta W_\ell^{\mathrm{GD}}\|_{\op}^2
  = \frac{1}{a_{\mathrm{GD},\ell}^2}\|G_\ell\|_{\op}^2
  = \frac{1}{a_{\mathrm{GD},\ell}^2}\frac{\|G_\ell\|_F^2}{\st(G_\ell)}.
\]
Thus the contribution of block $\ell$ to the quadratic term is bounded by
\[
  \frac12\left(
    L_\ell^{\mathrm{F}}\frac{\|G_\ell\|_F^2}{a_{\mathrm{GD},\ell}^2}
    + C_{\op}(W)\frac{\|G_\ell\|_F^2}{a_{\mathrm{GD},\ell}^2\,\st(G_\ell)}
  \right)
  =
  \frac{a_{\mathrm{GD},\ell}}{2}\,\frac{\|G_\ell\|_F^2}{a_{\mathrm{GD},\ell}^2}
  = \frac{1}{2a_{\mathrm{GD},\ell}}\|G_\ell\|_F^2.
\]
Plugging $\Delta W^{\mathrm{GD}}$ into \eqref{eq:taylor-simple} and summing over $\ell$ gives
\[
  \cL(W^{\mathrm{GD}})
  \;\le\;
  \cL(W)
  - \sum_{\ell=1}^L \frac{1}{a_{\mathrm{GD},\ell}}\|G_\ell\|_F^2
  + \frac12\sum_{\ell=1}^L \frac{1}{a_{\mathrm{GD},\ell}}\|G_\ell\|_F^2
  + R(\Delta W^{\mathrm{GD}}),
\]
so the quadratic model predicts the total decrease
\begin{equation}\label{eq:GD-descent}
  \Delta_{\mathrm{GD}}
  := \cL(W) - \cL(W^{\mathrm{GD}})
  \;\ge\;
  \frac12\sum_{\ell=1}^L \frac{\|G_\ell\|_F^2}{a_{\mathrm{GD},\ell}}
  =
  \frac12\sum_{\ell=1}^L
  \frac{\|G_\ell\|_F^2}{L_\ell^{\mathrm{F}} + C_{\op}(W)/\st(G_\ell)}.
\end{equation}

\paragraph{One-step bound for the spectral step.}
For the spectral update on $\ell\in\mathcal S$ we have
\[
  \langle G_\ell,\Delta W_\ell^{\mathrm{Spec}}\rangle
  = -\frac{\|G_\ell\|_*^2}{a_{\mathrm{Spec},\ell}},
  \qquad
  \|\Delta W_\ell^{\mathrm{Spec}}\|_{\op}^2
  = \frac{\|G_\ell\|_*^2}{a_{\mathrm{Spec},\ell}^2},
\]
and for $\ell\notin\mathcal S$ the update coincides with the Euclidean one, so the contribution
to the linear and quadratic terms on those blocks is exactly the same as in the GD case. Using
the mixed Hessian bound (spectral geometry on $\mathcal S$, Frobenius geometry on its
complement), the contribution of block $\ell\in\mathcal S$ to the quadratic term is bounded by
\[
  \frac12\left(
    L_\ell^{\mathrm{op}}\frac{\|G_\ell\|_*^2}{a_{\mathrm{Spec},\ell}^2}
    + C_{\op}(W)\frac{\|G_\ell\|_*^2}{a_{\mathrm{Spec},\ell}^2}
  \right)
  =
  \frac{a_{\mathrm{Spec},\ell}}{2}\,\frac{\|G_\ell\|_*^2}{a_{\mathrm{Spec},\ell}^2}
  = \frac{1}{2a_{\mathrm{Spec},\ell}}\|G_\ell\|_*^2.
\]
Proceeding as above and summing over all blocks, we obtain the total decrease guarantee
\begin{align}\label{eq:Spec-descent}
  \Delta_{\mathrm{Spec}}
  := \cL(W) - \cL(W^{\mathrm{Spec}})
  \;&\ge\;
  \frac12\sum_{\ell\in\mathcal S}
  \frac{\|G_\ell\|_*^2}{a_{\mathrm{Spec},\ell}}
  \;+\;
  \frac12\sum_{\ell\notin\mathcal S} \frac{\|G_\ell\|_F^2}{a_{\mathrm{GD},\ell}}\notag\\
  &=
  \frac12\sum_{\ell\in\mathcal S}
  \frac{\|G_\ell\|_*^2}{L_\ell^{\mathrm{op}} + C_{\op}(W)}
  \;+\;
  \frac12\sum_{\ell\notin\mathcal S}
  \frac{\|G_\ell\|_F^2}{L_\ell^{\mathrm{F}} + C_{\op}(W)/\st(G_\ell)}.
\end{align}

\paragraph{Nuclear-rank vs.\ stable-rank condition.}
Comparing the global guarantees \eqref{eq:GD-descent} and \eqref{eq:Spec-descent}, we see that
whenever we keep the GD update on blocks $\ell\notin\mathcal S$, the total spectral decrease
is guaranteed to be at least as large as the Euclidean decrease provided that, for every
$\ell\in\mathcal S$,
\[
  \frac{\|G_\ell\|_*^2}{L_\ell^{\mathrm{op}} + C_{\op}(W)}
  \;\ge\;
  \frac{\|G_\ell\|_F^2}{L_\ell^{\mathrm{F}} + C_{\op}(W)/\st(G_\ell)}
  \quad\Longleftrightarrow\quad
  \frac{\|G_\ell\|_*^2}{\|G_\ell\|_F^2}
  \;\ge\;
  \frac{L_\ell^{\mathrm{op}} + C_{\op}(W)}{L_\ell^{\mathrm{F}} + C_{\op}(W)/\st(G_\ell)}.
\]
Using the feature-based ratio \eqref{eq:stable-rank-ratio-simple}, we may rewrite the right-hand
side in terms of the stable rank of the core activation feeding block~$\ell$. Setting
\[
  s_\ell := \st\!\big(\widetilde A_{\ell-1}(W)\big)
  = \frac{L_\ell^{\mathrm{op}}}{L_\ell^{\mathrm{F}}},
  \qquad
  \alpha_\ell := \frac{C_{\op}(W)}{L_\ell^{\mathrm{F}}},
\]
we have
\[
  \frac{L_\ell^{\mathrm{op}} + C_{\op}(W)}{L_\ell^{\mathrm{F}} + C_{\op}(W)/\st(G_\ell)}
  = \frac{s_\ell + \alpha_\ell}{1 + \alpha_\ell/\st(G_\ell)}.
\]
Thus the per-block condition for the spectral update on block~$\ell\in\mathcal S$ to improve
upon the Euclidean update on that block is
\begin{equation}\label{eq:nr-vs-stable-refined}
  \nr(G_\ell)
  := \frac{\|G_\ell\|_*^2}{\|G_\ell\|_F^2}
  \;\ge\;
  \frac{\st\!\big(\widetilde A_{\ell-1}(W)\big) + \alpha_\ell}{
    1 + \alpha_\ell/\st(G_\ell)}.
\end{equation}

The right-hand side of \eqref{eq:nr-vs-stable-refined} is increasing in the
denominator parameter $\st(G_\ell)$. Since $\st(G_\ell) \le \nr(G_\ell)$ by definition,
we may therefore replace $\st(G_\ell)$ by $\nr(G_\ell)$ in the denominator, which
only \emph{increases} the right-hand side and yields a stronger (more conservative)
but still sufficient condition for the spectral update to dominate GD. With this
replacement, \eqref{eq:nr-vs-stable-refined} is implied by
\begin{equation}\label{eq:nr-vs-stable-strong}
  \nr(G_\ell)
  \;\ge\;
  \frac{\st\!\big(\widetilde A_{\ell-1}(W)\big) + \alpha_\ell}{
    1 + \alpha_\ell/\nr(G_\ell)}.
\end{equation}
A simple algebraic rearrangement shows that the stronger inequality
\eqref{eq:nr-vs-stable-strong} is in fact \emph{equivalent} to the bare inequality
\begin{align}\label{eq:nuc-vs-stable}
  \nr(G_\ell) \;\ge\; \st\!\big(\widetilde A_{\ell-1}(W)\big).
\end{align}
Indeed, writing $r:=\nr(G_\ell)$, $s:=\st(\widetilde A_{\ell-1}(W))$ and
$\alpha:=\alpha_\ell$, the inequality $\,r \ge (s+\alpha)/(1+\alpha/r)$ is equivalent
to $r(1+\alpha/r)\ge s+\alpha$, i.e.\ $r+\alpha\ge s+\alpha$, hence $r\ge s$. 

Summing over all blocks, it follows that whenever the sufficient condition
\eqref{eq:nuc-vs-stable} holds for every $\ell\in\mathcal S$, the quadratic model guarantees
that the full mixed spectral update $W\mapsto W^{\mathrm{Spec}}$ (spectral on $\mathcal S$,
Euclidean elsewhere) achieves at least as much decrease as the full Euclidean update
$W\mapsto W^{\mathrm{GD}}$ (up to the common remainder term $R(\Delta W)$). This is precisely
the same rule that emerged in our random-feature analysis: spectral updates are most
advantageous on those blocks whose incoming core activations $\widetilde A_{\ell-1}(W)$ have
low stable rank while the corresponding gradients $G_\ell$ have large nuclear rank.

\subsection{Extension: diagonal RMS-normalization weights}
\label{sec:rms-diagonal}

In the layered model, the blocks $W_\ell$ were arbitrary matrices, and the spectral step used
the full polar factor of $\nabla_{W_\ell}\cL(W)$. The RMS-normalization layers are more
structured: their weights are diagonal, and in this case the spectral steepest–descent step
reduces to a ``signSGD''-type update on the RMSNorm weights, with a corresponding
$\ell_1/\ell_2$ analogue of the nuclear-rank condition. We record this diagonal case here.

Each RMSNorm layer carries weights $\gamma\in\R^d$ and acts on a hidden state $X\in\R^{d\times n}$ as
\[
  \mathrm{RMSNorm}_\gamma(X)
  \;=\;
  \Diag(\gamma)\,\widetilde A^{\mathrm{rms}}(X),
  \qquad
  A := \widetilde A^{\mathrm{rms}}(X)\in\R^{d\times n},
\]
where $\widetilde A^{\mathrm{rms}}(X)$ is the parameter–free RMS-normalized activation. In our
notation we write $\Gamma := \Diag(\gamma)$ and regard it as a diagonal block with incoming
activations $A$. A perturbation of the RMSNorm weights has the form
$\Delta\Gamma = \Diag(\delta\gamma)$ and appears in the same feature term
$\|\Delta\Gamma A\|_F^2$ as any other block in the Hessian bound.

For diagonal $\Delta\Gamma$ we have
\[
  \|\Delta\Gamma A\|_F
  \;\le\;
  \|\Delta\Gamma\|_F\,\|A\|_{\op}
  =
  \|\delta\gamma\|_2\,\|A\|_{\op},
  \qquad
  \|\Delta\Gamma A\|_F
  \;\le\;
  \|\Delta\Gamma\|_{\op}\,\|A\|_F
  =
  \|\delta\gamma\|_\infty\,\|A\|_F.
\]
Thus, at the level of the feature term, the RMSNorm block has the same two curvature constants
as any matrix block,
\[
  L^{\mathrm F}_{\mathrm{rms}}(W) = C_F(W)\,\|A\|_{\op}^2,
  \qquad
  L^{\mathrm op}_{\mathrm{rms}}(W) = C_F(W)\,\|A\|_F^2,
\]
and their ratio is
\[
  \frac{L^{\mathrm op}_{\mathrm{rms}}(W)}{L^{\mathrm F}_{\mathrm{rms}}(W)}
  =
  \frac{\|A\|_F^2}{\|A\|_{\op}^2}
  =
  \st(A).
\]

Let $g := \nabla_\gamma\cL(W)\in\R^d$ be the gradient with respect to the RMSNorm weights. At
the level of matrix blocks, this corresponds to the diagonal matrix
$G_{\mathrm{diag}} := \Diag(g)$, whose Frobenius and operator norms are
\[
  \|G_{\mathrm{diag}}\|_F^2 = \|g\|_2^2,
  \qquad
  \|G_{\mathrm{diag}}\|_{\op}^2 = \|g\|_\infty^2.
\]
The analogue of the stable rank of the gradient in this diagonal setting is therefore
\[
  \st_{\mathrm{diag}}(g)
  := \frac{\|G_{\mathrm{diag}}\|_F^2}{\|G_{\mathrm{diag}}\|_{\op}^2}
  = \frac{\|g\|_2^2}{\|g\|_\infty^2}.
\]

For this diagonal block we can repeat exactly the one-step analysis from
Section~\ref{subsec:descent-comparison}, now with the gradient represented by
$G_{\mathrm{diag}}$ and curvature constants $L^{\mathrm F}_{\mathrm{rms}}(W)$ and
$L^{\mathrm op}_{\mathrm{rms}}(W)$. In particular, in analogy with
\eqref{eq:GD-descent}–\eqref{eq:Spec-descent}, we define blockwise step sizes
\[
  a_{\mathrm{GD},\mathrm{rms}}
  := L^{\mathrm F}_{\mathrm{rms}}(W)
     + \frac{C_{\op}(W)}{\st_{\mathrm{diag}}(g)},
  \qquad
  a_{\mathrm{Spec},\mathrm{rms}}
  := L^{\mathrm op}_{\mathrm{rms}}(W) + C_{\op}(W),
\]
and consider the two updates
\[
  \gamma_{\mathrm{GD}}^{+}
  := \gamma - \frac{1}{a_{\mathrm{GD},\mathrm{rms}}}\,g,
  \qquad
  \gamma_{\mathrm{Spec}}^{+}
  := \gamma - \frac{\|g\|_1}{a_{\mathrm{Spec},\mathrm{rms}}}\,\mathrm{sign}(g).
\]

For the Euclidean update, the same quadratic-model calculation as in
\eqref{eq:GD-descent} (with $G_\ell$ replaced by $G_{\mathrm{diag}}$ and
$L_\ell^{\mathrm F}$ by $L^{\mathrm F}_{\mathrm{rms}}$) gives the one-step bound
\[
  \Delta_{\mathrm{GD},\mathrm{rms}}
  \;\ge\;
  \frac{\|g\|_2^2}{2\left(
    L^{\mathrm F}_{\mathrm{rms}}(W) + C_{\op}(W)/\st_{\mathrm{diag}}(g)
  \right)}.
\]
For the spectral update, the same argument as in \eqref{eq:Spec-descent}
(using that $\|\Diag(\mathrm{sign}(g))\|_{\op}=1$ and
$\|\Diag(\mathrm{sign}(g))A\|_F\le\|A\|_F$) yields
\[
  \Delta_{\mathrm{Spec},\mathrm{rms}}
  \;\ge\;
  \frac{\|g\|_1^2}{2\left(
    L^{\mathrm op}_{\mathrm{rms}}(W) + C_{\op}(W)
  \right)}.
\]

Thus, at the level of the quadratic model, the spectral update dominates the
Euclidean update on this diagonal block whenever
\[
  \frac{\|g\|_1^2}{L^{\mathrm op}_{\mathrm{rms}}(W) + C_{\op}(W)}
  \;\ge\;
  \frac{\|g\|_2^2}{L^{\mathrm F}_{\mathrm{rms}}(W) + C_{\op}(W)/\st_{\mathrm{diag}}(g)}.
\]
Using $L^{\mathrm op}_{\mathrm{rms}}/L^{\mathrm F}_{\mathrm{rms}} = \st(A)$ and defining
\[
  s_A := \st(A) = \frac{\|A\|_F^2}{\|A\|_{\op}^2},
  \qquad
  \alpha_{\mathrm{rms}} := \frac{C_{\op}(W)}{L^{\mathrm F}_{\mathrm{rms}}(W)},
\]
we can rewrite this condition exactly as in
\eqref{eq:nr-vs-stable-refined}:
\[
  \frac{\|g\|_1^2}{\|g\|_2^2}
  \;\ge\;
  \frac{s_A + \alpha_{\mathrm{rms}}}{1 + \alpha_{\mathrm{rms}}/\st_{\mathrm{diag}}(g)}.
\]

The right-hand side is increasing in $\st_{\mathrm{diag}}(g)$. On the other hand,
for any nonzero $g$ we have the scalar inequality
\[
  \frac{\|g\|_1^2}{\|g\|_2^2}
  \;\ge\;
  \frac{\|g\|_2^2}{\|g\|_\infty^2}
  = \st_{\mathrm{diag}}(g),
\]
so it is sufficient to impose the stronger condition obtained by replacing
$\st_{\mathrm{diag}}(g)$ in the denominator by $\|g\|_1^2/\|g\|_2^2$. This gives
\[
  \frac{\|g\|_1^2}{\|g\|_2^2}
  \;\ge\;
  \frac{s_A + \alpha_{\mathrm{rms}}}{1 + \alpha_{\mathrm{rms}}\|g\|_2^2/\|g\|_1^2}.
\]
A short algebraic manipulation shows that this is equivalent to
\[
  \frac{\|g\|_1^2}{\|g\|_2^2} \;\ge\; s_A.
\]
In other words, the $\ell_1/\ell_2$ ``nuclear rank'' of the RMSNorm gradient must exceed
the (low) stable rank of the incoming RMS-normalized activation matrix $A$. This is the exact
diagonal analogue of the matrix-level condition $\nr(G_\ell)\ge\st(\widetilde A_{\ell-1}(W))$
from \eqref{eq:nuc-vs-stable}, with the role of $\nr(G_\ell)$ played by the scalar
$\|g\|_1^2/\|g\|_2^2$ and the role of $\st(A_{\ell-1})$ played by $\st(A)$.

\bibliographystyle{plain}
\bibliography{bibliography}

\newpage

\appendix

\section{Sparse Regression and SwiGLU activations}\label{sec:swiglu}

We now return to the sparse regression setup of Figure~\ref{fig:synthetic_stable_rank_sparse} and replace the squared-ReLU activation by SwiGLU.  The architecture and target remain unchanged: we train a four-layer feedforward network on the cubic $f(x)=x_1x_2x_3$ from Gaussian inputs using full-batch \texttt{GD} and \texttt{SpecGD}, applying spectral updates only to the hidden layers as in Section~\ref{sec:multiple}.  Our aim is to examine how the change in activation affects the stable rank of the post-activations and, in turn, the relative behavior of Euclidean and spectral updates.

Figure~\ref{fig:swiglu_sparse_sr} reports the stable rank of the post-activation matrices at initialization as a function of batch size.  In contrast with the ReLU and squared-ReLU experiments, the stable ranks now grow noticeably with the batch size, reflecting the fact that SwiGLU is approximately centered and therefore does not induce a large mean spike in the hidden-layer Gram matrices.

To see how this change in stable rank interacts with training dynamics, Figure~\ref{fig:synthetic_run_grad_vs_training_loss_sparse} compares the training loss of \texttt{GD} and \texttt{SpecGD} for two batch sizes.  For the smaller batch ($n=512$, left panel), the network is still in a mildly low–stable–rank regime and \texttt{SpecGD} enjoys a visible early advantage, much as in the ReLU case.  For the larger batch ($n=8196$, right panel), the initialization stable ranks of the hidden layers are substantially higher, and the performance of \texttt{GD} and \texttt{SpecGD} becomes similar.  As the batch size grows, the inequality $\nr(\nabla\cL(W_\ell)) \gtrsim \st(A_{\ell-1})$ is no longer strongly activated, and the structural advantage of spectral updates predicted in Section~\ref{sec:rand_regress} is correspondingly reduced.

\begin{figure}[H]
\centering
\includegraphics[width=.8\textwidth]{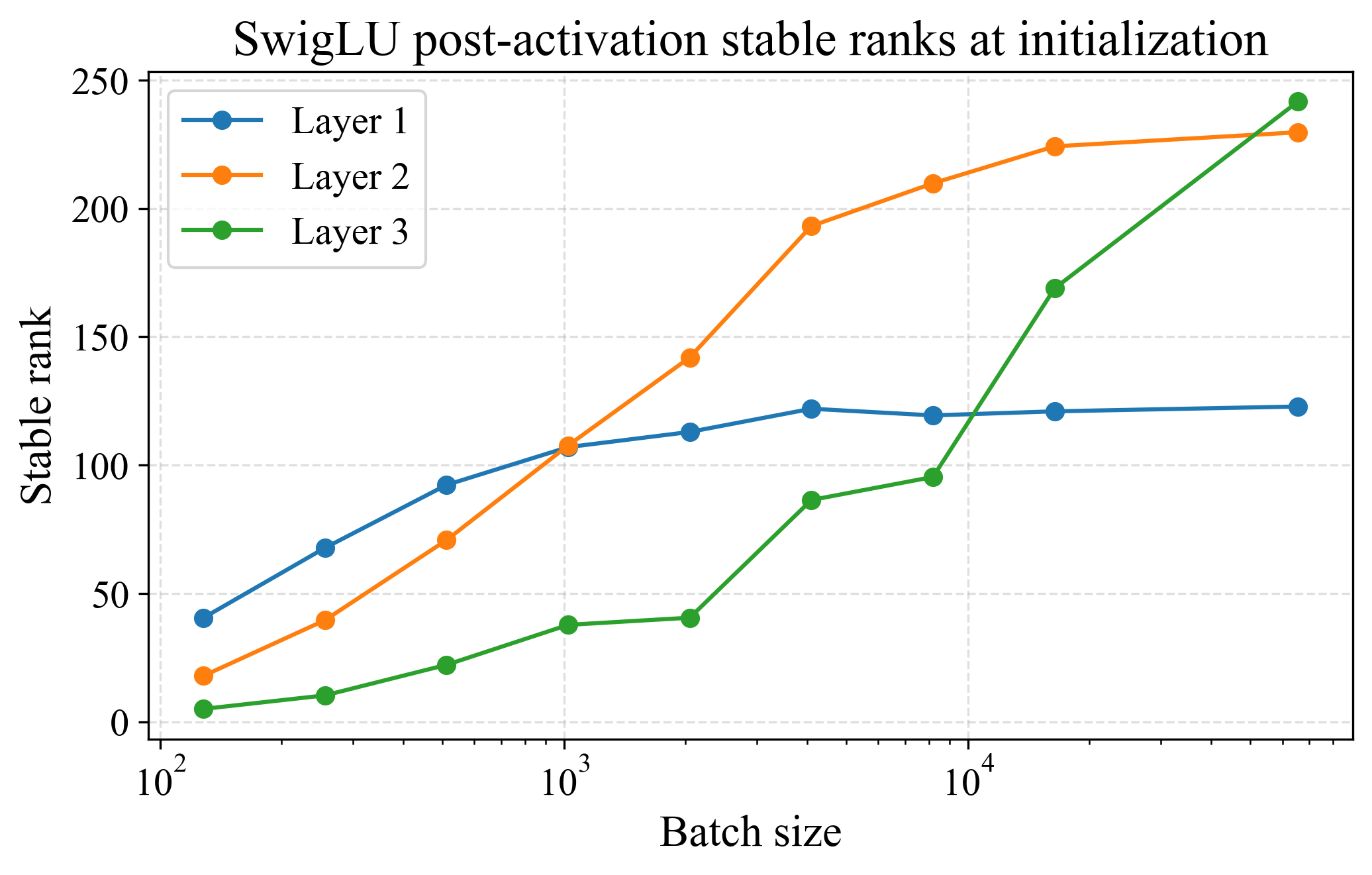}
\caption{Stable rank at initialization of the post-activation matrices in the sparse regression network with SwiGLU activations, as a function of batch size.  For small batches the hidden layers remain only moderately high–stable–rank, but as the batch size increases the stable ranks grow toward their ambient dimension, in contrast with the ReLU and squared-ReLU experiments where the mean-induced spike keeps $\st(A_\ell)$ essentially constant.}
\label{fig:swiglu_sparse_sr}
\end{figure}

\begin{figure}[h]
\centering
\begin{subfigure}[t]{0.48\textwidth}
  \centering
  \includegraphics[width=\linewidth]{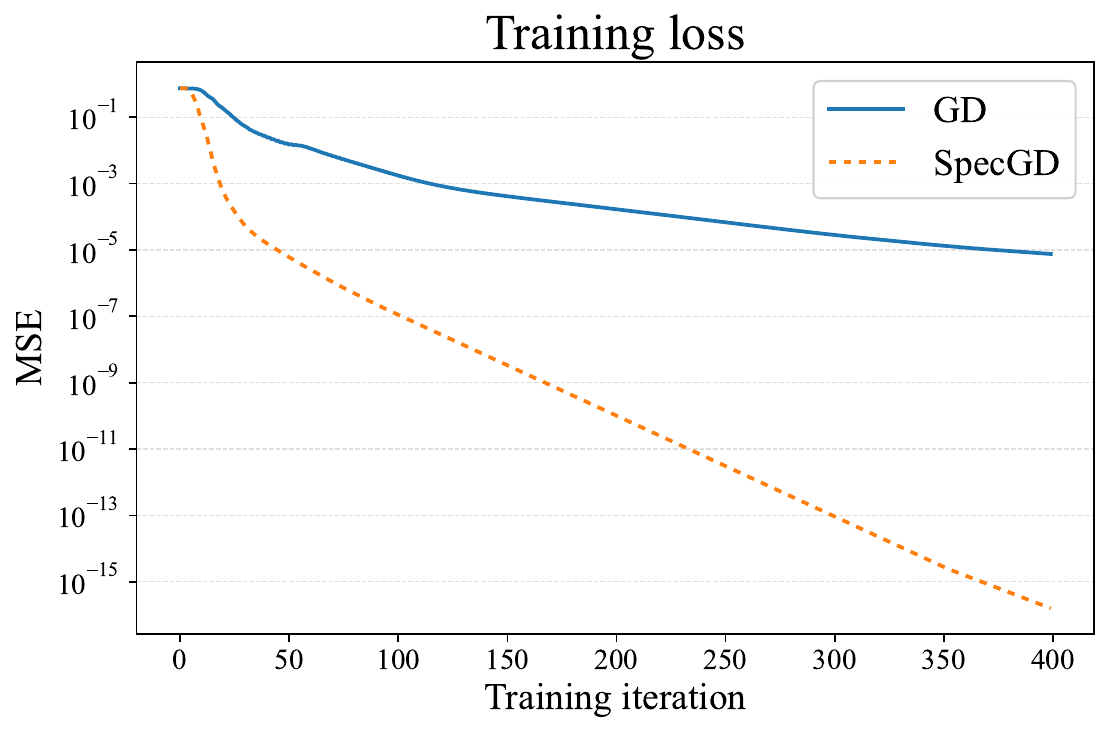}
  \caption{Batch size $n=512$.}
\end{subfigure}\hfill
\begin{subfigure}[t]{0.48\textwidth}
  \centering
  \includegraphics[width=\linewidth]{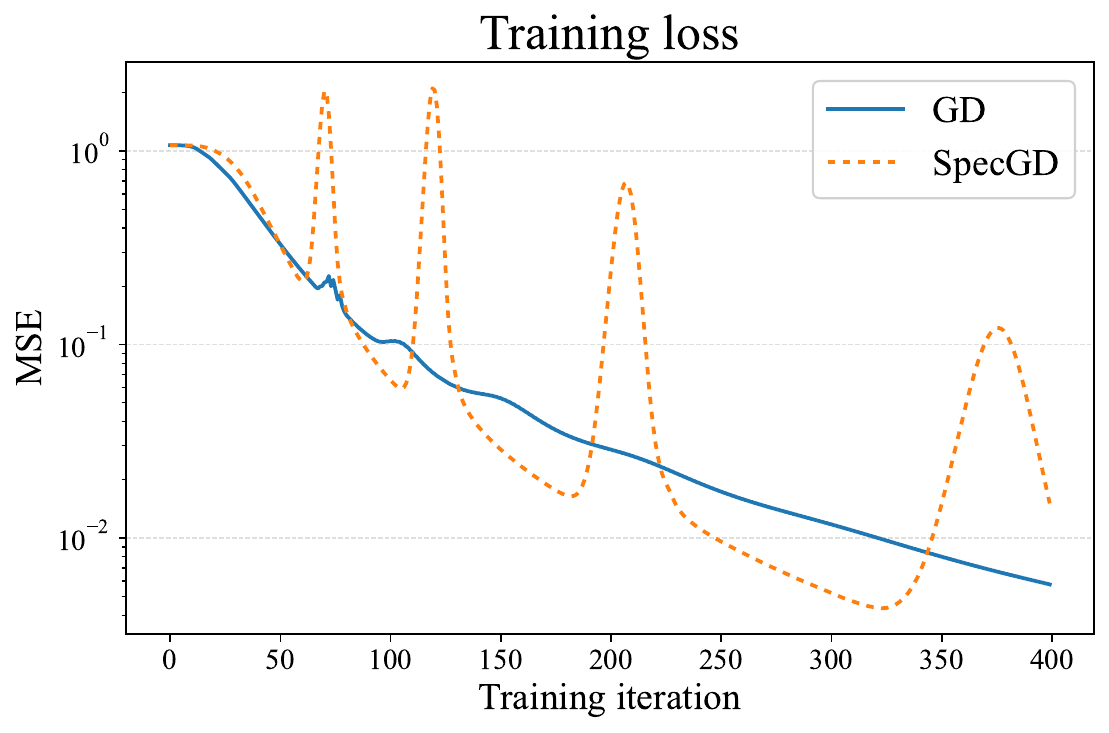}
  \caption{Batch size $n=8196$.}
\end{subfigure}
\caption{Sparse regression with SwiGLU activations: training loss for gradient descent (\texttt{GD}) and spectral gradient descent (\texttt{SpecGD}) at two batch sizes.  For $n=512$ (left) the methods behave similarly to the ReLU experiment, with \texttt{SpecGD} exhibiting a faster initial decrease in loss.  For $n=8196$ (right), where the initialization stable ranks of the hidden layers are much larger (Figure~\ref{fig:swiglu_sparse_sr}), the trajectories of \texttt{GD} and \texttt{SpecGD} essentially coincide.  In both runs, the stable ranks of the first two hidden layers remain close to their initialization values, while the third layer drops rapidly to stable rank $\approx 3$; this drop occurs in the final scalar-output layer, where the weights form a vector and no spectral update is applied in \texttt{SpecGD}.}
\label{fig:synthetic_run_grad_vs_training_loss_sparse}
\end{figure}

\section{NanoGPT with SwiGLU activations}\label{sec:swiglu-nanogpt}

We repeat the NanoGPT-scale run from Figures~\ref{fig:mlpnano}--\ref{fig:gradientattentionvo}, but replace the squared-ReLU MLP nonlinearity by SwiGLU.  The figures below report the same stable-rank and gradient nuclear-rank diagnostics, and are meant as direct pointwise analogues of the corresponding modded-NanoGPT plots in the main text.

\begin{figure}[t]
\centering
\includegraphics[width=\linewidth]{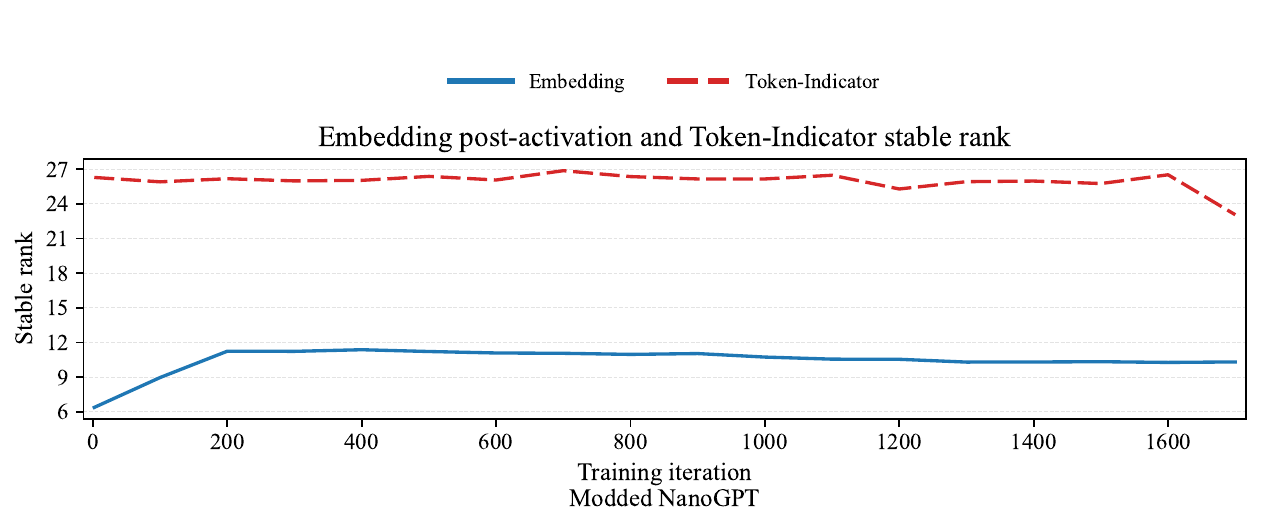}
\caption{Embedding post-activation and token-indicator stable ranks in the SwiGLU run.  This is the SwiGLU analogue of Figure~\ref{fig:embedding}.  As expected, these curves are essentially unchanged by the MLP activation swap: the token-indicator stable rank stays near $1/p_{\max}$ (here $\approx 26$), and the embedding stable rank remains $\approx 10$ throughout training.}
\label{fig:swiglu_nano_embedding}
\end{figure}

\begin{figure}[t]
\centering
\includegraphics[width=\linewidth]{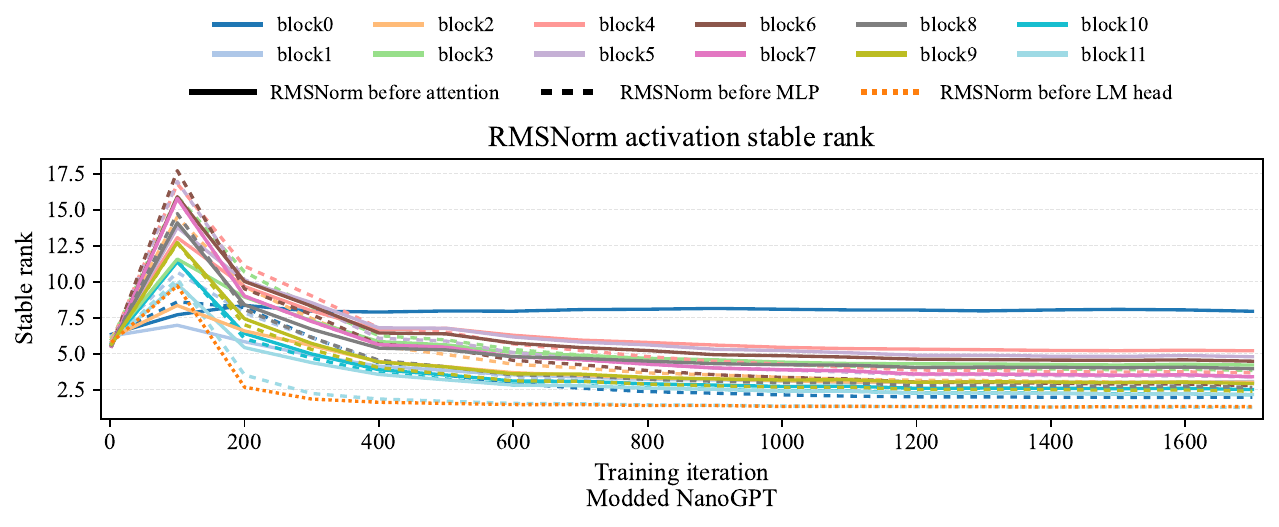}
\caption{Stable rank of RMS-normalized activations $A^{\mathrm{rms}},A^{\mathrm{rms}}_{\mathrm{mlp}},A^{\mathrm{rms}}_{\mathrm{final}}$ in the SwiGLU run.  This is the SwiGLU analogue of Figure~\ref{fig:rmsactivations}.  The RMS features remain low--stable--rank (typically single digits after burn-in), but their stable ranks are systematically larger than in the squared-ReLU run, with a more pronounced early transient before settling.}
\label{fig:swiglu_nano_rms}
\end{figure}

\begin{figure}[t]
\centering
\includegraphics[width=\linewidth]{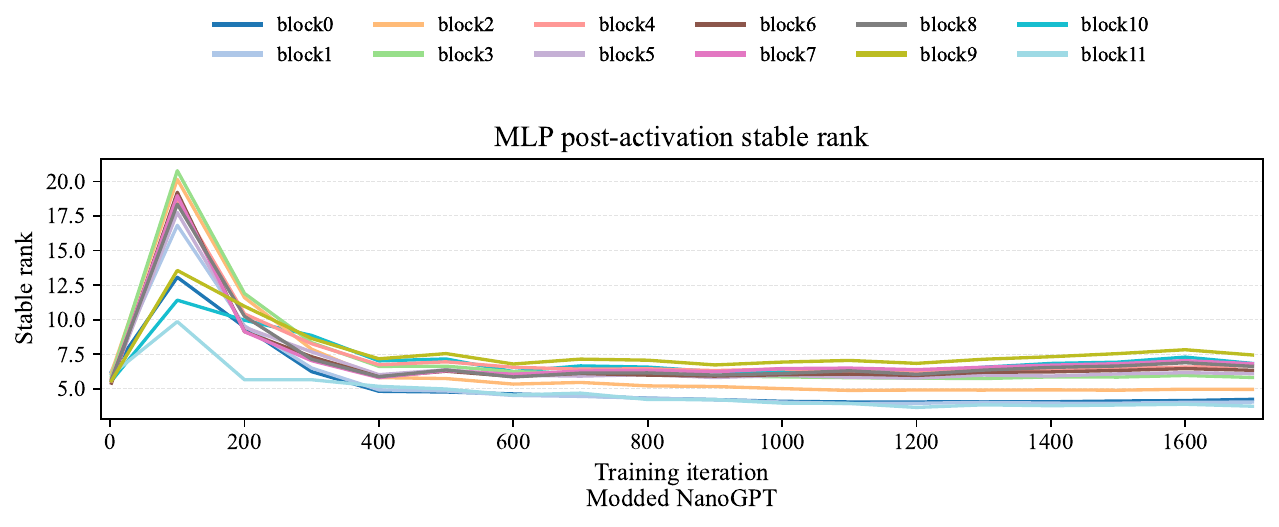}
\caption{Stable rank of MLP post-activations in the SwiGLU run.  This is the SwiGLU analogue of Figure~\ref{fig:mlpnano} (modded-NanoGPT with squared ReLU).  The maximal possible stable rank is $3072$, and the observed values remain far below this ceiling throughout training.  Compared to Figure~\ref{fig:mlpnano}, the post-activation stable ranks are noticeably larger (typically $\approx 5$--$8$ after burn-in, versus $\approx 2$--$3$ in the squared-ReLU run), consistent with the absence of a large mean-induced spike under SwiGLU.}
\label{fig:swiglu_nano_mlp}
\end{figure}

\begin{figure}[t]
\centering
\includegraphics[width=\linewidth]{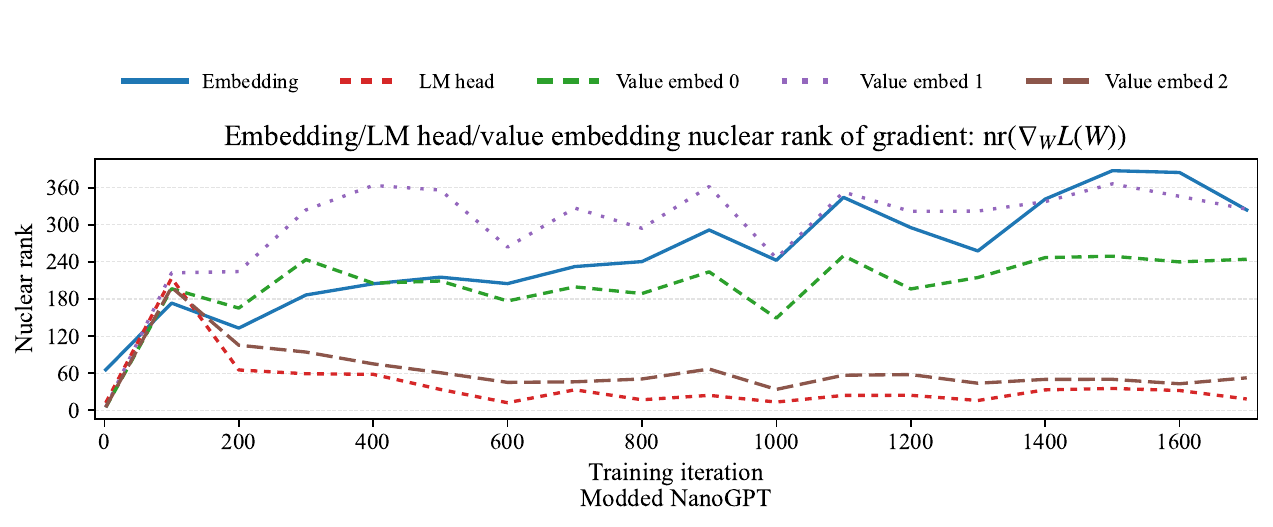}
\caption{Gradient nuclear ranks for the token embedding and language-model head parameters in the SwiGLU run, together with the value-embedding parameters tracked by the same logging code.  This is the SwiGLU analogue of Figure~\ref{fig:gradientembeddingslmhead}.  The embedding gradient nuclear rank grows to the hundreds while the LM-head nuclear rank remains in the tens after burn-in; unlike the squared-ReLU run, the LM-head curve exhibits a larger early transient before relaxing.}
\label{fig:swiglu_nano_grad_embed}
\end{figure}

\begin{figure}[t]
\centering
\includegraphics[width=\linewidth]{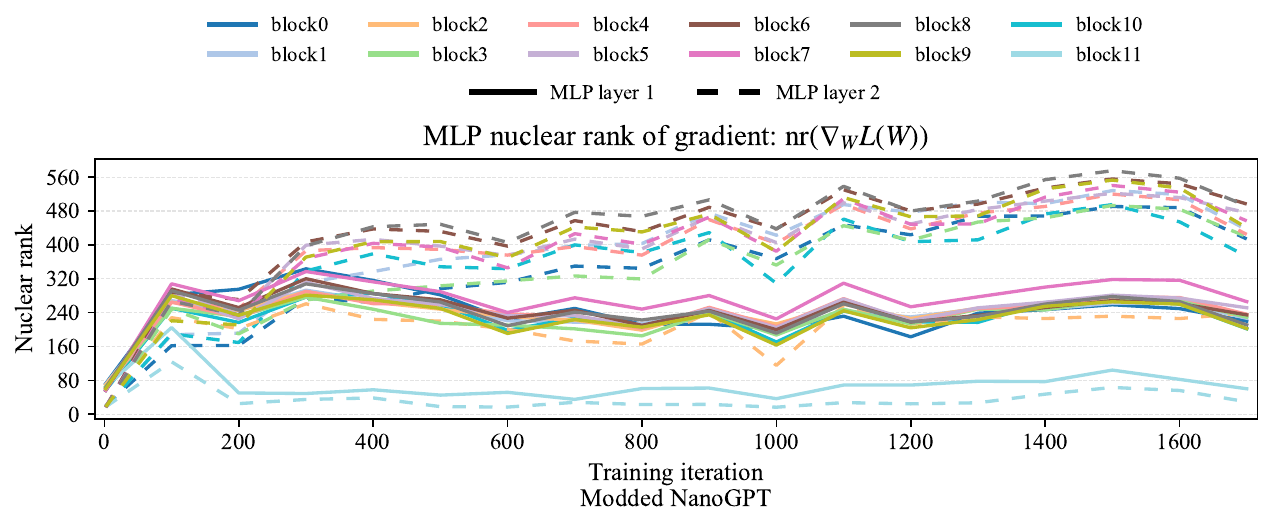}
\caption{MLP gradient nuclear ranks in the SwiGLU run (solid: $W_1$; dashed: $W_2$).  This is the SwiGLU analogue of Figure~\ref{fig:mlpgrad}.  As in the squared-ReLU run, nuclear ranks for the MLP gradients remain large (hundreds) throughout training, with the $W_2$ blocks typically larger than the $W_1$ blocks.}
\label{fig:swiglu_nano_grad_mlp}
\end{figure}

\begin{figure}[t]
\centering
\includegraphics[width=\linewidth]{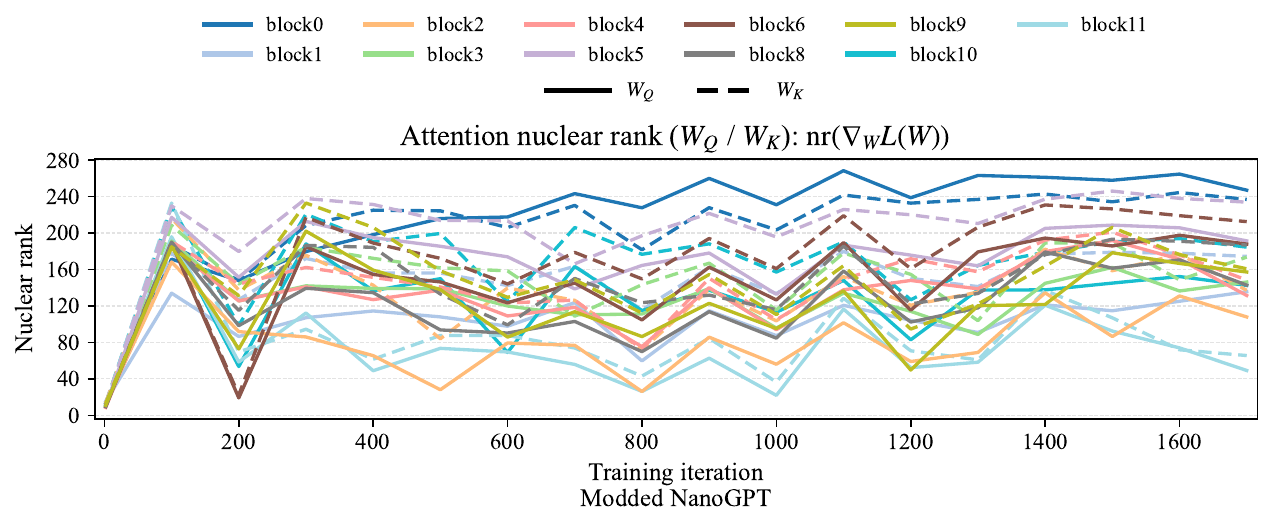}
\caption{Gradient nuclear ranks for the attention parameters $W_Q,W_K$ in the SwiGLU run.  This is the SwiGLU analogue of Figure~\ref{fig:gradientattentionqk}.  Many of the nuclear ranks remain in the hundreds across blocks with modest variation over training.}
\label{fig:swiglu_nano_grad_qk}
\end{figure}

\begin{figure}[t]
\centering
\includegraphics[width=\linewidth]{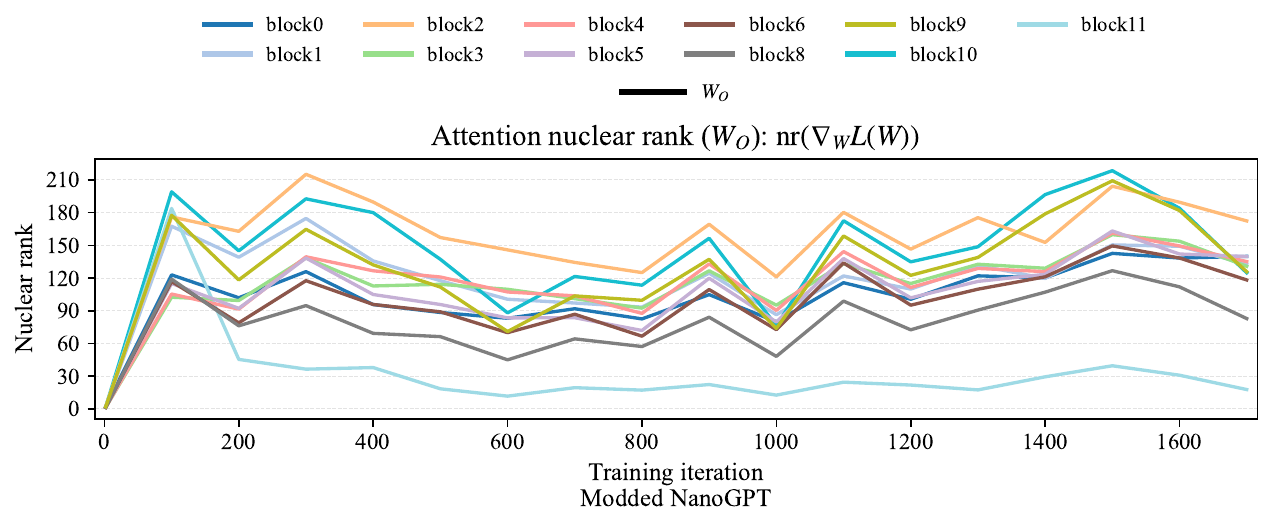}\vspace{1ex}
\includegraphics[width=\linewidth]{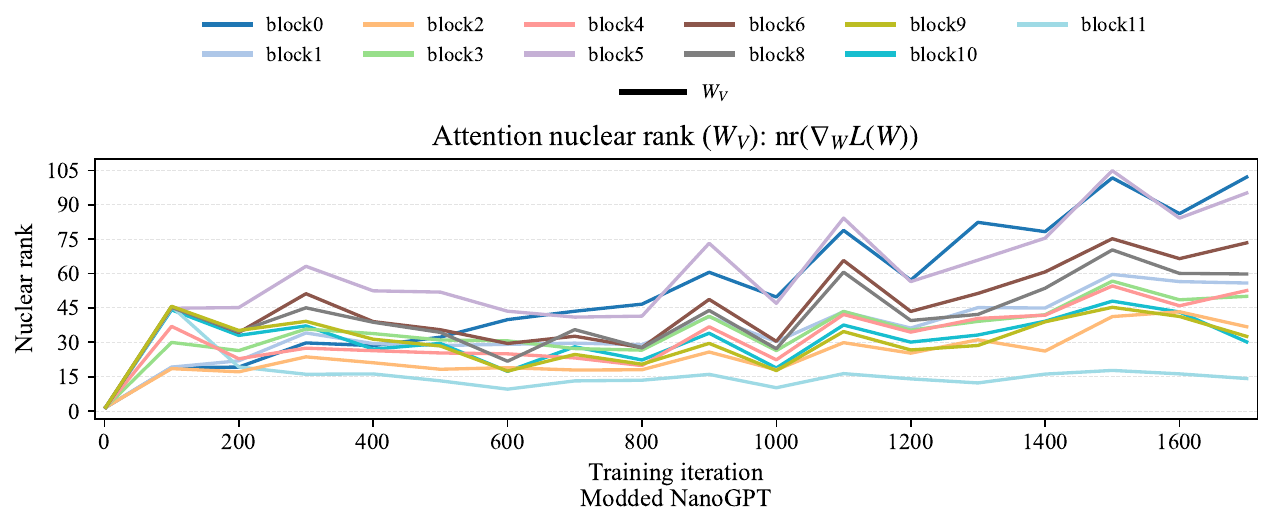}
\caption{Gradient nuclear ranks for the attention parameters $W_O$ (top) and $W_V$ (bottom) in the SwiGLU run.  This is the SwiGLU analogue of Figure~\ref{fig:gradientattentionvo}.  As in the squared-ReLU run, $W_O$ typically has larger nuclear rank than $W_V$, while both remain well above the single-digit stable ranks of the corresponding RMS inputs (Figure~\ref{fig:swiglu_nano_rms}).}
\label{fig:swiglu_nano_grad_ov}
\end{figure}

\section{Trace and operator-norm bounds for the sample second moment under sub-Gaussian tails}
 In this section, we derive a few basic deviation inequalities between the operator/Frobenius norms of the sample and population second-moment matrices. There is a rich literature on bounds of this type, notably \cite{koltchinskii2017covariance,hsu2012quadratic,laurent2000adaptive,tropp2012user,vershynin2012samplecov,srivastava2013covariance}.  Our notation will closely follow the text \cite{vershynin2018high}. 
In particular, $\|\cdot\|_{\psi_2}$ will denote the sub-Gaussian norm, while the sub-exponential norm will be denoted by $\|\cdot\|_{\psi_1}$. More generally, the quasinorm $\|z\|_{\psi_{\theta}}$ for any $\theta>0$ is the infimum over all constants $K>0$ satisfying $\EE\exp(|z/K|^{\theta})\leq 2$.

\begin{thm}[Trace bound]\label{thm:trace}
Consider a sequence of random vectors $z_1,\ldots, z_n\in \R^d$ and define  $S:=\sum_{i=1}^n z_iz_i^{\top}$. Suppose that for each $i\in [n]$, there exists $K_i<\infty$ satisfying
\begin{equation}\label{eq:SG}
    \|\langle z_i,u\rangle\|^2_{\psi_2}\leq K_i^2\EE\langle z_i,u\rangle^2\qquad \forall u\in\R^d.
\end{equation}
Then there exists a constant $c<\infty$ such that for any $\varepsilon\ge 0$, the estimate $\Big|\tr(S)-\tr(\EE S)\Big|
\ \le\ \varepsilon\tr(\EE S)$ holds with probability at least 
$1-2\exp\left(-c\left(\frac{\varepsilon^2\tr(S)^2}{\sum_{i=1}^n (K_i^2\tr(M_i))^2}\wedge \frac{\varepsilon \tr(S)}{\displaystyle\max_{i=1,\ldots, n} K_i^2\tr(M_i)}\right)\right)$.
\end{thm}
\begin{proof}
Set $\mu_i=\EE z_i$ and write $z_i=\mu_i+\tilde z_i$ with $\tilde z_i$ i.i.d., mean zero. Then we can  write:
\[
\tr(S)-\tr(\EE S)
=\sum_{i=1}^n\big(\|z_i\|^2_2-\EE\|z_i\|^2_2\big)
=\underbrace{2\sum_{i=1}^n \langle \tilde z_i,\mu_i\rangle}_{=:A_n}
+\underbrace{\sum_{i=1}^n\big(\|\tilde z_i\|^2-\EE\|\tilde z_i\|_2^2\big)}_{=:B_n}.
\]
We bound $A_n$ and $B_n$ in order. First,  by \eqref{eq:SG}, $\langle \tilde z_i,\mu_i\rangle$ is sub-Gaussian with
$\|\langle \tilde z_i,\mu_i\rangle\|^2_{\psi_2}\le K_i^2{\mu^\top_i M_i\,\mu_i}\leq K_i^2{\|M_i\|_{\op}}\|\mu_i\|^2_{2}\leq K_i^2 \|M_i\|_{\op}^2$, where we set $M_i:=\EE z_iz_i^{\top}$. Therefore, Hoeffding inequality \cite[Theorem 2.2.1]{vershynin2018high}, implies that for all $u> 0$, the estimate 
$
A_n\ \le\ \frac{1}{4}u\tr(S),
$
holds 
with probability at least $1-2\exp\left(-c\left(\frac{t^2\tr(S)^2}{{\sum_{i=1}^n K_i^2\|M_i\|^2_{\op}}}\right)\right)$. 

Next applying $\eqref{eq:SG}$ with $u=e_j$ we see that the $j$'th coordinate of $\tilde z_i$ is $K_i (M_i)_{jj}^{1/2}$ sub-Gaussian, and therefore its square is $K^2_i (M_i)_{jj}$ sub-exponential \cite[Lemma 2.8.5]{vershynin2018high}. Therefore by the triangle inequality, $\|\tilde z\|^2_2$ is sub-exponential with parameter $K^2_i\tr(M_i)$. Applying Bernstein's inequality \cite[Theorem 2.9.1]{vershynin2018high}, we deduce that for any $u\geq 0$ the estimate
$|B_n|\leq u\tr(S)$ holds with probability at least $1-2\exp\left(-c\left(\frac{u^2\tr(S)^2}{\sum_{i=1}^n (K_i^2\tr(M_i))^2}\wedge \frac{u \tr(S)}{\displaystyle\max_{i=1,\ldots, n} K_i^2\tr(M_i)}\right)\right)$, thus completing the proof.
\end{proof}

\begin{thm}[Lower tail for the operator norm for non-iid data]
\label{thm:lower_tail_top_eig}
Let $g_1,\dots,g_n\in\mathbb{R}^d$ be independent random vectors and define
\[
M_i := \mathbb{E}[g_i g_i^\top],\qquad S := \sum_{i=1}^n g_i g_i^\top,
\qquad
M := \mathbb{E}S = \sum_{i=1}^n M_i.
\]
Assume there exist $K_i\ge 1$ such that for every $u\in\mathbb{R}^d$, we have
\begin{equation}
\label{eq:sg_assump_Mt}
\|\langle u,g_i\rangle\|_{\psi_2}\ \le\ K_i\,\sqrt{u^\top M_i u},
\qquad i=1,\dots,n,
\end{equation}
Then there exists an absolute constant $c>0$ such that for any $\varepsilon\in(0,1)$, we have
\begin{equation}\label{eqn:main_lower_oper}
\displaystyle
\mathbb{P}\!\left(\,\|S\|_{\mathrm{op}} \le (1-\varepsilon)\|M\|_{\mathrm{op}}\,\right)
\ \le\
\exp\!\left(
-c\,\min\!\left\{\frac{\varepsilon^2\|M\|^2_{\op}}{\sum_{i=1}^n K_i^4 \|M_i\|_{\op}^2},\,\frac{\varepsilon\|M\|_{\op}}{\displaystyle\max_{1\leq i\leq n} K_i^2\|M_i\|_{\op}}\right\}
\right).
\end{equation}
\end{thm}
\begin{proof}
Let $v\in\mathbb{R}^d$ be a unit eigenvector of $M$ associated with $\|M\|_{\op}$ and set
\[
a_i := v^\top M_i v = \mathbb{E}(v^\top g_i)^2.\] Clearly, we have
$\|S\|_{\mathrm{op}}\ge\ v^\top S v=\sum_{i=1}^n (v^\top g_i)^2.$
The sub-Gaussian assumption \eqref{eq:sg_assump_Mt} shows
$$\|\langle v,g_i\rangle^2-\EE\langle v,g_i\rangle^2\|_{\psi_1}\lesssim\|\langle v,g_i\rangle^2\|_{\psi_1}\asymp\|\langle v,g_i\rangle\|^2_{\psi_2}=K_i^2 a_i,$$
where the first inequality follows from \cite[Exercise 2.44]{vershynin2018high} while the first equality follows from \cite[Lemma 2.8.5]{vershynin2018high}.
Applying the Bernstein inequality \cite[Theorem 2.9.1]{vershynin2018high} gives the estimate
\[
\mathbb{P}\!\left\{\sum_{t=1}^n ((v^\top g_i)^2-\EE  (v^\top g_i)^2)\le -s\right\}
\ \le\
\exp\!\left(
-c\,\min\!\left\{\frac{s^2}{\sum_{t=1}^n (K_i^2 a_i)^2},\,\frac{s}{\max_i (K_i^2 a_i)}\right\}
\right),
\]
for all $s\ge 0$, with an absolute constant $c>0$. Setting $s=\varepsilon\|M\|_{\op}$  completes the proof of \eqref{eqn:main_lower_oper}.
\end{proof}

\begin{lem}[Lower-bounding the operator norm]\label{lem:lower_op_norm}
Let $v_1,\ldots, v_n\in\R^d$ be independent realizations of a random vector $v$ in $\R^d$ and suppose that for some values $K,\theta<\infty$ the estimate holds:
$$\|\langle v-\EE v,u \rangle\|_{\psi_{\theta}}\leq K \|u\|_2\qquad \forall u\in \R^d.$$
Define the sum $S=\sum_{i=1}^n v_iv_i^{\top}$.
Let $V\in \R^{n\times d}$ be a matrix whose rows are $v_1,\ldots,v_n$ and define the second-moment matrix $M:=\EE vv^{\top}$. Then there exists a numerical constant $c$ that depends only on $\theta$ such that for any $\varepsilon\in (0,1)$ the estimate $\|S\|_{\op}\geq (1-\varepsilon)\|\EE S\|_{\op}$ holds with probability at least
$$1-2\exp\left(-c\left(\tfrac{n\varepsilon^2\|M\|^2_{\op}}{K^4}\wedge n^{1\wedge \tfrac{\theta}{2}}\left(\tfrac{\varepsilon\|M\|_{\op}}{K^2}\right)^{\theta/2}\right)\right)-2\exp\left(- c\left(\tfrac{n\varepsilon^2\|M\|_{\op}}{K^2}\wedge n^{1\wedge \theta}\left(\tfrac{\varepsilon \sqrt{\|M\|_{\op}}}{K}\right)^{\theta}\right)\right).$$
\end{lem}
\begin{proof}
Let $V\in \R^{n\times d}$ be a matrix whose rows are $v_1,\ldots,v_n$ and define the second-moment matrix $M:=\EE vv^{\top}$
Define the matrix $M_n=\frac{1}{n}V^{\top}V$ and note $\EE[M_n]=M$.
Fix a unit length vector $u\in\R^d$ satisfying  
$$\|M\|_{\op}=u^{\top}Mu=\EE \langle v,u\rangle^2.$$ Define now $\mu:=\EE v$ and $z_i:=\langle v_i-\mu,u\rangle$. We may then write 
\begin{align}
\frac{1}{n}\|V^{\top}V\|_{\op}\geq \frac{1}{n}u^{\top}V^{\top}Vu=\frac{1}{n}\|Vu\|_2^2&=\frac{1}{n}\sum_{i=1}^{n} \langle v_i,u\rangle^2\notag\\
&=\frac{1}{n}\sum_{i=1}^{n} (z_i^2-\EE z_i^2)+2\langle \mu,u\rangle\cdot \frac{1}{n}\sum_{i=1}^n z_i+\|M\|_{\op},\label{eqn:bounded_we_need}
\end{align}
where the last equality follows from algebraic manipulations.
We now bound the first two terms on the right side of \eqref{eqn:bounded_we_need}. Using basic properties of Orlitz quasi-norms we have
$$\|z_i^2-\EE z_i^2\|_{\psi_{\theta/2}}\lesssim \|z_i^2\|_{\psi_{\theta/2}}\asymp \|z_i\|^2_{\psi_\theta}\leq K^2.$$
Therefore Bernstein's inequality \cite[Theorem 3.1]{kuchibhotla2022moving} yields for any $t\geq 0$ the estimate:
$$\PP\left(\left|\frac{1}{n}\sum_{i=1}^{n} z_i^2-\EE z_i^2\right|\geq t \right)\leq 2\exp\left(-c\left(n\tfrac{t^2}{K^4}\wedge n^{1\wedge \tfrac{\theta}{2}}(\tfrac{t}{K^2})^{\theta/2}\right)\right).$$
Another application of Bernstein's inequality \cite[Theorem 3.1]{kuchibhotla2022moving} implies for any $s\geq 0$ the estimate
$$\PP\left(\left|\frac{1}{n}\sum_{i=1}^{n} z_i\right|\geq s \right)\leq 2\exp\left(-c\left(\tfrac{ns^2}{K^2}\wedge n^{1\wedge \theta} \left(\tfrac{s}{K}\right)^{\theta}\right)\right).$$
Note moreover that we may write the second moment as
$M=\mu\mu^{\top}+{\rm Cov}(v)$
and therefore $|\langle \mu,u\rangle|\leq \|\mu\|_2\leq \sqrt{\|M\|_{\op}}$.
Thus setting $s:=\frac{\varepsilon\sqrt{\|M\|_{\op}}}{4}$ and $t:=\frac{\varepsilon}{2}\|M\|_{\op}$ and taking a union bound over the two events we deduce that the lower bound 
$\frac{1}{n}\|V^{\top}V\|_{\op}\geq (1-\varepsilon)\|M\|_{\op}$ with probability at least
$$1-2\exp\left(-c\left(\tfrac{n\varepsilon^2\|M\|^2_{\op}}{K^4}\wedge n^{1\wedge \tfrac{\theta}{2}}\left(\tfrac{\varepsilon\|M\|_{\op}}{K^2}\right)^{\theta/2}\right)\right)-2\exp\left(- c\left(\tfrac{n\varepsilon^2\|M\|_{\op}}{K^2}\wedge n^{1\wedge \theta}\left(\tfrac{\varepsilon \sqrt{\|M\|_{\op}}}{K}\right)^{\theta}\right)\right),$$
which completes the proof.
\end{proof}

\section{Auxiliary linear algebraic results.}

\begin{lem}[Minimal singular value of blocked matrices]\label{lem:block-upper-triangular-sigma-min}
Let $a\in\R$, let $z\in\R^{\,p}$, and let $D\in\R^{(q-1)\times p}$ with $q-1\le p$.  
Define
\[
M:=\begin{bmatrix} a & z^{\mathsf T} \\ 0 & D \end{bmatrix}
\in\R^{\,q\times (p+1)}.
\]
Then assuming $a\neq 0$, we have
$
\sigma_{q}(M)
\;\ge\;
\frac{\min\{|a|,\sigma_{q-1}(D)\}}{1+\|z\|_2/|a|}.
$
\end{lem}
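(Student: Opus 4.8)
The plan is to reduce $M$ to a block-diagonal matrix by an invertible \emph{column} operation and then track how $\sigma_q$ degrades under that operation. Since $a\neq 0$, set
\[
  T \;:=\; \begin{bmatrix} 1 & -z^{\mathsf T}/a \\[2pt] 0 & I_p \end{bmatrix}\in\R^{(p+1)\times(p+1)},
  \qquad
  T^{-1} \;=\; \begin{bmatrix} 1 & z^{\mathsf T}/a \\[2pt] 0 & I_p \end{bmatrix}.
\]
A direct computation gives $MT = \begin{bmatrix} a & 0 \\ 0 & D\end{bmatrix}=:N$, hence $M = N T^{-1}$. The whole argument is then: $\sigma_q(M)=\sigma_q(NT^{-1})$ loses at most a factor $\sigma_{\min}(T^{-1})=1/\|T\|_{\op}$ compared with $\sigma_q(N)$, and $\sigma_q(N)$ and $\|T\|_{\op}$ are both computed by inspection.

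First I would record the ``submultiplicativity from the right'' inequality: for any $A\in\R^{q\times(p+1)}$ with $q\le p+1$ and any invertible $B\in\R^{(p+1)\times(p+1)}$,
\[
  \sigma_q(AB) \;=\; \min_{\|x\|_2=1,\,x\in\R^q}\|B^{\mathsf T}A^{\mathsf T}x\|_2
  \;\ge\; \sigma_{\min}(B^{\mathsf T})\,\min_{\|x\|_2=1}\|A^{\mathsf T}x\|_2
  \;=\;\sigma_{\min}(B)\,\sigma_q(A),
\]
using that $\sigma_q$ is the smallest singular value since $q\le p+1$, and that $\sigma_{\min}(B^{\mathsf T})=\sigma_{\min}(B)>0$. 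Applying this with $A=N$, $B=T^{-1}$, and using $\sigma_{\min}(T^{-1})=1/\|T\|_{\op}$, yields $\sigma_q(M)\ge \sigma_q(N)/\|T\|_{\op}$. Next, since $N$ is block-diagonal with a $1\times 1$ block $a$ and a $(q-1)\times p$ block $D$, its singular values are $|a|$ together with $\sigma_1(D)\ge\cdots\ge\sigma_{q-1}(D)$ (there are exactly $q$ of them as $q-1\le p$), so $\sigma_q(N)=\min\{|a|,\sigma_{q-1}(D)\}$.

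Finally I would bound $\|T\|_{\op}$ by writing $T=I_{p+1}+E$, where $E$ is the matrix whose only nonzero entries form the row block $-z^{\mathsf T}/a$ in position $(1,2)$; then $\|T\|_{\op}\le 1+\|E\|_{\op}=1+\|z\|_2/|a|$, since the operator norm of $E$ equals the Euclidean norm of the row vector $z^{\mathsf T}/a$. Combining the three steps gives
\[
  \sigma_q(M)\;\ge\;\frac{\min\{|a|,\sigma_{q-1}(D)\}}{1+\|z\|_2/|a|},
\]
as claimed. I do not expect a genuine obstacle here; the only points needing a line of care are the non-square form of the submultiplicativity bound (handled above by working with $\|M^{\mathsf T}x\|_2$ over unit $x\in\R^q$) and the identity $\sigma_q(N)=\min\{|a|,\sigma_{q-1}(D)\}$, both of which are immediate once one writes everything in terms of $\|M^{\mathsf T}x\|_2$.
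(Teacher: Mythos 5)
Your proof is correct and is essentially the paper's argument: you use the identical factorization of $M$ into the block-diagonal matrix $\mathrm{diag}(a,D)$ times the unit upper-triangular matrix $\begin{bmatrix}1 & a^{-1}z^{\mathsf T}\\ 0 & I_p\end{bmatrix}$, bound $\sigma_q$ of the block-diagonal factor by $\min\{|a|,\sigma_{q-1}(D)\}$, and bound the smallest singular value of the triangular factor via the operator norm of its inverse, exactly as in the paper. The only difference is that you supply an explicit justification of the non-square submultiplicativity step $\sigma_q(AB)\ge\sigma_q(A)\,\sigma_{\min}(B)$, which the paper asserts without proof.
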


\begin{proof}
We may write $M=AN$, where
\[
A=\begin{bmatrix} a & 0 \\[1mm] 0 & D \end{bmatrix},
\qquad
N=\begin{bmatrix} 1 & a^{-1}z^{\mathsf T} \\[1mm] 0 & I_{p} \end{bmatrix}.
\]
Clearly we have
\begin{equation}\label{eq:mn-product}
\sigma_{q}(M)\;\ge\;\sigma_{q}(A)\,\sigma_{q}(N).
\end{equation}
Because $A$ is block diagonal,
we have
\begin{equation}\label{eq:Amin}
\sigma_{q}(A)=\min\{|a|,\sigma_{q-1}(D)\}.
\end{equation}
The matrix $N$ is invertible with
\[
N^{-1}
=
\begin{bmatrix}
1 & -a^{-1}z^{\mathsf T} \\[1mm]
0 & I_{p}
\end{bmatrix}
= I_{p+1}+E,
\qquad\textrm{where we set}\qquad
E:=\begin{bmatrix}
0 & -a^{-1}z^{\mathsf T}\\[1mm]
0 & 0
\end{bmatrix}.
\]
Hence
$
\|N^{-1}\|_{\op} \le 1+\frac{\|z\|_2}{|a|}$ and therefore
$\sigma_{q}(N)=\|N^{-1}\|_{\op}^{-1}
\ge \frac{1}{1+\|z\|_2/|a|}.$
Substituting \eqref{eq:Amin} and this bound for $\sigma_{q}(N)$ into \eqref{eq:mn-product} yields the claimed result.
\end{proof}

\begin{lem}[Lower bound on the nuclear rank]\label{lem:lower_bd_nuc}
Consider a matrix $M\in \R^{m\times d}$ and a vector $v\in \R^d$. Then for any subset $P\subset [d]$ the estimate holds:
$$\|M\Diag(v)\|_*\geq \left(\min_{i\in P} |v_i|\right)\cdot\|M_T\|_*,$$
where $M_P\in\R^{m\times |T|}$ is the submatrix of $M$ formed by columns indexed by $P$.
\end{lem}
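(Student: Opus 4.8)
The plan is to prove the inequality in two steps, using only submultiplicativity of the nuclear norm together with the fact that coordinate selection matrices have operator norm at most one. Throughout, write $v_P\in\R^{|P|}$ for the subvector of $v$ with entries indexed by $P$, and recall that the nuclear norm satisfies $\|AB\|_*\le\|A\|_*\|B\|_{\op}$ and $\|AB\|_*\le\|A\|_{\op}\|B\|_*$ for conformable matrices (this follows, for instance, from the dual characterization $\|A\|_*=\sup_{\|B\|_{\op}\le 1}\langle A,B\rangle$, and can be inserted inline if a self-contained argument is desired).

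First I would reduce to the columns indexed by $P$. Let $\Pi_P\in\R^{d\times d}$ be the diagonal $0/1$ projection onto the coordinates in $P$, so $\|\Pi_P\|_{\op}\le 1$. The matrix $M\Diag(v)\Pi_P$ agrees with $M\Diag(v)$ on the columns in $P$ and is zero in the remaining columns; since appending or deleting zero columns does not change $AA^\top$ and hence leaves the multiset of singular values unchanged, we get $\|M\Diag(v)\Pi_P\|_*=\|M_P\Diag(v_P)\|_*$. On the other hand, submultiplicativity under right multiplication gives $\|M\Diag(v)\Pi_P\|_*\le\|M\Diag(v)\|_*\,\|\Pi_P\|_{\op}\le\|M\Diag(v)\|_*$. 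Combining these yields $\|M\Diag(v)\|_*\ge\|M_P\Diag(v_P)\|_*$.

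Second, I would lower-bound $\|M_P\Diag(v_P)\|_*$ in terms of $\|M_P\|_*$. Set $c:=\min_{i\in P}|v_i|$. If $c=0$ the claimed inequality is trivial, so assume $c>0$; then $\Diag(v_P)$ is invertible with $\|\Diag(v_P)^{-1}\|_{\op}=\max_{i\in P}|v_i|^{-1}=1/c$. Writing $M_P=\bigl(M_P\Diag(v_P)\bigr)\Diag(v_P)^{-1}$ and applying submultiplicativity once more gives $\|M_P\|_*\le\|M_P\Diag(v_P)\|_*\,\|\Diag(v_P)^{-1}\|_{\op}=\|M_P\Diag(v_P)\|_*/c$, that is, $\|M_P\Diag(v_P)\|_*\ge c\,\|M_P\|_*$. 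Chaining this with the bound from the first step yields $\|M\Diag(v)\|_*\ge c\,\|M_P\|_*$, which is the desired estimate. There is essentially no serious obstacle: the only ingredient beyond elementary linear algebra is submultiplicativity of the Schatten-$1$ norm, and the only points needing care are the bookkeeping that zero-padded columns preserve the singular spectrum and the degenerate case $c=0$.
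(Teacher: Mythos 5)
Your proof is correct, and it takes a genuinely different route from the paper's. The paper argues via a positive semidefinite domination: writing $q:=\min_{i\in P}|v_i|$, it observes that $\Diag(v)\Diag(v)^{\top}\succeq q^2\,\Diag(e_P)$ (where $e_P$ is the indicator of $P$), conjugates by $M$ to get $M\Diag(v)\Diag(v)^{\top}M^{\top}\succeq q^2\,M_PM_P^{\top}$, and then takes the trace of the matrix square root of both sides, using that $A\succeq B\succeq 0$ forces $\lambda_i(A)\ge\lambda_i(B)$ for every $i$ and hence $\tr(A^{1/2})\ge\tr(B^{1/2})$. You instead apply the ideal-norm property $\|AB\|_*\le\|A\|_*\|B\|_{\op}$ twice: once with the coordinate projection $\Pi_P$ (plus the observation that zero columns do not affect the singular spectrum) to drop to $\|M_P\Diag(v_P)\|_*$, and once with $\Diag(v_P)^{-1}$ to extract the factor $c=\min_{i\in P}|v_i|$, with the degenerate case $c=0$ handled separately. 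Both arguments are elementary and yield the same constant; the paper's is a one-liner but implicitly invokes monotonicity of $\tr(\cdot^{1/2})$ on the semidefinite order, whereas yours stays entirely within submultiplicativity of the Schatten-$1$ norm and requires the extra (harmless) case split at $c=0$. Either is acceptable.
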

\begin{proof}
Let $e_P\in\R^{d}$ denote the vector with ones in all coordinates indexed by $P$ and zeros elsewhere and set $q:=\min_{i\in P} |v_i|$. Then observe $$M\Diag(v)\Diag(v)^{\top}M^{\top}\succeq q^2\cdot M e_Pe_P^{\top} M^{\top}=q^2\cdot M_PM_P^{\top}.$$
Taking matrix square roots of both sides and taking the trace completes the proof.
\end{proof}

\section{Shardwise \texttt{SpecGD} and distributed orthogonalization}
\label{app:distributed-specgd}

Large training runs typically store parameters, gradients, and optimizer states \emph{sharded across devices}.
For a single matrix-shaped tensor, this means the gradient (or EMA gradient) matrix
$G\in\R^{P\times Q}$ is available only as $S$ disjoint shards across a device group.
SpecGD-style updates orthogonalize $G$ by applying a polar-factor map; the systems question is how to implement
this orthogonalization when $G$ is sharded.
Any method that targets the \emph{global} polar factor must move information across devices in order to realize
the requisite matrix products (e.g.\ to form $GG^\top$ and apply a polynomial in $GG^\top$ to $G$), even if no device ever materializes the full matrix.

There are several standard ways to organize this global computation (all-gather, distributed matmuls inside the
Newton--Schulz loop, or resharding along a batch-like axis such as layers).
This section focuses on an \emph{alternative strategy}: instead of computing the global polar factor,
one can apply the same polar-factor map independently to each local shard. Variants of this strategy under, e.g.,  tensor parallelism appear in \cite{boreiko2025towards,khaled2025muonbp}.
When the sharding splits the dimension that governs the Newton--Schulz cost,
this shardwise option has two immediate quantitative features:
(i) it introduces no \emph{incremental} optimizer communication, and
(ii) it reduces the \emph{per-device} orthogonalization work by a factor $S$ relative to any method that computes
a global polar factor with ideal $1/S$ parallel scaling.
The quadratic model at the end of the section makes this locality explicit by identifying a blockwise polar step
as the exact minimizer of a block-separable upper model.
Before turning to that calculation, we briefly fix the minimal distributed vocabulary used in the cost comparison
(shards and the two standard reshuffling primitives) and recall why Newton--Schulz governs the arithmetic cost.

\paragraph{Communication primitives and sharded matrices.}
Assume $G\in\R^{P\times Q}$ is sharded across $S$ devices.
For concreteness, consider row-sharding:
each device $s\in[S]$ stores a disjoint block of rows $G^{(s)}\in\R^{(P/S)\times Q}$, and $\{G^{(s)}\}_{s\in[S]}$ partitions $G$.
Two group-wide communication primitives recur below:
\begin{itemize}
\item an \emph{all-gather} broadcasts shards so that every device can access all of the distributed data (often materializing it locally);
\item an \emph{all-to-all} permutes shards so that each device ends with a different slice (a \emph{resharding}).
\end{itemize}
Any group-wide operation of this type is called a \emph{collective}.
When we say \emph{incremental optimizer communication}, we mean collectives that are introduced solely to implement
the orthogonalization step, beyond whatever communication the baseline training step already requires.

\paragraph{Why Newton--Schulz enters and what it costs.}
For a matrix $M$, write $\mathrm{polar}(M)$ for the polar factor (equivalently, $UV^\top$ when $M=U\Sigma V^\top$ is an SVD).
Muon applies (or approximates) $\mathrm{polar}(G)$ to a gradient- or momentum-like matrix $G$ \cite{jordan2024muon}.
A standard practical route computes $\mathrm{polar}(G)$ via an inverse square root of a Gram matrix,
e.g.\ $\mathrm{polar}(G)=(GG^\top)^{-1/2}G$ when $Q\ge P$,
and approximates $(GG^\top)^{-1/2}$ with a fixed small number of Newton--Schulz iterations
using only matrix multiplications and additions \cite{jordan2024muon,essentialai2025layer}.
In the regime $Q\ge P$, each iteration is dominated by multiplying a $P\times P$ matrix by a $P\times Q$ matrix,
so the per-iteration arithmetic cost is on the order of $\Theta(P^2Q)$ flops (up to constant factors and the fixed iteration count).

\paragraph{Three ways to orthogonalize a sharded matrix (global) and their costs.}
When $G$ is row-sharded across $S$ devices, there are three natural ways to obtain the \emph{global} polar factor $\mathrm{polar}(G)$:
\begin{enumerate}
\item All-gather then orthogonalize locally.
  Perform an all-gather so every device can access (and typically materialize) $G$, then compute $\mathrm{polar}(G)$ independently on each device.
  This uses one collective, but it duplicates the full $\Theta(P^2Q)$ orthogonalization work across all $S$ devices.

\item Keep $G$ sharded and distribute the Newton--Schulz algebra.
  Keep $G$ row-sharded and implement the Newton--Schulz multiplications as distributed matmuls.
  This avoids duplicating compute, but the loop requires repeated synchronization of intermediate results across devices.
  Concretely, each Newton--Schulz iteration induces one or more collectives, so the communication cost scales with the iteration count.

\item Reshard to make the orthogonalization local, then reshard back.
  If there is an additional batch-like axis indexing many independent matrices (e.g.\ the layer axis in a deep network),
  one can temporarily reshard so that each device holds full $P\times Q$ matrices for a subset of that axis, orthogonalize locally,
  then all-to-all back \cite{essentialai2025layer}.
  This yields a fixed number of reshuffles (two all-to-alls), rather than one collective per Newton--Schulz iteration.
\end{enumerate}

\paragraph{How sharding arises in large-scale training.}
Sharding of a single matrix-shaped tensor typically reduces to a partition of its rows and/or columns:
\begin{itemize}
\item \emph{Tensor parallelism (TP)} splits matrix multiplications across devices; weights are commonly sharded by rows or columns.
\item \emph{Fully-sharded data parallelism (FSDP/ZeRO)} shards parameters and optimizer states across data-parallel ranks; for matrix-shaped tensors this again induces a row/column partition after flattening/packing.
\item \emph{Pipeline parallelism (PP)} partitions layers across devices; this creates a layer index that can act as the batch-like axis exploited by the resharding strategy above.
\item \emph{Expert parallelism (EP)} in MoE models partitions experts across devices; within any single expert matrix, the effective sharding is still row/column.
\end{itemize}
In what follows we treat the sharding as a partition of matrix coordinates, and we emphasize row/column partitions since they are the
canonical sharding patterns for a single matrix.

\paragraph{Shardwise \texttt{SpecGD}: orthogonalize only the local shard.}
As an alternative to the global strategies above, one can define a \emph{shardwise} direction by applying the same polar-factor map to each local shard:
\[
\widetilde{\mathrm{polar}}(G)\;:=\;
\begin{bmatrix}
\mathrm{polar}(G^{(1)})\\
\vdots\\
\mathrm{polar}(G^{(S)})
\end{bmatrix}.
\]
This introduces \emph{no incremental optimizer communication}: each device computes $\mathrm{polar}(G^{(s)})$ from its locally stored shard.
If $Q\ge P$ and the sharding splits the contracted dimension $P$ (as in row-sharding above), then the Newton--Schulz work is also reduced.
A global orthogonalization has cost $\Theta(P^2Q)$ in total and hence $\Theta(P^2Q/S)$ per device under ideal $1/S$ scaling.
In contrast, shardwise orthogonalization costs
\[
\Theta\!\Bigl(\bigl(P/S\bigr)^2Q\Bigr)\;=\;\Theta(P^2Q/S^2)
\qquad\text{per device},
\]
which is a factor-$S$ reduction in per-device orthogonalization flops relative to any globally orthogonalizing method with ideal $1/S$ scaling.
The tradeoff is structural: $\widetilde{\mathrm{polar}}(G)$ enforces orthogonality only \emph{within} each shard and does not coincide with $\mathrm{polar}(G)$.

\paragraph{Summary of incremental cost (row-sharded $G$).}
Let $|G|$ denote the number of matrix entries (so communication scales in bytes like $|G|$ up to datatype size).
The table records \emph{incremental} optimizer communication beyond the baseline training loop.
\begin{center}
\begin{tabular}{lcc}
\hline
method & extra communication per update & per-device orthog flops \\
\hline
all-gather then local orthog & one all-gather moving $\Theta(|G|)$ entries & $\Theta(P^2Q)$ \\
distributed Newton--Schulz & $\Theta(\text{\#iters})$ collectives & $\Theta(P^2Q/S)$ \\
reshard $\to$ local orthog $\to$ reshard & two all-to-alls & $\Theta(P^2Q/S)$ \\
shardwise \texttt{SpecGD} & none & $\Theta(P^2Q/S^2)$ \\
\hline
\end{tabular}
\end{center}

We now return to the quadratic-model viewpoint from Section~\ref{sec:rand_regress} and record the blockwise majorization
calculation underlying the shardwise polar step.
In this calculation, the partition $\mathcal{P}$ should be read as the coordinate partition induced by device shards (or any other block structure).

\paragraph{Quadratic model and a blockwise operator geometry.}
Fix $A\in\R^{k\times n}$ and $Y\in\R^{m\times n}$ and consider
\begin{equation}
\label{eq:app-dist-LS}
\mathcal{L}(W)\;:=\;\frac{1}{2n}\,\|WA-Y\|_F^2,
\qquad
G:=\nabla \mathcal{L}(W)=\frac{1}{n}(WA-Y)A^\top.
\end{equation}
Let $\mathcal{P}$ be a partition of the coordinate set $[m]\times[k]$.
For each $p\in\mathcal{P}$ let $\Pi_p:\R^{m\times k}\to\R^{m\times k}$ be the associated coordinate projection
(so $\Pi_p\Pi_q=0$ for $p\neq q$ and $\sum_{p\in\mathcal{P}}\Pi_p=\mathrm{Id}$), and set
\[
W_p:=\Pi_p W,\qquad U_p:=\Pi_p U,\qquad G_p:=\Pi_p G.
\]
Write the row/column footprints of block $p$ as
\[
R_p:=\{\,i\in[m]:\exists j\in[k]\text{ with }(i,j)\in p\,\},
\qquad
C_p:=\{\,j\in[k]:\exists i\in[m]\text{ with }(i,j)\in p\,\},
\]
and define the overlap multiplicities
\[
\nu_{\mathcal{P}}:=\max_{i\in[m]}\bigl|\{p\in\mathcal{P}:i\in R_p\}\bigr|,
\qquad
\mu_{\mathcal{P}}:=\max_{j\in[k]}\bigl|\{p\in\mathcal{P}:j\in C_p\}\bigr|,
\qquad
\kappa_{\mathcal{P}}:=\min\{\mu_{\mathcal{P}},\nu_{\mathcal{P}}\}.
\]
Finally, define the blockwise seminorm
\[
\|U\|_{\mathcal{P}}^2:=\sum_{p\in\mathcal{P}}\|U_p\|_{\op}^2.
\]

\begin{lem}[Partitioned spectral majorization]
\label{lem:partitioned-majorization}
For all $U\in\R^{m\times k}$,
\begin{equation}
\label{eq:partitioned-majorization}
\mathcal{L}(W+U)
\;\le\;
\mathcal{L}(W)
+\sum_{p\in\mathcal{P}}\langle G_p,U_p\rangle
+\frac{L_{\mathcal{P}}}{2}\,\|U\|_{\mathcal{P}}^2,
\qquad
L_{\mathcal{P}}:=\frac{\kappa_{\mathcal{P}}\|A\|_F^2}{n}.
\end{equation}
In particular, $\kappa_{\mathcal{P}}=1$ for row partitions ($\nu_{\mathcal{P}}=1$) and for column partitions ($\mu_{\mathcal{P}}=1$).
\end{lem}

\begin{proof}
The identity
\[
\mathcal{L}(W+U)=\mathcal{L}(W)+\langle G,U\rangle+\frac{1}{2n}\|UA\|_F^2
\]
is exact and $\langle G,U\rangle=\sum_{p\in\mathcal{P}}\langle G_p,U_p\rangle$.
It remains to show $\|UA\|_F^2\le \kappa_{\mathcal{P}}\|A\|_F^2\sum_{p\in\mathcal{P}}\|U_p\|_{\op}^2$.

\emph{Row-overlap bound.}
Write $UA=\sum_{p\in\mathcal{P}}U_pA$ and set $X_p:=U_pA$.  Each $X_p$ is supported only on rows in $R_p$, hence
\begin{align*}
\|UA\|_F^2
=\Big\|\sum_{p\in\mathcal{P}}X_p\Big\|_F^2
&=\sum_{i=1}^m\Big\|\sum_{p:\,i\in R_p}(X_p)_{i:}\Big\|_2^2
\le \sum_{i=1}^m \nu_{\mathcal{P}}\sum_{p:\,i\in R_p}\|(X_p)_{i:}\|_2^2 \\
&=\nu_{\mathcal{P}}\sum_{p\in\mathcal{P}}\|X_p\|_F^2
\le \nu_{\mathcal{P}}\sum_{p\in\mathcal{P}}\|U_p\|_{\op}^2\,\|A\|_F^2.
\end{align*}

\emph{Column-overlap bound.}
For each $p$, the block $U_p$ uses only columns in $C_p$, hence $U_pA=U_pA_{C_p:}$. Therefore
\begin{align*}
\|UA\|_F
=\Big\|\sum_{p\in\mathcal{P}}U_pA_{C_p:}\Big\|_F
&\le \sum_{p\in\mathcal{P}}\|U_pA_{C_p:}\|_F
\le \sum_{p\in\mathcal{P}}\|U_p\|_{\op}\,\|A_{C_p:}\|_F \\
&\le \Big(\sum_{p\in\mathcal{P}}\|U_p\|_{\op}^2\Big)^{1/2}
\Big(\sum_{p\in\mathcal{P}}\|A_{C_p:}\|_F^2\Big)^{1/2}.
\end{align*}
Moreover,
\[
\sum_{p\in\mathcal{P}}\|A_{C_p:}\|_F^2
=\sum_{p\in\mathcal{P}}\sum_{j\in C_p}\|A_{j:}\|_2^2
\le \sum_{j=1}^k \mu_{\mathcal{P}}\,\|A_{j:}\|_2^2
=\mu_{\mathcal{P}}\,\|A\|_F^2,
\]
so $\|UA\|_F^2\le \mu_{\mathcal{P}}\|A\|_F^2\sum_{p\in\mathcal{P}}\|U_p\|_{\op}^2$.
Taking the minimum of the row- and column-overlap bounds yields the claim.
\end{proof}

\begin{cor}[Blockwise polar step and one-step decrease]
\label{cor:blockwise-polar-step}
Define $W^+:=W+U^\star$ by
\[
U_p^\star
:=
-\frac{1}{L_{\mathcal{P}}}\,\|G_p\|_*\,\mathrm{polar}(G_p),
\qquad p\in\mathcal{P},
\]
where $\mathrm{polar}(M)$ is the polar factor of $M$.
Then $U^\star$ minimizes the right-hand side of~\eqref{eq:partitioned-majorization} over $U$, and hence
\begin{equation}
\label{eq:partitioned-descent}
\mathcal{L}(W)-\mathcal{L}(W^+)\;\ge\;\frac{1}{2L_{\mathcal{P}}}\sum_{p\in\mathcal{P}}\|G_p\|_*^2
=\frac{n}{2\kappa_{\mathcal{P}}\|A\|_F^2}\sum_{p\in\mathcal{P}}\|G_p\|_*^2.
\end{equation}
\end{cor}

\begin{proof}
The bound~\eqref{eq:partitioned-majorization} is a quadratic upper model in the variables $\{U_p\}_{p\in\mathcal{P}}$,
and it decouples across $p$.  For fixed $p$, minimizing
$\langle G_p,U_p\rangle+\frac{L_{\mathcal{P}}}{2}\|U_p\|_{\op}^2$
over $U_p$ yields the steepest-descent direction $-\mathrm{polar}(G_p)$ with optimal step length $\|G_p\|_*/L_{\mathcal{P}}$.
Summing the optimal values over $p$ gives~\eqref{eq:partitioned-descent}.
\end{proof}

\paragraph{Comparison to Euclidean GD.}
Let $L_F:=\|A\|_{\op}^2/n$ be the Frobenius smoothness constant of $\mathcal{L}(\cdot)$ in~\eqref{eq:app-dist-LS}.
The GD step $W_{\mathrm{GD}}:=W-\frac{1}{L_F}G$ satisfies
\begin{equation}
\label{eq:gd-descent-app}
\mathcal{L}(W)-\mathcal{L}(W_{\mathrm{GD}})
\;\ge\;
\frac{1}{2L_F}\|G\|_F^2
=\frac{n}{2\|A\|_{\op}^2}\|G\|_F^2.
\end{equation}
Combining~\eqref{eq:partitioned-descent} and~\eqref{eq:gd-descent-app}, the blockwise polar step achieves at least as much one-step decrease as GD whenever
\[
\mathrm{nr}_{\mathcal{P}}(G)\;\ge\;\mathrm{st}_{\mathcal{P}}(A),
\qquad
\mathrm{nr}_{\mathcal{P}}(G):=\frac{\sum_{p\in\mathcal{P}}\|G_p\|_*^2}{\|G\|_F^2},
\qquad
\mathrm{st}_{\mathcal{P}}(A):=\frac{L_{\mathcal{P}}}{L_F}
=\kappa_{\mathcal{P}}\frac{\|A\|_F^2}{\|A\|_{\op}^2}
=\kappa_{\mathcal{P}}\st(A).
\]
For row and column partitions, $\kappa_{\mathcal{P}}=1$ and the threshold is $\st(A)$, matching the criterion in the main text.
In the notation of Section~\ref{sec:rand_regress}, the deciding metric is the comparison between a (blockwise) nuclear-rank proxy for the gradient
and a (partition-weighted) stable rank of the incoming features: when $\mathrm{nr}_{\mathcal{P}}(G)$ exceeds $\mathrm{st}_{\mathcal{P}}(A)$, the
blockwise spectral geometry promises at least as much one-step decrease as the Euclidean geometry.

\input{moddednanogpt.tex}

\end{document}